\documentclass[11pt]{article} %
\usepackage{times}
\usepackage{url}            %
\usepackage{booktabs}       %
\usepackage{amsfonts}       %
\usepackage{nicefrac}       %
\usepackage{microtype}      %
\usepackage{mkolar_definitions}
\usepackage{xspace}
\usepackage{multirow}
\usepackage{amsmath}
\usepackage{algorithm}
\usepackage{algorithmic}
\usepackage{color}
\usepackage{color, colortbl}
\usepackage{enumitem}
\usepackage{comment}
\usepackage{bm}
\usepackage[left=2cm,top=2cm,right=2cm]{geometry}

\usepackage[numbers]{natbib}
\usepackage{mkolar_definitions}

\newcommand{\newedit}{\color{black}}

\newcommand{\TT}{\mathbb{T}} %
\newcommand{\OO}{\mathbb{O}} %
\newcommand{\One}{\mathbf{1}} %

\everypar{\looseness=-1}

\usepackage[utf8]{inputenc} %
\usepackage[T1]{fontenc}    %
\usepackage{hyperref}       %
\usepackage{url}            %
\usepackage{booktabs}       %
\usepackage{amsfonts}       %
\usepackage{nicefrac}       %
\usepackage{microtype}      %
\usepackage{xcolor}         %
\usepackage{booktabs}
\usepackage{tikz}
\usepackage{wrapfig}
\usepackage{enumitem}

\newcount\Comments  %
\Comments=1 %
\definecolor{darkgreen}{rgb}{0,0.5,0}
\definecolor{darkred}{rgb}{0.7,0,0}
\definecolor{teal}{rgb}{0.3,0.8,0.8}
\definecolor{orange}{rgb}{1.0,0.5,0.0}
\definecolor{purple}{rgb}{0.8,0.0,0.8}
\newcommand{\kibitz}[2]{\ifnum\Comments=1{\textcolor{#1}{\textsf{\footnotesize #2}}}\fi}

\setlength{\textfloatsep}{0.2cm}
\setlength{\floatsep}{0.2cm}

\definecolor{Gray}{gray}{0.9}
\usepackage{algorithm}
\usepackage{algorithmic}

\usepackage{thmtools}
\newtheorem{myexp}{Example}

\title{Provably Efficient Reinforcement Learning in \\ Partially Observable Dynamical Systems}

\usepackage{authblk}

\author[1]{Masatoshi Uehara\thanks{mu223@cornell.edu Supported by Masason Foundation }}
\author[1]{Ayush Sekhari\thanks{as3663@cornell.edu }}
\author[2]{Jason D. Lee\thanks{jasonlee@princeton.edu }}
\author[1]{Nathan Kallus\thanks{kallus@cornell.edu  }}
\author[1]{Wen Sun\thanks{ws455@cornell.edu} }

\affil[1]{Cornell University} 
\affil[2]{Princeton University}

\date{}

\ifdefined\usebigfont

\usepackage{times}
\usepackage[fontsize=13pt]{scrextend}
\AtBeginDocument{\newgeometry{letterpaper,left=1.56in,right=1.56in,top=1.71in,bottom=1.77in}}
\else
\fi

\begin{document}

\maketitle

\begin{abstract}
We study Reinforcement Learning for  partially observable dynamical systems using function approximation. We propose a new \textit{Partially Observable Bilinear Actor-Critic framework}, that is general enough to include models such as observable tabular Partially Observable Markov Decision Processes (POMDPs), observable Linear-Quadratic-Gaussian (LQG), Predictive State Representations (PSRs),  as well as a newly introduced model Hilbert Space Embeddings of POMDPs and observable POMDPs with latent low-rank transition. Under this framework, we propose an actor-critic style algorithm that is capable of performing agnostic policy learning. Given a policy class that consists of memory based policies (that look at a fixed-length window of recent observations), and a value function class that consists of functions taking both memory and future observations as inputs, our algorithm learns to compete against the best memory-based policy in the given policy class. For certain examples such as undercomplete observable tabular POMDPs, observable LQGs and observable POMDPs with latent low-rank transition, by implicitly leveraging their special properties, our algorithm is even capable of competing against the globally optimal policy without paying an exponential dependence on  the horizon in its sample complexity.\looseness=-1
\end{abstract}

\section{Introduction}
Large state  space and partial observability are two key challenges of Reinforcement Learning (RL). While recent advances in RL for fully observable systems have focused on the challenge of scaling RL to large state space in both theory and in practice using rich function approximation, the understanding of large scale RL under partial observability is still limited. In POMDPs, for example, a core issue is that the optimal policy is not necessarily Markovian since the observations are not Markovian. 

A common heuristic to tackle large scale RL with partial observability in practice is to simply maintain a time window of the history of observations, which is treated as a state to feed into the policy and the value function. Such a window of history can be often maintained explicitly via truncating away older history (e.g., DQN uses a window with length 4 for playing video games \citep{mnih2013playing}; Open AI Five uses a window with length 16  for LSTMs \cite{berner2019dota}).  
Since even for planning under partial observations and  known dynamics, finding the globally optimal policy conditional on the entire history is generally NP-hard (due to the curse of the history)
\citep{littman1996algorithms,papadimitriou1987complexity,golowich2022planning}, searching for a short memory-based policy can be understood as a reasonable middle ground that balances computation and optimality. The impressive empirical results of these prior works also demonstrate that in practice, there often exists a high-quality policy (not necessarily the globally optimal) that is only a function of a short window of recent observations. 
However, these prior works that search for the best memory-based policy
 unfortunately cannot ensure sample efficient PAC guarantees due to the difficulty of strategic exploration in POMDPs. 
The key question that we aim to answer in this work is:

\begin{center}
\emph{Can we design provably efficient RL algorithms that agnostically learn the best fixed-length memory based policy with function approximation?}
\end{center}

We provide affirmative answers to the above question. More formally, we study RL for partially observable dynamical systems that include not only the classic Partially Observable MDPs (POMDPs) \citep{murphy2000survey,porta2006point,shani2013survey}, but also a more general model called Predictive State Representations (PSRs) \citep{littman2001predictive}. %
We design a model-free actor-critic framework, named \emph{PO-Bilinear Actor-Critic Class},  where we have a policy class (i.e., actors) that consists of policies that take a fixed-length window of observations as input (memory-based policy), and a newly introduced value link function class (i.e., critics) that consists of functions that take the fixed-length window of history and {(possibly multi-step if the system is overcomplete)} \emph{future observations} as inputs. 
A value link function class is an analog of the value function class tailored to partially observable systems that only relies on observable quantities (i.e., past and future observations and actions).  In our algorithm, we \emph{agnostically} search for the best memory-based policy from the given policy class.  \looseness=-1

Our framework is based on the idea of a newly introduced notion of \textit{value link function} equipped with future observations. While the idea of using future observations has been used in the literature on POMDPs, our work is the first to use this idea to learn a high-quality policy in a model-free manner. Existing works discuss how to use future observations only in a model-based manner \citep{boots2011closing,hefny2015supervised}. By leveraging these model-based viewpoints, while recent works discuss strategic exploration to learn near-optimal policies, their results are either limited to the tabular setting (and are not scalable for large state spaces)  \citep{jin2020provably,guo2016pac,azizzadenesheli2016reinforcement,xiong2021sublinear,liu2022partially} or are tailored to specific non-tabular models and unclear how to incorporate general function approximation \citep{simchowitz2020improper,lale2020regret,cai2022sample}. We break these barriers by devising a new actor-critic-based model-free view on POMDPs. We demonstrate the \emph{scalability} and \emph{generality} of our PO-bilinear actor-critic framework by showing PAC-guarantee on many models as follows (see Table~\ref{tab:my_label} for a summary).

{\textbf{Observable Tabular POMDPs.} In tabular observable POMDPs, i.e., POMDPs where \emph{multi-step} future observations retain information about latent states, the PO-bilinear rank decomposition holds. We can ensure the sample complexity is $\mathrm{\mathrm{Poly}}(S,A^M,O^M, A^K, O^K, H,1/\sigma_1)$ %
where $\sigma_1 =\min_{x} \|\OO x\|_1/\|x\|_1$ ($\OO$ is an emission matrix),and $S,A,O$ are the cardinality of state, action, observation space, respectively, $H$ is the horizon, and $K$ is the number of future observations.\footnote{{\newedit In \pref{sec:example_general}, we discuss how to get rid of $O^M,O^K$ using a model-based learning perspective. The intuition is that a tabular POMDP's model complexity has nothing to do with $M$ or $K$, i.e., number of parameters in transition and omission distribution is $S^2 A + OA$ (even if we consider the time-inhomogeneous setting, it scales with $H(S^2 A + OA)$, but no $O^M$ and $O^K$) and the PO-bilinear rank is still $S$. }} In the special undercomplete ($O\geq S$) case, our framework is also flexible enough to set the memory length according to the property of the problems in order to search for the globally optimal policy. More specifically, using the latest result from \cite{golowich2022planning} about belief contraction, we can set $ M = \tilde O( (1/\sigma^4_1)\ln( SH/\epsilon))$ with $\epsilon$ being the optimality threshold. This allows us to compete against the globally optimal policy without paying an exponential dependence on $H$.} 

\textbf{Observable Linear Quadratic Gaussian (LQG). } In observable LQG -- a classic partial observable linear dynamical system, our algorithm can compete against the \emph{globally optimal policy} with a sample complexity scaling polynomially with respect to the horizon, dimensions of the state, observation, and action spaces (and other system parameters). This is achieved by simply setting the memory length $M$ to $H$. The special linear structures of the problem allow us to avoid exponential dependence on $H$ even when using the full history as a memory. 
While the global optimality results in tabular POMDPs and LQG exist by using different algorithms, \emph{to the best of our knowledge, this is the first unified algorithm that can solve both tabular POMDPs and LQG simultaneously without paying an exponential dependence on horizon $H$.} \looseness=-1

\textbf{Observable Hilbert Space Embedding POMDPs (HSE-POMDPs).} Our framework ensures the agnostic PAC guarantee on HSE-POMDPs where policy induced transitions and omission distributions have condition mean embeddings \citep{boots2011closing,boots2013hilbert}. This model  naturally generalizes tabular POMDPs and LQG. We show that the sample complexity scales polynomially with respect to the dimensions of the embeddings.  This is the \emph{first} PAC guarantee in HSE-POMDPs.

\textbf{Predictive State Representations (PSRs).} 
We give the \emph{first} PAC-guarantee on PSRs. PSRs model partially observable dynamical systems without even using the concept of latent states and strictly generalize the POMDP model.  Our work significantly generalizes a prior PAC learning result for reactive PSRs (i.e., reactive PSRs require a strong condition that the optimal policy only depends on the latest observation) which is a much more restricted setting \citep{jiang2017contextual}.

\textbf{$M$-step decodable POMDPs \citep{efroni2022provable}.} Our framework can capture $M$-step decodable POMDPs where there is a (unknown) decoder that can perfectly decode the latent state by looking at the latest $M$-memory. Our algorithm can compete against the globally optimal policy with the sample complexity scaling polynomially with respect to horizon $H$, $S, A^M$, and the statistical complexities of function classes, without any explicit dependence on $O$. This PAC result is similar with the one from \citep{efroni2022provable}.

{\textbf{Observable POMDPs with low-rank latent transition.} Our framework captures observable POMDPs where the latent transition is low-rank. This is the \emph{first} PAC guarantee in this model.  
Under this model, we first show that with $M = \tilde O\left( (1/\sigma_1^4) \ln( d H / \epsilon  )\right)$ where $d$ is the rank of the latent transition matrix, there exists an $M$-memory policy that is $\epsilon$-near optimal with respect to the globally optimal policy. %
Then, starting with a general model class that contains the ground truth transition and omission distribution (i.e., realizability in model class), we first convert the model class to a policy class and a value link function class, and we then show that our algorithm competes against the globally optimal policy with a sample complexity scaling polynomially with respect to $H, d, |\Acal|^{(1/\sigma_1^4) \ln( d H / \epsilon  )}, 1/\sigma_1$, and the statistical complexity of the model class. 
Particularly, the sample complexity has no explicit  dependence on the size of the state and observation space, instead it just depends on the statistical complexity of the given model class.   
}

\begin{table}[!t] \label{tab:low_po_bilinear}
    \centering
  \resizebox{1.0\textwidth}{!}{
    \begin{tabular}{ccccccc}
      \toprule
      Model   &  
      \begin{tabular}{c}
         Observable   \\
         tabular   POMDPs 
      \end{tabular}
     & \begin{tabular}{c}
          Observable \\
          LQG
     \end{tabular}
      &  \begin{tabular}{c}
          Low-rank $M$-step  \\
          decodable POMDPs
      \end{tabular}
      & \begin{tabular}{c}
           Observable \\
           HSE-POMDPs
      \end{tabular} 
      
      & 
      \begin{tabular}{c}
           \\
          PSRs
      \end{tabular} &
      
            \begin{tabular}{c}
           Low rank\\ observable POMDPs
      \end{tabular}
     
      \\
   \midrule 
    PO-Bilinear Rank     &  
\begin{tabular}{c}
        $(O A)^M S (\dagger)$  \\
     {\newedit (Can be $S$) }
\end{tabular}

    & $O(M d^2_a d^2_s ) (\dagger)$ & Rank $(\dagger)$ &  
    \begin{tabular}{c}    
    Feature dimension \\ 
    on $(z, s)$ 
    \end{tabular}
    & 
    \begin{tabular}{c}
      $(OA)^M\times$      \\ 
     \# of core tests    
    \end{tabular}  
    & Rank ($\dagger$) \\
    \midrule
    PAC Learning     &  Known & Known & Known & New & New & New  \\
    \bottomrule
    \end{tabular}
    }
    \caption{Summary of settings that are from PO-Bilinear AC class. The 2nd row gives the parameters that bound the  PO-Bilinear rank. Here $M$ denotes the length of memory used to define memory-based policies $\pi(\cdot | \bar z_h )$ where  $\bar z_h = ( o_{h-M:h}, a_{h-M:h-1} )$ denotes the $M$-step memory. 
    In the 3rd row, ``known'' means that sample-efficient algorithms already exist.  
    `` New'' means our result gives the first sample-efficient algorithm. { However, even in ``known'' case, agnostic guarantees are new; hence, when the policy class is small, we can gain some benefit. 
    The symbol $\dagger$ means we can compete with the globally optimal policy without paying an exponential dependence on horizon $H$.
    For the tabular case, the PO-bilinear rank can be improved to $S$ when we use the most general definition (Refer to \pref{sec:example_general}. }
    For LQG, $d_a$ and $d_s$ are the dimension of action and state spaces. 
    For PSRs, $O$ and $A$ denote the size of observation and action spaces. 
    \looseness=-1
}
    \label{tab:my_label}
\end{table}

\subsection{Related Works}\label{sec:related_work}

\paragraph{Generalization and function approximation of RL in MDPs. } In Markovian environments, there is a growing literature that gives PAC bounds with function approximation under certain models. Some of the representative models are linear MDPs \citep{jin2020provably,yang2020reinforcement}, block MDPs \citep{du2019provably,misra2020kinematic,zhang2022efficient}, and low-rank MDPs \citep{agarwal2020flambe,uehara2021representation}.  
Several general frameworks in  \citep{jiang2017contextual,sun2019model,jin2021bellman,foster2021statistical,du2021bilinear} characterize sufficient conditions for provably efficient RL. Each above  model is captured in these frameworks as a special case. 
While our work builds on the bilinear/Bellman rank framework \citep{du2021bilinear,jiang2017contextual}, when we na\"ively reduce POMDPs to MDPs, the bilinear/Bellman rank is $\Theta(A^H)$. These two frameworks are only shown applicable to reactive POMDPs where the optimal policy only depends on the latest observation. However, this assumption makes the POMDP model very restricted.

\paragraph{Online RL  for POMDPs. }
Prior works \citep{kearns1999approximate,even2005reinforcement} showed $A^H$-type sample complexity bounds for general POMDPs. Exponential dependence can be circumvented with more structures. First, in the tabular setting, under observability assumptions, in \citep{azizzadenesheli2016reinforcement,guo2016pac,jin2020sample,liu2022partially,golowich2022learning}, favorable sample complexities are obtained by leveraging the spectral learning technique \citep{hsu2012spectral} %
(see section 1.1 in \cite{jin2020sample} for an excellent summary). 
Second,  in LQG, which is a partial observable version of LQRs, in \citep{lale2020regret,simchowitz2020improper}, sub-linear regret algorithms are proposed.  %
These works use random policies for exploration, which is sufficient for LQG. Since random exploration strategy is not enough for tabular POMDPs, it is unclear if the existing techniques from LQG can be applied to solve general POMDPs.  Third, the recent work \citep{efroni2022provable} provides a new model called $M$-step decodable POMDP (when $M = 1$, it is Block MDP) with an efficient algorithm. 

Our framework captures \emph{all} above mentioned POMDP models. In addition, we propose a new model called HSE-POMDPs which extends prior works on HSE-HMM\citep{boots2013hilbert} to POMDPs and includes LQG and tabular POMDPs.  Our algorithm delivers the first PAC bound for this model.  

{ Finally, we remark there are several existing POMDP models that it is unclear whether our framework can capture. The first model is a POMDP \citep{cai2022sample} where emissions and transitions are modeled by linear mixture models. The second model is a latent POMDP \citep{kwon2021rl}.  We leave it as future works. }

 \looseness=-1%

\paragraph{System identification for uncontrolled partially observable systems.}

There is a long line of work on system identification for uncontrolled partially observable systems, among which the spectral learning based methods are related to our work \citep{van2012subspace,hsu2012spectral,song2010hilbert,boots2011closing,hamilton2013modelling,nishiyama2012hilbert,boots2013hilbert,kulesza2015spectral,hefny2015supervised,sun2016learning}. Informally, these methods leverage the high-level idea that under some observability conditions, one can use the sufficient statistics of (possibly multi-step) future observations as a surrogate for the belief states, thus allowing the learning algorithms to ignore the latent state inference and completely rely on observable quantities. Our approach shares a similar spirit in the sense that we use sufficient statistics of future observations to replace latent states, and our algorithm only relies on observable quantities. The major difference is that these prior works only focus on passive system identification for uncontrolled systems, while we need to find a high-performance policy by actively interacting with the systems for information acquisition. \looseness=-1

\paragraph{Reinforcement learning in PSRs.} PSRs \citep{jaeger2000observable,littman2001predictive,singh2004predictive,boots2011closing,thon2015links} are models that generalize POMDPs. PSRs also rely on the idea of using the sufficient statistics of multi-step future observations (i.e., predictive states) to serve as a summary of the history.  Prior works on RL for PSRs \citep{boots2011closing,kulesza2015spectral,downey2017predictive,li2020efficient,izadi2008point} do not address the problem of strategic exploration and operate under the assumption that a pre-collected diverse training dataset is given and the data collection policy is a blind policy (i.e., it does not depend on history of observations).  To our knowledge, the only existing PAC learning algorithm for PSRs is limited to a reactive PSR model where the optimal policy depends just on the latest observation \cite{jiang2017contextual}. Our framework captures standard PSRs models that are strictly more general than reactive PSRs.%

\paragraph{Value link functions.}

Analogue of value link functions (referred to as bridge functions) are used in the literature of causal inference (offline contextual bandits) \citep{miao2018confounding,deaner2018proxy,cui2020semiparametric,kallus2021causal,mastouri2021proximal,singh2021finite,xu2021deep} and offline RL with unmeasured confounders \citep{bennett2021,shi2021minimax}. However, their settings are not standard POMDPs in the sense that  their setting is a POMDP with unmeasured confounders following \citep{tennenholtz2020off}.  Our setting is a standard POMDP without unmeasured confounders. Here, we emphasize that their setting does \emph{not} capture our setting. More specifically, by taking \citep{shi2021minimax} as an example, they require that logged data is generated by policies that can depend on latent states but cannot depend on observable states. Thus, their definition of link functions (called as bridge functions) is not applicable to our setting since the data we use is clearly generated by policies that depend on observations. Due to this difference, their setting prohibits us from using future observations, unlike our setting. Finally, we stress that our work is online, while their setting is offline. Hence, they do not discuss any methods for exploration.

\subsection{Organization}

{In \pref{sec:preliminary}, we introduce the notation, definition of POMDPs, and our function-approximation setup such as the policy and value link function class.  In \pref{sec:def_value_bridge}, we define value link functions and the PO-bilinear actor-citric class. In \pref{sec:example}, we give examples that admit PO-bilinear actor-citric class including observable undercomplete tabular POMDPs, observable overcomplete tabular POMDPs, observable LQG, and observable HSE-POMDPs. In \pref{sec:algorithm}, we give a unified algorithm for the PO-bilinear actor-citric class, and  the sample complexity of the algorithm. We also instantiate this general result for examples presented in \pref{sec:example}.  In \pref{sec:psr}, we show that PSRs, which are more general models than POMDPs, also admit PO-bilinear rank decomposition. In \pref{sec:general}, we give a more general definition of PO-bilinear actor-critic class, followed by showing that two additional examples --- $M$-step decodable POMDPs and observable POMDPs with low-rank latent transition --- fall into this general definition (\pref{sec:example_general}). Both the examples use general nonlinear function approximation and their sample complexities do not explicitly depend on the size of the state and observation spaces, but only on the statistical complexities of the function classes. 
As a by-product, we can refine the sample complexity result in the tabular case in \pref{sec:algorithm}. Most of the proofs are deferred to the Appendix. 
}

\section{Preliminary} \label{sec:preliminary}

\begin{wrapfigure}{!r}{0.30\textwidth}  \label{fig:pomdp}
\centering
\begin{tikzpicture}
\draw[step=0.5cm,gray,very thin] (0,0) grid (4.26,0.5);
\draw (0.25 cm,0.25cm) -- (0.25 cm,0.25cm)node  {$o_3$};
\draw (0.75 cm,0.25cm) -- (0.75 cm,0.25cm)node  {$a_3$};
\draw (1.25 cm,0.25cm) -- (1.25 cm,0.25cm)node  {$o_4$};
\draw (1.75 cm,0.25cm) -- (1.75 cm,0.25cm)node  {$a_4$};
\draw (2.25 cm,0.25cm) -- (2.25 cm,0.25cm)node  {$o_5$};
\draw (2.75 cm,0.25cm) -- (2.75 cm,0.25cm)node  {$a_5$};
\draw (3.25 cm,0.25cm) -- (3.25 cm,0.25cm)node  {$o_6$};
\draw (3.75 cm,0.25cm) -- (3.75 cm,0.25cm)node  {$a_6$};
\draw[red, <->] (0,-0.25) -- (3.5,-0.25); 
\draw (2 cm,-0.5cm) -- (2 cm,-0.55cm)node  {$\bar z_{6}$};
\draw[red, <->] (0,0.75) -- (3.0,0.75); 
\draw (1.75 cm,1.0cm) -- (1.75 cm,1.0cm) node  {$z_{5}$};
\draw[blue, ->] (2.25,-0.5) .. controls (3.25,-0.75) ..(3.75,0); 
\end{tikzpicture}
\caption{Case with M$=3$. A 3-memory policy determines action $a_6$ based on $\bar z_6$.}
\vspace{-0.3cm}
\end{wrapfigure}
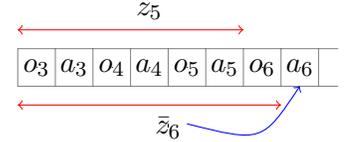 

We introduce background for POMDPs here and defer the introduction of PSRs to \pref{sec:psr}. 
We consider an episodic POMDP specified by $\Mcal = \langle \Scal, \Ocal, \Acal, H, \TT, \OO \rangle $, where $\Scal$ is the \emph{unobserved} state space, $\Ocal$ is the observation space, $\Acal$ is the action space, $H$ is the horizon, $\TT:\Scal \times \Acal \to \Delta( \Scal)$ is the transition probability, $\OO: \Scal \to \Delta(\Ocal)$ is the emission probability, and $r : \Ocal \times \Acal \to \RR$ is the reward. %
Here, $\TT,\OO$ are unknown distributions. For notation simplicity, we consider the time-homogeneous case in this paper; Extension to the time-inhomogeneous setting is straightforward. 

In our work, we consider $M$-memory policies. Let $\Zcal_h =  (\Ocal \times \Acal )^{\min \{h,M\}}$ and $\bar \Zcal_h =\Zcal_{h-1} \times \Ocal$. An element $z_h \in \Zcal_h$ is represented as $z_h = [ o_{\max(h-M+1, 1):h}, a_{\max(h-M+1,1):h} ]$, and an element $\bar z_h \in \bar \Zcal_h$ is represented as $\bar z_h = [ o_{\max(h-M, 1):h}, a_{\max(h-M,1):h-1} ]$ (thus, $\bar z_h = [ z_{h-1}, o_h ]$). %
\pref{fig:pomdp} illustrates this situation.  An $M$-memory policy is defined as $\pi =\{\pi_h\}_{h=1}^H$ where each $\pi_h$ is a mapping from  $\bar \Zcal_h$ to a distribution over actions $\Delta(\Acal)$. 

In a POMDP, an $M$-memory policy generates the data as follows.  Each episode starts with the initial state $s_1$ sampled from some unknown distribution. At each step $h \in [H]$, from $s_h \in \Scal$, the agent observes $o_h \sim \OO(\cdot |s_h)$, executes action $a_h \sim \pi_h(\cdot | \bar z_h)$, receives reward $r(s_h, a_h)$, and transits to the next latent state $s_{h+1} \sim \TT(\cdot | s_h, a_h)$. Note that the agent does not observe the underlying states but only the observations \(\crl{o_h}_{h \leq H}\). We denote $J(\pi)$ as the value of the policy $\pi$, i.e.,  $\EE[\sum_{h=1}^{H} r_h; a_{1:H}\sim \pi]$ where the expectation is taken w.r.t.~the stochasticity of the policy $\pi$, emissions distribution \(\OO\) and transition dynamics \(\TT\). 

We define a value function for a policy \(\pi\) at step $h$ to be the expected cumulative reward to go under the policy $\pi$ starting from a $z \in \Zcal_{h-1}$ and $s \in \Scal$, i.e. $V^{\pi}_h:\Zcal_{h-1}  \times \Scal \to \RR$ where $V^{\pi}_h(z, s)=\EE[\sum_{h'=h}^{H} r_{h'} \mid z_{h-1}=z,s_h=s; a_{h:H} \sim \pi]$. The notation $\EE[\cdot\,; a_{h:H} \sim \pi]$ means the expectation is taken under a policy $\pi$ from $h$ to $H$. Compared to the standard MDP setting, the expectation is conditional on not only $s_h$ but also $z_{h-1}$ since we consider $M$-memory policies.  The corresponding Bellman equation for $V^{\pi}_h$ is $
 V^{\pi}_h( z_{h-1}, s_h) = \EE \left[ r_h + V^{\pi}_{h+1}(z_h, s_{h+1})\mid z_{h-1},s_h ; a_h \sim \pi \right]$.

\paragraph{The Actor-critic function approximation setup.} Our goal is to find a near optimal policy that maximizes the policy value $J(\pi)$ in an online manner. Since any POMDPs can be converted into MDPs  by setting the state at level $h$ to the observable history up to $h$, any off-the-shelf online provably efficient algorithms for MDPs can be applied to POMDPs. By defining $\Hcal_h$ as the whole history up to step $h \in [H]$ (i.e., a history $\tau_h\in \Hcal_h$ is in the form of $o_{1:h}, a_{1:h-1}$)
, these na\"ive algorithms ensure that output policies can compete against the globally optimal policy $  \pi^{\star}_{\mathrm{gl}} =\argmax_{\pi \in \bar \Pi}J(\pi)$ where $\tilde \Pi=\{\bar \Pi_h\}, \tilde \Pi_h=[\Hcal_h \to \Delta(\Acal)]$.  
However, this conversion results in the error with exponential dependence on the horizon $H$, which is prohibitively large in the long horizon  setting. %

Instead of directly competing against the globally optimal policy, we aim for \emph{agnostic policy learning}, i.e., compete against the best policy in a given $M$-memory policy class. Our function approximation setup consists of two function classes, $(a)$ A policy class $\Pi$ consisting of %
 $M$-memory policies $\Pi := \{\Pi_h\}_{h=1}^H$ where $\Pi_h  \subset [\bar \Zcal_h \to \Delta(\Acal)]$ (i.e., actors), $(b)$ A set of value link functions $\Gcal = \{\Gcal_h\}_{h=1}^H$ where $\Gcal_h \subset [\bar \Zcal_h \to \RR]$, whose role is to approximate $V^\pi_h$ (i.e., critics). Our goal is to provide an algorithm that outputs a policy $\hat \pi= \{\hat \pi_h\}$ that has a low excess risk, where excess risk is defined by $R(\pi):=J(\hat \pi)-J(\pi^{\star})$ where $ \pi^{\star} = \argmax_{\pi \in \Pi} J(\pi)$ is the best policy in class $\Pi$. To motivate this agnostic setting, $M$-memory policies are also widely used in practice, e.g., DQN \citep{mnih2013playing} sets $M = 4$.
 Besides, there are natural examples where $M$-memory policies are close to the globally optimal policy with $M$ being only polynomial with respect to other problem dependent parameters, e.g., observable POMDPs \citep{golowich2022planning} and LQG \citep{lale2020regret,simchowitz2020improper,mania2019certainty}. 
 We will show the global optimality in these two examples later, without any exponential dependence on $H$ in the sample complexity. 
 
 \begin{remark}[Limits of existing MDP actor-critic framework]\label{rem:limit}
 While general actor-critic framework proposed in MDPs \citep{jiang2017contextual} is applicable to POMDPs via the na\"ive POMDP to MDP reduction, it is unable to leverage any benefits from the restricted policy class. This na\"ive reduction (from POMDP to MDP) uses full history and will incur sample complexity that scales exponentially with respect to the horizon. 
 \end{remark}

\noindent 
\textbf{Additional notation.} Let $[H]=\{1,\cdots,H\}$ and $[t]=\{1,\cdots,t\}$. Give a matrix $A$, we denote its pseudo inverse by $A^{\dagger}$ and the operator norm by $\|A\|$. We define the $\ell_1$ norm $\|A\|_1 = \max_{x: x\neq 0} \| Ax \|_1 / \|x\|_1$. 
The outer product is denoted by $\otimes$. Let $d^{\pi}_h(\cdot) \in \bar Z_{h}\times \Scal$ be the marginal distribution at $h$ and  $\delta(\cdot)$ be the Dirac delta function. We denote the policy $\delta(a=a')$ by $\mathrm{do}(a')$. We denote a uniform action by $\Ucal(\Acal)$. Given a function class $\Gcal$, we define $\|\Gcal\|_{\infty}=\sup_{g\in \Gcal}\|g\|_{\infty}$.

\section{Value Link Functions and the PO-bilinear Framework}\label{sec:def_value_bridge}

Unlike MDPs, we cannot directly work with value functions $V^\pi_h(s)$ (or Q functions) in POMDPs, since they depend on the unobserved state $s$. To handle this issue, below we first introduce new value link functions  by using future observations, and then discuss the PO-bilinear framework. 

\subsection{Value Link Functions} 

\begin{definition}[K-step value link functions]
Fix a set of policies $\pi^{out}=\{\pi^{out}_i\}_{i=1}^K $ where $\pi^{out}_i:\Ocal \to \Delta(\Acal)$. Value link functions $g^\pi_h: \Zcal_{h-1} \times \Ocal^{K}\times \Acal^{K-1} \to \mathbb{R}$ at step $h \in [H]$ for a policy $\pi$ are defined as the solution to the following integral equation:
\begin{align*}
  \forall z_{h-1}\in \Zcal_{h-1}, s_h \in\Scal, \qquad  \EE[ g^{\pi}_h(z_{h-1},o_{h:h+K-1},a_{h:h+K-2})\mid z_{h-1},s_h; a_{h:h+K-2}\sim \pi^{out} ] = V^{\pi}_h(z_{h-1},s_h), 
\end{align*}
where the expectation is taken under the policy $\pi^{out}$. 
\end{definition}

Link functions do not necessarily exist, nor are needed to be unique. At an intuitive level, K-step value link functions are embeddings of the value functions onto the observation space, and its existence essentially means that K-step futures have sufficient information to recover the latent state dependent value function. {The proper choice of $\pi^{out}$ would depend on the underlying models. For example, we use uniform policy in the tabular case, and $\delta(a=0)$ in LQG. }
For notational simplicity, we mostly focus on the case of $K=1$, though we will also discuss the general case of $K \geq 2$. The simplified definition for 1-step link functions is provided in the following. Note that this definition is agnostic to $\pi^{out}$. 

\begin{definition}[1-step value link functions] 
One-step value link functions $g^\pi_h: \Zcal_{h-1} \times \Ocal \to \mathbb{R}$ at step $h \in [H]$ {\newedit for a policy $\pi$} are defined as the solution to the following integral equation: 
\begin{align}\label{eq:bridge}
     \forall z_{h-1} \in \Zcal_{h-1}, s_h \in\Scal: \qquad \EE[ g^{\pi}_h(z_{h-1}, o_h)\mid z_{h-1},s_h] = V^{\pi}_h(z_{h-1},s_h). 
\end{align}
\end{definition}

In \pref{sec:example}, we will demonstrate the form of the value link function for various examples. The idea of encoding latent state information using the statistics of (multi-step) futures have been widely used in learning models of HMMs \citep{song2010hilbert,hsu2012spectral}, PSRs \citep{boots2011closing,boots2013hilbert,hefny2015supervised,sun2016learning}, and system identification for linear systems \citep{van2012subspace}. Existing provably efficient (online) RL works for POMDPs elaborate on this viewpoint \citep{jin2020provably,guo2016pac,azizzadenesheli2016reinforcement}. Compared to them, the novelty of link functions is that it is introduced to recover \emph{value functions} but not \emph{models}. This model-free view differs from the existing dominant model-based view in online RL for POMDPs. In our setup, we can control systems if we can recover value functions on the underlying states even if we fail to identify the underlying model.

\subsection{The PO-Bilinear Actor-critic Framework for POMDPs}

With the definition of value link functions, we are now ready to introduce the PO-bilinear actor-critic (AC) class for POMDPs. We will focus on the case of $K=1$ here. Let $\Gcal=\{\Gcal_h\}_{h=1}^H$, where $\Gcal_h \subset [\bar \Zcal_{h}  \to \RR]$, be a class consisting of functions that satisfy the following realizability assumption w.r.t.~ the policy class \(\Pi\). 

\begin{assum}[Realizability]\label{assum:realizability}
We assume that $\Gcal$ is realizable w.r.t.~the policy class \(\Pi\), i.e., $\forall \pi\in \Pi, h \in [H]$, there exists \emph{at least one} $g^\pi_h \in \Gcal_h$ such that \(g^\pi_h\) is a value link function w.r.t.~the policy \(\pi\). Note that realizability implicitly requires the existence of link functions. %
\end{assum}

{ 
We next introduce the PO-Bilinear Actor-critic class. For each level $h\in [H]$, we first define the Bellman loss: 
\begin{align*}
    \mathrm{Br}_h(\pi,g;\pi^{in}) := \EE[g_h(\bar z_h)-r_h-g_{h+1}(\bar z_{h+1}):a_{1:h-1}\sim \pi^{in},a_h \sim \pi ]
\end{align*}
given M-memory policies $\pi =\{\pi_h\},\pi^{in} = \{\pi^{in}_h\}$ and $g=\{g_h\}$. Letting $g^{\pi} = \{g^{\pi}_h\}_{h=1}^H$ be a link function for $\pi$, our key observation is that value link functions satisfy 
\begin{align*}
  0=\mathrm{Br}_h(\pi,g^{\pi};\pi^{in})
\end{align*}
for any M memory roll-in policy $\pi^{in}=\{\pi^{in}_h\}_{h=1}^H$, and any evaluation pair $(\pi, g^\pi)$. }
This is an analog of Bellman equations on MDPs. The above equation tells us that $\mathrm{Br}_h(\pi,g;\pi^{in})$ is a right loss to quantify how much the estimator $g$ is different from $g^{\pi}_h$. When  $\mathrm{Br}_h(\pi,g;\pi^{in})$ has a low-rank structure in a proper way, we can efficiently learn a near optimal M memory policy. The following definition precisely quantifies the low-rank structure that we need for sample efficient learning.

\begin{definition}[PO-bilinear AC Class, $K =1$]\label{def:simple_bilinear}
The model is a PO-bilinear Actor-critic class of rank $d$ if $\Gcal$ is realizable, and there exist $W_h:\Pi \times \Gcal \to \RR^d$ and $X_h:\Pi \to \RR^d$ such that for all $\pi',\pi \in \Pi, g \in \Gcal$ and $h\in [H]$,  \looseness=-1
\begin{enumerate}
    \item  $  \EE[ g_h(\bar z_h)-r_h-g_{h+1}(\bar z_{h+1}); a_{1:h-1}\sim \pi', a_h\sim \pi   ]  =  \langle W_h(\pi,g), X_h(\pi') \rangle.$
     \item $ W_h(\pi,g^{\pi})=0 $ for any $\pi \in \Pi$ and the corresponding value link function $g^{\pi} \in \Gcal$ . 
\end{enumerate} We define $d$ as the PO-bilinear rank. 
\end{definition}

{ While the above definition is enough to capture most of the examples we discuss later in this work, including undercomplete tabular POMDPs, LQG, HSE-POMDPs, we provide two useful extensions. The first extension incorporates discriminators into the framework, which can be used to capture the M-step decodable POMDPs and POMDPs with low-rank latent transition (see \pref{sec:general}). The second extension incorporates multi-step futures, which can be used to capture overcomplete POMDPs and general PSRs. In the next section, we introduce the multi-step future version. 
}

{\subsection{PO-bilinear Actor-critic Class with Multi Step Future}\label{subsec:bilinear_multi_step}

In this section, we provide an extension to \pref{def:simple_bilinear} to incorporate multiple-step futures (i.e., $K > 1$). For simplicity, we assume that $\pi^{out}=\Ucal(\Acal)$. 

The definition is then as follows. The main difference is that we roll out a policy $\Ucal(\Acal)$, $K-1$ times to incorporate multi-step link functions. 
We introduce the notation 
\begin{align*}
     (z_{h-1}, o_{h:h+K-1},a_{h:h+K-2})=\bar z^K_{h} \in \bar \Zcal^K_{h}=\Zcal_{h-1}\times \Ocal^{K}\times \Acal^{K-1}.  
\end{align*}
Then, combining the Bellman equation for state-value functions and the definition of K-step link functions, we have
\begin{align*}
    0  &= \EE[V^{\pi}_{h+1}(z_{h},s_{h+1})+ r_h - V^{\pi}_{h}(z_{h-1},s_{h})\mid z_{h-1}, s_h;  a_h\sim \pi  ] \\ 
    &= \EE[g^{\pi}_{h+1}(\bar z^K_{h+1}) \mid z_{h-1}, s_h ; a_h\sim \pi,a_{h+1:h+K-1}\sim  \Ucal(\Acal)  ] + \EE[r_h \mid z_{h-1},s_h ; a_h\sim \pi] \\
    & -  \EE[g^{\pi}_{h}(\bar z^K_{h}) \mid z_{h-1}, s_h ; a_{h:h+K-2}\sim  \Ucal(\Acal)  ] 
\end{align*} 
Thus, by taking expectations further with respect to $(z_{h-1},s_h)$ (i.e., $z_{h-1},s_h$ can be sampled from some roll-in policy), we have
\begin{align*}
    0 &= \EE[g^{\pi}_{h+1}(\bar z^K_{h+1})  ; a_{1:h-1}\sim \pi'
    , a_h\sim \pi,a_{h+1:h+K-1}\sim  \Ucal(\Acal)  ] + \EE[r_h  ;a_{1:h-1}\sim \pi', a_h\sim \pi] \\
    & -  \EE[g^{\pi}_{h}(\bar z^K_{h})  ;a_{1:h-1}\sim \pi', a_{h:h+K-2}\sim  \Ucal(\Acal)  ]. 
\end{align*} %
Hence, the Bellman loss of a pair $(\pi,g)$ under a roll-in $\pi'$ denoted by $\mathrm{Br}_h(\pi,g;\pi')$ at $h\in [H]$ is defined as 
\begin{align*}
     \mathrm{Br}_h(\pi,g;\pi') &= \EE[g_{h+1}(\bar z^K_{h+1})  ; a_{1:h-1}\sim \pi'
    , a_h\sim \pi,a_{h+1:h+K-1}\sim  \Ucal(\Acal)  ] + \EE[r_h  ;a_{1:h-1}\sim \pi', a_h\sim \pi] \\
    & -  \EE[g_{h}(\bar z^K_{h})  ;a_{1:h-1}\sim \pi', a_{h:h+K-2}\sim  \Ucal(\Acal)  ]. 
\end{align*}
The above is a proper loss function when we use multi-step futures. Here is the structure we need for $\mathrm{Br}_h(\pi, g;\pi')$. 

\begin{definition}[PO-bilinear AC Class for POMDPs with multi-step future] \label{def:bilinear_multi_step}
The model is a PO-bilinear class of rank $d$ if $\Gcal$ is realizable (regarding general K-step link functions), and there exists $W_h:\Pi \times \Gcal \to \RR^d$ and $X_h:\Pi \to \RR^d$ such that for all $\pi',\pi \in \Pi, g \in \Gcal$ and $h\in [H]$, 
\begin{enumerate}
    \item We have:
    \begin{align*}
      &  \EE[g_{h+1}(\bar z^K_{h+1})  ; a_{1:h-1}\sim \pi' 
    , a_h\sim \pi,a_{h+1:h+K-1}\sim  \Ucal(\Acal)  ] + \EE[r_h  ;a_{1:h-1}\sim \pi', a_h\sim \pi] \\
    & -  \EE[g_{h}(\bar z^K_{h})  ;a_{1:h-1}\sim \pi', a_{h:h+K-2}\sim  \Ucal(\Acal)  ]= \langle W_h(\pi,g),X_h(\pi') \rangle , 
    \end{align*}
     \item $ W_h(\pi,g^{\pi})=0 $ for any $\pi \in \Pi$ and the corresponding value link function $g^{\pi}$ in $\Gcal$ . 
\end{enumerate}
We define $d$ as the PO-bilinear rank.
\end{definition} 
}

\section{Examples of PO-Bilinear Actor-critic Classes} 
\label{sec:example}

We consider three examples (observable tabular POMDPs, LQG, HSE-POMDPs) that admit PO-bilinear rank decomposition.  
Our framework can also capture  PSRs and $M$-step decodable POMDPs, of which the discussions are deferred to \pref{sec:psr} and \pref{sec:example_general}, respectively. 
We mainly focus on one-step future, i.e., $K=1$,  and briefly discuss extension to $K > 1$ in the the tabular case. In this section, except for LQG, we assume $r_h \in [0,1]$ for any $h\in [H]$. All the missing proofs are deferred to \pref{sec:ape_example}. 

\subsection{Observable Undercomplete Tabular POMDPs}

\begin{myexp}[Observable undercomplete tabular POMDPs]\label{ex:under_tabular}
Let $\OO \in \mathbb{R}^{|\Ocal|\times |\Scal|}$ where the entry indexed by a pair $(o,s)$ is defined as $\OO_{o,s} = \OO(o | s)$.  Assume that $\rank(\OO)=|\Scal|$, which we call observability. This requires undercompletenes $|\Ocal| \geq |\Scal|$.
\end{myexp}

The following lemma shows that $\OO$ being full rank implies the existence of value link functions. %
\begin{lemma}\label{lem:tabular_pomdp_value_bridge} 
For \pref{ex:under_tabular}, there exists a one-step value link function $g^{\pi}_h$ for any $\pi \in \Pi$ and $h \in [H]$.
\end{lemma}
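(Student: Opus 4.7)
The defining equation for a one-step value link function, after fixing $z_{h-1}\in\Zcal_{h-1}$, is a linear equation on $\Scal$:
\[
\sum_{o\in\Ocal} \OO(o\mid s)\,g^\pi_h(z_{h-1},o) \;=\; V^\pi_h(z_{h-1},s)\qquad \forall s\in\Scal.
\]
Viewing $g^\pi_h(z_{h-1},\cdot)$ as a vector $g_{z_{h-1}}\in\RR^{|\Ocal|}$ and $V^\pi_h(z_{h-1},\cdot)$ as a vector $v_{z_{h-1}}\in\RR^{|\Scal|}$, this is precisely $\OO^\top g_{z_{h-1}} = v_{z_{h-1}}$, where $\OO\in\RR^{|\Ocal|\times|\Scal|}$ is the emission matrix from \pref{ex:under_tabular}. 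So the existence of a one-step value link function at step $h$ reduces to showing that $v_{z_{h-1}}$ lies in the range of $\OO^\top$ for every $z_{h-1}$.

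\textbf{Key step.} By assumption $\rank(\OO)=|\Scal|$, so $\OO$ has full column rank and $\OO^\top\OO\in\RR^{|\Scal|\times|\Scal|}$ is invertible; equivalently $\OO^\top:\RR^{|\Ocal|}\to\RR^{|\Scal|}$ is surjective. I will therefore \emph{construct} a valid link function explicitly by setting, for each $z_{h-1}\in\Zcal_{h-1}$,
\[
g^\pi_h(z_{h-1},\cdot) \;:=\; \OO\,(\OO^\top\OO)^{-1}\, V^\pi_h(z_{h-1},\cdot),
\]
which is a well-defined element of $\RR^{|\Ocal|}$. A direct check then yields $\OO^\top g^\pi_h(z_{h-1},\cdot) = V^\pi_h(z_{h-1},\cdot)$, i.e.\ \pref{eq:bridge} holds for every $(z_{h-1},s_h)$.

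\textbf{Wrap-up and caveats.} Since $\Zcal_{h-1}$ is a finite set and the construction is performed independently for each $z_{h-1}$, the resulting function $g^\pi_h:\bar\Zcal_h\to\RR$ is well defined on all of $\bar\Zcal_h=\Zcal_{h-1}\times\Ocal$. Doing this for each $h\in[H]$ and each $\pi\in\Pi$ yields the claimed collection of one-step value link functions. I expect no real obstacle here: this is a basic linear-algebra existence argument enabled by observability, and full rank of $\OO$ is being used in exactly the minimal way. I will only remark (without belaboring it) that uniqueness fails whenever $|\Ocal|>|\Scal|$ because $\OO^\top$ then has a nontrivial kernel; the definition of link function only requires existence, which is what the above construction delivers.
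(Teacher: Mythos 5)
Your proof is correct and is essentially the paper's argument: the paper also constructs the link function via the pseudo-inverse, setting $g(z,o)=\langle f, \one(z)\otimes \OO^\dagger\one(o)\rangle$, which for each fixed $z$ is exactly your $g^\pi_h(z,\cdot)=\OO(\OO^\top\OO)^{-1}V^\pi_h(z,\cdot)=(\OO^\dagger)^\top V^\pi_h(z,\cdot)$. The only difference is presentational — you phrase it as solving the linear system $\OO^\top g_{z}=v_{z}$ via surjectivity of $\OO^\top$, while the paper routes through one-hot encodings and the identity $\OO\one(s)=\EE_{o\sim\OO(\cdot\mid s)}[\one(o)]$ — but the construction and the use of full column rank are identical.
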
 
\begin{proof}
 Consider any function $f:\Zcal_{h-1}\times \Scal \to \RR$ (thus, this captures all possible $V^\pi_h$). Denote $\one(z)$ as the one-hot encoding of $z$ over $\Zcal_{h-1}$ (similarly for $\one(s)$). We have $f(z, s) = \langle f, \one(z)\otimes \one(s) \rangle = \langle f ,  \one(z) \otimes (\OO^\dagger \OO \one(s) ) \rangle $, where we use the assumption that $\rank(\OO)=|\Scal|$ and thus $\OO^\dagger \OO = I$.
Then,  
\begin{align}\label{eq:key_tabular}
    f(z, s) = \langle f , \one(z) \otimes ( \OO^\dagger \EE_{o\sim O( s)}  \one(o)) \rangle = \EE_{o\sim O( s)} \langle f , \one(z) \otimes  \OO^\dagger   \one(o) \rangle,
\end{align}
which means that the value link function corresponding to $f$ is $ g(z, o) := \langle f, \one(z) \otimes \OO^\dagger \one(o) \rangle$. %
\end{proof}

We next show that the PO-Bilinear rank (\pref{def:simple_bilinear}) of tabular POMDPs is bounded by  $|\Scal|(|\Ocal||\Acal|)^M$. 
\begin{lemma}\label{lem:tabular_pomdp_bilinear}
Assume $\OO$ is full column rank. Set the value link function class $\Gcal_h = [\Zcal_{h-1} \times  \Ocal \to [0,C_{\Gcal}]]$ for certain $C_{\Gcal}\in \mathbb{R}$, and policy class $\Pi_h = [\bar \Zcal_h \to \Delta(\Acal)]$. Then, the model is a PO-biliner AC class (\pref{def:simple_bilinear}) with PO-bilinear rank at most $|\Scal|(|\Ocal||\Acal|)^M$.
\end{lemma}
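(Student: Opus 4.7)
The plan is to exhibit explicit features $W_h:\Pi\times\Gcal\to\RR^d$ and $X_h:\Pi\to\RR^d$ with $d\le |\Scal|(|\Ocal||\Acal|)^M$, and then check the two conditions of \pref{def:simple_bilinear}. The natural choice is to index coordinates by pairs $(z_{h-1},s_h)\in\Zcal_{h-1}\times\Scal$, using the fact that $|\Zcal_{h-1}|\le(|\Ocal||\Acal|)^M$ by the definition $\Zcal_{h-1}=(\Ocal\times\Acal)^{\min\{h-1,M\}}$, so the ambient dimension is at most $|\Scal|(|\Ocal||\Acal|)^M$.

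First I would observe that the roll-in policy $\pi'$ enters the Bellman loss only through the marginal distribution over $(z_{h-1},s_h)$ at step $h$. Concretely, by the tower property,
\begin{align*}
\EE[g_h(\bar z_h)-r_h-g_{h+1}(\bar z_{h+1});a_{1:h-1}\sim\pi',a_h\sim\pi]
=\sum_{z_{h-1},s_h} d^{\pi'}_h(z_{h-1},s_h)\,\Delta_h(\pi,g;z_{h-1},s_h),
\end{align*}
where $\Delta_h(\pi,g;z_{h-1},s_h):=\EE[g_h(\bar z_h)-r_h-g_{h+1}(\bar z_{h+1})\mid z_{h-1},s_h;a_h\sim\pi]$ depends only on $(\pi,g)$ and the conditioning state. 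I then set $X_h(\pi')\in\RR^{|\Scal|(|\Ocal||\Acal|)^M}$ to be the vectorization of $d^{\pi'}_h(\cdot,\cdot)$ and $W_h(\pi,g)$ to be the vectorization of $\Delta_h(\pi,g;\cdot,\cdot)$. Condition (1) of \pref{def:simple_bilinear} is then immediate from the display.

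For condition (2), I would invoke the defining property of the one-step value link function together with the Bellman equation for $V^\pi_h$. Namely, using \eqref{eq:bridge} at step $h+1$ to rewrite $\EE[g^{\pi}_{h+1}(\bar z_{h+1})\mid z_h,s_{h+1}]=V^{\pi}_{h+1}(z_h,s_{h+1})$, and \eqref{eq:bridge} at step $h$ to rewrite $\EE[g^{\pi}_h(\bar z_h)\mid z_{h-1},s_h]=V^{\pi}_h(z_{h-1},s_h)$, a telescoping via the Bellman equation $V^{\pi}_h(z_{h-1},s_h)=\EE[r_h+V^{\pi}_{h+1}(z_h,s_{h+1})\mid z_{h-1},s_h;a_h\sim\pi]$ yields $\Delta_h(\pi,g^{\pi};z_{h-1},s_h)=0$ for every $(z_{h-1},s_h)$, hence $W_h(\pi,g^{\pi})=0$.

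I do not expect any serious obstacle; the only point deserving care is book-keeping of the conditioning variables to ensure that $z_h=(z_{h-1},o_h,a_h)$ is measurable given $(z_{h-1},s_h,o_h,a_h)$ so that the tower property can be applied cleanly, and that the dimension count matches $|\Scal|(|\Ocal||\Acal|)^M$ uniformly in $h$ (the shorter memories at $h-1<M$ only make the effective dimension smaller, so padding with zeros suffices). Realizability of $\Gcal$, which is required by the definition, follows directly from \pref{lem:tabular_pomdp_value_bridge} once $C_{\Gcal}$ is chosen large enough to accommodate the construction $g(z,o)=\langle V^\pi_h,\one(z)\otimes \OO^{\dagger}\one(o)\rangle$.
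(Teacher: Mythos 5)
Your proof is correct, and the decomposition you arrive at is in fact the same one the paper uses: both take $X_h(\pi')$ to be (the vectorization of) the marginal distribution of $(z_{h-1},s_h)$ under $\pi'$, i.e.\ $\EE[\One(z_{h-1})\otimes\One(s_h);a_{1:h-1}\sim\pi']$, and $W_h(\pi,g)$ to be the vector of conditional Bellman residuals indexed by $(z_{h-1},s_h)$. What differs is the route. The paper does not argue directly: it sets $\psi_h(z,o)=\One(z)\otimes\One(o)$, $\phi_h(z,s)=\One(z)\otimes\One(s)$, $K_h=I_{|\Zcal_{h-1}|}\otimes\OO$, observes that the tabular model is then an instance of an HSE-POMDP, and invokes \pref{lem:hse_existence}, whose proof manipulates the conditional mean embedding operators to produce $W_h(\pi,g)=K_h^{\top}(\theta_h-\theta^{\star}_h)-T_{\pi;h}^{\top}K_{h+1}^{\top}(\theta_{h+1}-\theta^{\star}_{h+1})$; in the tabular case the $(z,s)$-coordinate of that vector is exactly your $\Delta_h(\pi,g;z,s)$, so the two constructions coincide. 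Your tower-property argument buys self-containedness: you never need to represent $g$ as linear in features or introduce $K_h$ and $T_{\pi;h}$, and condition (2) of \pref{def:simple_bilinear} drops out of the link-function identity at steps $h$ and $h+1$ combined with the Bellman equation, exactly as you describe. The paper's reduction buys reuse: the identical HSE lemma covers non-tabular feature maps where the marginal over $(z_{h-1},s_h)$ is infinite-dimensional but $\EE[\phi_h(z_{h-1},s_h)]$ is low-dimensional. Your two points of care (measurability of $z_h$ given $(z_{h-1},o_h,a_h)$, and zero-padding for levels with $h-1<M$) are the right ones, and the realizability remark via \pref{lem:tabular_pomdp_value_bridge} matches the paper's intent, with the same mild caveat that $\OO^{\dagger}$ may produce negative values so the codomain $[0,C_{\Gcal}]$ in the lemma statement is already an imprecision of the paper rather than of your argument.
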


{ Later, we will see that the PO-bilinear rank in the more general definition is just $|\Scal|$ in \pref{sec:general}. This fact will result a significant improvement in terms of the sample complexity, and will result in a sample complexity that does not incur $|\Ocal|^M$. }

{
\subsection{Observable Overcomplete Tabular POMDPs }\label{subsec:overcomplete_pomdps_tabular}

We consider overcomplete POMDPs with multi-step futures. The proofs are deferred to Section \ref{subsec:over_complete_ape}. We have the following theorem. This is a generalization of \pref{lem:tabular_pomdp_value_bridge}, i.e., when $K=1$, it is \pref{lem:tabular_pomdp_value_bridge}.

\begin{lemma}\label{lem:over_complete_bridge}
Define a $|\Tcal^K | \times |\Scal|$-dimensional matrix $\OO^K$ whose entry indexed by $(o_{h:h+K-1},a_{h:h+K-2}) \in \Tcal^K$ and $s_h \in \Scal$ is equal to  $\PP(o_{h:h+K-1},a_{h:h+K-2} \mid s_h;a_{h:h+K-2}\sim \Ucal(\Acal) )$. When this matrix is full-column rank, K-step link functions with respect to $\Ucal(\Acal)$  exist. 
\end{lemma}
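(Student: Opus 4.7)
The plan is to mimic the argument used for the one-step case in Lemma \ref{lem:tabular_pomdp_value_bridge}, replacing the single-step emission matrix $\OO$ with the $K$-step observation--action matrix $\OO^K$. Write $\tau^K_h := (o_{h:h+K-1},a_{h:h+K-2})\in\Tcal^K$ and let $\one(\cdot)$ denote the one-hot encoding over the relevant finite set. Since $V^\pi_h(z_{h-1},s_h)$ is an arbitrary bounded function on $\Zcal_{h-1}\times\Scal$, it suffices to show that every such function $f$ admits a $K$-step link function.

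First I would represent $f(z_{h-1},s_h)=\langle f,\one(z_{h-1})\otimes \one(s_h)\rangle$. Because $\OO^K$ is full column rank by assumption, its pseudo-inverse satisfies $(\OO^K)^\dagger \OO^K = I_{|\Scal|}$, so $\one(s_h) = (\OO^K)^\dagger \OO^K \one(s_h)$. The key observation is that the column of $\OO^K$ indexed by $s_h$ is, by definition, the vector of conditional probabilities $\PP(\tau^K_h=\cdot\mid s_h;a_{h:h+K-2}\sim \Ucal(\Acal))$, and hence
\[
\OO^K \one(s_h) \;=\; \EE\bigl[\one(\tau^K_h)\,\bigm|\, s_h;\; a_{h:h+K-2}\sim \Ucal(\Acal)\bigr].
\]
Substituting back and pulling the linear functional inside the expectation yields
\[
f(z_{h-1},s_h) \;=\; \EE\!\left[\bigl\langle f,\, \one(z_{h-1})\otimes (\OO^K)^\dagger \one(\tau^K_h)\bigr\rangle \,\bigm|\, s_h;\; a_{h:h+K-2}\sim \Ucal(\Acal)\right].
\]
Defining $g(z_{h-1},\tau^K_h) := \bigl\langle f,\, \one(z_{h-1})\otimes (\OO^K)^\dagger \one(\tau^K_h)\bigr\rangle$ gives the candidate link function.

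To finish, I would check that this $g$ meets the definition verbatim. The definition conditions on the pair $(z_{h-1},s_h)$, not on $s_h$ alone, so I would invoke the Markov property: given the latent state $s_h$ and the exogenous rollout policy $\Ucal(\Acal)$ from step $h$ onwards, the future trajectory $\tau^K_h$ is conditionally independent of the past memory $z_{h-1}$. Hence the conditional expectation given $(z_{h-1},s_h)$ coincides with the one given $s_h$ alone, and the displayed identity above is exactly the defining equation for $g^{\pi}_h$ when $f=V^\pi_h$.

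I do not anticipate a genuine obstacle here; the only subtlety worth flagging is that $(\OO^K)^\dagger$ is defined through the $|\Tcal^K|\times|\Scal|$ matrix $\OO^K$, so one should verify that the matrix factorization and the probabilistic interpretation of its columns line up (in particular, that the rollout distribution used to build $\OO^K$ matches the $\pi^{out}=\Ucal(\Acal)$ used in the link-function definition). Once this bookkeeping is done, the construction is entirely analogous to the $K=1$ case, and the proof is just a linear-algebra identity combined with the tower property.
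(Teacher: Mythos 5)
Your proposal is correct and follows essentially the same argument as the paper's proof: write $f(z,s)=\langle f,\one(z)\otimes(\OO^K)^\dagger\OO^K\one(s)\rangle$, interpret the column $\OO^K\one(s)$ as the conditional expectation of $\one(\tau^K_h)$ under $\Ucal(\Acal)$, and pull the linear functional inside the expectation to obtain $g(z,t)=\langle f,\one(z)\otimes(\OO^K)^\dagger\one(t)\rangle$. Your extra remark on why conditioning on $(z_{h-1},s_h)$ reduces to conditioning on $s_h$ alone is a detail the paper leaves implicit, but it does not change the argument.
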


Note a sufficient condition to satisfy the above is that a matrix $\OO^K(a'_{h:h+K-2})\in \RR^{|\Ocal|^K \times |\Scal|}$ whose entry indexed by $o_{h:h+K-1} \in \Ocal^K$ and $s_h \in \Scal$ is equal to $\PP(o_{h:h+K-1}\mid s_h; a_{h:h+K-2}=a'_{h:h+K-2})$ is full-column rank for certain $a'_{h:h+K-2}\in \Acal^{K-1}$. It says there is (unknown) action sequence with length $K$ that retains information about latent states. 

We next calculate the PO-bilinear rank. Importantly, this does \emph{not} depend on $|\Acal|^K$ and $|\Ocal|^K$.  

\begin{lemma}\label{lem:multi_step_bilinear}
Set a value link function class $\Gcal_h = [\bar \Zcal^K \to [0,C_{\Gcal}]]$ for certain $C_{\Gcal}\in \mathbb{R}^+$ and a policy class $\Pi_h =[\bar \Zcal_h \to \Delta(\Acal) ]$. Then, the model satisfies PO-bilinear rank condition with PO-bilinear rank (\pref{def:bilinear_multi_step}) at most $|\Scal|(|\Ocal| |\Acal|)^M$. 
\end{lemma}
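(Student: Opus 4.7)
The plan is to construct explicit vectors $X_h(\pi')$ and $W_h(\pi,g)$ of dimension $|\Zcal_{h-1}|\cdot|\Scal| = (|\Ocal||\Acal|)^M \cdot|\Scal|$ (treating $(z_{h-1},s_h)$ as the index set) so that the three-term Bellman loss $\mathrm{Br}_h(\pi,g;\pi')$ from \pref{def:bilinear_multi_step} factors as their inner product. The natural choice is to let $X_h(\pi')$ be the occupancy vector of the roll-in policy, i.e., the entry indexed by $(z_{h-1},s_h)$ equals $\PP^{\pi'}(z_{h-1},s_h)$, and to let the entry of $W_h(\pi,g)$ at $(z_{h-1},s_h)$ be the \emph{conditional} Bellman loss
\begin{align*}
w_h(\pi,g;z_{h-1},s_h) &= \EE[g_{h+1}(\bar z^K_{h+1})\mid z_{h-1},s_h; a_h\sim \pi, a_{h+1:h+K-1}\sim\Ucal(\Acal)] \\
&\quad + \EE[r_h\mid z_{h-1},s_h; a_h\sim\pi] - \EE[g_h(\bar z^K_h)\mid z_{h-1},s_h; a_{h:h+K-2}\sim \Ucal(\Acal)].
\end{align*}

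The first step of the proof is to verify that $\mathrm{Br}_h(\pi,g;\pi') = \langle W_h(\pi,g),X_h(\pi')\rangle$. This is a direct application of the tower property: since $\pi'$ is an $M$-memory policy, it only influences the three expectations through the induced law of $(z_{h-1},s_h)$, because conditional on $(z_{h-1},s_h)$ the subsequent draws of $o_h$, $a_h\sim \pi$, $o_{h+1:h+K-1}$, $a_{h+1:h+K-2}\sim\Ucal(\Acal)$ depend only on $\TT$, $\OO$, $\pi$, and the uniform rollout. Summing/integrating $w_h(\pi,g;z_{h-1},s_h)$ against $\PP^{\pi'}(z_{h-1},s_h)$ exactly reproduces the three-term loss in \pref{def:bilinear_multi_step}.

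The second step is to verify the second bullet of \pref{def:bilinear_multi_step}, namely $W_h(\pi,g^\pi)=0$. Here I invoke the defining identity of $K$-step value link functions with $\pi^{out}=\Ucal(\Acal)$, which gives both
\[
\EE[g^\pi_h(\bar z^K_h)\mid z_{h-1},s_h; a_{h:h+K-2}\sim\Ucal(\Acal)] = V^\pi_h(z_{h-1},s_h),
\]
and the analogous identity at level $h+1$ with the additional leading $a_h\sim\pi$, combined with the Bellman equation $V^\pi_h(z_{h-1},s_h)=\EE[r_h + V^\pi_{h+1}(z_h,s_{h+1})\mid z_{h-1},s_h; a_h\sim\pi]$ stated in \pref{sec:preliminary}. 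Substituting these three identities into $w_h(\pi,g^\pi;z_{h-1},s_h)$ cancels all terms pointwise in $(z_{h-1},s_h)$, so the whole vector $W_h(\pi,g^\pi)$ is zero, and \pref{assum:realizability} combined with \pref{lem:over_complete_bridge} guarantees such a $g^\pi$ lies in $\Gcal$.

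The only nonroutine point is the sufficiency claim used in step one: one needs that under a memory-based roll-in $\pi'$, the pair $(z_{h-1},s_h)$ screens off the history from the future rollout driven by $(\pi,\Ucal(\Acal))$. This is where the $M$-memory structure of $\Pi$ is crucial, since it ensures $\pi_h$ reads only $(z_{h-1},o_h)$ with $o_h\sim\OO(\cdot\mid s_h)$. Once this conditional independence is recorded, the bilinear factorization follows, and the PO-bilinear rank is bounded by the ambient dimension $|\Scal|(|\Ocal||\Acal|)^M$ of the index set, matching the claim.
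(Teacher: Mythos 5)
Your proposal is correct and follows essentially the same route as the paper: the paper also conditions on $(z_{h-1},s_h)$, writes the conditional Bellman loss as a vector $\theta_{\pi,g}$ indexed by the one-hot encoding of $(z_{h-1},s_h)$, and sets $X_h(\pi')=\EE[\one(z_{h-1},s_h); a_{1:h-1}\sim\pi']$, which is exactly your occupancy vector. Your explicit verification of $W_h(\pi,g^{\pi})=0$ via the $K$-step link-function identity and the Bellman equation matches the derivation the paper gives in the main text just before \pref{def:bilinear_multi_step}.
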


Note that the bilinear rank is still $|\Scal|(|\Ocal||\Acal|)^M$  (just $|\Scal|$ in the more general definition in \pref{sec:general}). Crucially, it does not depend on the length of futures $K$. 
}

\subsection{Observable Linear Quadratic Gaussian }\label{sec:lqg}

The next example is Linear Quadratic Gaussian (LQG) with continuous state and action spaces. The details are deferred to \pref{sec:sample_complexity_lqg}. Here, we set $M = H-1$ so that the policy class $\Pi$ contains the globally optimal policy.

\begin{myexp}[Linear Quadratic Gaussian (LQG)] \label{ex:lqqs}
Consider LQG:
\begin{align*}
    s' =A s + B a + \epsilon,~o = C s+ \tau,~ r= -( s^{\top}Q s + a^{\top}R a) 
\end{align*}
    where $\epsilon,\tau$ are Gaussian distribution with mean $0$ and variances $\Sigma_{\epsilon}$ and $\Sigma_{\tau}$, respectively, and $s \in \mathbb{R}^{d_s}, o\in \mathbb{R}^{d_o}$, and $a\in \mathbb{R}^{d_a}$, and $Q,R$ are positive definite matrices. %
\end{myexp}
 
We define the policy class as the linear policy class $\Pi_h=\{\delta(a_h= K_h  \bar z_{h}) \mid{} K_h \in \RR^{|\Acal|\times d_{\bar z_h}} )$,  where $d_{\bar z_h}$ is a dimension of $\bar z_h \in \bar \Zcal_h$.  %
This choice is natural since the globally optimal policy is known to be linear with respect to the entire history \citep[Chapter 4]{bertsekas2012dynamic}. We define two quadratic features, $\phi_h( z_{h-1}, s_h ) = (1,  [z^{\top}_{h-1},s^{\top}_h]\otimes [z^{\top}_{h-1},s^{\top}_h] )^{\top}$ with $z_{h-1}\in \Zcal_{h-1},s_h\in\Scal$, and $\psi_h(z_{h-1}, o_h) = (1, [z^{\top}_{h-1},o^{\top}_h]\otimes [z^{\top}_{h-1},o^{\top}_h])^{\top}$ with $z_{h-1}\in \Zcal_{h-1},o_h \in \Ocal$. We have the following lemma.

 \begin{lemma}[PO-bilinear rank of observable LQG]\label{lem:lqg_value}
 Assume $\rank(C)= d_s$. Then, the following holds: 
\begin{itemize}
    \item For any policy $\pi$ linear in $\bar z_h$, a one-step value link function $g^{\pi}_h(\cdot)$ exists, and is linear in $\psi_h(\cdot)$.  
    \item Letting $d_{\psi_h}$ be the dimension of $\psi_h$, we set $\Gcal_h = \{ \theta^\top \psi_h(\cdot) | \theta\in\mathbb{R}^{ d_{\psi_h} }  \}$ and $\Pi$ being linear in $\bar z_h$. Then LQG satisfies \pref{def:simple_bilinear} with PO-bilinear rank at most $O(\{1+ (H-1)(d_o + d_a) + d_s \}^2)$
\end{itemize}
\end{lemma}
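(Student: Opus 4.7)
The plan is to exploit the linear-Gaussian structure of LQG throughout. For the first bullet, I will use the standard fact that for any linear memory policy $a_h = K_h \bar z_h$, the value function $V^\pi_h(z_{h-1}, s_h)$ is a quadratic polynomial in $(z_{h-1}, s_h)$, because conditional on $(z_{h-1}, s_h)$ the future trajectory is a linear-Gaussian process whose mean is linear in $(z_{h-1}, s_h)$ and whose rewards are quadratic. So I write
\[
V^\pi_h(z,s) = c + u^\top z + v^\top s + z^\top P z + 2 z^\top Q s + s^\top R s
\]
for some coefficients depending on $\pi$ and $h$. I then realize each monomial type as a conditional expectation of a quadratic form in $(z, o)$: the terms $c, u^\top z, z^\top P z$ are deterministic in $z$ and pass through; the observability assumption $\mathrm{rank}(C)=d_s$ gives $C^\dagger C = I$, and combined with $o = Cs + \tau$, $\tau \sim \mathcal N(0,\Sigma_\tau)$, I get $\EE[C^\dagger o \mid s] = s$ and $\EE[C^\dagger o o^\top (C^\dagger)^\top \mid s] = ss^\top + C^\dagger \Sigma_\tau (C^\dagger)^\top$. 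Substituting these identities term-by-term expresses $s, ss^\top, zs^\top$ as conditional expectations of (constants plus) linear and quadratic forms in $(z, o)$, producing a $g^\pi_h(z,o)$ lying in the linear span of $\psi_h$.

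For the bilinear decomposition, I apply the tower property to peel off the step-$h$ randomness. Because one step of linear-Gaussian dynamics followed by a quadratic reward or value function preserves the quadratic-in-state structure, the inner conditional expectation
\[
T_h(\pi, g)(z_{h-1}, s_h) := \EE\bigl[g_h(\bar z_h) - r_h - g_{h+1}(\bar z_{h+1}) \,\big|\, z_{h-1}, s_h;\, a_h\sim \pi\bigr]
\]
is a quadratic in $(z_{h-1}, s_h)$, hence can be written as $\beta(\pi, g)^\top \phi_h(z_{h-1}, s_h)$ for a coefficient vector $\beta(\pi, g)$ that depends linearly on the parameters of $g$. Taking the outer expectation under the roll-in $a_{1:h-1}\sim \pi'$ and invoking linearity yields
\[
\mathrm{Br}_h(\pi,g;\pi') = \bigl\langle \beta(\pi, g),\; \EE[\phi_h(z_{h-1}, s_h);\, a_{1:h-1}\sim \pi']\bigr\rangle,
\]
so setting $W_h(\pi,g) := \beta(\pi,g)$ and $X_h(\pi') := \EE[\phi_h;\, a_{1:h-1}\sim\pi']$ gives the bilinear form of \pref{def:simple_bilinear} with rank bounded by $\dim(\phi_h) = 1 + (d_{z_{h-1}} + d_s)^2$. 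Since $M=H-1$ makes $d_{z_{h-1}} \le (H-1)(d_o + d_a)$, this is $O((1 + (H-1)(d_o + d_a) + d_s)^2)$ as claimed. The vanishing condition $W_h(\pi, g^\pi) = 0$ follows immediately: when $g = g^\pi$, the defining property of the value link function reduces $T_h$ to $V^\pi_h(z_{h-1}, s_h) - \EE[r_h + V^\pi_{h+1}(z_h, s_{h+1}) \mid z_{h-1}, s_h; a_h \sim \pi]$, which vanishes by the Bellman equation, so $\beta(\pi, g^\pi) = 0$ identically in $(z_{h-1}, s_h)$.

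The main obstacle I anticipate is the detailed algebra in Part 1: faithfully expanding the quadratic form $V^\pi_h$ and substituting each $s$-dependent monomial by its $(z,o)$-quadratic counterpart, while carefully absorbing the constant correction terms (such as $-\mathrm{tr}(R\, C^\dagger \Sigma_\tau (C^\dagger)^\top)$) into the constant entry of $\psi_h$. This is a routine but notationally heavy expansion and requires care to confirm that no higher-than-quadratic terms are created. A secondary point is verifying that one step of the dynamics from $\bar z_h$ to $\bar z_{h+1}$ truly keeps the integrand quadratic in $(z_{h-1}, s_h)$; this follows because $a_h$ is linear in $(z_{h-1}, o_h)$, $o_{h+1} = C(A s_h + B a_h + \epsilon_h) + \tau_{h+1}$ is affine in $(s_h, a_h)$ plus independent noise, and Gaussian integrals of quadratic forms produce at most quadratic outputs.
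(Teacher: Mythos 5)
Your proposal follows essentially the same route as the paper's proof in the LQG appendix: establish by backward induction that $V^{\pi}_h$ is quadratic in $(z_{h-1},s_h)$ under the closed-loop linear-Gaussian dynamics, pull that quadratic back onto $(z_{h-1},o_h)$ via $C^{\dagger}$ and the moment identities $\EE[o\mid s]=Cs$ and $\EE[oo^{\top}\mid s]=Css^{\top}C^{\top}+\Sigma_{\tau}$ (absorbing the $\Sigma_{\tau}$ correction into the constant feature), and then read off the bilinear form with $X_h(\pi')=\EE[(1,[z^{\top},s^{\top}]\otimes[z^{\top},s^{\top}])^{\top}; a_{1:h-1}\sim\pi']$. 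One small point to tidy: since $\psi_h$ contains only the constant and the quadratic monomials, you should note that the linear terms $u^{\top}z+v^{\top}s$ in your ansatz actually vanish (zero-mean noise, purely quadratic reward, linear policy with no affine offset), as the paper's induction makes explicit; otherwise the claimed membership of $g^{\pi}_h$ in the span of $\psi_h$ would not follow.
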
 

We have two remarks. First, when $\pi^{out}_t=\delta(a=0)$, K-step link functions exist when  $[ C^\top, (C A)^\top, \dots,  (C A^{K-1})^\top ]$ is full raw rank. %
This assumption is referred to as observability in control theory \citep{hespanha2018linear}.
Secondly, the PO-bilinear rank scales polynomially with respect to $H,d_o,d_a,d_s$ even with $M= H-1$. As we show in \pref{sec:sample_complexity_lqg}, due to this fact, we can compete against the \emph{globally} optimal policy with polynomial sample complexity. \looseness=-1

\subsection{Observable Hilbert Space Embedding POMDPs}

We consider HSE-POMDPs that generalize tabular POMDPs and LQG. Proofs here are deferred to Section~ \ref{subsec:hse_pomdps_ape}.
Consider any $h \in [H]$. Given a policy $\pi_h: \bar \Zcal_h \to \Acal $, we define the induced transition operator $\TT_{\pi}=\{\TT_{\pi;h}\}_{h=1}^H$ as $ (z_{h}, s_{h+1}) \sim \TT_{\pi; h}(  z_{h-1}, s_h )$, where we have $o_h \sim \OO( s_h), a_h \sim \pi_h( \bar z_h ), s_{h+1} \sim \TT( s_h, a_h )$.  
Namely, $\TT_{\pi}$ is the transition kernel of some Markov chain induced by the policy $\pi$. The HSE-POMDP assumes two conditional distributions $\OO(\cdot | s)$ and $\TT_{\pi}( \cdot, \cdot |  z, s)$ have conditional mean embeddings.  %

\begin{myexp}[HSE-POMDPs]
\label{ex:linear}
We introduce features $\phi_{h}:\Zcal_{h-1} \times \Scal \to \RR^{d_{\phi_h}}, \psi_h:\Zcal_{h-1} \times \Ocal \to \RR^{d_{\psi_h}}$. We assume the existence of the conditional mean embedding operators: (1) there exists a matrix $K_h$ such that for all $z \in \Zcal_{h-1}, s\in \Scal$, $\EE_{o\sim \OO(\cdot | s)} \psi_h(z, o) = K_h \phi_h(z, s)$ and (2) for all $\pi\in \Pi$, there exists a matrix $T_{\pi;h}$, such that $\EE_{z_{h}, s_{h+1} \sim \TT_{\pi;h}(z_{h-1}, s_h)} \phi_{h+1}(  z_h, s_{h+1} ) = T_{\pi;h} \phi_h(z_{h-1}, s_h) $. 
\end{myexp}

The existence of conditional mean embedding is a common assumption in prior RL works on learning dynamics of  HMMs, PSRs, \citep{song2009hilbert,boots2013hilbert} and Bellman complete linear MDPs \citep{zanette2020learning,duan2020minimax,chowdhury2020no,hao2021sparse}. %
HSE-POMDPs naturally capture tabular POMDPs and LQG. For tabular POMDPs, $\psi_h$ and $\phi_h$ are one-hot encoding features. In LQG, $\phi_h$ and $\psi_h$ are quadratic features we define in \pref{sec:lqg}. Here for simplicity, we focus on finite-dimensional features $\phi_h$ and $\psi_h$. Extension to infinite-dimensional Reproducing kernel Hilbert Space is deferred to Section~ \ref{subsec:hse_pomdps_ape}.

The following shows the existence of value link functions and the PO-bilinear rank decomposition. 
\begin{lemma}[PO-bilinear rank of observable HSE-POMDPs] \label{lem:hse_existence}
Assume $K_h$ is full column rank (observability), and  $V_h^{\pi}(\cdot)$ is linear in $\phi_{h}$ for any $\pi \in \Pi,h \in [H]$. Then the following holds. 
\begin{itemize}
    \item A one-step value link function $g^{\pi}_h(\cdot)$ exists for any $\pi \in \Pi,h \in [H]$, and is linear in $\psi_h$.  
    \item We set a value function class $\Gcal_h = \{ w^{\top} \psi_h(\cdot) | w\in \mathbb{R}^{d_{\psi_h}}\}$, policy class $\Pi_h \subset [\bar\Zcal_h \to \Delta(\Acal)]$. Then HSE-POMDP satisfies \pref{def:simple_bilinear} with PO-bilinear rank at most $\max_{h\in [H]} d_{\phi_h}$.
\end{itemize}

\end{lemma}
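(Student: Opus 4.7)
The plan is to use the two conditional mean embedding hypotheses to collapse every relevant conditional expectation into a linear functional of $\phi_h(z_{h-1},s_h)$, so that both bullets follow by elementary linear algebra.

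For the first bullet, I would start by invoking the linearity assumption to write $V^{\pi}_h(z,s) = \alpha_{\pi,h}^{\top}\phi_h(z,s)$ for some $\alpha_{\pi,h}\in\RR^{d_{\phi_h}}$, and then search for $g^{\pi}_h$ in the form $g^{\pi}_h(z,o) = w_{\pi,h}^{\top}\psi_h(z,o)$. The emission embedding identity $\EE[\psi_h(z,o)\mid z,s] = K_h\phi_h(z,s)$ immediately reduces the defining integral equation of the link function to the finite linear system $K_h^{\top}w_{\pi,h} = \alpha_{\pi,h}$. Since $K_h$ has full column rank, $K_h^{\top}$ is surjective onto $\RR^{d_{\phi_h}}$, so such a $w_{\pi,h}$ exists (not necessarily uniquely), producing a link function linear in $\psi_h$ that lies in $\Gcal_h$.

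For the second bullet, I would take arbitrary $g_h = w_h^{\top}\psi_h$, $g_{h+1} = w_{h+1}^{\top}\psi_{h+1}$, and $\pi,\pi'\in\Pi$. Applying the emission embedding gives $\EE[g_h(\bar z_h)\mid z_{h-1},s_h] = w_h^{\top}K_h\phi_h(z_{h-1},s_h)$, and stacking the emission embedding on top of the transition embedding for $\TT_{\pi}$ gives $\EE[g_{h+1}(\bar z_{h+1})\mid z_{h-1},s_h;a_h\sim\pi] = w_{h+1}^{\top}K_{h+1}T_{\pi;h}\phi_h(z_{h-1},s_h)$. The one term that needs indirect treatment is $\EE[r_h\mid z_{h-1},s_h;a_h\sim\pi]$, since linearity of the per-step reward in $\phi_h$ is not directly assumed. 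I would recover it from the Bellman recursion: rewriting $V^{\pi}_h = \EE[r_h + V^{\pi}_{h+1}\mid \cdot\,;a_h\sim\pi]$ using the linear form of $V^{\pi}$ and the transition embedding forces
\begin{align*}
\EE[r_h\mid z_{h-1},s_h;a_h\sim\pi] = \bigl(\alpha_{\pi,h} - T_{\pi;h}^{\top}\alpha_{\pi,h+1}\bigr)^{\top}\phi_h(z_{h-1},s_h).
\end{align*}
Hence the pointwise Bellman error is linear in $\phi_h(z_{h-1},s_h)$, and taking the outer expectation over the roll-in $\pi'$ yields $\mathrm{Br}_h(\pi,g;\pi') = \langle W_h(\pi,g),\, X_h(\pi')\rangle$ with
\begin{align*}
W_h(\pi,g) &= K_h^{\top}w_h - \bigl(\alpha_{\pi,h} - T_{\pi;h}^{\top}\alpha_{\pi,h+1}\bigr) - T_{\pi;h}^{\top}K_{h+1}^{\top}w_{h+1},\\
X_h(\pi') &= \EE[\phi_h(z_{h-1},s_h);\,a_{1:h-1}\sim\pi'],
\end{align*}
both lying in $\RR^{d_{\phi_h}}$, which bounds the PO-bilinear rank by $\max_h d_{\phi_h}$. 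Plugging in $g=g^{\pi}$ makes $K_h^{\top}w_h = \alpha_{\pi,h}$ and $K_{h+1}^{\top}w_{h+1} = \alpha_{\pi,h+1}$, so $W_h(\pi,g^{\pi}) = 0$, verifying the second clause of the definition.

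The main subtlety I anticipate is the reward step: the hypothesis posits linearity of $V^{\pi}_h$ in $\phi_h$ but not of the conditional reward, so reward linearity must be extracted from the Bellman equation combined with the transition embedding rather than assumed. Once that observation is in place, the rest of the argument is a mechanical chain of the two embedding identities and reads off the rank as $d_{\phi_h}$.
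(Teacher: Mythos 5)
Your proof is correct and follows essentially the same route as the paper: both arguments chain the emission and transition mean embeddings to reduce the Bellman error to a linear functional of $\EE[\phi_h(z_{h-1},s_h); a_{1:h-1}\sim\pi']$, yielding exactly the same $X_h(\pi')$ and (after expanding $\alpha_{\pi,h}=K_h^{\top}w_{\pi,h}$) the same $W_h(\pi,g)$. The only difference is presentational: the paper cancels the reward by subtracting the zero Bellman residual of the true link function $g^{\pi}$, whereas you make the linearity of $\EE[r_h\mid z_{h-1},s_h;a_h\sim\pi]$ in $\phi_h$ explicit via the Bellman recursion --- these are algebraically equivalent.
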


The first statement can be verified by noting that when $V^{\pi}_h(\cdot)=\langle \theta_h , \phi_h(\cdot) \rangle $, value link functions take the following form  $g_h^{\pi}(\cdot)=\langle ( K^{\dagger}_h )^{\top} \theta_h), \psi_h(\cdot) \rangle$ where we leverage the existence of the conditional mean embedding operator $K_h$, and that $K_h$ is full column rank (thus $K_h^\dagger K_h = \II_{d_{\phi_h}}$). Note that the PO-bilinear rank depends only on the dimension of the features $\phi_h$ without any explicit dependence on the length of memory.

\section{Algorithm and Complexity}\label{sec:algorithm}
In this section, we first give our algorithm followed by a general sample complexity analysis. We then instantiate our analysis to specific models considered in \pref{sec:example}.

\subsection{Algorithm}

\begin{algorithm}[!t] 
\caption{PaRtially ObserVAble BiLinEar (\ouralg) {\textcolor{blue}{\# multi-step version is in \pref{alg:OACP_multi_multi}}} } \label{alg:OACP}
\begin{algorithmic}[1]
  \STATE {\bf  Input:}  Value class $\Gcal=\{\Gcal_h\}, \Gcal_h \subset [\Zcal_{h-1}\to \RR]$, Policy class $\Pi = \{\Pi_h\}, \Pi_h \subset [\bar \Zcal_{h-1} \to \RR]$, parameters $m \in \mathbf{N}, R \in \RR$
  \STATE Initialize $\pi^0 \in \Pi$
  \STATE Form the first step dataset $\Dcal^0 = \{ o^i \}_{i=1}^m$, with $o^i \sim  \OO(\cdot | s_1)$
  \FOR{$t = 0 \to T-1$}
	\STATE For any $h\in [H]$, collect $m$ i.i.d tuple as follows: $(\bar z_h,s_h)\sim d^{\pi^t}_h, a_h \sim \Ucal(\Acal), r_h = r_h(o_h,a_h) , s_{h+1}  \sim \TT(s_h,a_h), o_{h+1} \sim \OO(\cdot | s_{h+1}) $.   \label{line:data_collect}
	\STATE Define $\Dcal^t_h = \{(\bar z^i_h, a^i_h, r^i_h,o^i_{h+1} ) \}_{i=1}^m$  \hfill  {\textcolor{blue}{\# note latent state $s$ is not in the dataset}}
	\STATE Define the Bellman error $\forall (\pi,g) \in \Pi \times \Gcal$, $$\textstyle \sigma^t_h(\pi, g) :=  \EE_{\Dcal^t_h}\left[\pi_h(a_h \mid \bar z_h)|\Acal | \{g_{h+1}(\bar z_{h+1}) + r_h\} -g_h(\bar z_h) \right].  \label{line:loss}$$
	\STATE Select policy optimistically as follows \begin{align*}\textstyle
	(\pi^{t+1}, g^{t+1}) := \argmax_{\pi \in \Pi, g \in\Gcal }  \EE_{\Dcal^0}[g_1(o)] \quad \mathrm{s.t.}\quad \forall h \in [H], \forall i \in [t], (\sigma^i_h( \pi, g ))^2 \leq R.   
	\end{align*}
  \ENDFOR
  \STATE  {\bf  Output:} Randomly choose $\hat \pi$ from $(\pi_1,\cdots,\pi_{T})$. 
\end{algorithmic}
\end{algorithm}

\noindent
{We first focus on the cases where models satisfy the PO-bilinear AC model (i.e., \pref{def:simple_bilinear}) with finite action and with one-step link function.  
We discuss the extension to handle continuous action in \pref{rem:extension} and multi-step link functions at the end of this subsection. } 

We present our algorithm \ouralg{} in \pref{alg:OACP}. Note \ouralg{} is agnostic to  the form of $X_h$ and $W_h$. 
Inside iteration $t$, given the latest learned policy $\pi^t$, we define Bellman error for all pairs $(\pi, g)$ where the Bellman error is averaged over the samples from $\pi^t$. {Here, to evaluate the Bellman loss for any policy $\pi \in \Pi$, we use importance sampling by running $\Ucal(\Acal)$ rather than executing a policy $\pi$ so that we can reuse samples.\footnote{This choice might limit the algorithm to the case where $\Acal$ is discrete. However, for examples such as LQG, we show that we can replace $\Ucal(\Acal)$ by a G-optimal design over the quadratic polynomial feature of the actions.} }
A pair $(\pi,g)$ that has a small total Bellman error intuitively means that given the data so far, $g$ could still be a value link function for the policy $\pi$. Then in the constrained optimization formulation, we only focus on $(\pi,g)$ pairs whose  Bellman errors are small so far. Among these $(\pi, g)$ pairs, we select the pair using the principle of optimism in the face of uncertainty. We remark the algorithm leverages some design choices from the Bilinear-UCB algorithm for MDPs \citep{du2021bilinear}. The key difference between our algorithm and the Bilinear-UCB is that we leverage the actor-critic framework equipped with value link functions to handle partially observability and agnostic learning.

\paragraph{With multi-step link functions.}

Finally, we consider the case with multi-step futures in \pref{alg:OACP_multi_multi} when $\pi^{out}=\Ucal(\Acal)$.  Recall the notation $\bar z^K_{h} = (z_{h-1}, o_{h:h+K-1},a_{h:h+K-2})$. 
The only difference is in the process of data collection. Particularly, at every iteration $t$, we roll-in using $\pi^t$ to (and include) time step $h-1$, we then roll-out by switching to $\Ucal(\Acal)$ for $K$ steps.

\begin{algorithm}[!t] 
\caption{PaRtially ObserVAble BiLinEar (\ouralg)  {\textcolor{blue}{\# multi-step version  }}}  \label{alg:OACP_multi_multi}
\begin{algorithmic}[1]
  \STATE {\bf  Input:} Value link function class $\Gcal =\{\Gcal_h\}, \Gcal_h \subset [\bar Z^K_h \to \RR]$, Policy class $\Pi = \{\Pi_h\}, \Pi_h \subset [\bar \Zcal_h \to \Delta(\Acal)]$, parameters $m \in \mathbb{N}, R \in \RR$
  \STATE Define  
  { 
  \begin{align*}
      l_h(\bar z^K_{h},a_{h+K-1}, r_h,o_{h+K} ; \pi ,g) := |\Acal|  \pi_h(a_h\mid \bar z_h) \left( g_{h+1}(\bar z^K_{h+1}) +  r_h \right) -  g_{h}(\bar z^K_{h}).  
  \end{align*}
  }
  \STATE Initialize $\pi^0 \in \Pi$
  \STATE Form the first step dataset $\Dcal^0 = \{ \bar z^{K;i}_1 \}_{i=1}^m$ where each $\bar z^K$ is generated by following $a_{1:K-1}\sim U(\Acal)$ in an i.i.d manner. 
  \FOR{$t = 0 \to T-1$}
   \STATE   
   for any $h \in [H]$, define the Bellman error 
   \begin{align*}
      \sigma^t_h(\pi,g)= \EE_{\Dcal^t_h }[l_h( \bar z^K_{h},a_{h+K-1}, r_h,o_{h+K}  ; \pi ,g) ] 
   \end{align*}
   where $\Dcal^t_h$ means empirical approximation by executing  $a_{1:h-1}\sim \pi^t,a_{h:h+K-1} \sim \Ucal(\Acal)$ and collecting $m$ i.i.d tuples.  
   
	\STATE Select policy optimistically as follows (here note $g=\{g_h\}_{h=1}^H$) \begin{align*}
		&(\pi^{t+1}, g^{t+1}) := \argmax_{\pi \in \Pi, g \in\Gcal }  \EE_{\Dcal^0}[g_1(\bar z^K_1)] \quad \mathrm{s.t.}\quad \forall h \in [H], \forall i \in [t],\,\sigma^i_h( \pi, g )^2 \leq R.   
	\end{align*}
  \ENDFOR
  \STATE  {\bf  Output:} Randomly choose $\hat \pi$  from $(\pi_1,\cdots,\pi_{T})$. 
\end{algorithmic}
\end{algorithm}

\begin{remark}[Continuous control]\label{rem:extension}
Algorithms so far implicitly assume the action is finite. However, we can consider LQG, which has continuous action. By employing a G-optimal design over actions, our algorithm can handle the continuous action. The discussion is deferred to \pref{sec:ape_algo}.  \
\end{remark}

\subsection{Sample Complexity}
We show a sample complexity result by using reduction to supervised learning analysis. We begin by stating the following assumption which is ensured by standard uniform convergence results. %
\begin{assum}[Uniform Convergence]\label{assum:uniform}
Fix $h \in [H]$. Let $\Dcal'_h$ be a set of $m$ i.i.d tuples following $(z_{h-1},s_h,o_h)\sim d^{\pi^t}_h, a_h \sim \Ucal(\Acal),s_{h+1}  \sim \TT(s_h,a_h), o_{h+1} \sim \OO(s_{h+1}) $.   With probability $1-\delta$,  
\begin{align*}\textstyle
  \sup_{\pi \in \Pi, g \in \Gcal}|(\EE_{\Dcal'_h} -\EE)[\pi_h(a_h \mid \bar z_h)|\Acal | \{g_{h+1}(\bar z_{h+1}) + r_h\} -g_h(\bar z_h) ]|\leq \epsilon_{gen,h}(m,\Pi,\Gcal,\delta)
\end{align*}
For $h=1$, we also require $$\sup_{g_1 \in \Gcal_1}|\EE_{\Dcal'_1}[g_1(o_1)]-\EE[\EE_{\Dcal'_1}[g_1(o_1)] ]  |\leq \epsilon_{ini,1}(m,\Gcal,\delta).$$
\end{assum} 

\begin{remark}[Finite function classes] \label{re:uniform_converge}The term $\epsilon_{gen}$ depends on the statistical complexities of the function classes $\Pi$ and $\Gcal$. 
As a simple example, we consider the case where $\Pi$ and $\Gcal$ are discrete. In this case, we have $\epsilon_{gen,h}(m,\Pi,\Gcal,\delta) = O( \sqrt{   \ln( |\Pi| |\Gcal| /\delta )  / m  }  )$, and $\epsilon_{ini,1}(m,\Gcal,\delta) = O( \sqrt{ \ln(|\Gcal|/\delta) / m }  )$, which are standard statistical complexities for discrete function classes $\Pi$ and $\Gcal$. Achieving this result simply requires standard concentration and a union bound over all functions in $\Pi,\Gcal$.
\end{remark}

Under Assumption \pref{assum:uniform}, when the model is PO-bilinear with rank $d$, we get the following. 

\begin{theorem}[PAC guarantee of \ouralg] \label{thm:online}
Suppose we have a PO-bilinear AC class with rank $d$. Suppose Assumption \pref{assum:uniform}, $\sup_{\pi \in \Pi}\|X_h(\pi)\| \leq B_X$ and $\sup_{\pi \in \Pi,g\in \Gcal}\|W_h(\pi,g)\| \leq B_W$ for any $h \in [H].$ \\
By setting
$ T=2Hd \ln\left(4Hd \left (\frac{B^2_XB^2_W}{\tilde \epsilon^2_{gen} } +1 \right) \right),R= \epsilon^2_{gen}$ where
\begin{align*}
  & \textstyle \epsilon_{gen}:=\max_h \epsilon_{gen,h}(m,\Pi, \Gcal,\delta/(TH+1)),\tilde \epsilon_{gen}:=\max_h \epsilon_{gen,h}(m,\Pi, \Gcal,\delta/H). 
\end{align*}
With probability at least $1-\delta$, letting $\pi^{\star}=\argmax_{\pi \in \Pi} J(\pi^{\star})$, we have 
\begin{align*} \textstyle
     J(\pi^{\star})- J(\hat \pi)\leq  5\epsilon_{gen}\sqrt{d H^2 \cdot \ln \left(4Hd \left( {B^2_XB^2_W}/{\tilde \epsilon^2_{gen}}+1\right)\right)} + 2\epsilon_{ini,1}(m,\Gcal,\delta/(TH+1)). 
\end{align*}
The total number of samples used in the algorithm is $mTH$. 
\end{theorem}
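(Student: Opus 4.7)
The plan is to run the standard optimism template for bilinear-rank algorithms, adapted to partial observability through the $|\Acal|\pi_h(a_h\mid \bar z_h)$ importance-sampling reweighting used by \ouralg{} and through the PO-bilinear decomposition of the link-function Bellman loss $\mathrm{Br}_h(\pi,g;\pi')$ from Definition~\ref{def:simple_bilinear}.

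The first step is to apply Assumption~\ref{assum:uniform} with a union bound over the $TH$ empirical Bellman-error statistics (plus the single initial-observation statistic) to obtain an event of probability at least $1-\delta$ on which
\begin{align*}
\bigl|\sigma_h^t(\pi,g) - \langle W_h(\pi,g), X_h(\pi^t)\rangle\bigr| \le \epsilon_{gen}
\end{align*}
simultaneously for all $t\in[T]$, $h\in[H]$, $\pi\in\Pi$, $g\in\Gcal$, and $|\EE_{\Dcal^0}[g_1(o)] - \EE[g_1(o_1)]|\le \epsilon_{ini,1}$ for all $g_1\in\Gcal_1$. The identity being concentrated here is the importance-weighting rewrite that pulls $a_h\sim\Ucal(\Acal)$ back to $a_h\sim\pi$, composed with the PO-bilinear representation $\mathrm{Br}_h(\pi,g;\pi^t) = \langle W_h(\pi,g), X_h(\pi^t)\rangle$.

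Next I would establish optimism and a per-iterate regret bound. Realizability gives a link function $g^{\pi^\star}\in\Gcal$ with $W_h(\pi^\star, g^{\pi^\star})=0$ for every $h$, so on the good event each empirical constraint satisfies $(\sigma_h^i(\pi^\star, g^{\pi^\star}))^2 \le \epsilon_{gen}^2 = R$, i.e.\ $(\pi^\star, g^{\pi^\star})$ is feasible at every round. The optimistic selection therefore yields $\EE_{\Dcal^0}[g_1^{t+1}(o)] \ge \EE_{\Dcal^0}[g_1^{\pi^\star}(o)] \ge J(\pi^\star) - 2\epsilon_{ini,1}$, using the link-function identity at $h=1$ to equate $\EE_{o_1}[g_1^{\pi^\star}(o_1)]$ with $\EE_{s_1}[V_1^{\pi^\star}(s_1)] = J(\pi^\star)$. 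Telescoping the Bellman residuals of $(\pi^{t+1},g^{t+1})$ along the trajectory induced by $\pi^{t+1}$ itself (with the convention $g_{H+1}\equiv 0$) gives
\begin{align*}
\EE\bigl[g_1^{t+1}(o_1)\bigr] - J(\pi^{t+1}) = \sum_{h=1}^H \mathrm{Br}_h(\pi^{t+1},g^{t+1};\pi^{t+1}) = \sum_{h=1}^H \langle W_h(\pi^{t+1},g^{t+1}), X_h(\pi^{t+1})\rangle,
\end{align*}
hence $J(\pi^\star) - J(\pi^{t+1}) \le \sum_h |\langle W_h(\pi^{t+1},g^{t+1}), X_h(\pi^{t+1})\rangle| + 2\epsilon_{ini,1}$.

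The final and most delicate step is the elliptic-potential argument in the bilinear rank $d$. For each $h$ let $\Sigma_{t,h} := \lambda I + \sum_{i<t} X_h(\pi^i) X_h(\pi^i)^\top$. Feasibility of $(\pi^{t+1},g^{t+1})$ on rounds $1,\dots,t$ combined with the concentration step gives $\sum_{i\le t}\langle W_h(\pi^{t+1}, g^{t+1}), X_h(\pi^i)\rangle^2 \le 4t\epsilon_{gen}^2$, so that $\|W_h(\pi^{t+1}, g^{t+1})\|_{\Sigma_{t+1,h}}^2 \le \lambda B_W^2 + 4T\epsilon_{gen}^2$. Applying Cauchy--Schwarz in the $\Sigma_{t+1,h}$-norm together with the standard elliptic potential bound $\sum_t \|X_h(\pi^{t+1})\|_{\Sigma_{t+1,h}^{-1}}^2 \le 2d\ln(1 + TB_X^2/(\lambda d))$, and choosing $\lambda = \epsilon_{gen}^2/B_W^2$, produces
\begin{align*}
\sum_{t=0}^{T-1} \sum_{h=1}^H \bigl|\langle W_h(\pi^{t+1}, g^{t+1}), X_h(\pi^{t+1})\rangle\bigr| \le c\, H\epsilon_{gen}\sqrt{Td\ln\bigl(1 + TB_X^2B_W^2/\tilde\epsilon_{gen}^2\bigr)}
\end{align*}
for an absolute constant $c$. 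Averaging over $t$ (since $\hat\pi$ is uniform over $\{\pi^1,\dots,\pi^T\}$) converts the $\sqrt{T}$ factor into the average-regret rate; plugging in the prescribed $T$ gives the advertised $\sqrt{dH^2\ln(\cdot)}$ bound on $J(\pi^\star) - J(\hat\pi)$. The main obstacle is precisely this step: the cumulative $\Sigma_{t+1,h}$-norm bound on $W_h(\pi^{t+1},g^{t+1})$ must be derived from constraints that feature a \emph{different} vector $W$ at each iteration, so one has to carefully invoke the round-$(t+1)$ feasibility before applying the potential lemma. Everything else---the concentration, the feasibility of $(\pi^\star, g^{\pi^\star})$, and the link-function telescoping---is essentially bookkeeping given the PO-bilinear decomposition and realizability.
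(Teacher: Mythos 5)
Your proposal matches the paper's proof essentially step for step: the same union-bound concentration event, feasibility of $(\pi^\star,g^{\pi^\star})$ via $W_h(\pi,g^\pi)=0$ to get optimism, the performance-difference telescoping, and the Cauchy--Schwarz plus elliptical-potential argument with the round-$t$ feasibility constraints bounding $\|W_h(\pi^t,g^t)\|_{\Sigma_{t,h}}$ before averaging over iterations. The only blemish is an intermediate display where the cumulative bound should read $O(\epsilon_{gen}TH\sqrt{d\ln(\cdot)})$ rather than $O(\epsilon_{gen}H\sqrt{Td\ln(\cdot)})$; this is a bookkeeping slip that does not affect the final rate, which you state correctly.
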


Informally, when $\epsilon_{gen} \approx \tilde{O}(1/\sqrt{m})$, to achieve $\epsilon$-near optimality, the above theorem indicates that we just need to set $m \approx \tilde{O}(1/\epsilon^2)$, which results a sample complexity scaling $\tilde{O}(1/\epsilon^2)$ (since $T$ only scales $\tilde{O}(dH)$). We give detailed derivation and examples in the next section.

\subsection{Examples}

Hereafter, we show the sample complexity result by using \pref{thm:online}. For complete results, refer to \pref{sec:sample_complexity_finite}--\ref{sec:sample_complexity_psrs}. The result of $M$-step decodable POMDPs and observable low-rank POMDPs are deferred to Section \ref{sec:m_step_decodable_sample}.

\subsubsection{Finite Sample Classes}

We first consider the case where the hypothesis class is finite when the class admits PO-bilinear rank decomposition.  

\begin{myexp}[Finite Sample Classes]
Consider the case when $\Pi$ and $\Gcal$ are finite and the PO-bilinear rank assumption is satisfied. When $\Pi$ and $\Gcal$ are infinite hypothesis classes, $|\Fcal|$ and $|\Gcal|$ are replaced with their $L^{\infty}$-covering numbers, respectively. 

\begin{theorem}[Sample complexity for discrete $\Pi$ and $\Gcal$ (informal)]
Let $\|\Gcal_h\|_{\infty} \leq C_{\Gcal}, r_h \in [0,1]$ for any $h \in [H]$ and the PO-bilinear rank assumption holds with PO-biliear rank $d$. By letting $|\Pi_{\max}|=\max_h |\Pi_h|, |\Gcal_{\max}| = \max_h |\Gcal_h|$, with probability $1-\delta$, we can achieve $J(\pi^{\star})-J(\hat \pi)\leq \epsilon$ when we use samples 
\begin{align*}
    \tilde O\prns{ d^2 H^4 \max(C_{\Gcal},1)^2 |\Acal|^2 \ln(|\Gcal_{\max}| |\Pi_{\max}|  /\delta)\ln^2(B_XB_W/\delta)(1/\epsilon)^2  }. 
\end{align*}
Here, $\mathrm{Polylog}(d,H,|\Acal|,\ln(|\Gcal|),\ln(|\Pi|),\ln(1/\delta),\ln(B_X),\ln(B_W),\ln(1/\delta),1/\epsilon)$ are omitted. 
\end{theorem}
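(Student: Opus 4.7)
The proof plan is to invoke Theorem~\ref{thm:online} as a black box and instantiate the generic uniform-convergence quantities $\epsilon_{gen,h}$ and $\epsilon_{ini,1}$ using Hoeffding's inequality together with a union bound over the finite classes $\Pi$ and $\Gcal$. The whole argument is essentially plug-and-chug; nothing beyond careful bookkeeping is required because the PO-bilinear structure has already been leveraged inside Theorem~\ref{thm:online}.

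First I would bound the range of the per-sample loss appearing in $\sigma^t_h$. The random variable $\pi_h(a_h\mid\bar z_h)|\Acal|\{g_{h+1}(\bar z_{h+1})+r_h\}-g_h(\bar z_h)$ is almost surely bounded in absolute value by some $C_0 = O(|\Acal|\max(C_{\Gcal},1))$, using $\pi_h\in[0,1]$, $r_h\in[0,1]$, and $\|g_h\|_\infty\le C_{\Gcal}$. Hoeffding's inequality applied to one fixed triple $(\pi,g_h,g_{h+1})$ yields a deviation of order $C_0\sqrt{\log(1/\delta')/m}$, and a union bound over the $\le |\Pi_{\max}||\Gcal_{\max}|^2$ triples gives
\[
\epsilon_{gen,h}(m,\Pi,\Gcal,\delta') \;\le\; c\,|\Acal|\max(C_{\Gcal},1)\sqrt{\frac{\log(|\Pi_{\max}||\Gcal_{\max}|/\delta')}{m}},
\]
and analogously $\epsilon_{ini,1}(m,\Gcal,\delta') \le c\,C_{\Gcal}\sqrt{\log(|\Gcal_{\max}|/\delta')/m}$, for an absolute constant $c$.

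Second I would feed these bounds into Theorem~\ref{thm:online} with confidence parameter $\delta/(TH+1)$, where the prescribed $T = 2Hd\log(4Hd(B_X^2B_W^2/\tilde\epsilon_{gen}^2+1))$ is of order $\widetilde O(dH)$. The theorem's suboptimality bound reads
\[
J(\pi^\star)-J(\hat\pi) \;\le\; 5\epsilon_{gen}\sqrt{dH^2\log\!\big(4Hd(B_X^2B_W^2/\tilde\epsilon_{gen}^2+1)\big)} \;+\; 2\epsilon_{ini,1}.
\]
Setting the right-hand side $\le \epsilon$, the dominant first term forces
\[
m \;\gtrsim\; \frac{dH^2\,|\Acal|^2\max(C_{\Gcal},1)^2\,\log(|\Pi_{\max}||\Gcal_{\max}|TH/\delta)\,\log(dHB_XB_W/\epsilon)}{\epsilon^2},
\]
and the initial-distribution term $\epsilon_{ini,1}\le\epsilon$ imposes only a weaker requirement. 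Multiplying $m$ by the total number of trajectories $TH = \widetilde O(dH^2)$ and absorbing iterated logarithms into $\widetilde O(\cdot)$ produces the claimed bound
$\widetilde O\!\big(d^2H^4\max(C_{\Gcal},1)^2|\Acal|^2\log(|\Gcal_{\max}||\Pi_{\max}|/\delta)\log^2(B_XB_W/\delta)/\epsilon^2\big).$

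There is no real obstacle: every step is either Hoeffding plus union bound or an appeal to Theorem~\ref{thm:online}. The only mildly delicate point is keeping track of the nested $\log$ that appears because $\tilde\epsilon_{gen}$ itself depends on $m$; this is resolved, as in standard bilinear-class analyses, by noting that the dependence is logarithmic and can be bounded by $\log(dHB_XB_W/\epsilon)$ after plugging in the target $m$, which is exactly why a $\log^2(B_XB_W/\delta)$ factor (rather than a single log) appears in the final sample-complexity expression.
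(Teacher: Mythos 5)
Your proposal is correct and follows essentially the same route as the paper: Hoeffding plus a union bound over the finite classes to instantiate $\epsilon_{gen,h}$ and $\epsilon_{ini,1}$, a direct appeal to Theorem~\ref{thm:online}, and resolution of the self-referential logarithm in $m$ (the paper does this via its Lemma on implicit inequalities), yielding the same $mTH = \tilde O(d^2H^4\max(C_{\Gcal},1)^2|\Acal|^2\ln(|\Gcal_{\max}||\Pi_{\max}|/\delta)\ln^2(B_XB_W)/\epsilon^2)$ count.
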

\end{myexp}

\subsubsection{Observable Undercomplete Tabular POMDPs}\label{subsec:observable_tabular_ab}

We start with tabular POMDPs. The details here is deferred to \pref{sec:sample_complexity_tabular}. 

\begin{myexp}[continues=ex:under_tabular]
{In tabular models, recall the PO-bilinear rank is at most $d = |\Ocal|^{M}|\Acal|^{M}|\Scal|$. We suppose $r_h \in [0,1]$ for any $h \in [H]$. Assuming $\OO$ is full-column rank, to satisfy the realizability, we set $\Gcal_h = \{\langle \theta, \One(z)\otimes \OO^{\dagger}\One(o) \rangle \mid \|\theta\|_{\infty} \leq H \}$ where $\|\OO^{\dagger} \|_1 \leq 1/\sigma_1$ and $\One(z),\One(o)$ are one-hot encoding vectors over $\Zcal_{h-1}$ and $\Ocal$, respectively. We set $\Pi_h=[\bar Z_h \to \Delta(\Acal)]$. Then, the following holds. 
}
\begin{theorem}[Sample complexity for unrercomplete tabular models (Informal)] 
With probability $1-\delta$, we can achieve $J(\pi^{\star})-J(\hat \pi)\leq \epsilon$ when we use samples  at most 
$ \tilde O\prns{ |\Scal|^2 |\Acal|^{3M+3}|\Ocal|^{3M+1} H^6 (1/\epsilon)^2(1/\sigma_1)^2 \ln(1/\delta) }. $ \\
Here, $\mathrm{polylog}( |\Scal|,|\Ocal|, |\Acal|, H, 1/\sigma_1, \ln(1/\delta))$ are omitted. 
\end{theorem}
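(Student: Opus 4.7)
The plan is to specialize the general PAC bound of \pref{thm:online} to undercomplete tabular POMDPs by combining (i) the PO-bilinear rank bound from \pref{lem:tabular_pomdp_bilinear}, (ii) explicit $B_X,B_W$ read off from the decomposition, and (iii) $L^\infty$-cover-based uniform convergence for the specified $\Pi$ and $\Gcal$ in place of $\log|\Pi|, \log|\Gcal|$ as mentioned in \pref{re:uniform_converge}.

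For (i)--(ii), \pref{lem:tabular_pomdp_bilinear} already yields $d \le |\Scal|(|\Ocal||\Acal|)^M$. Inspecting its proof, one can take $X_h(\pi')$ to be the latent occupancy $d^{\pi'}_h(\bar z_h, s_h)$, a probability vector on $\bar\Zcal_h\times\Scal$, so $B_X\le 1$, and $W_h(\pi,g)$ to be the state-conditional Bellman residual $\EE[g_h(\bar z_h)-r_h-g_{h+1}(\bar z_{h+1})\mid \bar z_h,s_h;a_h\sim\pi]$ viewed as a vector on $\bar\Zcal_h\times\Scal$; this is coordinatewise $O(H/\sigma_1)$ since $\|g\|_\infty\le C_\Gcal \lesssim H\|\OO^\dagger\|_1 \le H/\sigma_1$. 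Thus $B_W \lesssim (H/\sigma_1)\sqrt{|\Scal|(|\Ocal||\Acal|)^M}$, which is harmless since $B_X B_W$ enters \pref{thm:online} only inside a logarithm.

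For (iii), I would construct $L^\infty$-covers at scale $\rho$. $\Gcal_h$ is linear in the $(|\Ocal||\Acal|)^M|\Ocal|$-dimensional feature $\One(z)\otimes \OO^\dagger \One(o)$ with coefficients $\|\theta\|_\infty\le H$, so $\log N_\infty(\Gcal_h,\rho)\lesssim (|\Ocal||\Acal|)^M |\Ocal|\log(H/(\sigma_1\rho))$. For $\Pi_h=[\bar\Zcal_h\to\Delta(\Acal)]$, the class is parameterised by $|\bar\Zcal_h||\Acal|=(|\Ocal||\Acal|)^M|\Ocal||\Acal|$ probabilities, giving $\log N_\infty(\Pi_h,\rho)\lesssim (|\Ocal||\Acal|)^M |\Ocal||\Acal|\log(1/\rho)$. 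Since the per-sample integrand in the Bellman error $\sigma^t_h(\pi,g)$ is uniformly bounded by $O(|\Acal|H/\sigma_1)$, Hoeffding combined with a union bound over the product cover plus a Lipschitz discretization step, with $\rho\asymp \epsilon/\mathrm{poly}$, yields
\[
\epsilon_{gen,h}(m,\Pi,\Gcal,\delta) \lesssim \frac{|\Acal| H}{\sigma_1}\sqrt{\frac{(|\Ocal||\Acal|)^M |\Ocal||\Acal|\log(1/\delta)}{m}},
\]
and an analogous but smaller bound for $\epsilon_{ini,1}$.

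Substituting $d=|\Scal|(|\Ocal||\Acal|)^M$, $C_\Gcal\lesssim H/\sigma_1$, the above $\epsilon_{gen}$, and the logarithmic $B_X B_W$ contribution into the excess-risk inequality of \pref{thm:online}, setting the right-hand side $\le\epsilon$ and solving for $m$ gives, up to polylogarithms,
\[
m \;\lesssim\; \frac{d^2 H^4 |\Acal|^2 C_\Gcal^2 (|\Ocal||\Acal|)^M |\Ocal||\Acal|}{\epsilon^2} \;\asymp\; \frac{|\Scal|^2 |\Acal|^{3M+3}|\Ocal|^{3M+1}H^6}{\sigma_1^2\epsilon^2},
\]
and with $T=\tilde O(dH)$ the total sample count $mTH$ keeps the same dependence modulo logs. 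The main obstacle I anticipate is the bookkeeping around the covers: choosing $\rho$ small enough that the discretization bias is absorbed by $\epsilon$, making sure the cover on $\Pi$ transfers cleanly through the importance-weighted loss $\pi_h(a\mid \bar z_h)|\Acal|$ without inflating the $|\Acal|$ exponent beyond $3M+3$, and confirming that $B_W$ and $C_\Gcal$ appear only polylogarithmically (not polynomially) in the final count.
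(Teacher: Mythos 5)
Your proposal follows essentially the same route as the paper: specialize \pref{thm:online} with the HSE-type bilinear decomposition ($X_h$ the occupancy over $(z_{h-1},s_h)$, $B_X\le 1$, $B_W$ entering only logarithmically), take $d=|\Scal|(|\Ocal||\Acal|)^M$, and run covering-number uniform convergence over $\Gcal_h$ and $\Pi_h$ with effective dimension $(|\Ocal||\Acal|)^M|\Ocal||\Acal|$, which reproduces the stated $|\Scal|^2|\Acal|^{3M+3}|\Ocal|^{3M+1}H^6\sigma_1^{-2}\epsilon^{-2}$ bound exactly as the paper does. The only slip is bookkeeping: the displayed quantity you call $m$ is already the \emph{total} sample count $mTH$ (the per-iteration batch size carries one factor of $d$ and $H^4$, and multiplying by $T H=\tilde O(dH^2)$ gives $d^2H^6$), so the remark that ``$mTH$ keeps the same dependence'' should be read as a relabeling rather than a further multiplication.
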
 

{Firstly, while the above error incurs $|\Ocal|^M |\Acal|^M$, we will later see in \pref{sec:undercomplete_pomdp_finite} when we use the more general definition of PO-bilinear AC class and combine a model-based perspective, we can remove $|\Ocal|^M$ from the error bound. The intuition here is that the statistical complexity still scales with $|\Scal|^2 |\Acal| +|\Ocal| |\Acal|$ and does not incur $|\Ocal|^M$. At the same time, although PO-bilinear rank currently scales with $|\Ocal|^M||\Acal|^M|\Scal|$, we can show that it can be just $|\Scal|$ with a more refined definition. Secondly, $\|\OO^{\dagger} \|_1 \leq 1/\sigma_1$ can be replaced with other analogous conditions $\|\OO^{\dagger} \|_2 \leq 1/\sigma_2$. Here, note $ \|\OO^{\dagger} \|_1  = 1/\{\min_{x} \|\OO x\|_1/\|x\|_1\},\|\OO^{\dagger} \|_2  = 1/\{\min_{x} \|\OO x\|_2/\|x\|_2\}$. The reason why we use $1$-norm is to invoke the result \citep{golowich2022planning} to achieve the near global optimality as in the next paragraph. }

\paragraph{Near global optimality.}

Finally, we consider the PAC guarantee against the globally optimal policy. As shown in \citep{golowich2022planning}, it is enough to set $M= O((1/\sigma^4_1) \ln (SH/\epsilon))$ to compete with the globally optimal policy $\pi^{\star}_{\mathrm{gl}}$. Thus we achieve a quasipolynomial sample complexity when competing against $\pi^\star_{\mathrm{gl}}$.

\begin{theorem}[Sample complexity for undercomplete tabular models (Informal) --- competing against $\pi^\star_{\mathrm{gl}}$]
With probability $1-\delta$, we can achieve $J(\pi^{\star}_{\mathrm{gl}})-J(\hat \pi)\leq \epsilon$ when we use samples  at most 
\begin{align*}\textstyle
      \mathrm{poly}(|\Scal|, |\Acal|^{\ln (|\Scal| H /\epsilon)/\sigma^4_1 }, |\Ocal|^{\ln (|\Scal| H /\epsilon)/\sigma^4_1 },H,1/\sigma_1,1/\epsilon,\ln(1/\delta) ).
\end{align*}
\end{theorem}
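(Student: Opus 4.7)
The plan is to reduce the global-optimality claim to the previously stated PAC guarantee against the best $M$-memory policy $\pi^{\star} \in \Pi$, by picking the memory length $M$ large enough that there exists an $M$-memory policy competitive with $\pi^{\star}_{\mathrm{gl}}$. Concretely, \citep{golowich2022planning} show that in any observable undercomplete tabular POMDP with $\|\OO^{\dagger}\|_1 \leq 1/\sigma_1$, and for any target accuracy $\epsilon' > 0$, there exists an $M$-memory policy $\bar\pi$ with $J(\pi^{\star}_{\mathrm{gl}}) - J(\bar\pi) \leq \epsilon'$ provided $M = \tilde O((1/\sigma_1^4)\ln(|\Scal| H/\epsilon'))$. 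Because I instantiate $\Pi_h = [\bar\Zcal_h \to \Delta(\Acal)]$, every $M$-memory policy (in particular $\bar\pi$) belongs to $\Pi$, so $J(\pi^{\star}_{\mathrm{gl}}) - J(\pi^{\star}) \leq \epsilon'$.

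The second step is the triangle decomposition
\[
J(\pi^{\star}_{\mathrm{gl}}) - J(\hat\pi) \;=\; \bigl(J(\pi^{\star}_{\mathrm{gl}}) - J(\pi^{\star})\bigr) + \bigl(J(\pi^{\star}) - J(\hat\pi)\bigr),
\]
in which I set $\epsilon' = \epsilon/2$ so that the first term is at most $\epsilon/2$ by the argument above, and then invoke the previous undercomplete tabular sample-complexity theorem from \pref{subsec:observable_tabular_ab} to make the second term at most $\epsilon/2$ with $\tilde O(|\Scal|^2 |\Acal|^{3M+3} |\Ocal|^{3M+1} H^6 \sigma_1^{-2} \epsilon^{-2} \ln(1/\delta))$ trajectories. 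Substituting the prescribed value $M = \tilde O((1/\sigma_1^4)\ln(|\Scal| H/\epsilon))$ turns $|\Acal|^M$ and $|\Ocal|^M$ into the quasipolynomial factors $|\Acal|^{\tilde O((1/\sigma_1^4)\ln(|\Scal| H/\epsilon))}$ and $|\Ocal|^{\tilde O((1/\sigma_1^4)\ln(|\Scal| H/\epsilon))}$ appearing in the theorem statement, while every remaining factor stays polynomial in $|\Scal|, H, 1/\sigma_1, 1/\epsilon, \ln(1/\delta)$.

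The main obstacle is the bookkeeping needed to feed the earlier theorem at this scale of $M$. I must verify that, for $\Pi_h = [\bar\Zcal_h \to \Delta(\Acal)]$ and the choice $\Gcal_h = \{\langle \theta, \One(z)\otimes \OO^{\dagger}\One(o)\rangle : \|\theta\|_{\infty} \leq H\}$, the relevant quantities still obey quasipolynomial bounds. Specifically: (i) the $L^\infty$-covering numbers of $\Pi$ and $\Gcal$ have logarithms scaling at most like $(|\Ocal||\Acal|)^M$ times $\log$-factors in $1/\epsilon$, which $M = \tilde O((1/\sigma_1^4)\ln(|\Scal| H/\epsilon))$ makes quasipolynomial; (ii) the norm bounds $B_X, B_W$ on the PO-bilinear factorization and the range $C_{\Gcal}$ of the link function each depend polynomially on $H$, $1/\sigma_1$ and $(|\Ocal||\Acal|)^M$ (using $\|\OO^{\dagger}\|_1 \leq 1/\sigma_1$ to bound $\Gcal$ pointwise); (iii) the PO-bilinear rank $d \leq |\Scal|(|\Ocal||\Acal|)^M$ also enters polynomially. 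Once these dependencies are tracked through \pref{thm:online}, the proof reduces to substituting the Golowich memory length into the undercomplete bound and rescaling $\epsilon \mapsto \epsilon/2$, yielding exactly the claimed quasipolynomial sample complexity.
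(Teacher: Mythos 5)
Your proposal matches the paper's own argument: invoke the belief-contraction result of \cite{golowich2022planning} to guarantee an $M$-memory policy within $\epsilon/2$ of $\pi^\star_{\mathrm{gl}}$ for $M = \tilde O(\sigma_1^{-4}\ln(|\Scal|H/\epsilon))$, note that this policy lies in $\Pi_h = [\bar\Zcal_h \to \Delta(\Acal)]$, and substitute this $M$ into the agnostic bound from \pref{subsec:observable_tabular_ab} via the triangle decomposition. The only detail the paper adds that you omit is the explicit check that $\|\OO^\dagger\|_1 \le 1/\sigma_1$ implies the $\ell_1$-observability condition (Assumption 1) required by \cite{golowich2022planning}, which follows from $\|\OO b - \OO b'\|_1 \ge \sigma_1\|b-b'\|_1$ via $\OO^\dagger\OO = I$.
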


\end{myexp}

\subsubsection{Observable Tabular Overcomplete POMDPs}

We consider obvercomplete tabular POMDPs. In this case, the PO-bilinear rank is at most $|\Ocal|^M |\Acal|^M |\Scal|$. We suppose $r_h \in [0,1]$ for any $h\in [H]$. Assuming $\OO^K$ is full-column rank, to satisfy the realizability, we set $\Gcal_h = \{\langle \theta, \One(z) \otimes \{\OO^K\}^{\dagger}\One(t^K) \rangle \mid \|\theta\|_{\infty}\leq H \}$ where $\|\OO^K\|_1\leq 1/\sigma_1$ and $\One(z),\One(t^K)$ are one-hot encoding vectors over $\Zcal_{h-1}$ and $\Ocal^K \times \Acal^{K-1}$, respectively. We set $\Pi_h = [\bar \Zcal_h \to \Delta(\Acal)]$. Then, the following holds. 

\begin{theorem}[Sample complexity for overcomplete tabular models]
With probability $1-\delta$, we can achieve $J(\pi^{\star})-J(\hat \pi)\leq \epsilon $
when we use samples  at most 
$ \tilde O\prns{ |\Scal|^2 |\Acal|^{3M+K+2}|\Ocal|^{3M+K} H^6 (1/\epsilon)^2(1/\sigma_1)^2 \ln(1/\delta) }. $ \\
Here, $\mathrm{polylog}( |\Scal|,|\Ocal|, |\Acal|, H, 1/\sigma_1, \ln(1/\delta))$ are omitted. 
\end{theorem}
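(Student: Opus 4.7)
I would derive this bound by specializing the general PAC guarantee of \pref{thm:online} (its multi-step version applied to \pref{alg:OACP_multi_multi}) to the overcomplete tabular setting. Three tasks remain: (i) verify realizability and compute the PO-bilinear rank; (ii) bound $\|\Gcal\|_\infty$ and the norms $B_X, B_W$; and (iii) control the uniform convergence parameters $\epsilon_{gen}$ and $\epsilon_{ini,1}$ for the tabular classes. Realizability follows from \pref{lem:over_complete_bridge}: when $\OO^K$ has full column rank, the argument in the proof of \pref{lem:tabular_pomdp_value_bridge}, adapted to the $K$-step emission matrix, gives the explicit form $g^\pi_h = \langle \theta^\pi_h, \One(z)\otimes (\OO^K)^\dagger \One(t^K) \rangle$, with $\theta^\pi_h$ the tabular representation of $V^\pi_h$ and $\|\theta^\pi_h\|_\infty \leq H$; hence $g^\pi_h \in \Gcal_h$. \pref{lem:multi_step_bilinear} then supplies PO-bilinear rank $d \leq |\Scal|(|\Ocal||\Acal|)^M$.

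Next I would control the relevant norms. The bound $\|(\OO^K)^\dagger\|_1 \leq 1/\sigma_1$ together with $\|\theta\|_\infty \leq H$ gives $\|\Gcal_h\|_\infty \leq H/\sigma_1$; consequently the Bellman-loss integrand in \pref{alg:OACP_multi_multi} is bounded uniformly by $O(|\Acal| H/\sigma_1)$, the $|\Acal|$ coming from the worst-case importance weight of $\Ucal(\Acal)$. The bilinear decomposition lets one read off $X_h(\pi')$ as a lift of $d^{\pi'}_h(\cdot,\cdot)$, so $\|X_h(\pi')\|_2 = O(1)$, and $\|W_h(\pi,g)\|_2 = O(H/\sigma_1)$ by the same argument. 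A finite-covering plus Hoeffding/union-bound argument over the tabular parametrizations then yields
\begin{align*}
\epsilon_{gen,h}^2 \;=\; O\!\left(\frac{|\Acal|^2 H^2}{\sigma_1^2}\cdot\frac{N\log(H|\Pi||\Gcal|/\delta)}{m}\right),
\end{align*}
where $N = O(|\bar\Zcal_h^K| + |\bar\Zcal_h||\Acal|) = O(|\Ocal|^{M+K}|\Acal|^{M+K-1} + |\Ocal|^{M+1}|\Acal|^{M+1})$ is the combined coordinate count of $\Gcal_h$ and $\Pi_h$; the same reasoning handles $\epsilon_{ini,1}$.

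Plugging $d$, $B_X B_W = O(H/\sigma_1)$, and the above into \pref{thm:online}, setting the right-hand side equal to $\epsilon$ and solving for $m$ gives $m = \tilde\Omega(dH^4 (|\Acal|/\sigma_1)^2 N/\epsilon^2)$; multiplying by $TH = \tilde O(dH^2)$ yields total sample count
\begin{align*}
mTH \;=\; \tilde O\!\left(d^2 H^6 \frac{|\Acal|^2}{\sigma_1^2} N \epsilon^{-2}\right) \;=\; \tilde O\!\left(|\Scal|^2 |\Acal|^{3M+K+2} |\Ocal|^{3M+K} H^6 \epsilon^{-2}\sigma_1^{-2}\log(1/\delta)\right),
\end{align*}
after using $d^2 = O(|\Scal|^2(|\Ocal||\Acal|)^{2M})$ and absorbing the dominant term of $N$. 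The main obstacle is the careful $\ell_1/\ell_\infty$ bookkeeping through $(\OO^K)^\dagger$: one must keep the factor $1/\sigma_1$ (rather than a higher power) on $\|g\|_\infty$ so that squaring in $\epsilon_{gen}^2$ produces exactly $1/\sigma_1^2$, and must verify that the importance-sampling weight from $\Ucal(\Acal)$ compounds only to $|\Acal|^2$ and not more. The rest is routine accounting of tabular covering numbers, essentially re-running the undercomplete analysis of \pref{subsec:observable_tabular_ab} with $(\OO,\Ocal)$ replaced by $(\OO^K, \Ocal^K\times \Acal^{K-1})$.
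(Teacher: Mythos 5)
Your proposal is correct and follows essentially the same route as the paper: reduce to the general PAC guarantee, re-run the undercomplete tabular analysis with $(\OO,\Ocal)$ replaced by $(\OO^K,\Ocal^K\times\Acal^{K-1})$, take rank $|\Scal|(|\Ocal||\Acal|)^M$ from the multi-step bilinear lemma, bound $\|\Gcal_h\|_\infty\leq H/\sigma_1$ via $\|(\OO^K)^\dagger\|_1\leq 1/\sigma_1$, and combine covering-number uniform convergence with the $mTH$ accounting. The only imprecision is your claim $\|W_h(\pi,g)\|_2=O(H/\sigma_1)$ — the paper's $B_W$ carries an extra dimension factor $|\Ocal|^{M+1}|\Acal|^M$ from converting the $\ell_\infty$ bound on $\theta$ to $\ell_2$ — but since $B_W$ enters the final bound only logarithmically this does not affect the stated sample complexity.
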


When we use K-step futures, in the above theorem, we additionally incur $|\Acal|^K|\Ocal|^K$, which is coming from a naive parameterization of $\Gcal_h$. In \pref{sec:overcomplete_pomdp_finite}, we will see that under the model-based learning perspective (i.e., we parameterize $\TT,\OO$ first and then construct $\Pi$ and $\Gcal$ using the model class), we will get rid of the dependence $|\Ocal|^{M+K}$ and $|\Acal|^K$. This is because the complexity of the model class is independent of $M$ or $K$ (i.e., number of parameters in $\TT,\OO$ are $O(|\Scal|^2 |\Acal| |\Ocal|)$).

\subsubsection{Observable LQG}

Now let us revisit LQG. The detail here is deferred to \pref{sec:sample_complexity_lqg}. 
We show that \ouralg{} can compete against the globally optimal policy with polynomial sample complexity.

\begin{myexp}[continues=ex:lqqs]  
In LQG, by setting $H=M-1$, we achieve a polynomial sample complexity when competing against the globally optimal policy $\pi^\star_{\mathrm{gl}}$.

\begin{theorem}[Sample complexity for LQG (informal) -- competing against $\pi^\star_{\mathrm{gl}}$] 
Consider a linear policy class $\Pi_h = \{\delta(a_h=\bar K_{h}\bar z_h) \mid \|\bar K_{h}\| \leq \Theta \}$. 
and assume $\max(\|A\|,\|B\|,\|C\|,\|Q\|,\|R\|)\leq \Theta$ and  all policies induce a stable system (we formalize in \pref{sec:sample_complexity_lqg}). With probability $1-\delta$, we can achieve $J(\pi^{\star}_{\mathrm{gl}})-J(\hat \pi)\leq \epsilon$ when we use samples  at most 
\begin{align*}\textstyle 
    \mathrm{poly}(H, d_s,d_o,d_a, \Theta, \|C^{\dagger}\|,\ln(1/\delta)  ) \times (1/\epsilon)^2. 
\end{align*}
\end{theorem}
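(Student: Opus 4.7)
The plan is to invoke the master PAC guarantee (\pref{thm:online}) instantiated for the PO-bilinear decomposition of LQG established in \pref{lem:lqg_value}, after setting the memory length to $M=H-1$ so that the linear $M$-memory class $\Pi$ is rich enough to contain the globally optimal LQG policy. Specifically, classical LQG theory tells us that $\pi^\star_{\mathrm{gl}}$ is linear in the Kalman-filter belief mean, which in turn is a linear function of the entire history $\bar z_h$ (for $M=H-1$); hence by taking $\Theta$ a large enough polynomial in $(\|A\|,\|B\|,\|C\|,\|Q\|,\|R\|,\|\Sigma_\epsilon\|,\|\Sigma_\tau\|,\|C^\dagger\|)$, the class $\Pi_h$ covers $\pi^\star_{\mathrm{gl},h}$ and therefore $\max_{\pi\in\Pi}J(\pi)=J(\pi^\star_{\mathrm{gl}})$. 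This reduces the problem of competing with $\pi^\star_{\mathrm{gl}}$ to competing with the best policy in $\Pi$, which is precisely what \pref{thm:online} delivers.

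Next I would verify the hypotheses of \pref{thm:online}. Realizability of $\Gcal_h=\{\theta^\top\psi_h(\cdot):\theta\in\RR^{d_{\psi_h}}\}$ is exactly the first bullet of \pref{lem:lqg_value} (using $\rank(C)=d_s$, which holds by $\|C^\dagger\|<\infty$), and the PO-bilinear rank is $d=O(\{1+(H-1)(d_o+d_a)+d_s\}^2)$ by the second bullet. To bound $B_X$, $B_W$, and $\|\Gcal\|_\infty$ I would use the stability assumption to produce a uniform (polynomial) bound on $\EE\|\bar z_h\|^4$ and $\EE\|s_h\|^4$ under every roll-in $\pi'\in\Pi$; this in turn bounds the quadratic feature moments $\EE\|\phi_h\|^2,\EE\|\psi_h\|^2$, which control $X_h(\pi')=\EE_{\pi'}[\phi_h]$, the value-function coefficient vector $\theta_h^{V^\pi}$ (via a Lyapunov/Riccati argument on $V^\pi_h$), and hence $W_h(\pi,g)$ and $\|g\|_\infty$ after restricting $\theta$ to an appropriate norm ball.

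Because the LQG action space is continuous, I cannot use $\Ucal(\Acal)$ in \pref{alg:OACP}; following \pref{rem:extension}, I would replace it by a G-optimal design on the quadratic polynomial feature of $a_h$, and correspondingly rewrite the Bellman-loss importance weight. The bilinear identity of \pref{def:simple_bilinear} still factors through $X_h(\pi')$, so the proof of \pref{thm:online} carries over after bounding the inverse-density factor of the design by a polynomial in $(d_a,\Theta)$. For the uniform-convergence term $\epsilon_{gen,h}$, I would use an $\epsilon$-net / Dudley covering argument on the ball $\{\|\bar K_h\|\le\Theta\}$ for the policy class and on the bounded parameter ball for $\Gcal_h$ (after truncating the quadratic features using the high-probability moment bound from Step~2). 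This gives $\epsilon_{gen,h}\lesssim \sqrt{\mathrm{poly}(H,d_s,d_o,d_a,\Theta,\|C^\dagger\|)\,\log(1/\delta)/m}$, and a matching bound for $\epsilon_{ini,1}$.

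Plugging $d$, $B_X$, $B_W$, $\|\Gcal\|_\infty$, and $\epsilon_{gen}$ into \pref{thm:online} and setting $\epsilon_{gen}\asymp\epsilon/\sqrt{dH^2\log(\cdot)}$ yields $m=\mathrm{poly}(H,d_s,d_o,d_a,\Theta,\|C^\dagger\|,\log(1/\delta))/\epsilon^2$ samples per iteration and $T=\tilde O(dH)$ iterations, giving the claimed total sample complexity. The main obstacle I anticipate is the continuous/unbounded nature of the state, observation, and action spaces: making every quantity appearing in \pref{thm:online} (features, link-function parameters, Bellman losses, covering numbers) uniformly bounded \emph{with high probability over all $\pi\in\Pi$} requires carefully combining the stability hypothesis, sub-Gaussian tail bounds for the Gaussian noise, and a truncation argument so that rare large-norm trajectories do not blow up the uniform-convergence rate. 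The G-optimal design substitution is a secondary but non-trivial point, as one must verify that the bilinear factorization $W_h(\pi,g)^\top X_h(\pi')$ is preserved once the uniform action distribution is replaced by the (state-independent) design measure on actions.
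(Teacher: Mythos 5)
Your proposal is correct and follows essentially the same route as the paper: reduce global optimality to agnostic learning over the $M=H-1$ linear policy class (since the globally optimal LQG policy is linear in the full history), instantiate \pref{thm:online} with the quadratic-feature bilinear decomposition of \pref{lem:lqg_value}, replace the uniform action distribution by a G-optimal design over $\kappa(a)=[a^\top,(a\otimes a)^\top]^\top$, and control unboundedness via stability, Gaussian tails, truncation indicators, and a covering-number uniform-convergence bound. The obstacles you flag (high-probability boundedness of all quantities and preservation of the bilinear factorization under the design measure) are exactly the points the paper's appendix handles, via backward recursions bounding $\|\Lambda_h\|,\|\Gamma_h\|$ and an explicit decomposition of the conditional Bellman error as $c_0(\bar z_h,s_h)+c_1^\top(\bar z_h,s_h)\kappa(a_h)$ spanned by the design points.
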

\end{myexp}

\subsubsection{Observable HSE-POMDPs}

Next, we study HSE-POMDPs. The details here is deferred to \pref{sec:sample_hse_pomdps}. 

\begin{myexp}[continues=ex:linear]
In HSE-POMDPs, PO-bilinear rank is at most $\max_h d_{\phi_h}$. Suppose  $\|\psi_h\|\leq 1$ and $V^{\pi}_h(\cdot)=\langle \theta^{\pi}_h, \phi_h(\cdot) \rangle $ such that $\|\theta^{\pi}_h\|\leq \Theta_V$ for any $h \in [H]$. Then, to satisfies the realizability, we set $\Gcal_h = \{\langle \theta, \psi_h(\cdot) \rangle \mid \|\theta\|\leq \Theta_V/\sigma_{\min}(K) \}$ where $\sigma_{\min}(K)=\min_{h \in [H]}1/\|K^{\dagger}_h\| $. 

\begin{theorem}[Sample complexity for HSE-POMDPs (Informal)]
Let $d_{\psi}=\max_h\{d_{\psi_h}\},d_{\phi}=\max_h\{d_{\psi_h}\}, |\Pi_{\max}| =\max_h (|\Pi_h|)$. Suppose $r_h$ lies in $[0,1]$ for any $h\in [H]$. Then, with probability $1-\delta$, we can achieve $J(\pi^{\star})-J(\hat \pi)\leq \epsilon$ when we use samples 
\begin{align*}\textstyle 
       \tilde O \prns{ d^2_{\phi} H^4|\Acal|^2\max(\Theta_V,1)^2\{d_{\psi}+\ln(|\Pi_{\max}|/\delta)\} {(1/\sigma_{\min}(K))^2 }\cdot (1/\epsilon)^2 }  . 
\end{align*}
Here, $\mathrm{polylog}(d_{\phi},d_{\psi},|\Acal|,\Theta_V, \ln(|\Pi_{\max}|), 1/\sigma_{\min}(K),1/\epsilon, \ln(1/\delta),\sigma_{\max}(T),\sigma_{\max}(K) )$ are omitted and $\sigma_{\max}(K)=\max_{h \in [H]}\|K_h\| ,\sigma_{\max}(T)=\max_{\pi \in \Pi, h \in [H]}\|T_{\pi:h}\| $. 
\end{theorem}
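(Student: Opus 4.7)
The proof plan is to invoke the general PAC guarantee (\pref{thm:online}) by instantiating all of its ingredients for the HSE-POMDP specialization. From \pref{lem:hse_existence} we already know two structural facts: the PO-bilinear rank is $d=\max_h d_{\phi_h}=d_\phi$, and with $\Gcal_h=\{\langle\theta,\psi_h(\cdot)\rangle:\|\theta\|\le \Theta_V/\sigma_{\min}(K)\}$ the realizability assumption (\pref{assum:realizability}) holds since the true link function is $(K_h^\dagger)^\top\theta^\pi_h$ with norm bounded by $\|K_h^\dagger\|\cdot\|\theta^\pi_h\|\le \Theta_V/\sigma_{\min}(K)$. So the only remaining work is to compute the uniform-convergence rate $\epsilon_{gen,h}$, bound $B_X,B_W$, and plug these into the master theorem.

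\textbf{Step 1: Uniform convergence.} The loss inside $\sigma^t_h(\pi,g)$ is $\pi_h(a_h\mid\bar z_h)|\Acal|\{g_{h+1}(\bar z_{h+1})+r_h\}-g_h(\bar z_h)$, which is bounded in magnitude by $O(|\Acal|\,\Theta_V/\sigma_{\min}(K))$ thanks to $\|\psi_h\|\le 1$, $r_h\in[0,1]$, and the norm constraint on $\Gcal$. Since $\Gcal_h$ is a bounded linear class in $d_{\psi_h}$-dimensional features, a standard Rademacher/covering argument gives a covering number $\log\Ncal(\Gcal_h,\varepsilon)=\tilde O(d_\psi\log(1/\varepsilon))$; for $\Pi$ we simply pay $\log|\Pi_{\max}|$. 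Combining these with a one-sided Hoeffding/Bernstein bound and union-bounding over $h\in[H]$ yields
\[
\epsilon_{gen,h}(m,\Pi,\Gcal,\delta)\;\lesssim\;\frac{|\Acal|\,\Theta_V}{\sigma_{\min}(K)}\sqrt{\frac{d_\psi+\ln(|\Pi_{\max}|H/\delta)}{m}},
\]
and an analogous (smaller) bound for $\epsilon_{ini,1}$ using only $\log\Ncal(\Gcal_1,\cdot)$.

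\textbf{Step 2: Bounds on $B_X,B_W$.} From the proof of \pref{lem:hse_existence}, the PO-bilinear decomposition takes the form $X_h(\pi')=\EE_{\pi'}[\phi_h(z_{h-1},s_h)]$ and $W_h(\pi,g)$ is a (linear) Bellman-residual vector in $\RR^{d_{\phi_h}}$; both are obtained by composing $\psi_h\leftrightarrow\phi_h$ through $K_h$ and the transition embedding $T_{\pi;h}$. Using $\|\psi_h\|\le 1$, $\|\theta\|\le\Theta_V/\sigma_{\min}(K)$ for $g\in\Gcal$, and the operator norms $\sigma_{\max}(K),\sigma_{\max}(T)$, one obtains polynomial bounds $B_X,B_W\le\mathrm{poly}(\Theta_V,\sigma_{\max}(K),\sigma_{\max}(T),1/\sigma_{\min}(K))$. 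Because these quantities only enter through a $\log(B_X B_W/\tilde\epsilon_{gen})$ factor in \pref{thm:online}, they are absorbed into the $\mathrm{polylog}$ suppressed in the theorem statement.

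\textbf{Step 3: Putting it together.} Plugging $d=d_\phi$, the above $\epsilon_{gen}$, and the $B_X,B_W$ bounds into \pref{thm:online} gives, up to polylog factors,
\[
J(\pi^\star)-J(\hat\pi)\;\lesssim\;\frac{|\Acal|\,\Theta_V}{\sigma_{\min}(K)}\sqrt{\frac{d_\phi H^2\,(d_\psi+\ln(|\Pi_{\max}|/\delta))}{m}}.
\]
Setting this $\le\epsilon$ and using $T=\tilde O(d_\phi H)$ so that the total sample count is $mTH=\tilde O(mH^2 d_\phi)$ produces the claimed bound $\tilde O\bigl(d_\phi^2 H^4|\Acal|^2\max(\Theta_V,1)^2\{d_\psi+\ln(|\Pi_{\max}|/\delta)\}/\sigma_{\min}(K)^2\cdot 1/\epsilon^2\bigr)$.

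\textbf{Main obstacle.} The only nontrivial step is the uniform convergence bound, specifically handling the linear critic class $\Gcal$ jointly with the (potentially complex) actor class $\Pi$ inside the importance-weighted loss; the $|\Acal|$ factor and the $1/\sigma_{\min}(K)$ range of $g$ must be carried carefully so that the resulting rate depends on $d_\psi+\ln|\Pi|$ rather than on a product of covering numbers. Bounding $B_X,B_W$ is conceptually straightforward since they only enter logarithmically, and the realizability/bilinear-rank inputs to \pref{thm:online} are already provided by \pref{lem:hse_existence}.
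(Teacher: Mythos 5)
Your proposal is correct and follows essentially the same route as the paper: realizability and the rank-$d_\phi$ bilinear decomposition come from \pref{lem:hse_existence}, $B_X=1$ and $B_W=O(\sigma_{\max}(K)(1+\sigma_{\max}(T))\Theta_V/\sigma_{\min}(K))$ enter only logarithmically, the uniform convergence is handled by an $\epsilon$-net over the norm-bounded linear critic class union-bounded with the finite policy class (so the log of the product of covering numbers gives $d_\psi+\ln|\Pi_{\max}|$), and the final bound follows by plugging into \pref{thm:online} with $T=\tilde O(d_\phi H)$ and total sample count $mTH$. No gaps.
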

\end{myexp}
Note that the sample complexity above does not explicitly depend on the memory length $M$, instead it only explicitly depends on the dimension of the features $\phi, \psi$. In other words, if we have a feature mapping $\psi_h$ that can map the entire history (i.e., $M = H$) to a low-dimensional vector (e.g., LQG), our algorithm can immediately compete against the global optimality $\pi^\star_{\mathrm{gl}}$.

\section{Predictive State Representations}\label{sec:psr}

In this section, we demonstrate that our definition and algorithm applies to PSRs --- models that strictly generalize POMDPs \citep{littman2001predictive,singh2004predictive}.
Below, we first briefly introduce PSRs, followed by showing that it is a PO-bilinear AC model. Throughout this section, we will focus on discrete linear PSRs. We also suppose reward at $h$ is deterministic function of { $(o_h,a_h)$ conditional on $\tau^a_{h+1}$ where $\tau^a_{h}=(o_1,a_1,\cdots, o_{h-1},a_{h-1})$. %
Given $\tau^a_h$, the dynamical system generates $o_h\sim \PP(\cdot | \tau^a_h)$.} Here we use the superscript $a$ on $\tau_h^a$ to emphasize that the $\tau^a_h$ ends with the action $a_{h-1}$. 

PSRs use the concept of \emph{test}, which is a sequence of future observations and actions, i.e., for some test $t = ( o_{h:h+W - 1}, a_{h:h+W-2})$ with length $W\in \NN^+$, we define the probability of test $t$ being successful $\PP(t | \tau^a_h)$ as
$\PP(t | \tau^a_h ) := \PP( o_{h:h+W-1}  | \tau^a_h; \text{do}(a_{h:h+W-2}))$  which is the probability of observing $o_{h:h+W-1}$ by actively executing actions $a_{h:h+W-2}$ conditioned on history $\tau^a_h$. %

We now explain one-step observable PSRs while deferring the general multi-step observable setting to \pref{sec:psrs}. A one-step observable PSR uses the observations in $\Ocal$ as tests, i.e., tests with length 1.

\begin{definition}[Core test set and linear PSRs] \label{def:linear_psrs}
A core test set $\Tcal \subset \Ocal$ contains a finite number of tests (i.e., observations from $\Ocal$). For any $h$, any history $\tau^a_h$, any future test $t_h = (o_{h:h+W-1}, a_{h:h+W-2})$ for any $W\in \NN^+$,  there exists a vector $m_{t_h} \in \mathbb{R}^{|\Tcal|}$, such that the  probability of $t_h$ succeeds conditioned on $\tau^a_h$ can be expressed as:
$
\PP(t_h | \tau^a_h) = m_{t_h}^{\top}  [ \PP( o | \tau^a_h)    ]_{o\in \Tcal}, 
$ where we denote ${\bf q}_{\tau^a_h} := [ \PP( o | \tau^a_h)    ]_{ o \in \Tcal}$ as a vector in $\RR^{|\Tcal|}$ with entries equal to $\PP( o |\tau^a_h)$ for $o\in\Tcal$. The vector ${\bf q}_{\tau^a_h}$ is called predictive state.  
\end{definition}

 A core test set $\Tcal$ that has the smallest number of tests is called a \emph{minimum core test set} denoted as $\Qcal$. PSRs are strictly more expressive than POMDPs in that all POMDPs can be embedded into PSRs whose size of the minimum core tests is at most $|\Scal|$; however, vice versa does not hold \citep{littman2001predictive}.
 For example, in observable undercomplete POMDPs (i.e., $\OO$ full column rank) , the observation set $\Ocal$ can serve as a core test set, but the minimum core test set $\Qcal$ will have size $|\Scal|$. Here, we assume we know a core test set $\Tcal$ that contains $\Qcal$; however, we are agnostic to which set is the actual $\Qcal$. In the literature on PSRs, this setting is often referred to as transform PSRs \citep{boots2011closing,rosencrantz2004learning}.

Now we define a value link function in PSRs. %
First, given an $M$-memory policy, define $\Vcal^\pi_h(\tau^a_h) = \EE[ \sum_{t=h}^H r_t | \tau^a_h; a_{h:H}\sim \pi ] $, i.e., the expected total reward under $\pi$, conditioned on the history $\tau^a_h$.  Note that our value function here depends on the entire history. 
 
\begin{definition}[General value link functions] \label{def:general_value} Consider an $M$-memory policy $\pi$.
One-step general value link functions $g^{\pi}_h:\Zcal_{h-1} \times \Tcal \to \RR$ at step $h\in [H]$ are defined as solutions to 
\begin{align}\label{eq:new_definition}
\Vcal^{\pi}_h(\tau^a_{h}) =  \EE[g^{\pi}_h(z_{h-1}, o_h )  \mid \tau^a_{h}]. 
\end{align}
\end{definition}
This definition is more general than \pref{def:simple_bilinear} since \eqref{eq:new_definition} implies \pref{eq:bridge} in POMDPs by setting $\Ocal=\Tcal$.  %
In PSRs, we can show the existence of this general value link function. 

\begin{lemma}[The existence of link functions for PSRs]
Suppose $\Tcal$ is a core test set. Then,  a one-step value link function $g^\pi_h$ always exists.  
\end{lemma}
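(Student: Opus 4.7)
The plan is to expand $\Vcal^\pi_h(\tau^a_h)$ as a weighted sum over future observation--action trajectories, pull the observation probabilities into the linear-PSR representation, and then read off the resulting coefficients on the predictive state $\mathbf{q}_{\tau^a_h}$ as the values of the link function on $\Tcal$.

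First, I will unroll the value function. Using that $r_t$ is a deterministic function of $(o_t,a_t)$ and the tower rule,
\begin{align*}
\Vcal^\pi_h(\tau^a_h) \;=\; \sum_{t=h}^H \sum_{o_{h:t},\,a_{h:t}} \PP\big(o_{h:t}\bigm| \tau^a_h;\,\mathrm{do}(a_{h:t-1})\big)\,\prod_{s=h}^{t}\pi_s(a_s\mid \bar z_s)\,r_t(o_t,a_t),
\end{align*}
where each memory $\bar z_s$ appearing inside $\pi_s$ is a deterministic function of $z_{h-1}$ together with the trajectory prefix $(o_{h:s},a_{h:s-1})$ because $\pi$ is $M$-memory. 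In particular, the product of policy probabilities and the reward factor depend on the history only through $z_{h-1}$; the rest of the dependence on $\tau^a_h$ is concentrated inside the observation probability.

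Second, I will invoke \pref{def:linear_psrs}. For every fixed $(o_{h:t},a_{h:t-1})$, the quantity $\PP(o_{h:t}\mid \tau^a_h;\,\mathrm{do}(a_{h:t-1}))$ is exactly the success probability of a length-$(t-h+1)$ test at history $\tau^a_h$, and therefore equals $m_{(o_{h:t},a_{h:t-1})}^\top \mathbf{q}_{\tau^a_h}$ for some vector $m_{(o_{h:t},a_{h:t-1})}\in\RR^{|\Tcal|}$. Substituting this into the expansion and factoring $\mathbf{q}_{\tau^a_h}$ out of the summations produces $\Vcal^\pi_h(\tau^a_h)=\alpha(z_{h-1})^\top\mathbf{q}_{\tau^a_h}$ for an explicit vector $\alpha(z_{h-1})\in\RR^{|\Tcal|}$ whose entries are built from the $\pi_s$, the rewards, and the PSR vectors $m_{(\cdot)}$, and depend on $\tau^a_h$ only through $z_{h-1}$.

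Finally, I will define the candidate link function entry-wise by $g^\pi_h(z_{h-1},o):=\alpha(z_{h-1})(o)$ for $o\in\Tcal$ (extended by zero on $\Ocal\setminus\Tcal$ if one prefers to interpret the expectation as taken over all of $\Ocal$). Since $\mathbf{q}_{\tau^a_h}(o)=\PP(o\mid \tau^a_h)$ for $o\in\Tcal$, the identity $\EE[g^\pi_h(z_{h-1},o_h)\mid \tau^a_h]=\alpha(z_{h-1})^\top\mathbf{q}_{\tau^a_h}=\Vcal^\pi_h(\tau^a_h)$ holds, which is precisely \pref{eq:new_definition}. I do not expect a substantive obstacle; once linear PSR representability is in hand, the argument reduces to a bookkeeping rearrangement, and the only point needing mild care is that the policy factors depend on the history only through the memory $z_{h-1}$, which is automatic from how $\bar z_s$ is constructed from $z_{h-1}$ and the summed-over trajectory.
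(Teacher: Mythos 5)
Your proof is correct, and it reaches the same final construction as the paper — the link function is the $(z_{h-1},o)$ entry of a matrix $\mathbb{J}^\pi_h$ for which $\Vcal^\pi_h(\tau^a_h)=\one(z_{h-1})^\top \mathbb{J}^\pi_h\,{\bf q}_{\tau^a_h}$, exploiting that the one-hot encoding of $o_h$ over $\Tcal$ is an unbiased estimate of the predictive state — but you establish the key bilinear form by a different argument. The paper proves $\Vcal^\pi_h(\tau^a_h)=\one(z_{h-1})^\top\mathbb{J}^\pi_h{\bf q}_{\tau^a_h}$ by backward induction on $h$, using the predictive-state filter update ${\bf q}_{\tau^a_h,a,o}=M_{o,a}{\bf q}_{\tau^a_h}/({\bf m}_o^\top{\bf q}_{\tau^a_h})$ to show that the one-step Bellman backup of a bilinear function is again bilinear; this only invokes the PSR structure through the one-step quantities $M_{o,a}$ and ${\bf m}_o$. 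You instead unroll $\Vcal^\pi_h$ forward as a sum over entire future observation--action trajectories, factor $\PP(o_{h:t},a_{h:t}\mid\tau^a_h;\pi)$ into the interventional probability times the policy factors, and apply the core-test definition directly to each multi-step test $(o_{h:t},a_{h:t-1})$ to pull out ${\bf q}_{\tau^a_h}$. Your route is more elementary in that it avoids the induction and the filter-update lemma entirely, but it leans on the full strength of the core-test definition (linearity of \emph{arbitrary-length} test probabilities in ${\bf q}_{\tau^a_h}$), which the paper's definition does indeed grant. The two points that need care in your argument — that the policy factors $\prod_s\pi_s(a_s\mid\bar z_s)$ depend on the history only through $z_{h-1}$ and the summed-over trajectory, and that the resulting coefficient vector $\alpha(z_{h-1})$ pairs with ${\bf q}_{\tau^a_h}$ to recover the conditional expectation of the indicator-based $g^\pi_h$ — are both handled correctly.
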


{The high-level derivation is as follows. Using the linear PSR property, one can first show that $\Vcal^\pi_h(\tau^a_h)$ has a bilinear form $\Vcal^\pi_h(\tau^a_h) = \one(z_{h-1})^\top \mathbb{J}_h^\pi {\bf q}_{\tau^a_h}$, where $\one(z) \in \RR^{|\Zcal_{h-1}|}$ denotes the one-hot encoding vector over $\Zcal_{h-1}$,  and $\mathbb{J}_h^\pi$ is a $|\Zcal_{h-1}| \times |\Tcal|$ matrix. Then, given any $\tau^a_h$ and $o\sim \PP(\cdot|\tau^a_h)$, for some $|\Zcal_{h-1}| \times |\Tcal|$ matrix $\mathbb{J}$, we can show  $g_h(z_{h-1}, o):= \one( z_{h-1} )^\top \mathbb{J}  [ \one(t = o) ]_{t\in\Tcal} $ satisfies the above, where $ [ \one(t = o) ]_{t\in\Tcal} \in \RR^{|\Tcal|}$ is a one-hot encoding vector over $\Tcal$ and 
serves as an unbiased estimate of ${\bf q}_{\tau^a_h}$.} %

Finally, we show that PSR admits PO-bilinear rank decomposition (\pref{def:simple_bilinear}). 

\begin{lemma}
Suppose  a core test set $\Tcal$ includes a minimum core test set $\Qcal$. Set $\Pi_h=[\bar \Zcal_h \to \Delta(\Acal)]$ and $\Gcal_h = \{ (z_{h-1},o)\mapsto \one(z_{h-1})^\top \mathbb{J}  [ \one(t = o) ]_{t\in\Tcal} \mid \mathbb{J} \in \RR^{|\Zcal_{h-1}|\times|\Tcal|}  \}$, the PO-bilinear rank is at most $(|\Ocal| |\Acal| )^M|\Qcal|$. 
\end{lemma}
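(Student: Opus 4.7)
The approach is to express each of the three terms in the Bellman error $\mathrm{Br}_h(\pi,g;\pi')=\EE[g_h(\bar z_h)-r_h-g_{h+1}(\bar z_{h+1});a_{1:h-1}\sim\pi',a_h\sim\pi]$ as a Frobenius inner product between a matrix depending only on $(\pi,g)$ and the common roll-in-dependent matrix $\EE_{\pi'}[\mathbf{1}(z_{h-1})\,{\bf q}_{\tau^a_h}^\top]$, and then to compress the right factor via the minimum core test set $\Qcal$.

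By definition of $\Gcal_h$, any $g_h$ has the form $g_h(z_{h-1},o_h)=\mathbf{1}(z_{h-1})^\top \mathbb{J}^g_h\,[\mathbf{1}(t=o_h)]_{t\in\Tcal}$. Conditioning on $\tau^a_h$ and using that $[\mathbf{1}(o_h=t)]_{t\in\Tcal}$ has conditional mean ${\bf q}_{\tau^a_h}$, I obtain $\EE[g_h(\bar z_h)\mid\tau^a_h]=\mathbf{1}(z_{h-1})^\top \mathbb{J}^g_h\,{\bf q}_{\tau^a_h}$. For the reward term I would use the linear-PSR identity $\PP(o_h\mid\tau^a_h)=m_{o_h}^\top {\bf q}_{\tau^a_h}$ applied to the length-one test $o_h$, which yields $\EE[r_h\mid\tau^a_h;a_h\sim\pi]=\mathbf{1}(z_{h-1})^\top R^{\pi}_h\,{\bf q}_{\tau^a_h}$ for a matrix $R^{\pi}_h\in\RR^{|\Zcal_{h-1}|\times|\Tcal|}$ assembled from $\pi_h$, the reward function, and the vectors $\{m_{o_h}\}$. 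For the one-step look-ahead term I would apply the length-two linear-PSR identity $\PP(o_h,o_{h+1}\mid\tau^a_h;\mathrm{do}(a_h))=m_{(o_h,a_h,o_{h+1})}^\top {\bf q}_{\tau^a_h}$ together with the parametric form of $g_{h+1}$; summing over $(o_h,a_h,o_{h+1})$ produces $\EE[g_{h+1}(\bar z_{h+1})\mid\tau^a_h;a_h\sim\pi]=\mathbf{1}(z_{h-1})^\top M^{\pi,g}_h\,{\bf q}_{\tau^a_h}$ for a matrix $M^{\pi,g}_h$ determined by $\pi$ and $\mathbb{J}^g_{h+1}$.

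Taking outer expectation over $\tau^a_h\sim\pi'$ and rewriting $\mathbf{1}(z_{h-1})^\top A\,{\bf q}_{\tau^a_h}=\langle A,\mathbf{1}(z_{h-1}){\bf q}_{\tau^a_h}^\top\rangle$, the three pieces combine into
\[
\mathrm{Br}_h(\pi,g;\pi')=\big\langle \mathbb{J}^g_h-R^{\pi}_h-M^{\pi,g}_h,\;\EE_{\pi'}[\mathbf{1}(z_{h-1}){\bf q}_{\tau^a_h}^\top]\big\rangle.
\]
To shrink the ambient dimension from $|\Tcal|$ to $|\Qcal|$, I would invoke the defining property of a minimum core test set $\Qcal\subseteq\Tcal$: each entry $\PP(o\mid\tau^a_h)$ for $o\in\Tcal$ is a linear functional of ${\bf q}^{\Qcal}_{\tau^a_h}$, so there is a history-independent $L\in\RR^{|\Tcal|\times|\Qcal|}$ with ${\bf q}_{\tau^a_h}=L\,{\bf q}^{\Qcal}_{\tau^a_h}$. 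Setting $X_h(\pi'):=\EE_{\pi'}[\mathbf{1}(z_{h-1})({\bf q}^{\Qcal}_{\tau^a_h})^\top]$ and absorbing $L$ into the left factor yields a bilinear form whose ambient dimension is $|\Zcal_{h-1}|\cdot|\Qcal|\leq(|\Ocal||\Acal|)^M|\Qcal|$, matching the claimed rank.

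The step I expect to be the main obstacle is the normalization $W_h(\pi,g^{\pi})=0$ demanded by the definition. The raw matrix $(\mathbb{J}^{g^{\pi}}_h-R^{\pi}_h-M^{\pi,g^{\pi}}_h)L$ need not vanish, because the Bellman equation for a value link function only pins it down on the span of the reachable predictive states rather than on all of $\RR^{|\Qcal|}$. I would resolve this by defining $W_h(\pi,g)$ as the orthogonal projection of $(\mathbb{J}^g_h-R^{\pi}_h-M^{\pi,g}_h)L$ onto $S_h:=\mathrm{span}\{X_h(\pi'):\pi'\in\Pi\}$. Projection preserves the inner product against every $X_h(\pi')\in S_h$, so condition (1) of the PO-bilinear definition remains valid, while realizability---which gives $\mathrm{Br}_h(\pi,g^{\pi};\pi')=0$ for every $\pi'\in\Pi$---forces $W_h(\pi,g^{\pi})$ to lie in $S_h$ and simultaneously to be orthogonal to it, hence to be zero. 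Since $\dim S_h\leq|\Zcal_{h-1}|\cdot|\Qcal|$, the claimed rank bound is preserved.
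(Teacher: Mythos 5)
Your proposal is correct and follows essentially the same route as the paper's proof: decompose the Bellman error into the $g_h$, reward, and $g_{h+1}$ terms, show each is bilinear in $\one(z_{h-1})$ and the predictive state ${\bf q}_{\tau^a_h}$ (using the linear-PSR identities for one- and two-step tests, which is equivalent to the paper's use of the forward update matrices $M_{o,a}$), and then compress ${\bf q}_{\tau^a_h}$ through the history-independent matrix mapping it to the $|\Qcal|$-dimensional minimal predictive state. Your extra projection step to enforce $W_h(\pi,g^{\pi})=0$ is a valid and slightly more careful touch: the paper leaves this condition implicit and instead relies on the observation (made in a remark after the optimism lemma) that only the vanishing of the Bellman loss under every roll-in is actually used, which holds directly from the definition of a value link function.
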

Then, \pref{alg:OACP} is directly applicable to PSRs. 
Note that here the PO-bilinear rank, fortunately, rank scales with $|\Qcal |$ but not $|\Tcal|$. %
 The dependence $(|\Ocal| |\Acal|)^M$ comes from the dimension of the ``feature" of memory $\one(z_{h-1})$. If one has a compact feature representation $\phi: \Zcal_{h-1} \to \mathbb{R}^d$, such that $\Vcal^\pi_h(\tau^a_h) = \phi(z_{h-1})^\top \mathbb{J}^\pi_h {\bf q}_{\tau^a_h}$ is linear with respect to feature $\phi(z_{h-1})$, then the PO-bilinear rank is $d |\Qcal|$. This implies that if one has a compact featurization of memory a priori, one can avoid exponential dependence on $M$.

{ 

\paragraph{Sample complexity.}

We finally brifely mention the sample complexity  result. The detail is deferred to \pref{sec:sample_complexity_psrs}. 
The sample complexity to satisfy $J(\pi^\star) - J(\hat\pi) \leq \epsilon$ is given as 
\begin{align*}
    \tilde O \prns{ \frac{ |\Ocal|^M |\Acal|^{M-1} |\Qcal|^2 \max(\Theta,1) H^4 |\Acal|^2 \ln(|\Gcal_{\max} | |\Pi_{\max} | /\delta) \ln(\Theta_W)^2 }{\epsilon^2} } 
\end{align*}
where $\Theta_W$ and $\Theta$ some parameters associated with PSRs.  Here, there is no explicit dependence on $|\Tcal|$.  
Note that in the worst case,  $\ln |\Gcal_{\max} |$ scales as  $O( |\Zcal_{h-1}||\Tcal| )$, and $\ln |\Pi_{\max}|$ scales as $O( |\Zcal_{h-1}||\Ocal||\Acal| )$. %

}

{ 
\section{Generalization of PO-Bilinear AC Class} \label{sec:general}

We extend our previous definition of PO-Bilinear AC framework. We first present an even more general framework that captures all the previous examples that we have discussed so far. We then provide two more examples that can be covered by this framework: (1) $M$-step decodable POMDPs, and (2) observable POMDPs with low-rank latent transition. Using the result in (2), we can obtain  refined results in the tabular setting compared to the result from Section \ref{subsec:observable_tabular_ab}.

The following is a general PO-Bilinear AC Class. Recall $M(h):=\max(h-M,1)$. We consider one-step future, i.e., $ K = 1$, but the extension to $K>1$ is straightforward. Comparing to \pref{def:simple_bilinear}, we introduce another class of functions termed as discriminators $\Fcal$ and the loss function $l$.

\begin{definition}[General PO-Bilinear AC Class] \label{def:bilinear_minimax}
Consider a tuple $\langle \Pi,\Gcal,l,\Pi^{e}, \Fcal \rangle $ consisting of a policy class $\Pi$, a function class $\Gcal$, a loss function $l=\{l_h\}_{h=1}^H$ where $l_h(\cdot; f, \pi,g):\Hcal_{h-1} \times \Ocal \times \Acal \times \RR \times \Ocal \to \RR $, a set of estimation policies $\Pi^{e}:= \{ \pi^{e}(\pi): \pi\in \Pi \}$ where $\pi^{e}_h(\pi) : \bar \Zcal_h \to \Delta(\Acal)$, and a discriminator class $\Fcal = \{\Fcal_h\}$ with $\Fcal_h \subset  [\Hcal_h \to \mathbb{R}]$. Consider %
a non-decreasing function
$\zeta:\RR^+ \to \RR$ with $ \zeta(0) = 0$.

The model is a PO-bilinear class of rank $d$ if $\Gcal$ is realizable, and there exist $W_h:\Pi \times \Gcal \to \RR^d$ and $X_h:\Pi \to \RR^d$ such that for all $\pi,\pi' \in \Pi, g \in \Gcal$ and $h\in [H]$, 
\begin{enumerate}[label=(\alph*)]
    \item  $|\EE[g_h(\bar z_h)-r_h-g_{h+1}(\bar z_{h+1});a_{1:h}\sim \pi ] |\leq  |\langle W_h(\pi,g),X_h(\pi) \rangle | $,  
    \item  
    \begin{align*}
    \zeta(\max_{f\in\Fcal_h } |\EE[l_h( \tau_h, a_h, r_h,o_{h+1} ;f,  \pi ,g) ;a_{1:M(h)-1}\sim \pi',a_{M(h):h}\sim \pi^{e}(\pi') ] |) \geq  |\langle W_h(\pi,g), X_h(\pi') \rangle | .
    \end{align*}%
    (In $M$-step decodable POMDPs and POMDPs with low-rank latent transition,  we set $\pi^e(\pi)=\Ucal(\Acal)$ and in the previous sections, we set $\pi^e(\pi')=\pi'$. )
     \item $$ \max_{f\in\Fcal_h }| \EE[l_h( \tau_{h}, a_h, r_h,o_{h+1} ;f,  \pi ,g^{\pi}) ;a_{1:M(h)-1}\sim \pi',a_{M(h):h}\sim \pi^{e}(\pi') ] | = 0 $$ for any $\pi \in \Pi$ and the corresponding value link function $g^{\pi}$ in $\Gcal$ . 
\end{enumerate}

\end{definition}

The first condition states the average Bellman error under $\pi$ is upper-bounded by the quantity in the bilinear form. %
The second condition states that we have a known loss function $l$ that can be used to estimate an upper bound (up to a non-decreasing transformation $\zeta$) of the value of the bilinear form. Our algorithm will use the surrogate loss $l(\cdot)$. As we will show, just being able to estimate an upper bound of the value of the bilinear form suffices for deriving a PAC algorithm. The discriminator $\Fcal$ and the non-decreasing functions $\zeta$ give us additional freedom to design the loss function. For simple examples such as tabular POMDPs and LQG, as we already see, we simply set the discriminator class $\Fcal = \emptyset$ (i.e., we do not use discriminators) and $\zeta$ being the identity mapping. %

With this definition, we slightly modify \ouralg{} to incorporate the discriminator to construct constraints. The algorithm is summarized in \pref{alg:OACP_minimax} that is named as \oursecondalg. There are two modifications: (1) when we collect data, we switch from the roll-in policy $\pi^t$ to the policy $\pi^e$ at time step $M(h)$; (2) the Bellman error constraint $\sigma^t_h$ is defined using the loss $l$ together with the discriminator class $\Fcal_h$.
\begin{algorithm}[!t] 
\caption{PaRtially ObserVAble BiLinEar with DIScriminators (\oursecondalg)   } \label{alg:OACP_minimax}
\begin{algorithmic}[1]
  \STATE {\bf  Input:} {Value link function class $\Gcal = \{\Gcal_h\}, \Gcal_h \subset [\bar \Zcal_{h} \to \RR]$, discriminator class $\Fcal = \{\Fcal_h\}, \Fcal_h \subset [\Hcal_h \to \RR]$, policy class $\Pi =\{\Pi_h\}, \Pi_h \subset [\bar \Zcal_{h} \to \Delta(\Acal)]$, parameters $m \in \mathbb{N}, R \in \RR$}
  \STATE Initialize $\pi^0 \in \Pi$
  \STATE Form the first step dataset $\Dcal^0 = \{ o^i \}_{i=1}^m$, with $o^i \sim  \OO(\cdot | s_1)$ 
  \FOR{$t = 0 \to T-1$}
	\STATE For any $h\in [H]$, define the Bellman error $$\forall (\pi,g)\in \Pi \times \Gcal:\sigma^t_h(\pi, g) :=  \max_{f \in \Fcal_h} |\EE_{\Dcal^t_h}\left[l_h(\tau_h,a_h,r_h,o_{h+1};f,\pi,g)  \right]|$$    
	where $\Dcal^t_h$ is the empirical approximation by executing $a_{1:M(h)-1} \sim \pi^t, a_{M(h):h}\sim \pi^e(\pi^t)$ and collecting $m$ i.i.d tuples.  
	\STATE Select policy optimistically as follows \begin{align*} 
		&(\pi^{t+1}, g^{t+1}) := \argmax_{\pi \in \Pi, g \in\Gcal }  \EE_{\Dcal^0}[g_1(o)] \quad \mathrm{s.t.}\quad \forall h \in [H],\forall i \in [t], \sigma^i_h( \pi, g ) \leq R.   
	\end{align*}
  \ENDFOR
  \STATE  {\bf  Output:} Randomly choose $\hat \pi$  from $(\pi_1,\cdots,\pi_{T})$. 
\end{algorithmic}
\end{algorithm}

The following theorem shows the sample complexity of \pref{alg:OACP_minimax}. For simplicity, we direct consider the case where $\Pi, \Gcal, \Fcal$ are all discrete.  

\begin{assum}[Uniform Convergence]\label{assum:uniform_dis_general}
Fix $h \in [H]$. Let $\Dcal'_h$ be a set of $m$ i.i.d tuples by executing $a_{1:M(h)-1} \sim \pi^t, a_{M(h):h}\sim \pi^e$  With probability $1-\delta$,  
\begin{align*}\textstyle
  \sup_{\pi \in \Pi, g \in \Gcal,f\in \Fcal}|(\EE_{\Dcal'_h} -\EE)[l_h(\tau_h,a_h,r_h,o_{h+1};f,\pi,g) ]|\leq \epsilon_{gen,h}(m,\Pi,\Gcal,\Fcal,\delta)
\end{align*}
For $h=1$, we also require $$\sup_{g_1 \in \Gcal_1}|\EE_{\Dcal'_1}[g_1(o_1)]-\EE[\EE_{\Dcal'_1}[g_1(o_1)] ]  |\leq \epsilon_{ini,1}(m,\Gcal,\delta).$$
\end{assum}

\begin{theorem}[Sample complexity of \pref{alg:OACP_minimax}]\label{thm:online_general}
Suppose we have a PO-bilinear AC class with rank $d$ in \pref{def:bilinear_minimax}. Suppose Assumption \pref{assum:uniform_dis_general}, $\sup_{\pi \in \Pi}\|X_h(\pi)\| \leq B_X$ and $\sup_{\pi \in \Pi,g\in \Gcal}\|W_h(\pi,g)\| \leq B_W$ for any $h \in [H].$ \\
By setting
$ T=2Hd \ln\left(4Hd \left (\frac{B^2_XB^2_W}{\zeta^2(\tilde \epsilon_{gen}) } +1 \right) \right),R= \epsilon_{gen}$ where
\begin{align*}
  & \textstyle \epsilon_{gen}:=\max_h \epsilon_{gen,h}(m,\Pi, \Gcal,\Fcal,\delta/(TH+1)),\tilde \epsilon_{gen}:=\max_h \epsilon_{gen,h}(m,\Pi, \Gcal,\Fcal, \delta/H). 
\end{align*}
With probability at least $1-\delta$, letting $\pi^{\star}=\argmax_{\pi \in \Pi} J(\pi^{\star})$, we have 
\begin{align*} \textstyle
         J(\pi^{\star})- J(\hat \pi)\leq  H^{1/2} \bracks{4 \zeta(\epsilon_{gen})^{2} +  2T \zeta(2\epsilon_{gen})^{2} Hd \ln(4Hd (B^2_XB^2_W/\zeta^2(\tilde \epsilon_{gen}) +1))  }^{1/2} +2\epsilon_{ini}. 
\end{align*}
The total number of samples used in the algorithm is $mTH$. 
\end{theorem}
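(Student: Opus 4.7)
The proof follows the three-step template used for \pref{thm:online} (optimism plus an elliptical-potential argument for bilinear classes), modified to incorporate the surrogate loss $l_h$, the discriminator class $\Fcal$, and the non-decreasing function $\zeta$ from \pref{def:bilinear_minimax}.

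First I would establish uniform convergence of the empirical quantity $\sigma^t_h(\pi,g) = \max_{f \in \Fcal_h}|\EE_{\Dcal^t_h}[l_h(\cdot\,;f,\pi,g)]|$ to its population counterpart by applying \pref{assum:uniform_dis_general} with a union bound over the $T$ iterations and $H$ time steps. With probability at least $1-\delta$, this gives $|\sigma^t_h(\pi,g) - \EE[l_h(\cdot\,;f,\pi,g); a_{1:M(h)-1}\sim\pi^t, a_{M(h):h}\sim \pi^e(\pi^t)]|\leq \epsilon_{gen}$ uniformly over $(\pi,g,f)$, and a similar bound at $h=1$ with slack $\epsilon_{ini,1}$.

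Next I would establish validity and optimism. For the link function $g^{\pi^\star}$ guaranteed by realizability, property (c) of \pref{def:bilinear_minimax} ensures the population-level surrogate loss vanishes identically. Combined with uniform convergence, this yields $\sigma^i_h(\pi^\star,g^{\pi^\star})\leq \epsilon_{gen} = R$ for every $(i,h)$, so $(\pi^\star, g^{\pi^\star})$ is feasible throughout. The constrained-argmax selection then gives $\EE_{\Dcal^0}[g^{t+1}_1(o)]\geq \EE_{\Dcal^0}[g^{\pi^\star}_1(o)]$, and the latter approximates $J(\pi^\star)$ up to $2\epsilon_{ini}$ by the uniform bound at $h=1$ together with the link-function identity at the initial step.

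For the per-iteration regret I would decompose via Bellman-style telescoping: $J(\pi^\star) - J(\pi^{t+1})$ is bounded (up to $2\epsilon_{ini}$) by $\EE[g^{t+1}_1(o_1)] - J(\pi^{t+1}) = \sum_h \EE[g^{t+1}_h(\bar z_h) - r_h - g^{t+1}_{h+1}(\bar z_{h+1}); a_{1:h}\sim \pi^{t+1}]$, and property (a) of \pref{def:bilinear_minimax} upper-bounds the $h$-th term by $|\langle W_h(\pi^{t+1}, g^{t+1}), X_h(\pi^{t+1})\rangle|$.

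The main step is then the elliptical-potential bound on $\sum_{t=1}^T |\langle W_h(\pi^t, g^t), X_h(\pi^t)\rangle|$. For any $i < t$, the constraint $\sigma^i_h(\pi^{t+1}, g^{t+1}) \leq R$ plus uniform convergence forces the population surrogate loss under roll-in $\pi^i$ (with the switch to $\pi^e(\pi^i)$ from step $M(h)$) to be at most $2\epsilon_{gen}$; property (b) then yields the cross-term bound $|\langle W_h(\pi^{t+1}, g^{t+1}), X_h(\pi^i)\rangle| \leq \zeta(2\epsilon_{gen})$, which is the precise structure consumed by the standard Bilinear-UCB/elliptical-potential argument. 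This produces $\sum_t |\langle W_h(\pi^t,g^t), X_h(\pi^t)\rangle|^2 \le O\bigl(\zeta(\epsilon_{gen})^2 + T\cdot \zeta(2\epsilon_{gen})^2 \cdot d\ln(1+B_X^2 B_W^2/\zeta^2(\tilde\epsilon_{gen}))\bigr)$, where the $\zeta(\epsilon_{gen})^2$ piece accounts for the self-inner-product at round $t$ itself via the constraint at round $t$. Summing over $h$, then passing from the sum of squares to the sum by Cauchy--Schwarz, and finally averaging over the uniformly chosen $\hat\pi\in\{\pi^1,\dots,\pi^T\}$ yields the stated bound upon inserting the chosen value of $T$. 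The main obstacle will be handling $\zeta$ carefully throughout: because $\zeta$ is only non-decreasing (not linear or even convex), one cannot factor it through sums or averages, so the elliptical-potential inequality must be derived at the level of the inner products $\langle W_h,X_h\rangle$ themselves, with $\zeta$ applied to the constraint radius before the potential argument rather than after. A secondary subtlety is verifying that the roll-in/roll-out convention in property (b) of \pref{def:bilinear_minimax} (switch to $\pi^e$ at step $M(h)$) exactly matches the data-collection protocol in \pref{alg:OACP_minimax}; this alignment is what lets uniform convergence translate empirical constraints into population ones on the bilinear cross terms.
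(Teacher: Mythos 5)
Your proposal follows essentially the same route as the paper's proof: optimism from feasibility of $(\pi^\star,g^{\pi^\star})$ via condition (c), the performance-difference decomposition combined with condition (a), cross-term control $|\langle W_h(\pi^{t+1},g^{t+1}),X_h(\pi^i)\rangle|\le\zeta(2\epsilon_{gen})$ obtained from the empirical constraints, uniform convergence, and monotonicity of $\zeta$, followed by the ridge-regularized elliptical-potential (Bilinear-UCB) argument. The only cosmetic discrepancy is your attribution of the $4\zeta(\epsilon_{gen})^{2}$ term: in the paper it arises from the regularizer $\lambda\|W_h(\pi^t,g^t)\|^2\le\lambda B_W^2$ with $\lambda$ tuned so that $\lambda B_W^2\le\zeta^2(\epsilon_{gen})$, not from a constraint on the self-inner-product at round $t$ (which is never directly constrained, since the roll-in in that term is $\pi^{t+1}$ rather than any of $\pi^0,\dots,\pi^t$); this does not affect the correctness of your plan.
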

}

{This reduces to \pref{thm:online} when we set $\zeta$ as an identify function and $\pi^e(\pi') =\pi'$. When $\zeta^{-1}(\cdot)$ is a strongly convex function, we can gain more refined rate results. For example, when $\zeta(x)=\sqrt{x}$, i.e., $\zeta^{-1}(x)=x^2$, with $\epsilon_{gen} = O(1/\sqrt{m})$, the above theorem implies a slow sample complexity rate $1/\epsilon^4$. However, by leverage the strong convexity of the square function $\zeta^{-1}(x) := x^2$, a refined analysis can give the fast rate $1/\epsilon^2$. We will see such two examples in the next sections. }

\section{Examples for Generalized PO-Bilinear AC Class}\label{sec:example_general}

We demonstrate that our generalized framework captures two models: (1) $M$-step decodable POMDPs, and (2) observable POMDPs with the latent low-rank transition. In this section, we assume $r_h \in [0,1]$ for any $h\in [H]$. 

\subsection{$M$-step decodable POMDPs}

The example we include here is a model that involves nonlinear function approximation but has a unique assumption on the exact identifiability of the latent states. %

\begin{myexp}[$M$-step decodable POMDPs \citep{efroni2022provable}] \label{ex:decodable}
There exists an unknown decoder $\iota_h: \bar \Zcal_{h} \to \Scal$, such that for every reachable trajectory $(s_{1:h},a_{1:h-1},o_{1:h} ) $,  we have $s_h= \iota_h(\bar z_h)$ for all $h\in [H]$.
\end{myexp}
Note that when $M = 0$, this model is reduced to the well-known Block MDP model \citep{du2019provably,misra2020kinematic,zhang2022efficient}.

\paragraph{Existence of value link functions.} From the definition, using a value function $V^{\pi}_h(z_{h-1},s_h)$ over $z_{h-1} \in \Zcal_{h-1}, s_h \in \Scal$, we can define a value link function $v^\pi_h: \Zcal_{h-1}\times \Ocal \to \RR$ as
\begin{align*}
    v^{\pi}_h(z_{h-1},o_h) = V^{\pi}_h(z_{h-1},\iota_h(\bar z_h))  
\end{align*}
since it satisfies %
\begin{align*}
    &\EE_{o_h \sim \OO(s_h)}[v^{\pi}_h(z_{h-1},o_h)\mid z_{h-1},s_h]=\EE_{o_h \sim \OO(s_h)}[V^{\pi}_h(z_{h-1},\iota_h(\bar z_h))    \mid z_{h-1},s_h ]=V^{\pi}_h(z_{h-1},s_h). 
\end{align*}
This is summarized in the following lemma. 

\begin{lemma}[Existence of link functions in $M$-step decodable POMDPs]
In $M$-step decodable POMDPs, link functions exist. 
\end{lemma}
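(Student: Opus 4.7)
The plan is to exhibit a one-step value link function in closed form by composing the standard value function with the decoder $\iota_h$ that $M$-step decodability provides. Concretely, I would define
\begin{align*}
g^\pi_h(z_{h-1}, o_h) := V^\pi_h\bigl(z_{h-1},\,\iota_h(z_{h-1}, o_h)\bigr),
\end{align*}
which is well-defined since $\iota_h$ takes $\bar z_h = (z_{h-1}, o_h)$ as input and returns an element of $\Scal$.

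The key step is verifying the link-function integral equation
\begin{align*}
\EE\left[g^\pi_h(z_{h-1}, o_h) \mid z_{h-1}, s_h\right] = V^\pi_h(z_{h-1}, s_h).
\end{align*}
Fix any reachable $(z_{h-1}, s_h)$ and draw $o_h \sim \OO(\cdot \mid s_h)$. Then $(z_{h-1}, s_h, o_h)$ is itself the prefix of a reachable trajectory, so the decodability property applies: $\iota_h(z_{h-1}, o_h) = s_h$ almost surely. Substituting back into $g^\pi_h$, the integrand equals $V^\pi_h(z_{h-1}, s_h)$ pointwise a.s., and the equation follows by taking the expectation over $o_h$.

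The only subtlety is making sure the decoder identity $s_h = \iota_h(\bar z_h)$ is invoked only on reachable histories, which is exactly where the assumption grants it. Since the link-function definition is required to hold only for $(z_{h-1}, s_h)$ in the support of some policy-induced distribution, and the conditional draw $o_h \sim \OO(\cdot \mid s_h)$ preserves reachability of $\bar z_h$, the verification goes through without further work. No integral equation needs to be solved: existence is immediate from the decoder's definition, playing a role analogous to how the pseudo-inverse $\OO^\dagger$ gave rise to a link function in the observable tabular POMDP case (\pref{lem:tabular_pomdp_value_bridge}). Because there is no inversion or concentration argument involved, I do not anticipate any real obstacle; the content of the lemma is essentially a restatement of decodability in the language of link functions.
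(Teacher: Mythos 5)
Your proof is correct and matches the paper's argument exactly: the paper also defines $v^{\pi}_h(z_{h-1},o_h) = V^{\pi}_h(z_{h-1},\iota_h(\bar z_h))$ and verifies the link-function equation by noting that $\iota_h(\bar z_h)=s_h$ on reachable trajectories, so the conditional expectation over $o_h\sim\OO(\cdot\mid s_h)$ collapses to $V^{\pi}_h(z_{h-1},s_h)$. Your additional remark about restricting to reachable histories is a fair point of care that the paper leaves implicit.
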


$M$-step decodable POMDPs showcase the \emph{generality of value link functions}, which not only capture standard observability conditions where future observations and actions are used to replace belief states (e.g., observable tabular POMDPs and observable LQG), but also capture a model where history is used to replace latent states.

\paragraph{PO-Bilinear Rank.}
Next, we calculate the PO-bilinear rank based on \pref{def:bilinear_minimax}. In the tabular case, we can na\"ively obtain the PO-bilinear decomposition with rank $|\Ocal|^{M} |\Acal|^M |\Scal| $ following \pref{ex:under_tabular}. Here, we consider the nontabular case where function approximation is used and $|\Ocal|$ can be extremely large.
We define the following Bellman operator associated with $\pi$ at  step $h$: 
\begin{align}\label{eq:bellman}
    &\Bcal^{\pi}_h: \Gcal \to [\bar \Zcal_h \to \RR]   ; \\
    & \forall \bar z_h: \left[ \Bcal^\pi_h g \right](\bar z_h) := \EE_{a_h\sim \pi(\bar z_h)}\left[ r_h( \iota_h(\bar z_h), a_h ) + \EE_{o_{h+1} \sim \OO\circ \TT (\iota_h(\bar z_h), a_h)}[g_{h+1}( \bar z_{h+1} )] \right].  \nonumber
\end{align} 
Note that above we use the ground truth decoder $\iota_h$ to decode from $\bar z_h$ to its associated latent state $s_h$. The existence of this Bellman operator $\Bcal^\pi_h$ is crucially dependent on the existence of such decoder $\iota_h$.

We show that $M$-step decodable POMDPs satisfy the definition in \pref{def:bilinear_minimax}. %
{ We assume that the latent state-wise transition model is low-rank. In MDPs, this assumption is widely used in \citep{yang2020reinforcement,jin2020provably,agarwal2020flambe,uehara2021representation}. Here, we do \emph{not} need to know $\mu,\phi$ in the algorithm. 

\begin{assum}[Low-rankness of latent transition]\label{assum:low_rank}
Suppose $\TT$ is low-rank, i.e., $\TT(s'\mid s,a)=\langle \phi(s,a),\mu(s') \rangle (\forall (s,a,s')) $ where $\phi,\mu$ are (unknown) $d$-dimensional features. As technical conditions, we suppose $\|\phi(s,a)\|\leq 1$ for any $(s,a)$ and $|\int \mu(s)v(s)d(s)|\leq \sqrt{d}$ for any $\|v\|_{\infty} \leq 1$. 
\end{assum}
}

\begin{lemma}[Bilinear decomposition of low-rank $M$-step decodable POMDPs  ] \label{lem:decodable}
Suppose Assumption \ref{assum:low_rank}, $\|\Gcal_h\|_{\infty}\leq H, \|\Fcal_h\|_{\infty}\leq H$, $r_h \in [0,1]$ for any $h \in [H]$. Assume a discriminator class is Bellman complete, i.e., $$\forall \pi \in \Pi, \forall g \in \Gcal:  (\Bcal^{\pi}_h  g) - g_h  \in \Fcal_h,$$ for any $h\in [H]$. The loss function is designed as 
\begin{align}\label{eq:loss_m_step}
    l_h(  \tau_h, a_h, r_h, o_{h+1}; f, \pi, g ) := \pi_h(a_h \mid \bar z_h)|\Acal| f( \bar z_h ) ( g_h(\bar z_h) - r_h - g_{h+1}( \bar z_{h+1} )   ) - 0.5 f(\bar z_h)^2.
\end{align}
Then, there exist $W_h(\pi,g),X_h(\pi')$ so that the PO-bilinear rank is at most $d$ such that 
\begin{align}
    & |\EE[g_h(\bar z_h) - r_h - g_{h+1}(\bar z_{h+1}):a_{1:h}\sim \pi]| = |\langle W_h(\pi,g), X_h(\pi) \rangle |,\label{eq:first_condition} \\ 
    & \left\lvert  \max_{f\in \Fcal_h}\EE[l_h( \tau_h, a_h, r_h, o_{h+1}; f, \pi, g);a_{1:M(h)-1}\sim \pi', a_{M(h):h} \sim \Ucal(\Acal) ] \right\rvert \geq  \frac{ 0.5 \langle W_h(\pi,g), X_h(\pi') \rangle^2}{ |\Acal|^{M}}, \label{eq:second_condition}
\end{align}
and 
\begin{align}
    \left\lvert  \max_{f \in \Fcal_h}\EE[l_h( \tau_h, a_h, r_h, o_{h+1}; f, \pi, g^{\pi});a_{1:M(h)-1}\sim \pi', a_{M(h):h} \sim \Ucal(\Acal) ] \right\rvert = 0. \label{eq:third_condition}
\end{align}
\end{lemma}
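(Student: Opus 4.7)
The plan is to exploit two structural features of the model. First, $M$-step decodability provides a decoder $\iota_h$ with $s_h=\iota_h(\bar z_h)$, so the latent-state Bellman operator $\Bcal^\pi_h$ from \eqref{eq:bellman} is well-defined on functions of $\bar z_h$; second, the rank-$d$ decomposition of $\TT$ from Assumption~\ref{assum:low_rank} will supply the bilinear factorization. Writing $\delta_h(\bar z_h):=g_h(\bar z_h)-(\Bcal^\pi_h g)(\bar z_h)$ for the Bellman residual, all three claims reduce to moments of $\delta_h$ under various roll-in distributions. For \eqref{eq:first_condition}, the left-hand side equals $\EE_{\bar z_h\sim d^\pi_h}[\delta_h(\bar z_h)]$; splitting at step $M(h)$ via the tower property rewrites it as $\EE_{s_{M(h)}\sim d^\pi_{M(h)}}[\bar\delta^{\pi,\pi}_h(s_{M(h)})]$, with $\bar\delta^{\pi,\pi}_h(s):=\EE[\delta_h(\bar z_h)\mid s_{M(h)}=s,\,a_{M(h):h-1}\sim\pi]$. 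The rank-$d$ form $d^\pi_{M(h)}(s)=\langle X_h(\pi),\mu(s)\rangle$ with $X_h(\pi):=\EE_{d^\pi_{M(h)-1}}[\phi(s_{M(h)-1},a_{M(h)-1})]$, together with $W_h(\pi,g):=\int\mu(s)\bar\delta^{\pi,\pi}_h(s)\,ds$, then yields the equality in \eqref{eq:first_condition} and certifies that both vectors live in $\RR^d$. The same $(W_h,X_h)$ is used throughout.

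For \eqref{eq:second_condition}, under the roll-in of $\pi'$ for $a_{1:M(h)-1}$ followed by $\Ucal(\Acal)$ for $a_{M(h):h}$, the importance weight $\pi_h(a_h|\bar z_h)|\Acal|$ inside $l_h$ absorbs the uniform sampling of $a_h$, giving $\EE[l_h]=\EE_{\bar z_h\sim d^{\pi',u}_h}[f(\bar z_h)\delta_h(\bar z_h)-\tfrac{1}{2}f(\bar z_h)^2]$, where $d^{\pi',u}_h$ denotes the distribution of $\bar z_h$ under the switched roll-in. Bellman completeness places $\pm\delta_h\in\Fcal_h$ (after symmetrizing $\Fcal_h$ under negation, which is standard), so plugging $f=\delta_h$ yields $|\max_{f\in\Fcal_h}\EE[l_h]|\geq\tfrac{1}{2}\EE_{d^{\pi',u}_h}[\delta_h(\bar z_h)^2]$. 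To recover the bilinear form, I apply the same split-at-$M(h)$ together with an importance-sampling step switching the rollout $a_{M(h):h-1}$ from $\pi$ to $\Ucal(\Acal)$, obtaining $\langle W_h(\pi,g),X_h(\pi')\rangle=\EE_{d^{\pi',u}_h}[w\,\delta_h(\bar z_h)]$ with weight $w:=\prod_{k=M(h)}^{h-1}\pi_k(a_k|\bar z_k)|\Acal|$. Cauchy--Schwarz combined with the per-step bound $\EE_{a\sim\Ucal(\Acal)}[(\pi(a)|\Acal|)^2]\leq|\Acal|$, iterated $M$ times, delivers $\EE_{d^{\pi',u}_h}[w^2]\leq|\Acal|^M$, hence $\langle W_h(\pi,g),X_h(\pi')\rangle^2\leq|\Acal|^M\EE_{d^{\pi',u}_h}[\delta_h^2]$, which combined with the lower bound on $|\max_f\EE[l_h]|$ establishes \eqref{eq:second_condition}.

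Finally, \eqref{eq:third_condition} follows directly from the Bellman identity for value link functions: at $g=g^\pi$ one has $\delta_h\equiv 0$, so $\EE[l_h]=-\tfrac{1}{2}\EE_{d^{\pi',u}_h}[f^2]\leq 0$ for every $f\in\Fcal_h$, while Bellman completeness applied at $g^\pi$ puts the zero function in $\Fcal_h$, so the maximum is attained at $f\equiv 0$ with value zero. The main technical obstacle is the reconciliation behind \eqref{eq:second_condition}: because \eqref{eq:first_condition} uses an all-$\pi$ roll-in while the loss lives under a $\pi'$-to-uniform roll-in, a single pair $(W_h,X_h)$ must be consistent with both, and the importance-sampling bridge between the two is exactly what contributes the $|\Acal|^M$ factor. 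This matches the choice $\zeta(y)=\sqrt{2y|\Acal|^M}$ when plugging into Theorem~\ref{thm:online_general}, and one later has to exploit the quadratic form of $\zeta^{-1}$ to recover the $1/\epsilon^2$ sample-complexity rate.
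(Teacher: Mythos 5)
Your overall strategy --- split the Bellman error at step $M(h)$, use the low-rank factorization $\TT(s'\mid s,a)=\langle\phi(s,a),\mu(s')\rangle$ to set $X_h(\pi')=\EE[\phi(s_{M(h)-1},a_{M(h)-1});a_{1:M(h)-1}\sim\pi']$ and $W_h(\pi,g)=\int\mu(s)\bar\delta_h(s)\,\mathrm{d}s$, then lower-bound the discriminator loss by $\tfrac12\EE[\delta_h^2]$ via Bellman completeness --- is the same as the paper's. But there is a genuine gap at the very first step: the quantity $\bar\delta^{\pi,\pi}_h(s):=\EE[\delta_h(\bar z_h)\mid s_{M(h)}=s,\,a_{M(h):h-1}\sim\pi]$ is \emph{not} a well-defined function of $s_{M(h)}$ and $(\pi,g)$ alone. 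For $h'\in[M(h),h-1]$ the policy input $\bar z_{h'}=(o_{h'-M:h'},a_{h'-M:h'-1})$ reaches back to step $h'-M<M(h)$, so the actions taken during the roll-out window depend on observations and actions generated \emph{before} step $M(h)$; conditioning on $s_{M(h)}$ therefore leaves a residual dependence on the conditional law of the pre-$M(h)$ memory given $s_{M(h)}$, and that law changes with the roll-in policy. Consequently your $W_h(\pi,g)$ is implicitly a function of the roll-in, and the identity $\langle W_h(\pi,g),X_h(\pi')\rangle=\EE_{d^{\pi',u}_h}[w\,\delta_h(\bar z_h)]$ needed for \eqref{eq:second_condition} does not follow: the importance weighting correctly recovers the Bellman error under the $\pi'$-then-$\pi$ roll-in, but that quantity is not the inner product with the $W_h$ you built from the all-$\pi$ roll-in, precisely because $\bar\delta^{\pi,\pi}_h\neq\bar\delta^{\pi',\pi}_h$ in general. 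The paper closes exactly this hole with the moment-matching policy $\mu^{\pi,h}$ of Efroni et al.\ (their Lemma B.2): the roll-out $\pi$ on $[M(h),h-1]$ is first replaced by a policy depending only on the post-$M(h)$ trajectory $x_{h'}$ without changing the relevant expectations, after which conditioning on $s_{M(h)}$ genuinely factorizes and a single $W_h(\pi,g)$ serves every roll-in. The paper explicitly flags this as the key trick, and your argument has no substitute for it.

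The remaining ingredients are sound, and one is mildly different from the paper's: for the $|\Acal|^M$ factor in \eqref{eq:second_condition} you use Cauchy--Schwarz with $\EE[w^2]\le|\Acal|^M$, whereas the paper changes measure on the pointwise-nonnegative integrand $\tfrac12\delta_h^2$ from the moment-matched roll-out to the uniform one and bounds the density ratio by $|\Acal|^M$; both yield the same constant. Your handling of \eqref{eq:third_condition} and your remark that $\Fcal_h$ must effectively contain $+\delta_h$ (the completeness condition as written only gives $(\Bcal^{\pi}_h g)-g_h=-\delta_h\in\Fcal_h$, so a sign convention or symmetrization is indeed needed) both match the paper. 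But without the moment-matching step the bilinear decomposition itself is not established, so the proof as written is incomplete.
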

\begin{proof}
The proof is deferred to Section \ref{subsec:proof_po_bilinear}. Note that \eqref{eq:first_condition}, \eqref{eq:second_condition}, \eqref{eq:third_condition} correspond to (a), (b), (c) in \pref{def:bilinear_minimax}. 
\end{proof}

We use the most general bilinear class definition from \pref{def:bilinear_minimax}, where
$\zeta(a) =  |\Acal|^{M/2} a^{1/2}$ for scalar $a \in \RR^+$. Hence %
$\zeta$ is a non-decreasing function ($\zeta$ is non-decreasing in $\RR^+$). The proof of the above lemma leverages the novel trick of the so-called moment matching policy introduced by \cite{efroni2022provable}.
When the latent state and action space are discrete, it states that the bilinear rank is $|\Scal||\Acal|$, which is much smaller than $|\Ocal|^{M} |\Acal|^M  |\Scal|$.  Note we here introduce $-0.5f(\bar z_h)^2$ in the loss function \eqref{eq:loss_m_step} to induce strong convexity w.r.t $f$  as in \citep{uehara2021finite,DikkalaNishanth2020MEoC,chen2019information}, which is important to obtain the fast rate later. 

The concrete sample complexity of \ouralg{} (\pref{alg:OACP_minimax}) for this model is summarized in the following. Recall that the bilinear rank is $d$ where $d$ is the rank of the transition matrix. We set $\Gcal_h \subset [\bar \Zcal_h \to [0,H]]$. Then, we have the following result.

\begin{theorem}[Sample complexity for $M$-step decodable POMDPs (Informal)] \label{thm:sample_m_step}
Suppose Assumption \ref{assum:low_rank}, Bellman completeness, $\|\Gcal_h\|_{\infty}\leq H, \|\Fcal_h\|_{\infty}\leq H$, $r_h \in [0,1]$ for any $h \in [H]$. With probability $1-\delta$, we can achieve $J(\pi^{\star})-J(\hat \pi)\leq \epsilon$ when we use samples at most 
\begin{align*}
\tilde O\prns{ \frac{d^2 H^6 |\Acal|^{2+M}\ln(|\Pi_{\max}||\Fcal_{\max}||\Gcal_{\max}|/\delta)}{\epsilon^2} }.    
\end{align*}
Here, $\mathrm{polylog}(d,H,|\Acal|,1/\epsilon,\ln(|\Pi_{\max}|),\ln(|\Fcal_{\max}|),\ln(|\Gcal_{\max}|),\ln(1/\delta))$ are omitted. 
\end{theorem}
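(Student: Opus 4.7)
The plan is to specialize the general PAC bound in Theorem~\ref{thm:online_general} to low-rank $M$-step decodable POMDPs, using the bilinear decomposition supplied by Lemma~\ref{lem:decodable}. Three ingredients need to be produced: bounds on the bilinear-feature norms $B_X, B_W$; a uniform-convergence rate $\epsilon_{gen}$ for the minimax Bellman loss over the discrete classes $\Pi, \Gcal, \Fcal$; and a fast-rate refinement of Theorem~\ref{thm:online_general}, without which only a $1/\epsilon^4$ rate is attainable.

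First, from the construction in the proof of Lemma~\ref{lem:decodable} one can take $X_h(\pi')=\EE[\phi(s_{h-1},a_{h-1});a_{1:h-1}\sim\pi']$, which under Assumption~\ref{assum:low_rank} has norm at most $1$, and $W_h(\pi,g)$ is an $\ell_2$-bounded integral of $\mu$ against a Bellman residual of magnitude $O(H)$, so $\|W_h(\pi,g)\|\lesssim H\sqrt{d}$. Second, since $\|f\|_\infty,\|g_h\|_\infty\leq H$ and the importance weight $\pi(a\mid z)|\Acal|$ is bounded by $|\Acal|$, the loss $l_h(\cdot;f,\pi,g)$ in \eqref{eq:loss_m_step} has range $O(|\Acal| H^2)$; a Hoeffding bound combined with a union over $\Pi, \Gcal, \Fcal$ yields
\[
\epsilon_{gen} \;=\; O\!\left( |\Acal|\,H^2 \sqrt{\tfrac{\ln(|\Pi_{\max}||\Gcal_{\max}||\Fcal_{\max}|TH/\delta)}{m}} \right).
\]

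The main obstacle is upgrading the slow rate. Naively plugging $\zeta(x)=\sqrt{|\Acal|^M x}$ into Theorem~\ref{thm:online_general} gives only $J(\pi^\star)-J(\hat\pi)=\tilde O(m^{-1/4})$, i.e., sample complexity $1/\epsilon^4$. To obtain $1/\epsilon^2$, I would exploit the $-\tfrac12 f^2$ regularizer in \eqref{eq:loss_m_step}: $l_h$ is strongly concave in $f$, and Bellman completeness ensures that the pointwise maximizer $f^{\star}_{\pi,g}(\bar z_h) = g_h(\bar z_h) - (\Bcal^\pi_h g)(\bar z_h)$ lies in $\Fcal_h$, giving
\[
\max_{f\in\Fcal_h}\EE[l_h(\cdot;f,\pi,g); a_{1:M(h)-1}\sim \pi', a_{M(h):h}\sim \Ucal(\Acal)] \;=\; \tfrac12\,\EE\bigl[f^{\star}_{\pi,g}(\bar z_h)^2;\cdot\bigr].
\]
The algorithmic constraint $\sigma^i_h(\pi,g)\leq R$ is thus already a bound on the \emph{squared} population Bellman residual, and the bilinear inequality \eqref{eq:second_condition} can be read as $\langle W_h(\pi,g), X_h(\pi')\rangle^2 \leq 2|\Acal|^M \max_f\EE[l_h]$. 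Rerunning the elliptic-potential argument of Theorem~\ref{thm:online_general} with squared Bellman residuals in place of absolute residuals (the standard trick in the minimax conditional-moment literature \citep{uehara2021finite,DikkalaNishanth2020MEoC,chen2019information}) absorbs the $\zeta$ and yields
\[
J(\pi^\star)-J(\hat\pi) \;\lesssim\; H\sqrt{T d\ln(\cdot)\,|\Acal|^M\,\epsilon_{gen}^2},
\]
so that $\epsilon_{gen}$ enters linearly rather than under a square root.

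Finally, I would combine the pieces. With $T=\tilde O(dH)$ and the $\epsilon_{gen}$ display above, setting the final bound $\leq\epsilon$ and solving for $m$ gives a total sample count $mTH=\tilde O(d^2 H^6 |\Acal|^{2+M}\ln(|\Pi_{\max}||\Gcal_{\max}||\Fcal_{\max}|/\delta)/\epsilon^2)$, matching the claim. Every step apart from the fast-rate refinement is a mechanical specialization of Theorem~\ref{thm:online_general} using explicit norm bounds and finite-class uniform convergence; the fast-rate step is the only nontrivial piece and is where I expect the bulk of the technical work to lie.
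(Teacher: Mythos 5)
Your overall route is the paper's: the same bilinear decomposition from \pref{lem:decodable} with $B_X\leq 1$ and $\|W_h(\pi,g)\|=O(H\sqrt d)$, the same observation that Bellman completeness plus the $-\tfrac12 f^2$ regularizer turns $\max_{f\in\Fcal_h}\EE[l_h]$ into $\tfrac12\EE[(g_h-\Bcal^\pi_h g)^2]$, and the same elliptical-potential endgame with $T=\tilde O(dH)$. The one place where your plan does not go through as written is the concentration step, and that is exactly where the fast rate lives. A Hoeffding bound on the raw loss gives a uniform deviation $\epsilon_{gen}\propto\sqrt{\ln(\cdot)/m}$, which forces the feasibility threshold $R$ to be of order $\sqrt{1/m}$: the true link function $g^{\pi}$ is only guaranteed to have empirical max-loss within $\epsilon_{gen}$ of its population value $0$, so any smaller $R$ would exclude it and break optimism. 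A feasible pair then only satisfies $\EE[(g_h-\Bcal^\pi_h g)^2]\lesssim\epsilon_{gen}\propto m^{-1/2}$, so \pref{eq:second_condition} gives $|\langle W_h,X_h\rangle|\lesssim|\Acal|^{M/2}m^{-1/4}$ and the regret is $m^{-1/4}$ --- precisely the $1/\epsilon^4$ rate you are trying to escape. Rerunning the potential argument ``with squared residuals'' does not repair this: the potential argument is identical in the slow and fast analyses; what changes is the concentration.

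The paper's \pref{lem:decodable_key_lemma} gets the $1/m$ threshold by applying Bernstein's inequality to the excess loss $\mathrm{Er}_h(f,g)=\tfrac12(\alpha_h-f)^2-\tfrac12(\alpha_h-\bar\Bcal^{\pi}_h g)^2$, where $\alpha_h$ is the importance-weighted TD term. Its key property is self-bounding variance: $\EE[\mathrm{Er}_h]=\tfrac12\EE[(f-\bar\Bcal^{\pi}_h g)^2]$ and $\EE[\mathrm{Er}_h^2]\leq(6H|\Acal|)^2\,\EE[\mathrm{Er}_h]$. Bernstein plus AM--GM then yields (i) the empirical max-loss at $(\pi,g^{\pi})$ is $O((H|\Acal|)^2\ln(\cdot)/m)$, so $R$ may be set at that level while preserving feasibility of the truth, and (ii) any pair passing the constraint has population squared residual $O((H|\Acal|)^2\ln(\cdot)/m)$, whence $|\langle W_h,X_h\rangle|\lesssim|\Acal|^{M/2}\cdot H|\Acal|\sqrt{\ln(\cdot)/m}$ and the $1/\epsilon^2$ sample complexity follows. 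You cite the right minimax conditional-moment references for this trick, but your writeup locates the gain in the wrong step; the Hoeffding bound must be replaced by this Bernstein/localization argument. Two smaller points: your displayed regret bound carries a spurious factor $\sqrt T$ that cancels once the per-iteration regret is averaged over the $T$ iterations, and your loss envelope $|\Acal|H^2$ would propagate to $H^8$ rather than $H^6$ in the final count (though the paper's own accounting of $H$ powers in this informal statement is itself loose).
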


{The followings are several implications. First, the error rate scales with $O(1/\epsilon^2)$. As we promised, by leveraging the strong convexity of loss functions, we obtain a rate $O(1/\epsilon^2)$, which is faster than $O(1/\epsilon^4)$ that are attained when we naively invoke \pref{thm:online_general} with $\xi(x) \propto \sqrt{x}$. Secondly, the error bound incurs $|\Acal|^M$. As showed in \citep{efroni2022provable}, this is inevitable in $M$-step decodable POMDPs. Thirdly, in the tabular case, when we use the na\"ive function classes for $\Gcal,\Fcal,\Pi$, i.e., $\Gcal_h= \{\bar \Zcal_h \to [0,H]\}$, $\Fcal_h= \{\bar \Zcal_h \to [0,H]\}$, $\Pi_h= \{\bar \Zcal_h \to \Delta(\Acal)\}$, the bound could incur additional $|\Ocal|^M$ since the complexity of the function classes can scale with respect to $(|\Ocal| |\Acal|)^M$ (e.g., $\log(|\Gcal_h|)$ can be in the order of $O(|\Ocal|^M|\Acal^M)$, and similarly for  $\log(|\Fcal_h|), \ln(\Pi_h )$). However, when we start form a realizable model class that captures the ground truth transition and omission distribution, we can remove  $|\Ocal|^M$. See Section \pref{subsec:tabular_decodable} for an example. }

Note that \cite{efroni2022provable} uses a different function class setup where they assume one has an M memory-action dependent $Q$ function class $\Qcal_h: \bar \Zcal_{h}\times \Acal \to \RR$ which contains $Q^\star_h( \bar z_h, a )$ while we use the actor-critic framework $v^{\pi}_h,\pi$. The two function class setups are not directly comparable. Generally, we mention that such optimal $Q^\star$ with truncated history does not exist when the exact decodability does not hold (e.g., such $Q^\star$ with truncated history does not exist in LQG). This displays the potential generality of the actor-critic framework we propose here.  %

{ 

\subsection{Observable POMDPs with Latent Low-rank Transition: a model-based perspective}

The final example we include in this work is a POMDP with the latent low-rank transition. 
We first introduce the model, and then we introduce our function approximation setup and show the  sample complexity. Finally, we revisit the sample complexity for observable tabular POMDPs and $M$-step decodable tabular POMDPs using the improved algorithm that elaborates on the model-based approach in this section. %

\begin{myexp}[Observable POMDPs with latent low-rank transition] \label{ex:low_rank}
The latent transition $\TT(s'|s,a)$ is factorized as $\TT(s'| s,a) = \mu^\star(s')^\top \phi^\star(s,a), \forall s,a,s'$ where $\mu^\star:\Scal\to \RR^d$ and $\phi^\star:\Scal\times\Acal \to \RR^d$.  The observation $|\Ocal|\times |\Scal|$ matrix  $\OO$ has full-column rank.

\end{myexp}

In the tabular POMDP example, we have $d \leq |\Scal|$. However in general $d$ can be much smaller than $|\Scal|$. Note that in this section, we will focus on the setting where $\Scal, \Ocal$ are discrete to avoid using measure theory languages, but their size could be extremely large. Particularly, our sample complexity will not have explicit polynomial or logarithmic dependence on $|\Ocal|, |\Scal|$, instead it will only scale polynomially with respect to the complexity of the hypothesis class and the rank $d$.

\paragraph{Model-based function approximation.}

Our function approximation class consists of a set of models $\Mcal = \{ (  \mu, \phi, O )  \}$ where $\mu, \phi$ together models latent transition as $\mu(\cdot)^\top \phi(s,a)\in \Delta(\Scal)$, and $O: \Scal\to \Delta(\Ocal)$ models $\OO$, and $O$ is full column rank. For notation simplicity, we often use $\theta := (\mu, \phi, O) \in \Mcal$ to denote a model $(\mu, \phi, O)$.  We impose the following assumption. 

\begin{assum}[Realizability]\label{assum:realizable}
We assume realizability, i.e., $(\mu^\star, \phi^\star, \OO) \in \Mcal$.
\end{assum}

We assume $\Mcal$ is discrete, but $|\Mcal|$ can be large such that a linear dependence on $|\Mcal|$ in the sample complexity is not acceptable. Our goal is to get a bound that scales polynomially with respect to $\ln( | \Mcal | )$, which is the standard statistical complexity of the discrete hypothesis class $\Mcal$. %

Next, we construct $\Pi, \Gcal, \Fcal$ using the model class $\Mcal$.  Given $\theta := (\mu, \phi, O)$, we denote $\pi^\theta$ as the optimal $M$-memory policy, i.e., the $M$-memory policy that maximizes the total expected reward. We set $$\Pi = \{ \pi^\theta: \theta\in \Mcal  \}.$$

We consider the value function class for $\theta := (\mu, \phi, O)$ with $O$ being full column rank. For each $\theta$, we can define the corresponding value function of the policy $\pi$ at $h \in [H]$: $V^{\pi}_{\theta;h}(z_{h-1},s_h):\Zcal_{h-1}\times \Scal \to \RR$. Then, since $O$ is full column rank, as we see in the proof of \pref{lem:tabular_pomdp_value_bridge}, a corresponding value link function is 
\begin{align*}
    g^{\pi}_{\theta;h}(z,o) = \langle f^{\pi}_{\theta,h}, \One(z) \otimes \OO^{\dagger}\One(o) \rangle
\end{align*}
where $V^{\pi}_{\theta;h}(z_{h-1},s_h)= \langle f^{\pi}_{\theta;h}, \One(z) \otimes \One(s) \rangle$.  %
Then, we construct $\Gcal = \{\Gcal_h\}$ as:
\begin{align}\label{eq:original}
\forall h\in [H]: \; \Gcal_h = \{\bar \Zcal_{h}\ni \bar z_{h-1} \mapsto g^\pi_{\theta;h}(\bar z_{h-1}) \in \mathbb{R}: \pi \in \Pi, \theta \in \Mcal \}.
\end{align} By construction, since $\theta^\star := ( \mu^\star, \phi^\star, \OO ) \in \Mcal$, we must have $g^\pi \in \Gcal,\forall \pi\in \Pi$, which implies $\Gcal$ is realizable (note $g^{\pi}_h=g^{\pi}_{\theta^{\star};h}$). { Here, from the construction and the assumption $r_h \in [0,1]$ for any $h\in [H]$, we have $|\Gcal_h| \leq |\Mcal|^2$ and $\|\Gcal_h\|_{\infty}\leq H/\sigma_1$, which can be seen from
\begin{align*}
\forall (z,o);    \langle f^{\pi}_{\theta;h}, \One(z) \otimes O^{\dagger}\One(o) \rangle \leq \| f^{\pi}_{\theta;h}\|_{\infty} \| \One(z) \otimes O^{\dagger}\One(o)\|_1 \leq H \times \|O^{\dagger}\One(o)\|_1 \leq H/\sigma_1 
\end{align*}
by assuming $\|O^{\dagger}\|_1 \leq 1/\sigma_1$ and $\|f^{\pi}_{\theta;h}\|_{\infty} \leq H$. }

To construct a discriminator class $\Fcal$, we first define the Bellman operator $\Bcal^\pi_{\theta;h}$ for $\pi\in \Pi, h\in [H], \theta \in \Mcal$: 
\begin{align*}
&\Bcal^\pi_{\theta;h}: \Gcal \to [\Hcal_h \to \RR]; \\ 
& \forall \tau_h; \left(\Bcal^\pi_{\theta;h} g \right)( \tau_h ) = \EE_{a_h \sim \pi_h(\bar z_h)}\left[ r_h + \EE_{ o_{h+1} \sim \PP_{\theta}(\cdot|\tau_h,a_h) }g_{h+1}( \bar z_{h + 1}) \right],
\end{align*} where $\Hcal_h$ is the whole history space up to $h$ ($\tau_h=(a_{1:h-1},o_{1:h})$, and $\bar z_h$ is just part of this history) and  $\PP_\theta(o_{h+1} | \tau_h, a_h)$ is the probability of generating $o_{h+1}$ conditioned on $\tau_h,a_h$ under model $\theta$. { Then, we construct $\Fcal = \{\Fcal_h\}$ such that 
\begin{align}\label{eq:discriminator}
\forall h \in [H]: \Fcal_h = \{\Hcal_h \ni \tau_h \mapsto \{g_h - \Bcal^\pi_{\theta;h} g\}(\tau_h) \in \RR:\pi\in \Pi, g\in \Gcal, \theta\in \Mcal  \}.    
\end{align}}
so that we can ensure the Bellman completeness: 
\begin{align*}%
    - (\Bcal^{\pi}_h \Gcal) + \Gcal_h \subset \Fcal_h.
\end{align*}
noting $\Bcal^\pi_{\theta^{\star};h} = \Bcal^{\pi}_h $. Here, from the construction, $|\Fcal_h| \leq |\Mcal|^2 \times |\Mcal|^2 \times |\Mcal|^2= |\Mcal|^{6}$ and $\|\Fcal_h\|_{\infty}\leq 3H/\sigma_1$. 

We define the loss as the same as the one we used in $M$-step decodable POMDPs, except that our discriminators now take the entire history as input: 
\begin{align}\label{eq:loss}
l_h(  \tau_h, a_h, r_h, o_{h+1}; f, \pi, g ) := \pi_h(a_h \mid \bar z_h)|\Acal| f( \tau_h ) ( g_h(\bar z_h) - r_h - g_{h+1}( \bar z_{h+1} )   ) - 0.5 f( \tau_h)^2.
\end{align}

Finally, as in the case of $M$-step decodable POMDPs (\pref{lem:decodable}), we get the following lemma that states that our model is a PO-bilinear AC class (\pref{def:bilinear_minimax}) under the following model assumption.  

\begin{assum}\label{assum:regular}
We assume $\|O^{\dagger}\|_1 \leq 1/\sigma_1$ for any $O$ in the model. Suppose $\mu(\cdot)^{\top}\phi(s,a) \in \Delta(\Scal)$ for any $(s,a)$, $\mu(\cdot)$ and $\phi(\cdot)$ in the model. Suppose $\|\phi(s,a)\| \leq 1$ for any $\phi$ in the model and $(s,a)\in \Scal\times \Acal$. Suppose for any $v:\Scal \to [0,1]$ and  for any $\mu$ in the model, we have $\|\int v(s)\mu(s)\mathrm{d}(s)\|_2 \leq \sqrt{d}$. 
\end{assum} %

\begin{lemma}[PO-bilinear decomposition for Observable POMDPs with low-rank transition]\label{lem:low_rank_po_bilinear}
Suppose Assumption \ref{assum:realizable}, \ref{assum:regular}. Consider observable POMDPs with latent low-rank transition. Set $\Gcal$ as in \pref{eq:original}, $\Fcal$ as in \pref{eq:discriminator} and $l$ as in \pref{eq:loss}. Then, there exist $W_h(\pi,g),X_h(\pi')$ that admits the PO-bilinear rank decomposition in \pref{def:bilinear_minimax} with rank $d$.  %
\end{lemma}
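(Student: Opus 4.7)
Following the template of \pref{lem:decodable}, my plan is to verify conditions (a), (b), (c) of \pref{def:bilinear_minimax} with explicit $W_h(\pi, g), X_h(\pi') \in \RR^d$. I will take
\begin{align*}
X_h(\pi') := \EE\bracks{\phi^{\star}(s_{h-1}, a_{h-1}) \,;\, a_{1:h-1}\sim \pi'} \in \RR^d,
\end{align*}
so that Assumption~\ref{assum:low_rank} gives $\PP_{\pi'}(s_h) = \mu^{\star}(s_h)^{\top} X_h(\pi')$, and define $W_h(\pi, g)$ from $\mu^{\star}$ and the latent-state surrogate $\tilde g_h(z_{h-1}, s_h) := \EE_{o_h \sim \OO(s_h)}[g_h(z_{h-1}, o_h)]$. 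By the construction of $\Gcal$ in \eqref{eq:original} through $O^{\dagger}$ and the identity $\OO^{\dagger}\OO = I$ (from observability), the surrogate $\tilde g_h$ takes the clean form $\langle f_h, \One(z_{h-1}) \otimes \One(s_h) \rangle$, playing the role that the decoded $s_h = \iota(\bar z_h)$ plays in \pref{lem:decodable}.

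For condition (c), the canonical value link function $g^\pi = g^\pi_{\theta^{\star}}$ satisfies the integral identity $\EE[g^\pi_h(\bar z_h)\mid z_{h-1}, s_h] = V^\pi_h(z_{h-1}, s_h)$, and passing through the Bellman equation for $V^\pi$ shows that the Bellman residual $(g^\pi_h - \Bcal^\pi_h g^\pi)$ vanishes in the relevant sense against $\Fcal_h$, making the loss zero. For condition (b), Bellman completeness---built in by \eqref{eq:discriminator} because $\theta^{\star} \in \Mcal$---combined with the importance-sampling identity switching $a_h \sim \Ucal(\Acal)$ back to $a_h \sim \pi$ gives
\begin{align*}
\max_{f \in \Fcal_h} \EE\bracks{l_h(\cdots; f, \pi, g) \,;\, a_{1:M(h)-1}\sim\pi',\, a_{M(h):h}\sim \Ucal(\Acal)} = \tfrac{1}{2}\,\EE\bracks{(g_h - \Bcal^\pi_h g)^2(\tau_h)\,;\, \pi', \Ucal(\Acal)}.
\end{align*}
A second importance-sampling step swapping $\Ucal(\Acal)$ at steps $M(h){:}h-1$ back to $\pi'$ (with second-moment penalty at most $|\Acal|^M$) together with Cauchy--Schwarz then yields $\mathrm{Br}_h(\pi,g;\pi')^2 \leq 2|\Acal|^M \cdot \max_f \EE[l_h]$; so provided I can exhibit the bilinear identity $\mathrm{Br}_h(\pi, g; \pi') = \langle W_h(\pi, g), X_h(\pi')\rangle$, the choice $\zeta(x) := |\Acal|^{M/2}\sqrt{2x}$ closes condition (b). Condition (a) is the diagonal case $\pi' = \pi$.

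The hardest step is therefore the rank-$d$ bilinear identity itself. A naive latent-state accounting gives only $\PP_{\pi'}(z_{h-1}, s_h) = \mu^{\star}(s_h)^{\top}\xi(z_{h-1}; \pi')$, which leaves an outer sum over the memory variable $z_{h-1}$ and inflates the rank to $d \cdot |\bar\Zcal_{h-1}|$. The text notes that \pref{lem:decodable} collapses this extra factor via the moment matching policy trick of \cite{efroni2022provable}; I expect to adapt that trick here, replacing the exact decoding $s_h = \iota(\bar z_h)$ by the value-link surrogate $\tilde g_h$ and the observability identity $\OO^{\dagger}\OO = I$, while absorbing the constant $1/\sigma_1$ (from $\|\OO^{\dagger}\|_1 \leq 1/\sigma_1$) into the routine boundedness bookkeeping. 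This rank-reducing adaptation from the decodable to the observable setting is the main technical obstacle of the proof.
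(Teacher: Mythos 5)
Your overall architecture is right and matches the paper's: the paper proves this lemma by the same three-condition verification as in \pref{lem:decodable}, with the discriminator/strong-convexity step giving $\max_{f}\EE[l_h] = \tfrac12\EE[(g_h-\Bcal^{\pi}_h g)^2(\tau_h)]$ under Bellman completeness, an importance-sampling factor of $|\Acal|^{M}$, and $\zeta(x)\propto |\Acal|^{M/2}\sqrt{x}$. But there is a genuine gap exactly at the point you flag as ``the main technical obstacle'': the rank-$d$ bilinear identity is never established, and the $X_h$ you commit to cannot support it. You set $X_h(\pi')=\EE[\phi^{\star}(s_{h-1},a_{h-1});a_{1:h-1}\sim\pi']$, anchored at the \emph{end} of the memory window. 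This determines only the marginal of $s_h$ under $\pi'$, whereas the Bellman residual $g_h(\bar z_h)-r_h-g_{h+1}(\bar z_{h+1})$ depends on the joint law of $(z_{h-1},s_h)$ --- the memory $z_{h-1}$ is itself generated by the roll-in and enters $g$. No $W_h(\pi,g)$ independent of $\pi'$ can pair with your $X_h(\pi')$ to reproduce $\mathrm{Br}_h(\pi,g;\pi')$: two roll-ins with the same marginal of $(s_{h-1},a_{h-1})$ but different joints over $(z_{h-1},s_h)$ would give the same $X_h$ yet different Bellman errors. As you yourself observe, stopping at the conditional factorization $\mu^{\star}(s_h)^{\top}\xi(z_{h-1};\pi')$ yields rank $d\,|\Zcal_{h-1}|$, not $d$.

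The fix is not an add-on to your $X_h$ but a replacement of it. The paper anchors the low-rank factorization at the \emph{start} of the memory window, $X_h(\pi')=\EE[\phi^{\star}(s_{M(h)-1},a_{M(h)-1});a_{1:M(h)-1}\sim\pi']$, so that the law of $s_{M(h)}$ under $\pi'$ is $\mu^{\star}(\cdot)^{\top}X_h(\pi')$, and then invokes the moment-matching policy $\mu^{\pi,h}$ of \cite{efroni2022provable} on the segment $M(h):h-1$ so that the conditional expectation of the Bellman residual given $s_{M(h)}$ is a function of $(\pi,g,s_{M(h)})$ alone; integrating that function against $\mu^{\star}$ defines $W_h(\pi,g)$ and gives rank $d$. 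This anchoring is also dictated by \pref{def:bilinear_minimax} itself, where $\pi'$ only controls $a_{1:M(h)-1}$ in the estimation distribution. Two smaller corrections: the second importance-sampling step in your condition (b) should switch $\Ucal(\Acal)$ on steps $M(h):h-1$ to $\mu^{\pi,h}$, not to $\pi'$ (which does not act on those steps); and the identity $\OO^{\dagger}\OO=I$ plays no role in the bilinear decomposition --- observability is used only to construct $\Gcal$ (realizability and the $H/\sigma_1$ bound), while the discriminators here take the full history $\tau_h$ as input and the Bellman operator is defined through $\PP_{\theta}(o_{h+1}\mid\tau_h,a_h)$, so no decoded-state surrogate or $\OO^{\dagger}$ manipulation is needed in the rank argument.
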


The above lemma ensures that the PO-bilinear rank only depends on $d$, and is independent of the length of the memory. For example, in the tabular case, it is $|\Scal|$.

Next, we show the output from \oursecondalg can search for the best in class $M$-memory policy as follows. 

\begin{theorem}[Sample complexity of \oursecondalg{} for observable POMDPs with latent low-rank transition]\label{thm:sample_low_rank}
Consider observable POMDPs  with latent low-rank transition. 
Suppose Assumption \ref{assum:realizable}, \ref{assum:regular}. 
With probability $1-\delta$, we can achieve $J(\pi^{\star})-J(\hat \pi)\leq \epsilon$ when we use samples at most 
\begin{align*}
\tilde O\prns{ \frac{d^2 H^6 |\Acal|^{2+M}\ln(|\Mcal|/\delta)}{\epsilon^2 \sigma^2_1} }. 
\end{align*}
Here, we omit $\mathrm{polylog}(d, H, |\Acal|, \ln(1/\delta), \ln(|\Mcal|),1/\sigma_1,1/\epsilon)$. 
\end{theorem}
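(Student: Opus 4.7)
The plan is to instantiate the general PAC guarantee for \oursecondalg{} (\pref{thm:online_general}) using the PO-bilinear decomposition already established in \pref{lem:low_rank_po_bilinear}, after reading off all the problem-dependent constants from the model-based construction, and then sharpening the slow rate that \pref{thm:online_general} would deliver out of the box into the fast $1/\epsilon^{2}$ rate by exploiting the strong concavity of $l_h$ in the discriminator $f$ together with Bellman completeness. This second step is the main obstacle.

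First I would do the parameter bookkeeping. Since $\Gcal_h$ in \eqref{eq:original} is indexed by a pair $(\pi,\theta)\in\Pi\times\Mcal$ (with $\Pi$ itself indexed by $\Mcal$) and $\Fcal_h$ in \eqref{eq:discriminator} by a triple $(\pi,g,\theta)$, one gets $\log|\Gcal_h|,\log|\Fcal_h|,\log|\Pi_h|\le O(\log|\Mcal|)$. Assumption~\ref{assum:regular} ($\|O^{\dagger}\|_{1}\le 1/\sigma_{1}$, $V^{\pi}_{\theta;h}$ bounded in $[0,H]$) gives $\|\Gcal_h\|_\infty\le H/\sigma_{1}$ and $\|\Fcal_h\|_\infty\le 3H/\sigma_{1}$. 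Tracing the proof of \pref{lem:low_rank_po_bilinear} (the analogue of the $M$-step decodable argument, with $\phi^\star,\mu^\star$ playing the roles of $\phi,\mu$ there), one obtains $X_h(\pi)=\mathbb{E}_{s,a\sim \pi}[\phi^\star(s,a)]$ so $\|X_h(\pi)\|\le B_X:=1$, and $W_h(\pi,g)$ is the integration of the Bellman residual of $g$ against $\mu^\star$, giving $\|W_h(\pi,g)\|\le B_W := O(\sqrt{d}\,H/\sigma_{1})$. The per-sample range of $l_h$ is $L:=O(|\Acal|H^{2}/\sigma_{1}^{2})$ because of the importance weight $|\Acal|\pi(a|\bar z_h)$ and the $\Gcal,\Fcal$ envelopes. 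A Hoeffding plus union bound over the discrete triples then yields Assumption~\ref{assum:uniform_dis_general} with $\epsilon_{gen}=\tilde O\!\bigl(L\sqrt{\log(|\Mcal|/\delta)/m}\bigr)$ and similarly for $\epsilon_{ini,1}$.

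The hard part is that a mechanical invocation of \pref{thm:online_general} with $\zeta(a)=|\Acal|^{M/2}a^{1/2}$ only yields $\zeta(\epsilon_{gen})^{2}=|\Acal|^{M}\epsilon_{gen}$, and hence a suboptimal $1/\epsilon^{4}$ sample bound. To recover the fast $1/\epsilon^{2}$ rate I would argue exactly as in the proof of \pref{thm:sample_m_step}: the loss $l_h(\cdot;f,\pi,g)$ is $1$-strongly concave in $f$ by the $-0.5f^{2}$ term, and by the Bellman-completeness $\{g_h-\Bcal^{\pi}_{\theta;h}g\}\in\Fcal_h$ built into \eqref{eq:discriminator}, the population maximizer $f^{\star}_{\pi,g}$ of $\bar\sigma^{t}_h(\pi,g)$ lies in $\Fcal_h$ and equals the (roll-in weighted) Bellman residual, so $\bar\sigma^{t}_h(\pi,g)=\tfrac12\|f^{\star}_{\pi,g}\|_{d^{\pi^{t}}}^{2}$. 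A standard localization/Bernstein argument for strongly concave minimax losses (e.g.\ the template used in \citep{uehara2021finite,DikkalaNishanth2020MEoC,chen2019information}) then improves the gap between $\sigma^{t}_h$ and $\bar\sigma^{t}_h$ to a fast rate, so that the empirical constraint $\sigma^{t}_h(\pi,g)\le R\asymp \epsilon_{gen}$ implies $\|f^{\star}_{\pi,g}\|_{d^{\pi^{t}}}^{2}\le \tilde O(\epsilon_{gen})$ directly rather than only $\le \tilde O(\sqrt{\epsilon_{gen}})$.

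Given this, condition~(b) of \pref{def:bilinear_minimax} yields $|\langle W_h(\pi,g),X_h(\pi^{t})\rangle|^{2}\le |\Acal|^{M}\|f^{\star}_{\pi,g}\|_{d^{\pi^{t}}}^{2}\le \tilde O(|\Acal|^{M}\epsilon_{gen})$ for every past round $t$. Plugging this squared bilinear constraint into the elliptical-potential argument underlying \pref{thm:online_general} (rank $d$, $T=\tilde O(dH)$ rounds) together with condition~(a) and the performance-difference inequality, I would get
\[
J(\pi^{\star})-J(\hat\pi)\;\lesssim\;H\sqrt{dT\,|\Acal|^{M}\epsilon_{gen}}+\epsilon_{ini,1}\;=\;\tilde O\!\left(H\sqrt{d^{2}H\,|\Acal|^{M}\epsilon_{gen}}\right).
\]
Finally, substituting $\epsilon_{gen}=\tilde O\!\bigl(L\sqrt{\log(|\Mcal|/\delta)/m}\bigr)$ with $L=O(|\Acal|H^{2}/\sigma_{1}^{2})$ (in its fast-rate form $\tilde O(L^{2}\log(|\Mcal|/\delta)/m)$ inside the squared bilinear bound), setting the right-hand side equal to $\epsilon$, and solving for $m$ produces the claimed $\tilde O\!\bigl(d^{2}H^{6}|\Acal|^{2+M}\log(|\Mcal|/\delta)/(\epsilon^{2}\sigma_{1}^{2})\bigr)$; the $|\Acal|^{2}/\sigma_{1}^{2}$ factor in particular is traced directly to the squared loss range $L^{2}$.
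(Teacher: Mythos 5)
Your proposal follows essentially the same route as the paper: the paper's Section on observable POMDPs with latent low-rank transition explicitly reuses the $M$-step decodable machinery — the moment-matching-policy bilinear decomposition with $X_h(\pi')=\EE[\phi^\star(s_{M(h)-1},a_{M(h)-1});a_{1:M(h)-1}\sim\pi']$, the Bernstein/self-bounding argument that converts the strongly concave minimax loss plus Bellman completeness into a fast-rate bound on the population squared Bellman residual, and then optimism, performance difference, and the elliptical potential. So the structure and all the key ideas are present and correct.

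Two bookkeeping points in your last paragraph would not survive being written out, even though your final answer matches the theorem. First, the fast-rate constant is \emph{not} the square of the full per-sample loss range $L=O(|\Acal|H^2/\sigma_1^2)$: squaring that gives $|\Acal|^2H^4/\sigma_1^4$ and hence a spurious $1/\sigma_1^4$. The Bernstein step in the paper's key lemma factors $\mathrm{Er}_h=\tfrac12(\bar\Bcal^\pi_h g-f)(2\alpha_h-f-\bar\Bcal^\pi_h g)$ and self-bounds the variance by $\EE[\mathrm{Er}_h]$ times the square of the envelope of $\alpha_h=|\Acal|\pi_h(a_h|\bar z_h)(g_h-r_h-g_{h+1})$, which is $O(|\Acal|H/\sigma_1)$; this is where the $|\Acal|^2/\sigma_1^2$ actually comes from. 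Second, your regret bound $H\sqrt{dT|\Acal|^M\epsilon_{gen}}$ carries an extra $\sqrt{T}=\sqrt{dH}$: the elliptical potential contributes $(dH/T)^{1/2}\ln^{1/2}(\cdots)$ after averaging over the $T$ iterations, which cancels the $\sqrt{T}$ from $\|W_h\|^2_{\Sigma_{t,h}}\le 2\lambda B_W^2+T|\Acal|^M\epsilon_{gen}^2$, leaving $H\sqrt{d\,|\Acal|^M\epsilon_{gen}^2}$ as in the paper. With those two corrections your derivation reproduces $m=\tilde O(dH^4|\Acal|^{2+M}\ln(|\Mcal|/\delta)/(\epsilon^2\sigma_1^2))$ and total $mTH$ as claimed.
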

 
Here, we emphasize that there is no explicit polynomial or logarithmic dependence on $|\Scal|$ and $|\Ocal|$, which permits learning for large state and observation spaces. We also do not have any explicit polynomial dependence on $|\Ocal|^M$, as we construct $\Pi$ and $\Gcal$ from the model class $\Mcal$ which ensures the complexities of $\pi$ and $\Gcal$ are in the same order as that of $\Mcal$. 

\subsubsection{Global Optimality}
 
We show a quasi-polynomial sample complexity bound for competing against the globally optimal policy $\pi^\star_{\text{gl}}$. To compete against the globally optimal policy $\pi^\star_{\mathrm{gl}}$, we need to set $M$ properly. We use the following lemma. The proof is given in \pref{sec:exponential_stability}. 

\begin{lemma}[Near global optimaltiy of $M$-memoruy policy]
Consider $\epsilon \in (0, H]$, and a POMDP with low-rank latent transition and $\OO$ being full column rank with $\| \OO^\dagger \|_1 \leq 1/\sigma_1$. When $M  = \Theta( C (1/\sigma_1)^{-4} \ln(d H / \epsilon))$ (with $C$ being some absolute constant), there must exists an $M$-memory policy $\pi^\star$, such that $  J(\pi^\star_{\mathrm{gl}}) - J({\pi^\star})  \leq \epsilon$
\end{lemma}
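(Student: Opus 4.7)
The plan is to adapt the belief contraction argument of \cite{golowich2022planning} from the observable tabular setting to the low-rank latent transition setting. At a high level, their theorem says that for an observable POMDP with $\|\OO^{\dagger}\|_1 \leq 1/\sigma_1$, after executing any policy for $M$ steps the true filtered belief and an approximate belief computed by starting the filter from a fixed prior (e.g., uniform) at time $h-M$ and then applying Bayes updates using only $\bar z_h$ become $\ell_1$-close, and the required $M$ depends only polylogarithmically on the dimension of the belief space. In the tabular case this dimension is $|\Scal|$, giving $M=\tilde O((1/\sigma_1^4)\log|\Scal|)$; our task is to replace $|\Scal|$ by the latent rank $d$.

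First I would construct the candidate $M$-memory policy $\pi^{\star}$ explicitly. Let $\pi^{\star}_{\mathrm{gl}}$ be the globally optimal history-dependent policy, which without loss of generality can be written as a function of the true belief $b_h(\cdot)\in\Delta(\Scal)$. Define the approximate belief $\bar b_{h\mid h-M:h-1}$ as the result of running the Bayes filter forward on the window $\bar z_h = (o_{h-M:h},a_{h-M:h-1})$ starting from some fixed initialization $b_0$ (e.g., the uniform distribution on $\Scal$), and then set $\pi^{\star}_h(\bar z_h) := \pi^{\star}_{\mathrm{gl}}(\bar b_{h\mid h-M:h-1})$. By construction this is an $M$-memory policy, and it remains to bound $J(\pi^{\star}_{\mathrm{gl}}) - J(\pi^{\star})$.

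The technical core is a belief contraction lemma tailored to low-rank POMDPs: for any roll-in policy $\pi$ and any $h\in[H]$,
\[
\EE_{\pi}\bigl[\,\|b_h - \bar b_{h\mid h-M:h-1}\|_1\,\bigr] \;\leq\; \epsilon_M, \qquad \epsilon_M \;\leq\; \exp\!\bigl(-\Omega(\sigma_1^4 M/\log d)\bigr).
\]
Two ingredients are needed. The first is a one-step contraction of the Bayes update operator in $\ell_1$ under observability, which is exactly the $\sigma_1$-based step used by Golowich et al.\ and depends only on $\OO$ (not on $\TT$). The second, and the place where low-rank structure enters, is that after any single latent transition the belief lies in the $d$-dimensional affine set $\{\mu^{\star}(\cdot)^{\top} v : v \in \mathbb{R}^d\}\cap\Delta(\Scal)$, since $\TT(s'\mid s,a)=\mu^{\star}(s')^{\top}\phi^{\star}(s,a)$ is rank-$d$. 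Hence the set of reachable beliefs at any $h\geq 1$ admits an $\epsilon$-cover in $\ell_1$ of size $\exp(O(d\log(1/\epsilon)))$ rather than $\exp(O(|\Scal|\log(1/\epsilon)))$, so the volumetric / covering step in Golowich et al.'s argument substitutes $\log d$ for $\log|\Scal|$. Combining these two ingredients gives the claimed contraction rate.

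Finally, a standard performance-difference argument converts belief closeness to value closeness: since $\pi^{\star}_h$ and $\pi^{\star}_{\mathrm{gl}}$ differ only through the prior used inside the filter, and rewards are bounded in $[0,1]$, a telescoping bound yields $J(\pi^{\star}_{\mathrm{gl}})-J(\pi^{\star}) \leq O(H \cdot \epsilon_M)$. Choosing $M=\Theta(C\sigma_1^{-4}\log(dH/\epsilon))$ makes $H\epsilon_M\leq \epsilon$, which is exactly the claim. The main obstacle I anticipate is the low-rank belief contraction lemma: one needs to verify that Golowich et al.'s proof really only uses the intrinsic dimension of the reachable belief manifold rather than $|\Scal|$ itself, and then invoke the $\mu^{\star}$-factorization to bound this intrinsic dimension by $d$ uniformly across $h$. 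Once this is in place, the reduction from belief gap to value gap is routine.
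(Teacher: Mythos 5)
Your overall architecture is right---define the $M$-memory policy by running a truncated Bayes filter on the window from a fixed prior, invoke the $\sigma_1$-based one-step contraction of \cite{golowich2022planning} (which indeed depends only on $\OO$), and convert the belief gap to a value gap via performance difference. But there is a genuine gap in the one place where the low-rank structure must do work. In the contraction argument, the dependence on $|\Scal|$ (tabular case) or $d$ (low-rank case) does \emph{not} enter through any covering or volumetric step: the contracting quantity is $\sqrt{\exp(D_2(b,b')/4)-1}$ with $D_2(b,b')=\log \EE_{s\sim b}[b(s)/b'(s)]$, and the dimension enters solely through the \emph{initial} divergence between the true belief at time $h-M$ and the prior used to seed the truncated filter. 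With your proposed prior (uniform over $\Scal$) that initial divergence is only bounded by $\log|\Scal|$, which recovers exactly the tabular bound $M=\tilde O(\sigma_1^{-4}\log|\Scal|)$ and defeats the purpose of the lemma, since $|\Scal|$ may be huge. Your substitute mechanism---an $\ell_1$-cover of the reachable belief set of size $\exp(O(d\log(1/\epsilon)))$---does not repair this: (i) there is no such covering step to substitute into; (ii) even if there were, the log-cardinality of that cover is $O(d\log(1/\epsilon))$, linear in $d$, so it could at best yield $M=\tilde O(\sigma_1^{-4}d)$ rather than the claimed $\tilde O(\sigma_1^{-4}\log d)$; and (iii) $\ell_1$-closeness of beliefs gives no control on $D_2$, which is a density-ratio divergence and can be infinite for $\ell_1$-close pairs.

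The missing idea is the construction of a single fixed prior with a \emph{pointwise density-ratio} guarantee against every reachable belief. The paper takes a G-optimal design $\rho$ over $\{\phi^{\star}(s,a)\}$ (supported on at most $d(d+1)/2$ points, with $\alpha_i(s,a)/\rho^{1/2}(s^i,a^i)\le d$) and sets $\tilde b_0(\cdot)=\sum_i\rho(s^i,a^i)\,\TT(\cdot\mid s^i,a^i)$. Because any belief $b$ obtained after one latent transition is a mixture $\sum_i\beta_i\,\phi^{\star}(s^i,a^i)^{\top}\mu^{\star}(\cdot)$ over the same design points, one gets $b(s)/\tilde b_0(s)\le d^3$ for all $s$, hence $D_2(b,\tilde b_0)\le 3\ln d$. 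Seeding the truncated filter at $\tilde b_0$ (and handling $h=1$ separately with the true initial belief) gives a starting value of order $d^{3/8}$ for the contracting quantity, and $M=\Theta(\sigma_1^{-4}\ln(dH/\epsilon))$ steps of the $(1-\sigma_1^4/2^{40})$ contraction then drive the expected $\ell_1$ belief error below $\epsilon/H$; the rest of your argument (performance difference, telescoping over $H$ steps) goes through as you describe. Note also that the correct form of the error is $\epsilon_M\le(1-\sigma_1^4/2^{40})^{M}\cdot\mathrm{poly}(d)$, i.e., the $\log d$ enters additively in the exponent rather than dividing $M$ as in your stated rate.
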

Note that the memory $M$ above is independent of $|\Scal|$ instead it only depends on the rank $d$. To prove the above lemma, we first show a new result on belief contraction for low-rank POMDPs under the $\ell_1$-based observability. The proof of the belief contraction borrows some key lemma from \citep{golowich2022planning} but extends the original result for small-size tabular POMDPs to low-rank POMDPs.  We leverage the linear structure of the problem and the G-optimal design to construct an initial distribution over $\Scal$ that can be used as a starting point for belief propagation along the memory.

We conclude the study on the POMDPs with low-rank latent transition by the following theorem, which demonstrates a quasi-polynomial sample complexity for learning the globally optimal policy. 

\begin{theorem}[Sample complexity of \oursecondalg{} for POMDPs with low-rank latent transition --- competing against $\pi^\star_{\mathrm{gl}}$]
Consider observable POMDPs  with latent low-rank transition. Fix some $\epsilon\in (0,H), \delta\in(0,1)$.
Suppose Assumption \ref{assum:realizable}, \ref{assum:regular}. We construct $\Pi, \Gcal, \Fcal$, and the loss $l$ as we described above. With probability at least $1-\delta$,  when $M = \Theta(C \sigma_1^{-4} \ln(d H /\epsilon))$, \oursecondalg{} outputs a $\hat\pi$ such that $J(\pi^\star_{\mathrm{gl}}) - J(\hat\pi) \leq \epsilon$, with  number of samples scaling 
$$\tilde O\left(  \frac{ d^2 H^6  |\Acal|^2  \ln(|\Mcal / \delta |) }{ \epsilon^2 \sigma_1^2} \cdot |\Acal|^{  \ln(dH/\epsilon) / \sigma_1^4 }  \right).$$
\end{theorem}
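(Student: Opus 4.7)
The plan is to reduce this global-optimality theorem to two previously established ingredients: the in-class PAC guarantee of Theorem~\ref{thm:sample_low_rank}, and the near-global-optimality lemma for short-memory policies stated immediately before the theorem. Concretely, I would decompose the regret as
\[
J(\pi^\star_{\mathrm{gl}}) - J(\hat\pi) = \bigl(J(\pi^\star_{\mathrm{gl}}) - J(\pi^\star)\bigr) + \bigl(J(\pi^\star) - J(\hat\pi)\bigr),
\]
where $\pi^\star = \arg\max_{\pi\in\Pi} J(\pi)$ is the best in-class $M$-memory policy, and bound each piece by $\epsilon/2$.

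For the first piece I would first observe that $\pi^\star$ dominates, in value, \emph{every} $M$-memory policy on the true POMDP, not only those in $\Pi$. By Assumption~\ref{assum:realizable}, $\theta^\star := (\mu^\star,\phi^\star,\OO)\in\Mcal$, so by construction $\pi^{\theta^\star}\in\Pi$; and $\pi^{\theta^\star}$ is defined as the $M$-memory policy maximizing value under $\theta^\star$, which is the true dynamics. Hence $J(\pi^\star)\geq J(\pi^{\theta^\star}) = \sup_{\pi:\,M\text{-memory}} J(\pi)$. Instantiating the near-global-optimality lemma with memory $M = \Theta(C\sigma_1^{-4}\ln(dH/\epsilon))$ and slack $\epsilon/2$ produces some $M$-memory policy $\tilde\pi$ with $J(\pi^\star_{\mathrm{gl}}) - J(\tilde\pi) \leq \epsilon/2$; combined with $J(\pi^\star)\geq J(\tilde\pi)$, this gives $J(\pi^\star_{\mathrm{gl}}) - J(\pi^\star) \leq \epsilon/2$.

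For the second piece I would apply Theorem~\ref{thm:sample_low_rank} to the constructed PO-bilinear AC tuple $(\Pi,\Gcal,\Fcal,l)$ with target accuracy $\epsilon/2$ and confidence $\delta$. Its hypotheses (Assumptions~\ref{assum:realizable} and~\ref{assum:regular}, the PO-bilinear rank decomposition of Lemma~\ref{lem:low_rank_po_bilinear}, realizability of $\Gcal$, and Bellman completeness of $\Fcal$) are inherited verbatim from the model-based construction and hold for the enlarged memory $M$ without modification. The theorem then delivers $J(\pi^\star) - J(\hat\pi)\leq \epsilon/2$ w.p.~$1-\delta$ using
\[
\tilde O\!\left(\frac{d^2 H^6 |\Acal|^{2+M}\ln(|\Mcal|/\delta)}{\epsilon^2\sigma_1^2}\right)
\]
samples. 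Substituting $M = \Theta(\sigma_1^{-4}\ln(dH/\epsilon))$ factors $|\Acal|^{2+M} = |\Acal|^2\cdot |\Acal|^{\Theta(\sigma_1^{-4}\ln(dH/\epsilon))}$, producing the announced quasi-polynomial rate. Combining the two $\epsilon/2$ bounds via the triangle inequality closes the argument.

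The real substance is buried in the near-global-optimality lemma, whose proof (sketched above the theorem) requires a new $\ell_1$-based belief-contraction result for low-rank POMDPs, extending the tabular argument of \citep{golowich2022planning} by using a G-optimal design on $\Scal$ to seed belief propagation along the memory; this is precisely the step that drives the $|\Acal|^{\ln(dH/\epsilon)/\sigma_1^4}$ blow-up. Given that lemma and Theorem~\ref{thm:sample_low_rank}, what remains in the proof of the theorem itself is essentially bookkeeping: ensuring that the single memory length $M$ defining $\Pi$ is simultaneously large enough for the near-optimality lemma to apply and small enough for the sample complexity above to remain quasi-polynomial, and carefully splitting the $\epsilon$ budget and $\delta$ budget between the two invocations.
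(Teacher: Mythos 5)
Your proposal is correct and follows essentially the same route the paper takes: decompose the regret into the approximation error of the best $M$-memory policy (controlled by the near-global-optimality lemma, using realizability to guarantee $\pi^{\theta^\star}\in\Pi$ attains the supremum over all $M$-memory policies) plus the in-class excess risk from Theorem~\ref{thm:sample_low_rank}, and then substitute $M=\Theta(\sigma_1^{-4}\ln(dH/\epsilon))$ into the $|\Acal|^{2+M}$ factor. No gaps.
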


\begin{remark}[Comparison to \cite{wang2022embed}]
We compare our results to the very recent work \cite{wang2022embed} that studies POMDPs with the low-rank latent transition. The results are in general not directly comparable, but we state several key differences here. %
First, \cite{wang2022embed} considers a special instance of low-rank transition, i.e., \cite{wang2022embed} assumes $\TT$ has low non-negative rank, which could be exponentially larger than the usual rank \citep{agarwal2020flambe}. Second, \cite{wang2022embed} additionally assumes \emph{short past sufficiency}, a condition which intuitively says that for any roll-in policy, the sufficient statistics of a short memory is enough to recover the belief over the latent states, and their sample complexity has an exponential dependence on the length of the memory. While our result also relies on the fact that the globally optimal policy can be approximated by an $M$-memory policy with small $M$, this fact is derived directly from the standard observability condition. 
\end{remark}

\subsubsection{Revisiting Observable Undercomplete Tabular POMDPs}\label{sec:undercomplete_pomdp_finite}

We reconsider the sample complexity of undercomplete tabular POMDPs using \pref{thm:sample_low_rank}. In this case, we will start from a model class that captures the ground truth latent transition $\TT$ and omission distribution $\OO$.  By constructing $\epsilon$-nets over the model class,%
 we can set $\ln(|\Mcal|) = \tilde O(|\Scal|^3|\Ocal| |\Acal|)$ since $\TT,\OO$ have $|\Scal|^2 |\Acal|$ and $|\Ocal||\Scal|$ many parameters, respectively. Besides, the PO-bilinear rank is $d = |\Scal|$.  Therefore, the sample complexity is 
\begin{align*}
\tilde O\prns{ \frac{|\Scal|^5 |\Ocal| H^6 |\Acal|^{2+M}\ln(1/\delta)}{\epsilon^2 \sigma^2_1} }.  
\end{align*}
We leave the formal analysis to future works.

Compared to results in Section \ref{subsec:observable_tabular_ab}, there is no $|\Ocal|^M$ term. This is due to two improvements. The first improvement is that we refine the rank from $|\Ocal|^M|\Acal|^M|\Scal|$ to $|\Scal|$. The second improvement is we model the value link function class and policy class starting from the model class whose complexity has nothing to do with the length of memory $M$ (note that previously, from a pure model-free perspective, the statistical complexity of $\Gcal$ can scale as $|\Ocal|^M|\Acal|^M |\Scal|$ in the worst case). 

\subsubsection{Revisiting Observable Overcomplete POMDPs} \label{sec:overcomplete_pomdp_finite}

We reconsider the sample complexity of overcomplete tabular POMDPs using \pref{thm:sample_low_rank} with slight modification to incorporate multi-step future. Suppose $\|\{\OO^K\}^{\dagger}\|_1 \leq 1/\sigma_1$ (recall $\OO^K$ is defined in \pref{lem:over_complete_bridge} in Section \ref{subsec:overcomplete_pomdps_tabular}). Then, we can achieve a sample complexity  
\begin{align*}
\tilde O\prns{ \frac{|\Scal|^5 |\Ocal| H^6 |\Acal|^{2+M}\ln(1/\delta)}{\epsilon^2 \sigma^2_1} }
\end{align*}
since the PO-bilinear rank is $|\Scal|$. Note that there is no $|\Ocal|^{M+K}$ dependence, since both the policy class and the value link function class are built from the model class whose complexity has nothing to do with $M,K$. 
 
Note that due to our definition of $\OO^K$,  there is no $|\Acal|^K$ term. However, when we use a different definition, for instance, $\min_{a'_{h:h+K-2} \in \Acal^{K-1}}   \|\{\OO^K(a'_{h:h+K-2})\}^{\dagger}\|_1\leq 1/\alpha_1$   (recall $\OO^K(a'_{h:h+K-2})$ is defined in Section \ref{subsec:overcomplete_pomdps_tabular}), we would incur $|\Acal|^K$.  This is because if we only know that there is an unknown sequence of actions $a'_{h:h+K-2}$ such that $\OO^K(a'_{h:h+K-2})$ is full column rank, we need to use uniform samples $|\Acal|^K$ in the importance sampling step to identify such a sequence. More formally, we can see that
\begin{align}\label{eq:singular_value_relation}
  |\Acal|^K \min_{a'_{h:h+K-2} \in \Acal^{K-1}}   \|\{\OO^K(a'_{h:h+K-2})\}^{\dagger}\|_1 \geq  \|\{\OO^K\}^{\dagger} \|_1. 
\end{align}

\subsubsection{Revisiting $M$-step Decodable Tabular POMDPs}\label{subsec:tabular_decodable}

We reconsider the sample complexity of tabular $M$-step decodable POMDPs by constructing $\Fcal,\Gcal,\Pi$ from the model class $\Mcal$ as we did for the low-rank POMDP. In this case, by constructing $\epsilon$-nets, %
we can set $\ln(|\Mcal|) = \tilde O(|\Scal|^3|\Ocal| |\Acal|)$ since $\TT, \OO$ have $|\Scal|^2 |\Acal|$ and $|\Ocal||\Scal|$ parameters, respectively. 
Therefore, the sample complexity is 
\begin{align*}
\tilde O\prns{ \frac{H^6|\Scal|^5 |\Ocal|  |\Acal|^{2+M}\ln(1/\delta)}{\epsilon^2} }.  
\end{align*}
Again, we leave the formal analysis to future works. Compared to the naive result mentioned after \pref{thm:sample_m_step} where $\ln(\Gcal), \ln(\Pi)$ could scale in the order of $|\Ocal|^M$ in the tabular case,  we do not have $|\Ocal|^M$ dependence here.

}

\section{Summary}

{ We propose a PO-bilinear actor-critic framework that is the first unified framework for provably efficient RL on large-scale partially observable dynamical systems. Our framework can capture not only many models where provably efficient learning has been known such as tabular POMDPs, LQG and M-step decodable POMDPs, but also models where provably efficient RL is not known such as HSE-POMDPs, general PSRs and observable POMDPs with low-rank latent transition. Our unified actor-critic based algorithm---\textbf{\ouralg} provably performs agnostic learning by searching for the best memory-based policy.  For special models such as observable tabular MDPs, LQG, and POMDPs with low-rank latent transition, by leveraging their special properties, i.e., the exponential stability of Bayesian filters in tabular and low-rank POMDPs, and existence of a compact featurization of histories in LQG, we are able to directly compete against the global optimality without paying an exponential dependence on horizon. }

\section*{Acknowledgement}

We thank Nan Jiang for valuable discussions on PSRs. 

\bibliographystyle{alpha}
\bibliography{ref}

\appendix 
\newpage 

\tableofcontents

\section{Supplement for \pref{sec:def_value_bridge}} \label{sec:ape_def_value_bridge}

We generalize \pref{def:simple_bilinear} to capture more models. The first extension is to use multi-step link functions. This extension is essential to capture overcomplete POMDPs and multi-step PSRs. The second extension is to use minimax loss functions with discriminators so that we can use not only absolute value loss functions but also squared loss functions. This extension is important to capture M-step decodable POMDPs.

\section{Supplement for \pref{sec:example}}\label{sec:ape_example}

\subsection{Observable Undercomplete Tabular POMDPs }

We need to prove \pref{lem:tabular_pomdp_bilinear}. In the tabular case, by setting
\begin{align*}
    \psi_h(z,o) = \One(z) \otimes \One(o), \phi_h(z,s) = \One(z) \otimes \One(s), K_h = I_{|\Zcal_{h-1}|}\otimes \OO 
\end{align*}
where $\One(z),\One(o),\One(s)$ are one-hot encoding vectors over $\Zcal_{h-1},\Ocal,\Scal$, respectively. Then, we can regard the tabular model as an HSE-POMDP. We can just invoke \pref{lem:hse_existence}.

\subsection{Observable Overcomplete POMDPs} \label{subsec:over_complete_ape}

We consider overcomplete POMDPs with multi-step futures. We have the following theorem. This is a generalization of \pref{lem:tabular_pomdp_value_bridge}.

\paragraph{Proof of \pref{lem:over_complete_bridge}}
Consider any function $f:\Zcal_{h-1}\times \Scal \to \RR$ (thus, this captures all possible $V^\pi_h$). Denote $\one(z)$ as the one-hot encoding of $Z_{h-1}$ (similarly for $\one(s)$ over $\Scal$ and $\one(t)$ over $\Tcal^K$). We have $f(z, s) = \langle f, \one(z)\otimes \one(s) \rangle = \langle f ,  \one(z) \otimes ((\OO^K)^\dagger \OO^K \one(s) ) \rangle $, where we use the assumption that $\rank(\OO^K)=|\Scal|$ and thus $(\OO^K)^\dagger \OO^K = I$.
Then,  
\begin{align*}
f(z_{h-1}, s_h) &= \langle f , \one(z_{h-1}) \otimes (\OO^K)^\dagger \EE[\one(o_{h:h+K-1},a_{h:h+K-2}) \mid s_h;a_{h:h+K-2} \sim \pi^{out} ] \rangle \\
 &= \EE[ \langle f , \one(z_{h-1}) \otimes  (\OO^K)^\dagger   \one(o_{h:h+K-1},a_{h:h+K-2}) \rangle \mid z_{h-1},s_h;a_{h:h+K-2} \sim \pi^{out} ] .  
\end{align*}
which means that the value bridge function corresponding to $f(\cdot)$ is $$g(z,t):=\langle f , \one(z) \otimes  (\OO^K)^\dagger   \one(t) \rangle.$$ $\quad \blacksquare$

\paragraph{Proof of \pref{lem:multi_step_bilinear}}

Recall we want to show the low-rank property of the following loss function: 
 \begin{align*}
      &  \EE[g_{h+1}(\bar z^K_{h+1})  ; a_{1:h-1}\sim \pi' 
    , a_h\sim \pi,a_{h+1:h+K-1}\sim  \pi^{out}  ] + \EE[r_h  ;a_{1:h-1}\sim \pi', a_h\sim \pi] \\
    & -  \EE[g_{h}(\bar z^K_{h})  ;a_{1:h-1}\sim \pi', a_{h:h+K-1}\sim  \pi^{out}  ] . 
\end{align*}
We consider an expectation conditioning on $z_{h-1}$ and $s_h$. For some vector $\theta_{\pi,g} \in \RR^{|\Zcal_{h-1}|\times |\Scal|}$, which depends on $\pi$, we write it in the form of $\langle \theta_{\pi,g},  \one(z_{h-1},s_h) \rangle$ where $\one(z_{h-1},s_h)$ is the one-hot encoding vector over $\Zcal_{h-1} \times \Scal$. Then, the loss for $(\pi,g)$ is equal to 
\begin{align*}
  \langle \theta_{\pi,g},  \EE[\one(z_{h-1},s_h)  ;a_{1:h-1}\sim \pi' ] \rangle. 
\end{align*}
Hence, we can take $X(\pi') = \EE[\one(z_{h-1},s_h)  ;a_{1:h-1}\sim \pi' ]$ and $ W(\pi) = \theta_{\pi,g}$. $\quad \blacksquare$

\subsection{Observable Linear Quadratic Gaussian}

We need to prove \pref{lem:lqg_value}. The proof is further deferred to \pref{sec:sample_complexity_lqg}.

\subsection{Observable HSE-POMDPs} \label{subsec:hse_pomdps_ape}

We first provide the proof of \pref{lem:hse_existence}. Then, we briefly mention how we extend to the infinite-dimensional setting. 

\paragraph{Proof of the first statement in \pref{lem:hse_existence}}

First, we need to show value bridge functions exist. This is proved noting 
\begin{align*}
    \EE_{o \sim \OO(s)}[\langle (K^{\dagger}_h)^{\top}\theta^{\pi}_h, \psi_h(\bar z_h) \rangle ] = \langle (K^{\dagger}_h)^{\top} \theta^{\pi}_h, K_h  \phi_h(z_{h-1},s_h) \rangle = \langle \theta^{\pi}_h, \phi_h(z_{h-1},s_h) \rangle = V^{\pi}_h(z_{h-1},s_h). 
\end{align*}
Thus, $\langle (K^{\dagger}_h)^{\top}\theta^{\pi}_h, \psi_h(\bar z_h) \rangle $ is a value bridge function. $\quad \blacksquare$

\paragraph{Proof of the second statement in \pref{lem:hse_existence}}

Consider a triple $(\pi', \pi, g) \in \Pi\times \Pi\times \Gcal$, with $g_h(\cdot) = \theta_h^\top \psi_h(\cdot)$ and $g^{\pi}_h = \langle \theta^{\star}_h,\psi_h(\cdot) \rangle$, we have:
\begin{align*}
&  \mathrm{Br}_h(\pi,g;\pi') \\ 
&= \EE \left[ \theta_h^{\top} \psi(\bar z_h) - r_h - \theta_{h+1}^\top \psi(\bar z_{h+1}); a_{1:h-1}\sim \pi',a_h \sim \pi   \right] \\
& = \EE \left[ \theta_h^{\top} K_h \phi_h(z_{h-1}, s_h) - r_h - \theta_{h+1}^\top  K_{h+1}( \phi_{h+1}( z_h, s_{h+1}  ) ); a_{1:h-1}\sim \pi',a_h \sim \pi \right] \\
& = \EE  \left[ (\theta_h-\theta^{\star}_h)^{\top} K_h \phi_h(z_{h-1}, s_h)  -(\theta_{h+1}-\theta^{\star}_{h+1})^\top  K_{h+1}( T_{\pi;h} \phi_{h}( z_{h-1}, s_{h}  ) );   a_{1:h-1}\sim \pi'\right] \\
& = \left\langle   \EE[\phi_h(z_{h-1}, s_h) ;   a_{1:h-1}\sim \pi'], \quad  K_h^{\top} (\theta_h - \theta^{\star}_h) - T_{\pi;h}^\top K_{h+1}^\top (\theta_{h+1}-\theta^{\star}_{h+1})  \right\rangle,
\end{align*} which verifies the bilinear structure, i.e., $X_h(\pi') = \EE[\phi_h(z_{h-1}, s_h) ;   a_{1:h-1}\sim \pi']$, and $W_h(\pi, g) =K_h^{\top} (\theta_h - \theta^{\star}_h) - T_{\pi;h}^\top K_{h+1}^\top (\theta_{h+1}-\theta^{\star}_{h+1}) $, and shows that the bilinear rank is at most $\max_{h} d_{\psi_h}$.$\quad \blacksquare$

\paragraph{Infinite dimensional HSE-POMDPs}

Consider the case $\phi_h$ and $\psi_h$ are features in infinite dimensional RKHS. By assuming that the spectrum of the operator $K_h$ is decaying with a certain order, we can still ensure the existence of value bridge functions even if $d_{\phi_h}$ and $d'_{\psi_h}$ are infinite dimensional. 

Next, we consider the PO-bilinear rank. We can still use the decomposition in the proof above. While the PO-bilinear rank itself in the current definition  is infinite-dimensional, when we get the PAC result later, the dependence on the PO-bilinear rank comes from  the information gain based on $X_h(\pi)$, which is the intrinsic dimension of $X_h(\pi)$. Thus, we can easily get the sample complexity result by replacing $d_{\psi_h}$ with the information gain over $\psi_h(\cdot)$ \citep{srinivas2009gaussian}. Generally, to take infinite dimensional models into account, the PO-bilinear rank in \pref{def:simple_bilinear} can be generalized using the critical information gain \citep{du2021bilinear}.

\section{Supplement for \pref{sec:algorithm} (Algorithm for LQG with Continuous Action)}\label{sec:ape_algo}

In this section, we present a modification to handle LQG with continuous action in \pref{def:simple_bilinear}.

Our algorithm so far samples $a_h$ from $\Ucal(\Acal)$ and performs importance weighting in designing the loss $\sigma^t_h$, which will incur a polynomial dependence on $|\Acal|$ as we will see in the next section. However, among the examples that we consider in \pref{sec:example}, LQG has continuous action. If we na\"ively sample $a_h$ from a ball in $\RR^{d_a}$ and perform (nonparametric) importance weighting, we will pay $\exp(d_a)$ in our sample complexity bound, which is not ideal for high-dimension control problems. To avoid exponential dependence on $d_a$, here we replace $\Ucal(\Acal)$ with a $d$-optimal design over the action's quadratic feature space.

Here,  we want to evaluate the Bellman error of $(\pi, g)$ pair under a roll-in policy $\pi'$: %
\begin{align*}
   \mathrm{Br}_h(\pi,g;\pi'):= \EE[u_h(\bar z_h,a_h,r_h,o_{h+1};\theta); a_{1:h-1} \sim \pi', a_h \sim \pi(\bar z_h) ] 
\end{align*}
where $u_h(\bar z_h,a_h,r_h,o_{h+1};\theta)= \theta^{\top}_h \psi(\bar z_h) - r_h(s_h,a_h)-\theta^{\top}_{h+1}\psi_{h+1}(\bar z_{h+1})$ for any linear deterministic policy $\pi \in \Pi$ (here $g_h(\cdot ):= \theta_h^\top \psi(\cdot)$) using a \emph{single} policy. In other words, we would like to get a good loss $l_h$ such that 
\begin{align*}
       \mathrm{Br}_h(\pi,g;\pi')=\EE[l_h(\bar z_h,a_h,r_h,o_{h+1};\theta,\pi); a_{1:h-1} \sim \pi', a_h \sim \pi^{e} ]
\end{align*}
for some policy $\pi^{e}$ without incuring exponential dependence on $d_a$. We explain how to design such a loss function $l_h(\cdot;\pi,g)$ step by step.  

\paragraph{First Step}

The first step is to consider the conditional expectation on $(\bar z_h,s_h,a_h)$. Here, using the quadratic form of $\psi$, we can show that there are some $c_0: \bar Z_h \times \Scal \to \RR,c_1: \bar Z_h \times \Scal \to \RR^{(d_a + d_s +d_{\bar z_h})^2}, c_2 \in \RR$:  
\begin{align*}
       \mathrm{Br}_h(\pi,g;\pi')&=\EE[u_h(\bar z_h,a_h,r_h,o_{h+1};\theta)\mid \bar z_h,s_h,a_h ; a_{1:h-1} \sim \pi, a_h \sim \pi(\bar z_h) ]\\
    &= \langle c_2(\theta), [1, [\bar z^{\top}_h,s^{\top}_h,a^{\top}_h]\otimes [\bar z^{\top}_h,s^{\top}_h,a^{\top}_h]]^{\top} \rangle  \\ 
    &= c_0(\bar z_h,s_h;\theta) +  c^{\top}_1(\bar z_h,s_h;\theta) \kappa(a_h)
\end{align*} 
where $\kappa(a) = [ a^\top, (a\otimes a)^{\top}]^\top$. Then, the Bellman loss we want to evaluate can be written in the form of 
\begin{align*}
     &\EE[u_h(\bar z_h,a_h,r_h,o_{h+1};\theta); a_{1:h-1} \sim \pi', a_h \sim \pi(\bar z_h) ] \\
    & = \EE[c_0(\bar z_h,s_h;\theta) +  c^{\top}_1(\bar z_h,s_h;\theta) \kappa(\pi(\bar z_h)); a_{1:h-1}\sim \pi'].
\end{align*}

\paragraph{Second step} The second step is to compute a d-optimal design for the set $\{ \kappa(a):  a \in \RR^{d_a}, \|a\|_2 \leq Z \}$ for certain enough large $Z \in \RR$, and denote $a^1, \dots, a^{d^{\diamond}}$ as the supports on the d-optimal design. Note in LQG, though we cannot ensure the action lives in the compact set, we can still ensure that in high probability and it suffices in our setting as we will see. Since the dimension of $k(a)$ is $d_a+d^2_a$, we can ensure $d^{\diamond} \leq (d_a+d_a^2) (d_a+d_a^2+1)/2$ \citep{lattimore2020bandit,kiefer1960equivalence}.  Here is a concrete theorem we invoke. 

\begin{theorem}[Property of G-optimal design]\label{thm:g_optimal}
Suppose $\Xcal \in \mathbb{R}^d$ is a compact set. There exists a distribution $\rho$ over $\Xcal$ such that:
\begin{itemize}
    \item $\rho$ is supported on at most $d(d+1)/2$ points. 
    \item For any $x'\in \Xcal$, we have $x'^{\top} \EE_{x\sim \rho}[xx^{\top}]^{-1} x'\leq d$. 
\end{itemize}
\end{theorem}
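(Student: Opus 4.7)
The plan is to obtain the G-optimality bound $d$ from the classical Kiefer--Wolfowitz equivalence between D-optimal and G-optimal designs, and then to obtain the support size bound via a Carath\'eodory-type argument on symmetric matrices. Define the information matrix $V(\rho) := \EE_{x\sim\rho}[xx^\top] \in \RR^{d\times d}$ and consider the \emph{D-optimal design problem}
\[
\rho^\star \in \argmax_{\rho \in \Delta(\Xcal)} \log\det V(\rho),
\]
with the convention that the objective equals $-\infty$ when $V(\rho)$ is singular.

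First I would argue existence of a maximizer. Without loss of generality assume $\mathrm{span}(\Xcal) = \RR^d$ (otherwise restrict to this subspace and reduce $d$); then some $\rho$ yields $V(\rho) \succ 0$ and hence a finite objective. Since $\Xcal$ is compact, the set $\Delta(\Xcal)$ is weakly compact, the map $\rho \mapsto V(\rho)$ is linear and continuous, and $\log\det$ is upper semicontinuous on the PSD cone. Hence a maximizer $\rho^\star$ exists and necessarily satisfies $V(\rho^\star) \succ 0$.

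Second, I would derive the bound $x'^\top V(\rho^\star)^{-1} x' \leq d$ from a first-order optimality condition. Fix any $x' \in \Xcal$ and let $\rho_\epsilon := (1-\epsilon)\rho^\star + \epsilon\,\delta_{x'}$ for $\epsilon \in [0,1]$, so $V(\rho_\epsilon) = (1-\epsilon) V(\rho^\star) + \epsilon\, x' x'^\top$. A standard matrix calculus computation gives
\[
\frac{d}{d\epsilon}\bigg|_{\epsilon = 0^+} \log\det V(\rho_\epsilon) \;=\; \mathrm{tr}\!\left(V(\rho^\star)^{-1}\bigl(x' x'^\top - V(\rho^\star)\bigr)\right) \;=\; x'^\top V(\rho^\star)^{-1} x' - d.
\]
Since $\rho^\star$ maximizes the objective and $\rho_\epsilon$ is admissible, this derivative must be $\leq 0$, yielding the desired G-optimality bound for every $x' \in \Xcal$.

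Third, I would reduce the support via Carath\'eodory's theorem. The matrix $V(\rho^\star)$ lives in the space of symmetric $d\times d$ matrices, which has dimension $d(d+1)/2$, and is expressed as a convex combination of rank-one matrices $\{xx^\top : x \in \Xcal\}$. If $\rho^\star$ is supported on $k$ atoms $x_1,\dots,x_k$ with $k > d(d+1)/2$, then the $k$ symmetric matrices $\{x_i x_i^\top\}$ are linearly dependent, and combining this with the affine constraint $\sum_i w_i = 1$ produces a nontrivial coefficient vector $c$ with $\sum_i c_i\, x_i x_i^\top = 0$ and $\sum_i c_i = 0$. Moving the weights $w_i \mapsto w_i + t c_i$ preserves both $V(\rho^\star)$ and the normalization, so we can shift $t$ until some coordinate first hits zero, strictly reducing the support. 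Iterating yields a distribution $\rho'$ supported on at most $d(d+1)/2$ points with $V(\rho') = V(\rho^\star)$, which automatically inherits the G-bound since it depends only on $V$. The main subtlety is extracting the sharp constant $d(d+1)/2$ (rather than the $d(d+1)/2+1$ from a naive Carath\'eodory application in the ambient space); this is precisely where the simplex constraint is used to trim one atom, along standard lines in the optimal-design literature (cf.~\citep{lattimore2020bandit,kiefer1960equivalence}).
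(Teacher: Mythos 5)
The paper does not prove this statement at all: it is imported as a known result with citations to \citep{kiefer1960equivalence,lattimore2020bandit}, so there is no in-paper argument to compare against. Your proposal is the standard Kiefer--Wolfowitz proof, and its first two steps are sound: existence of a D-optimal $\rho^\star$ by compactness and upper semicontinuity (after restricting to $\mathrm{span}(\Xcal)$ so that $V(\rho^\star)\succ 0$), and the first-order condition $\frac{d}{d\epsilon}\big|_{0^+}\log\det V(\rho_\epsilon)=x'^{\top}V(\rho^\star)^{-1}x'-d\le 0$ along the mixture path, which gives the G-optimality bound.

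The gap is in the support-reduction step. If $k>d(d+1)/2$, linear dependence of $\{x_ix_i^{\top}\}$ in the $d(d+1)/2$-dimensional space of symmetric matrices gives a nontrivial $c$ with $\sum_i c_i x_i x_i^{\top}=0$, but it does \emph{not} give $\sum_i c_i=0$; ``combining with the affine constraint'' as you state it would require dependence of the augmented vectors $(x_ix_i^{\top},1)$, i.e.\ $k>d(d+1)/2+1$, which is exactly the naive Carath\'eodory count you are trying to beat. Without $\sum_i c_i=0$, the perturbation $w_i\mapsto w_i+tc_i$ destroys the normalization, and renormalizing rescales $V$, so the trimming argument as written does not close. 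The fix uses the optimality of $\rho^\star$, not the simplex constraint: from $\EE_{x\sim\rho^\star}[x^{\top}V(\rho^\star)^{-1}x]=\mathrm{tr}(I)=d$ together with your bound $x^{\top}V(\rho^\star)^{-1}x\le d$ for all $x$, one gets equality $x_i^{\top}V(\rho^\star)^{-1}x_i=d$ for every support atom; then taking the trace of $\sum_i c_i x_i x_i^{\top}=0$ against $V(\rho^\star)^{-1}$ yields $d\sum_i c_i=0$, hence $\sum_i c_i=0$, and your trimming argument goes through and can be iterated down to $d(d+1)/2$ atoms. With that one insertion the proof is complete.
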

We have the following handy lemma stating any $\kappa(a)$ is spanned by $\{\kappa(a^i)\}_{i=1}^{d^{\diamond}}$.  

\begin{lemma}\label{lem:norm_alpha}
{\newedit Let $ K = [\rho^{1/2}(a^1)\kappa(a^1),\rho^{1/2}(a^2)\kappa(a^2),\cdots,\rho^{1/2}(a^{d^{\diamond}}) \kappa(a^{d^{\diamond}})]$ and $ \alpha(a)=K^{\top}(KK^{\top})^{-1}k(a)$. } Then, it satisfies $$\kappa(a) = K \alpha( a ),\quad \|\alpha(a)\|  \leq (d_a+d_a^2)^{1/2},\quad \alpha_i(a)/\rho^{1/2}(a^i)\leq (d_a+d_a^2) $$
\end{lemma}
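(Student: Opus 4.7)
The plan is to identify $KK^\top$ with the second-moment matrix $V := \EE_{a' \sim \rho}[\kappa(a')\kappa(a')^\top] = \sum_{i=1}^{d^{\diamond}} \rho(a^i)\kappa(a^i)\kappa(a^i)^\top$ of the G-optimal design and then read off all three claims by a short calculation using the design guarantee $\kappa(a)^\top V^{-1}\kappa(a) \leq d$ supplied by Theorem \ref{thm:g_optimal}, where $d := d_a + d_a^2$ is the ambient dimension of the feature $\kappa(a)$. The design is to be constructed on the compact set $\Xcal = \{\kappa(a) : \|a\|_2 \leq Z\} \subset \RR^{d_a + d_a^2}$, which is exactly what justifies invoking Theorem \ref{thm:g_optimal}.

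First I would observe directly from the definition of $K$ that $KK^\top = V$, so that $\alpha(a) = K^\top V^{-1}\kappa(a)$. The representation identity $\kappa(a) = K\alpha(a)$ is then the trivial chain $K\alpha(a) = KK^\top V^{-1}\kappa(a) = V V^{-1}\kappa(a) = \kappa(a)$. Invertibility of $V$ is not an issue, since the G-optimal design must be supported on a set of points spanning $\mathrm{span}(\Xcal)$ (otherwise the bound in Theorem \ref{thm:g_optimal} would be violated on the missing directions), so without loss of generality one can work in that span.

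For the Euclidean norm bound, a one-line computation gives
\[
\|\alpha(a)\|^2 \;=\; \kappa(a)^\top V^{-1} (KK^\top) V^{-1} \kappa(a) \;=\; \kappa(a)^\top V^{-1} \kappa(a) \;\leq\; d_a + d_a^2,
\]
where the last inequality is exactly Theorem \ref{thm:g_optimal}. For the coordinate-wise bound, the $i$th entry of $\alpha(a)$ is $\alpha_i(a) = \rho^{1/2}(a^i)\,\kappa(a^i)^\top V^{-1}\kappa(a)$, so $\alpha_i(a)/\rho^{1/2}(a^i) = \kappa(a^i)^\top V^{-1}\kappa(a)$. Applying Cauchy--Schwarz in the inner product induced by the positive definite matrix $V^{-1}$ yields
\[
\kappa(a^i)^\top V^{-1}\kappa(a) \;\leq\; \sqrt{\kappa(a^i)^\top V^{-1}\kappa(a^i)} \cdot \sqrt{\kappa(a)^\top V^{-1}\kappa(a)} \;\leq\; d_a + d_a^2,
\]
using Theorem \ref{thm:g_optimal} twice, once for $a^i$ and once for $a$.

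Honestly, there is no real obstacle here: the entire proof is a routine algebraic exercise once the identification $KK^\top = V$ is recognized, and the three claims follow from (i) the inverse formula, (ii) the G-optimal bound with $x = \kappa(a)$, and (iii) Cauchy--Schwarz combined with the G-optimal bound for both $\kappa(a)$ and $\kappa(a^i)$. The only point requiring mild care is verifying that the set $\Xcal$ on which the design is constructed is compact and of dimension $d_a + d_a^2$, which is immediate from the restriction $\|a\|_2 \leq Z$ and the explicit form of $\kappa$.
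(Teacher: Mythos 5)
Your proof is correct and follows essentially the same route as the paper's: the identification $KK^\top = \EE_{a'\sim\rho}[\kappa(a')\kappa(a')^\top]$, the inverse cancellation for the representation identity, the G-optimal design bound for the Euclidean norm, and Cauchy--Schwarz in the $(KK^\top)^{-1}$ inner product for the coordinate-wise bound are exactly the steps in the paper. Your added remark justifying invertibility of $KK^\top$ via the span of the design support is a slightly more careful version of the paper's one-line assertion that $K$ is full row rank.
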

\begin{proof}
Since $K$ is full-raw rank from the construction of G-optimal design, $KK^{\top}$ is invertible. Then, we have
\begin{align*}
   \sum_{i=1}^{d^{\diamond}} \alpha_i( a ) \rho^{1/2}(a^i)\kappa(a^i)= KK^{\top}(KK^{\top})^{-1} \kappa(a)=\kappa(a)
\end{align*}
For the latter statement, we have 
\begin{align*}
   \langle K^{\top}(KK^{\top})^{-1}k(a) , K^{\top}(KK^{\top})^{-1}k(a) \rangle & =  k(a)^{\top}(KK^{\top})^{-1}k(a) \leq  (d_a+d_a^2). 
\end{align*}
We use a property of G-optimal design in \pref{thm:g_optimal}. 

For the last statement, we have
\begin{align*}
    \kappa^{\top}(a^i)(KK^{\top})^{-1}\kappa(a) \leq \|\kappa^{\top}(a^i)\|_{ (KK^{\top})^{-1}}\|\kappa^{\top}(a)\|_{ (KK^{\top})^{-1}} \leq (d_a+d_a^2).
\end{align*}
from CS inequality. 
\end{proof}

\paragraph{Third Step} The third step is combining current facts. Recall we want to evaluate 
\begin{align*}
\EE[u_h(\bar z_h,a_h,r_h,o_{h+1};\theta); a_{1:h-1} \sim \pi', a_h \sim \pi(\bar z_j) ] = \EE[c_0(\bar z_h,s_h;\theta) +  c^{\top}_1(\bar z_h,s_h;\theta) \kappa(\pi(\bar z_h)); a_{1:h-1}\sim \pi']. 
\end{align*}
In addition, the following also holds: 
\begin{align*}
\EE[u_h(\bar z_h,a_h,r_h,o_{h+1};\theta); a_{1:h-1} \sim \pi', a_h \sim do(a^i) ] &=    \EE[c_0(\bar z_h,s_h;\theta) +  c^{\top}_1(\bar z_h,s_h;\theta) \kappa(a^i); a_{1:h-1}\sim \pi'] \\ 
\EE[u_h(\bar z_h,a_h,r_h,o_{h+1};\theta); a_{1:h-1} \sim \pi', a_h \sim do(0) ] &=    \EE[c_0(\bar z_h,s_h;\theta); a_{1:h-1}\sim \pi'] 
\end{align*}
Here, we use $\kappa(0)=0$. This concludes that 
\begin{align*}
    &\EE[u_h(\bar z_h,a_h,r_h,o_{h+1};\theta); a_{1:h-1} \sim \pi', a_h \sim \pi(\bar z_j) ]\\
    &= \EE[c_0(\bar z_h,s_h;\theta) +  c^{\top}_1(\bar z_h,s_h;\theta) \kappa(\pi(\bar z_h)); a_{1:h-1}\sim \pi'] \\ 
     &= \EE[c_0(\bar z_h,s_h;\theta) +  c^{\top}_1(\bar z_h,s_h;\theta) \{\sum_{i=1}^{d^{\diamond}} \alpha_i(\pi(\bar z_h))\kappa(a^i)\}; a_{1:h-1}\sim \pi'] \\ 
  &=\EE\bracks{c_0(\bar z_h,s_h;\theta)\prns{1- \sum_{i=1}^{d^{\diamond}} \alpha_i(\pi(\bar z_h))} +   \sum_{i=1}^{d^{\diamond}} \alpha_i(\pi(\bar z_h))\prns{ c^{\top}_1(\bar z_h,s_h;\theta) \kappa(a^i) +c_0(\bar z_h,s_h;\theta)}; a_{1:h-1}\sim \pi'}\\
  &= \EE \bracks{ \prns{1- \sum_{i=1}^{d^{\diamond}} \alpha_i(\pi(\bar z_h))} u_h(\bar z_h,a_h,r_h,o_{h+1};\theta); a_{1:h-1} \sim \pi', a_h \sim do(0) } \\
  &+ \sum_{i=1}^{d^{\diamond}}\EE \bracks{  \alpha_i(\pi(\bar z_h)) u_h(\bar z_h,a_h,r_h,o_{h+1};\theta); a_{1:h-1} \sim \pi', a_h \sim do(a^i) }. 
\end{align*}
Thus, we can perform policy evaluation for a policy $\pi$ if we can do intervention from $do(0),do(a^1),\cdots,do(a^{d^{\diamond}})$.

\paragraph{Fourth Step} The fourth step is replacing $do(0),do(a^1),\cdots,do(a^{d^{\diamond}})$ with a single policy that uniformly randomly select actions from the set $\{0, a^1,\dots, a^{d^\diamond}\}$, which we denote as $a\sim U(1+d^{\diamond})$.
Using importance weighting, we define the loss function for $\pi,\theta$ as follows:
\begin{align}\label{eq:loss_lqg}
    \EE[f_h(\bar z_h,a_h,r_h,o_{h+1};\theta,\pi) ; a_{1:h-1} \sim \pi', a_h \sim U(1+d^{\diamond})]
\end{align}
where $U(1+d^{\diamond})$ is a uniform action over $0,a^{1},\cdots,a^{d^{\diamond}}$ and 
\begin{align*}
   &f_h(\bar z_h,a_h,r_h,o_{h+1};\theta,\pi) \\
   &= |1+d^{\diamond}| \prns{ \II(a_h = 0 ) \prns{1- \sum_{i=1}^{d^{\diamond}} \alpha_i(\pi(\bar z_h))}  +\sum_{i=1}^{d^{\diamond}} \II(a_h = a^i)  \alpha_i(\pi(\bar z_h)) }u_h(\bar z_h,a_h,r_h,o_{h+1};\theta).  
\end{align*}
The term \ref{eq:loss_lqg}is equal to $\mathrm{Br}_h(\pi,g;\pi')$ we want to evaluate. 

\paragraph{Summary}

To summarize, we just need to use the following loss function in \pref{line:loss} in \pref{alg:OACP}: 
\begin{align*}
      \EE_{\Dcal^t_h}[l_h(\bar z_h,a_h,r_h,o_{h+1};\theta,\pi)]
\end{align*}
where $l_h(\bar z_h,a_h,r_h,o_{h+1};\theta,\pi)$ is 
\begin{align*}
  \II(\|\bar z_h\|\leq Z_1) \II(\|r_h\|\leq Z_2) \II(\|o_{h+1}\|\leq Z_3)f_h(\bar z_h,a_h,r_h,o_{h+1};\theta,\pi)
\end{align*}
and $\Dcal^t_h$ is a set of $m$ i.i.d samples following the distribution induced by executing $a_{1:h-1} \sim \pi', a_h \sim U(1+d^{\diamond})$. Values $Z_1,Z_2,Z_3$ in indicators functions are some large values selected properly later. Due to unbounded Gaussian noises in LQG, indicators functions for truncation is introduced here for technical reason to get valid concentration in Assumption \ref{assum:uniform}.

\section{Supplement for \pref{sec:psr} }\label{sec:psrs}
We first add several discussions to explain core tests in detail. Next, we show the existence and form of link functions. Finally, we calculate the PO-bilinear rank. In this section, we will focus on the general case where tests could be multiple steps.

\subsection{Definition of PSRs}

We first define core tests and predictive states \citep{littman2001predictive,singh2004predictive}. This definition is a generalization of \pref{def:linear_psrs} with multi-step futures.  

We slighly abuse notation and  denote $\tau^a_h := (o_1,a_1,\dots, o_{h-1}, a_{h-1})$ throughout this whole section --- note that $\tau^a_h$ here does not include $o_h$.

\begin{definition}[Core test sets and PSRs]\label{def:psrs}
A set $ \Tcal \subset \cup_{C \in \mathbf{N}^+ } \Ocal^{C} \times \Acal^{C-1}$  is called a core test  set if for any $h \in [H]$, $W \in \mathbf{N}^+ $, any possible future (i.e., test) $t_h=(o_{h:h+W-1},a_{h:h+W-2}) \in \Ocal^W \times \Acal^{W-1}$ and any history $\tau^a_h $,  there exists $m_{t_h} \in \RR^{|\Tcal|}$ such that 
\begin{align*}
    \PP(o_{h:W+h-1} \mid \tau^a_h ;do(a_{h:W+h-2}) ) =  \langle m_{t_h}, [\PP(t \mid \tau^a_h ) ]_{t\in\Tcal} \rangle. 
\end{align*}
The vector $ [\PP(t \mid \tau^a_h ) ]_{t\in \Tcal} \in \RR^{|\Tcal|}$ is referred to as the predictive state. 
\end{definition}

We often denote $\mathbf{q}_{\tau^a_h } = [\PP(t \mid \tau^a_h) ]_{t\in \Tcal_h}$. To understand the above definition, we revisit observable undercomplete POMDPs and overcomplete POMDPs.  

\begin{example}[Observable undercomplete POMDPs] \label{exa:undercomplete_psrs}
In undercomplete POMDPs, when $\OO$ is full-column rank, $\Ocal$ is a core test. Recall $\OO$ is a matrix in $\RR^{|\Ocal|\times |\Scal|}$ whose entry indexed by $o_i \in \Ocal, s_j \in \Scal$ is equal to $\OO(o_i \mid s_j)$. 

\begin{lemma}[Core tests in undercomplete POMDPs] \label{lem:undercomplete_pomdps}
When $\OO$ is full-column rank, $\Ocal$ is a core test set. 
\end{lemma}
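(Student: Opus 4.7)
The plan is to exhibit the coefficient vector $m_{t_h}$ explicitly, using the belief state as an intermediate sufficient statistic. First I would let $b_h \in \Delta(\Scal)$ denote the belief induced by the history, i.e.\ $b_h(s) := \PP(s_h = s \mid \tau^a_h)$, and observe that unrolling the POMDP dynamics gives
\[
\PP(o_{h:h+W-1} \mid \tau^a_h; \mathrm{do}(a_{h:h+W-2})) \;=\; \sum_{s \in \Scal} b_h(s)\, v_{t_h}(s),
\]
where $v_{t_h}(s) := \PP(o_{h:h+W-1} \mid s_h = s; \mathrm{do}(a_{h:h+W-2}))$ depends only on the test $t_h$ and the (fixed) POMDP parameters, not on the history. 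So for any test, the test probability is a linear functional of the belief.

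Next I would identify the predictive state with a linear image of the belief: by definition of the emission matrix,
\[
\bigl[\PP(o \mid \tau^a_h)\bigr]_{o \in \Ocal} \;=\; \OO\, b_h \;\in\; \RR^{|\Ocal|}.
\]
Because $\OO$ has full column rank (the undercomplete observability hypothesis), its Moore--Penrose pseudoinverse is a left inverse, $\OO^\dagger \OO = I_{|\Scal|}$. Hence the belief can be recovered from the predictive state via $b_h = \OO^\dagger [\PP(o \mid \tau^a_h)]_{o \in \Ocal}$. Substituting back,
\[
\PP(o_{h:h+W-1} \mid \tau^a_h; \mathrm{do}(a_{h:h+W-2}))
= \langle v_{t_h},\, \OO^\dagger[\PP(o \mid \tau^a_h)]_{o\in\Ocal}\rangle
= \bigl\langle (\OO^\dagger)^\top v_{t_h},\, [\PP(o \mid \tau^a_h)]_{o\in\Ocal}\bigr\rangle,
\]
which matches the form required by Definition~\ref{def:psrs} with the explicit choice $m_{t_h} := (\OO^\dagger)^\top v_{t_h} \in \RR^{|\Ocal|}$.

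There is no serious obstacle: the argument is essentially a one-line consequence of the sufficiency of the belief state together with the left-invertibility of $\OO$. The only point worth checking carefully is that the resulting coefficient vector $m_{t_h}$ depends only on the test $t_h$ and the underlying POMDP, not on the conditioning history $\tau^a_h$ --- which is immediate because both $v_{t_h}$ and $\OO^\dagger$ are determined by the model alone. This verifies that $\Tcal = \Ocal$ satisfies Definition~\ref{def:psrs}, completing the proof.
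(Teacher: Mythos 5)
Your proof is correct and follows essentially the same route as the paper's: both express the test probability as a linear functional of the belief state (your $v_{t_h}$ is exactly the paper's $\mathbf{m}'_t$, which the paper additionally writes out as an explicit product of $\TT_{a}$ and $\diag(\OO(o\mid\cdot))$ matrices), then recover the belief from the predictive state via $\OO^\dagger \OO = I$ and transpose to obtain $m_{t_h} = (\OO^\dagger)^\top v_{t_h}$. Your closing remark that $m_{t_h}$ is history-independent is the right point to check and holds for the reason you give.
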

\begin{proof}
Consider any $h\in [H]$. Given a $|\Scal|$-dimensional belief state $\mathbf{s}_{\tau^a_h }= [\PP(\cdot \mid \tau^a_h)]_{|\Scal|}$ with each entry $\PP(s_h \mid \tau^a_h)$, for any future $t=(o_{h:h+W},a_{h:h+W-1})$, there exists a $|\Scal|$-dimensional vector $\mathbf{m'}_t$ such that  $\PP( o_{h:h+W}\mid \tau^a_h; do(a_{h:h+W-1})) = \langle \mathbf{m'}_t,\mathbf{s}_{\tau^a_h }\rangle$. More specifically, ${\bf m'}_t$ can be written as:
\begin{align*}
({\bf m'}_t)^{\top} =  \OO(o_{h+W} \mid{} \cdot)^{\top} \prod_{\tau = h}^{h+W-1}  \TT_{a_h} \diag(\OO( o_h \mid{} \cdot )) 
\end{align*}
where $\OO(o|\cdot) \in \RR^{|\Scal|}$ is a vector with the entry indexed by $s$  equal to $\OO(o | s)$, $\TT_a \in \RR^{|\Scal|\times |\Scal| } $ is a matrix with the entry indexed by $(s,s')$ equal to $\TT(s'\mid s,a_h)$. Here, note given a vector $C$, $\diag(C)$ is define as a $|C|\times |C|$ diagonal matrix where the diagonal element corresponds to $C$. 
Thus, we have 
\begin{align*}
    \PP( o_{h:h+W}\mid \tau^a_h; do(a_{h:h+W-1})) = \langle \mathbf{m'}_t,\mathbf{s}_{\tau^a_h }\rangle = \langle \mathbf{m'}_t, \OO^{\dagger} \mathbf{q}_{\tau^a_h }\rangle = \langle (\OO^{\dagger} )^{\top}\mathbf{m'}_t, \mathbf{q}_{\tau^a_h }\rangle,
\end{align*} where ${\bf q}_{\tau^a_h}\in \RR^{|\Ocal|}$ and ${\bf q}_{\tau^a_h}(o) = \PP(o | \tau^a_h)$.
This concludes the proof. 
\end{proof}
\end{example}

\begin{example}[Overcomplete POMDPs]\label{exa:overcomplete_psrs}

We consider overcomplete POMDPs so that we can permit $|\Scal|\geq |\Ocal|$. 

\begin{lemma}[Core tests in overcomplete POMDPs] 
Recall $\Tcal^K =\Ocal  \times ( \Ocal\times\Acal )^{K-1}$. 
Define a $|\Tcal^K | \times |\Scal|$-dimensional matrix $\OO^K$ whose entry indexed by $(o_{h:h+K-1},a_{h:h+K-2}) \in \Tcal^K, s_h \in \Scal$ is equal to  $\PP(o_{h:h+K-1},a_{h:h+K-2} \mid s_h;a_{h:h+K-2}\sim \Ucal(\Acal))$. When this matrix is full-colmun rank for all $h$, $\Tcal^K$ is a core test set. 
\end{lemma}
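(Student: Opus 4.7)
The plan is to mimic the argument used for undercomplete POMDPs in Lemma \ref{lem:undercomplete_pomdps}, where $\Ocal$ serves as a core test set because $\OO$ is full column rank. The crucial observation is that regardless of the dimension of the tests, the belief state $\mathbf{s}_{\tau^a_h} = [\PP(s_h \mid \tau^a_h)]_{s_h \in \Scal}$ is a sufficient statistic of $\tau^a_h$ for predicting any future: for any $W \geq 1$ and any test $t' = (o_{h:h+W-1}, a_{h:h+W-2})$, there exists $\mathbf{m}'_{t'} \in \RR^{|\Scal|}$ whose entries are $\mathbf{m}'_{t'}(s_h) = \PP(o_{h:h+W-1} \mid s_h ; do(a_{h:h+W-2}))$ (obtainable by a product of transition kernels and emission vectors, as in the proof of Lemma \ref{lem:undercomplete_pomdps}) such that
\begin{align*}
    \PP(o_{h:h+W-1} \mid \tau^a_h ; do(a_{h:h+W-2})) = \langle \mathbf{m}'_{t'}, \mathbf{s}_{\tau^a_h} \rangle.
\end{align*}
Thus the problem reduces to showing that the belief state $\mathbf{s}_{\tau^a_h}$ can be linearly recovered from the predictive state $\mathbf{q}_{\tau^a_h} = [\PP(t \mid \tau^a_h)]_{t \in \Tcal^K}$.

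The second step is to establish the identity $\mathbf{q}_{\tau^a_h} = \OO^K \mathbf{s}_{\tau^a_h}$. Unrolling the definition, each entry indexed by $t = (o_{h:h+K-1}, a_{h:h+K-2}) \in \Tcal^K$ satisfies
\begin{align*}
    \mathbf{q}_{\tau^a_h}(t) &= \PP(o_{h:h+K-1}, a_{h:h+K-2} \mid \tau^a_h ; a_{h:h+K-2}\sim \Ucal(\Acal)) \\
    &= \sum_{s_h \in \Scal}\PP(s_h \mid \tau^a_h) \PP(o_{h:h+K-1}, a_{h:h+K-2} \mid s_h ; a_{h:h+K-2}\sim \Ucal(\Acal)) \\
    &= \sum_{s_h \in \Scal} \OO^K[t, s_h]\, \mathbf{s}_{\tau^a_h}(s_h),
\end{align*}
which gives the desired matrix-vector identity. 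Here I use the Markovian structure of the POMDP to make $s_h$ a sufficient statistic of the history for the future (the action sampling in $\Ucal(\Acal)$ is exogenous and independent of $\tau^a_h$ given $s_h$).

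The third step combines the previous two. Because $\OO^K$ has full column rank, $(\OO^K)^\dagger \OO^K = \II_{|\Scal|}$, so $\mathbf{s}_{\tau^a_h} = (\OO^K)^\dagger \mathbf{q}_{\tau^a_h}$. Substituting into the first identity yields
\begin{align*}
    \PP(o_{h:h+W-1} \mid \tau^a_h ; do(a_{h:h+W-2})) = \langle ((\OO^K)^\dagger)^\top \mathbf{m}'_{t'}, \mathbf{q}_{\tau^a_h} \rangle,
\end{align*}
so the choice $m_{t'} := ((\OO^K)^\dagger)^\top \mathbf{m}'_{t'}$ witnesses Definition \ref{def:psrs}. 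This verifies that $\Tcal^K$ is a core test set.

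I do not anticipate a serious obstacle here; this is largely an exercise in bookkeeping. The one place where care is needed is the second step, where the uniform action distribution appears on both sides of the identity $\mathbf{q}_{\tau^a_h} = \OO^K \mathbf{s}_{\tau^a_h}$ in exactly the same way (the $|\Acal|^{-(K-1)}$ factors match between the columns of $\OO^K$ and the entries of $\mathbf{q}_{\tau^a_h}$), so the identity is genuinely linear and the pseudo-inverse argument goes through cleanly. Note that the argument is agnostic to \emph{which} $K$-step action sequence is informative; a sufficient (and strictly stronger) condition is that $\OO^K(a'_{h:h+K-2})$ is full column rank for some fixed action sequence, in which case one may restrict the core test set to $\Ocal^K \times \{a'_{h:h+K-2}\}$, at the cost of paying $|\Acal|^{K-1}$ in the norm of the inverse.
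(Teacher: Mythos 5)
Your proposal is correct and follows essentially the same route as the paper's proof: express any future's success probability as a linear functional of the belief state, show the predictive state equals a full-column-rank matrix times the belief state, and invert via the pseudo-inverse. The only (benign) difference is that the paper first converts $\OO^K$ into the do-version matrix $\mathbb{\bar J}_h$ (pulling out the common $1/|\Acal|^{K-1}$ factor) before inverting, whereas you keep the uniform-sampling normalization on both sides of the identity — which, as you note, cancels since every test in $\Tcal^K$ has the same action length.
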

\begin{proof}
Fix a test $t = ( o'_{h:h+K-1},a'_{h:h+K-2})$ and consider a step $h\in [H]$. Then, 
\begin{align*}
    & \PP(o'_{h:h+K-1},a'_{h:h+K-2} \mid s_h; a_{h:h+K-2}\sim \Ucal(\Acal)) \\
    &=\EE[ \one(o_{h:h+K-1}= o'_{h:h+K-1}, a_{h:h+K-2}=a'_{h:h+K-2}) \mid s_h; a_{h:h+K-2}\sim \Ucal(\Acal))]\\
 &=\EE[ (1/|\Acal|^{K-1})\one(o_{h:h+K-1}= o'_{h:h+K-1}, a_{h:h+K-2}=a'_{h:h+K-2}) \mid s_h; a_{h:h+K-2}\sim  do(a'_{h:h+K-2}) ] \\
  &=\EE[ (1/|\Acal|^{K-1})\one(o_{h:h+K-1}= o'_{h:h+K-1}) \mid s_h; a_{h:h+K-2}\sim  do(a'_{h:h+K-2}) ] \\
  &= (1/|\Acal|^{K-1})\PP( o'_{h:h+K-1}\mid s_h; do(a'_{h:h+K-2}) ). 
\end{align*}
Thus, the assumption that $\Tcal^K$ is full column rank implies that that the matrix $\mathbb{\bar J}_h\in \RR^{ |\Tcal^K| \times |\Scal|}$ with the entry indexed by $(t,s_h)$ being equal to $\PP( o'_{h:h+K-1}\mid s_h; do(a'_{h:h+K-2}) )$  is full-column rank. 

Define a $|\Tcal^K|$-dimensional state $\mathbf{q}_{\tau^a_h}=[\PP(t \mid \tau^a_h)]_{t\in \Tcal^K }$ given history $\tau^a_h$. By definition, we have
\begin{align*}
    \mathbf{q}_{\tau^a_h}=\mathbb{\bar J}_h \mathbf{s}_{\tau^a_h}
\end{align*}
Using $\mathbb{\bar J}_h$ is full-column rank, we have $\mathbf{s}_{\tau^a_h} = \bar M^{\dagger}_h \mathbf{q}_{\tau^a_h}$. Thus, using the format of ${\bf m'}_t$ from the proof of \pref{lem:undercomplete_pomdps},  we can conclude that for any test $t = (o_{h:h+W}, a_{h:h+W-1})$, we have $\PP(o_{h:h+W} | \tau; \text{do}(a_{h:h+W-1})) = \langle (\bar M^{\dagger}_h)^{\top} {\bf m'}_t,  {\bf q}_{\tau^a_h} \rangle$.  Thus, 
this concludes $\Tcal^K$ is a core test set. 
\end{proof}
\end{example}

Finally, we present an important property of predictive states, which corresponds to the Bayesian filter in POMDP.

\begin{lemma}[Forward dynamics of predictive states]\label{lem:forward_dynamics}
We have 
\begin{align*}
    \PP(t \mid \tau^a_h ,a,o) = \mathbf{m}^{\top}_{o,a,t} \mathbf{q}_{\tau^a_h }/ \mathbf{m}^{\top}_{o} \mathbf{q}_{\tau^a_h }. 
\end{align*}
When we define $M_{o,a} \in \RR^{|\Tcal|\times |\Tcal|}$ where rows are $\mathbf{m}_{o,a,t}$ for $t\in \Tcal$, we can express the forward update rule of predictive states as follows: 
\begin{align*}
     \mathbf{q}_{\tau^a_h ,a,o} = M_{o,a} \mathbf{q}_{\tau^a_h }/  (\mathbf{m}^{\top}_{o}\mathbf{q}_{\tau^a_h }). 
\end{align*}
\end{lemma}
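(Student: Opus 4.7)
The plan is to reduce the claim to two applications of the defining linear-PSR property in \pref{def:psrs} together with one Bayes-rule step. I would first interpret the extended history $(\tau^a_h, a, o)$ as: observe $o_h = o$ from history $\tau^a_h$, and then intervene with $do(a_h = a)$. For any test $t = (o_{h+1:h+W}, a_{h+1:h+W-1})$ of length $W$ starting at step $h+1$, conditioning gives
\begin{align*}
\PP(t \mid \tau^a_h, a, o) = \frac{\PP(o_{h:h+W} \mid \tau^a_h;\, do(a_{h:h+W-1}))}{\PP(o_h = o \mid \tau^a_h;\, do(a_h = a))}.
\end{align*}
The small subtlety to flag is that the denominator further simplifies to $\PP(o_h = o \mid \tau^a_h)$, because in the generative model $o_h$ is emitted from the latent state $s_h$ (which depends only on $\tau^a_h$) before $a_h$ is chosen, so the intervention on $a_h$ does not affect $o_h$. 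This is the only causal-consistency step in the argument.

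Next I would apply the linear-PSR property \pref{def:psrs} to each factor. The numerator is the probability of the length-$(W+1)$ test $(o, o_{h+1:h+W}, a, a_{h+1:h+W-1})$ at history $\tau^a_h$, which by definition equals $\mathbf{m}^\top_{o,a,t} \mathbf{q}_{\tau^a_h}$. The denominator is the probability of the length-$1$ test $(o)$ at history $\tau^a_h$, which equals $\mathbf{m}^\top_{o} \mathbf{q}_{\tau^a_h}$. Substituting these linear expressions into the ratio immediately yields
\begin{align*}
\PP(t \mid \tau^a_h, a, o) = \frac{\mathbf{m}^{\top}_{o,a,t}\, \mathbf{q}_{\tau^a_h}}{\mathbf{m}^{\top}_{o}\, \mathbf{q}_{\tau^a_h}},
\end{align*}
which is the first display.

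For the vector identity, I would simply stack this scalar identity across all $t \in \Tcal$. By definition, the $t$-th entry of $\mathbf{q}_{\tau^a_h, a, o}$ is $\PP(t \mid \tau^a_h, a, o)$, and by construction the $t$-th row of $M_{o,a}$ is $\mathbf{m}^\top_{o,a,t}$; since the denominator $\mathbf{m}^\top_o \mathbf{q}_{\tau^a_h}$ is a scalar common to every entry, the stacked identity is exactly $\mathbf{q}_{\tau^a_h, a, o} = M_{o,a} \mathbf{q}_{\tau^a_h} / (\mathbf{m}^\top_o \mathbf{q}_{\tau^a_h})$, completing the proof. There is no real obstacle here beyond the causal-consistency observation above; the entire argument is a single Bayes-rule decomposition followed by two invocations of the linear-PSR representation, plus a trivial stacking step.
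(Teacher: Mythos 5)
Your proof is correct and follows essentially the same route as the paper's: a single Bayes-rule decomposition of $\PP(t \mid \tau^a_h, a, o)$ into the ratio of the extended test $(o, t^{\Ocal}; a, t^{\Acal})$ over the length-one test $(o)$, followed by two invocations of the linear-PSR representation from \pref{def:psrs} and entrywise stacking. Your explicit remark that $\PP(o_h = o \mid \tau^a_h; \mathrm{do}(a_h = a)) = \PP(o_h = o \mid \tau^a_h)$ is a point the paper uses only implicitly, and making it explicit is a fine addition.
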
 
\begin{proof}
The proof is an application of Bayes's rule. We denote the observation part of $t$ by $t^{\Ocal}$ and the action part of $t^{\Acal}$, respectively. We have 
\begin{align*}
    \PP(t \mid \tau^a_h ,a,o) &=     \PP(t^{\Ocal} \mid \tau^a_h ,o;  \text{do}(a,t^{\Acal}))  \tag{by definition}\\
    &=    \frac{\PP(o,t^{\Ocal} \mid \tau^a_h ; \text{do}(a,t^{\Acal}) )}{\PP(o;\tau^a_h ) }  \tag{Bayes rule}\\
    & = \mathbf{m}^{\top}_{o,a,t} \mathbf{q}_{\tau^a_h }/ \mathbf{m}^{\top}_{o} \mathbf{q}_{\tau^a_h }. \tag{by definition}
\end{align*} 
This concludes the proof. 
\end{proof}

To further understand that why PSR generalizes POMDP, let us re-visit the undercomplete POMDPs (i.e., $\OO$ being full column rank) again. Set $\Tcal = \Ocal$. 
As we see in the proof of \pref{lem:undercomplete_pomdps}, the belief state ${\bf s}_{\tau}\in \Delta(\Scal)$ together with $\OO$ defines predictive state, i.e., ${\bf q}_{\tau^a_h} = \OO {\bf s}_{\tau^a_h}$, with $M_{o,a} =    \OO   \TT_{a} \diag(\OO(o | \cdot )) \OO^\dagger$, 
and ${\bf m}^\top_{o} =    \one^{\top} \diag(\OO(o|\cdot ))  \OO^\dagger$.
Note that in POMDPs, matrix $M_{o,a}$ and vector $\mathbf{m}_{o}$ all contain non-negative entries. On other hand, in PSRs, $M_{a,o}$ and $\mathbf{m}_{a,o}$ could contain negative entries. This is the intuitive reason why PSRs are more expressive than POMDPs \citep{littman2001predictive}. For the formal instance of a finite-dimensional PSR which cannot be expressed as a finite-dimensional POMDP, refer to \citep{singh2004predictive,jaeger1998discrete}.

\subsection{Existence of link functions}

We discuss the existence and the form of link functions. First, we define general value link functions with multi-step futures. For notational simplicity, we assume here that the tests $t\in \Tcal$ have the same length, i.e., there is a $K\in \NN^+$, such that $\Tcal \subset \Ocal^K \times \Acal^{K-1}$.

\begin{definition}[General value link functions in dynamical systems]
Recall $\Tcal\subset \Ocal^K \times \Acal^{K-1}$ is the set of tests. 
At time step $h$, general value link functions $g^{\pi}_h:\Zcal_{h-1} \times \Ocal^{K}\Acal^{K-1} \to \RR$ %
are defined as solutions to the following: 
\begin{align}\label{eq:new_definition_appe}
\Vcal^{\pi}_h(\tau^a_h ) =  \EE[g^{\pi}_h(z_{h-1}, o_{h:h+K-1},a_{h:h+K-2})  \mid \tau^a_h ; (a_{h:h+K-2})  \sim \rho^{out} ]. 
\end{align} 
where $\rho^{out} $ is some distribution over the action set $\Tcal^{\Acal}$ induced by the test set, i.e., $\{t^{\Acal}: t\in \Tcal \}$. Here, for $t= (o_{h:h+K-1},a_{h:h+K-2} )$, we often denote $o_{h:h+K-1}$ and $a_{h:h+K-2}$   by  $t^{\Ocal}$ and $t^{\Acal}$, respectively.   %
\end{definition}

To show the existence of general value link functions for PSRs, we first study the format of value functions in PSRs. 
The following lemma states that value functions for $M$-memory policies have bilinear forms. 

\begin{lemma}[Bilinear form of value functions for $M$-memory policies]
Let $\phi(\cdot) \in \RR^{|\Zcal_{h-1}| }$ be a one-hot encoding vector over $\Zcal_{h-1}$. Suppose $\Tcal$ is a core test set. Then, for any $M$-memory policy $\pi$, there exists $\mathbb{J}^{\pi}_h \in \RR^{|\Zcal_{h-1}|\times |\Tcal|}$ such that 
\begin{align*}
\Vcal^{\pi}_h(\tau^a_h)= \phi^{\top}(z_{h-1})\mathbb{J}^{\pi}_h {\bf q}_{\tau^a_h}.
\end{align*}\label{lem:value_function_bilinear}
\end{lemma}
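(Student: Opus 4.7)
The plan is to expand $\Vcal^\pi_h(\tau^a_h)$ into a sum over future trajectories, isolate the factor that depends on $\tau^a_h$ only through the vector $\mathbf{q}_{\tau^a_h}$ (via the defining linearity of PSRs), and collect everything else into a coefficient vector that depends on $\tau^a_h$ only through the $M$-memory $z_{h-1}$.

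First I would write
\[
\Vcal^\pi_h(\tau^a_h) \;=\; \sum_{a_{h:H},\,o_{h:H}} \PP\!\bigl(o_{h:H}\mid \tau^a_h;\,\mathrm{do}(a_{h:H-1})\bigr)\;\xi^\pi_h\!\bigl(z_{h-1}, o_{h:H}, a_{h:H}\bigr),
\]
where
\[
\xi^\pi_h(z_{h-1}, o_{h:H}, a_{h:H}) \;:=\; \Bigl(\prod_{t=h}^{H}\pi_t(a_t\mid \bar z_t)\Bigr)\,\sum_{t=h}^{H} r_t(o_t,a_t).
\]
The crucial point here is that because $\pi$ is an $M$-memory policy and $r_t$ is a function of $(o_t,a_t)$, the factor $\xi^\pi_h$ depends on the past history $\tau^a_h$ only through $z_{h-1}$ (which fixes $\bar z_t$ for $h\le t<h+M$; for $t\ge h+M$, $\bar z_t$ is already determined by the future variables). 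Thus all of the dependence of the summand on the full history $\tau^a_h$ beyond $z_{h-1}$ sits in the transition probability factor.

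Next, since $\Tcal$ is a core test set, Definition~\ref{def:psrs} gives, for each future $(o_{h:H}, a_{h:H-1})$, a vector $m_{(o_{h:H},a_{h:H-1})}\in\RR^{|\Tcal|}$ with
\[
\PP\!\bigl(o_{h:H}\mid \tau^a_h;\,\mathrm{do}(a_{h:H-1})\bigr) \;=\; \bigl\langle m_{(o_{h:H},a_{h:H-1})},\, \mathbf{q}_{\tau^a_h}\bigr\rangle.
\]
Substituting and exchanging sum with inner product,
\[
\Vcal^\pi_h(\tau^a_h) \;=\; \Bigl\langle v^\pi_h(z_{h-1}),\, \mathbf{q}_{\tau^a_h}\Bigr\rangle, \qquad v^\pi_h(z_{h-1}) := \sum_{a_{h:H},\,o_{h:H}} m_{(o_{h:H},a_{h:H-1})}\,\xi^\pi_h(z_{h-1},o_{h:H},a_{h:H}).
\]
Finally, I would define the matrix $\mathbb{J}^\pi_h\in\RR^{|\Zcal_{h-1}|\times|\Tcal|}$ whose row indexed by $z\in\Zcal_{h-1}$ is $v^\pi_h(z)^\top$. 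Since $\phi(z_{h-1})$ is the one-hot encoding of $z_{h-1}$, we get $\phi^\top(z_{h-1})\mathbb{J}^\pi_h = v^\pi_h(z_{h-1})^\top$, which yields the claimed bilinear form.

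I do not expect any serious obstacle: the only thing to be careful about is keeping track of which variables $\xi^\pi_h$ actually depends on (in particular, verifying that the only ``past'' dependence is through $z_{h-1}$ because of the $M$-memory structure), and ensuring the reward and policy factors can be written as functions of $(o_{h:H},a_{h:H})$ alone once $z_{h-1}$ is fixed. Writing this out formally (perhaps by splitting the sum into the first $M$ steps and the rest, so that each $\bar z_t$ is explicitly a function of $z_{h-1}$ and the future) would make the argument airtight.
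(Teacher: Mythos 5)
Your proof is correct, but it takes a genuinely different route from the paper's. You expand $\Vcal^\pi_h$ directly as a sum over all length-$(H-h+1)$ futures, apply the core-test property (Definition~\ref{def:psrs}) to each full future $(o_{h:H},a_{h:H-1})$ in one shot, and then observe that the remaining factor $\xi^\pi_h$ depends on the past only through $z_{h-1}$ because $\pi$ is $M$-memory and the reward is a function of $(o_t,a_t)$. The paper instead argues by backward induction on $h$: it assumes the bilinear form at step $h+1$, applies the Bellman equation, and uses the forward update rule of predictive states $\mathbf{q}_{\tau^a_h,a,o}=M_{o,a}\mathbf{q}_{\tau^a_h}/\PP(o\mid\tau^a_h)$ (Lemma~\ref{lem:forward_dynamics}) to push the bilinear form back one step, explicitly constructing matrices $\mathbb{J}^\pi_1,\mathbb{J}^\pi_2$ for the reward and continuation terms. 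Your argument is more elementary---it needs only the defining linearity of core tests for arbitrarily long futures, which Definition~\ref{def:psrs} grants for any $W\in\NN^+$, and no Bayes-rule machinery---whereas the paper's inductive construction pays off later: the same matrices $\mathbb{J}^\pi_1,\mathbb{J}^\pi_2$ and the fact that the $\pi$-induced Bellman backup maps bilinear forms to bilinear forms are reused verbatim in the PO-bilinear rank decomposition for PSRs. The only point to write out carefully is the one you already flagged: for $t\ge h$ the window $\bar z_t$ is a function of $(z_{h-1}, o_{h:t}, a_{h:t-1})$ because $\max(t-M,1)\ge\max(h-M,1)$, so every index the window needs lies either inside $z_{h-1}$ or in the future; with that spelled out, the argument is airtight.
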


\begin{proof}
From \pref{lem:forward_dynamics}, there exists a matrix $M_{o,a} \in \RR^{|\Tcal  |\times |\Tcal |  }$ such that via Bayes rule:
\begin{align}\label{eq:key}
{\bf q}_{\tau^a_h,a,o} = M_{o,a} {\bf q}_{\tau^a_h} / \PP(o | \tau^a_h).
\end{align}

We use induction to prove the claim. Here, the base argument clearly holds. Thus, we assume 
\begin{align*}
\Vcal^{\pi}_{h+1}(\tau^a_{h+1})= \phi^{\top}(z_{h})\mathbb{J}^{\pi}_{h+1} {\bf q}_{\tau^a_{h+1}}.
\end{align*}
We have %

\begin{align*}
\Vcal^{\pi}_h(\tau^a_h)    &= \EE[r_h + \Vcal^{\pi}_{h+1}(\tau^a_h , o_h, a_h) \mid \tau^a_h ;  a_h \sim \pi(\bar z_h)] \\
     &= \underbrace{\sum_{o_h,a_h}\PP(o_h \mid \tau^a_h) \pi_h(a_h\mid o_h, z_{h-1}) r(o_h,a_h)}_{(a)}  \\  
     &+ \underbrace{\sum_{o_h,a_h} \PP(o_h \mid \tau^a_h)\pi_h(a_h\mid o_h, z_{h-1})  \{\phi^{\top}(z_{h})\mathbb{J}^{\pi}_{h+1} {\bf q}_{\tau^a_h,o_h,a_h} \} }_{(b)}. 
\end{align*}

Note we use the assumption that the reward is a function of $o_h,a_h$ conditional on $(\tau^a_h,o_h,a_h)$. 

We first check the first term (a) that contains rewards. Using the fact that $\PP(o | \tau^a_h) = {\bf m}_o^\top {\bf q}_{\tau^a_h}$, this is equal to 
\begin{align*}
    \sum_{o_h,a_h}\langle {\bf m}_{o_h},  \mathbf{q}_{\tau^a_h} \rangle \pi_h(a_h\mid o_h, z_{h-1}) r(o_h,a_h) = \langle \sum_{o_h,a_h} {\bf m}_{o_h}\pi_h(a_h\mid o_h, z_{h-1}) r(o_h,a_h) , \mathbf{q}_{\tau^a_h} \rangle. 
\end{align*}
Thus, it has a bilinear form, i.e., there exists some matrix $\mathbb{J}^{\pi}_1$ such that 
$$ \langle \sum_{o_h,a_h} m_{o_h,a_h}\pi_h(a_h\mid o_h, z_{h-1}) r_h , \mathbf{q}_{\tau^a_h} \rangle= \phi^{\top}(z_{h-1})\mathbb{J}^{\pi}_1\mathbf{q}_{\tau^a_h}$$ 
where $\mathbb{J}^{\pi}_1$ is a matrix whose row indexed by $z_{h-1}$ is equal to $\sum_{o, a}{\bf m}_{o }^{\top} \pi_h(a | o, z_{h-1}) r(o,a)$.

Next, we see the second term (b). Using \eqref{eq:key},  the second term is equal to 
\begin{align*}
    \sum_{o_h,a_h}  \pi_h(a_h\mid o_h, z_{h-1})\phi^{\top}(z_{h-1} \oplus o_h,a_h)\mathbb{J}^{\pi}_{h+1} M_{o_h,a_h} \mathbf{q}_{\tau^a_h}
\end{align*} where we use the notation $z_{h-1} \oplus o, a$ to represent the operation of appending $(o,a)$ pair to the memory while maintaining the proper length of the memory by truncating away the oldest observation-action pair. 
Thus, it has an again bilinear form $ \phi(z_{h-1})^{\top} \mathbb{J}^{\pi}_2 {\bf q}_{\tau^a_h}$ and the matrix $\mathbb{J}^{\pi}_2$ can be defined such that its row indexed by $z_{h-1}$ is equal to $\sum_{o,a} \pi_h(a | o, z_{h-1})  \phi^\top(z_{h-1} \oplus o,a) M^\pi_{h+1} M_{a,o}$. 
This concludes the proof. 
\end{proof}

Next, we check sufficient conditions to ensure the existence of general K-step link functions. Given $\Tcal$, we define the corresponding set of action sequences $\Tcal^{\Acal}$ as $\Tcal^{\Acal} := \{ t^{\Acal}:t\in\Tcal \}$. We set $\rho^{out}$ in \pref{eq:new_definition_appe} to be a uniform distribution over the set $\Tcal^{\Acal}$ denoted by $\Ucal(\Tcal^{\Acal})$. Namely, $\Ucal(\Tcal^{\Acal})$ will uniformly randomly select a sequence of test actions from $\Tcal^{\Acal}$. 

\begin{lemma}[Existence of link functions in PSRs] \label{lem:existence_psr}
Suppose  $\Tcal$ is a core test. 
There exists $g^{\pi}_h:\Zcal_{h-1} \times \Tcal$ such that 
\begin{align*}
\EE[g^{\pi}_h(z_{h-1},  o_{h:h+K-1},  a_{h:h+K-2}) \mid \tau^a_h;  a_{h:h+K-2} \sim \Ucal(\Tcal^{\Acal}) ]      = \Vcal^{\pi}_h(\tau^a_h).
\end{align*}
\end{lemma}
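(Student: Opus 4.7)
The plan is to combine the bilinear form of the value function (Lemma~\ref{lem:value_function_bilinear}) with an importance-weighted indicator that gives an unbiased one-sample estimate of the predictive state $\mathbf{q}_{\tau^a_h}$ under the randomization $a_{h:h+K-2}\sim \Ucal(\Tcal^{\Acal})$. Specifically, by Lemma~\ref{lem:value_function_bilinear}, there exists a matrix $\mathbb{J}^{\pi}_h \in \RR^{|\Zcal_{h-1}|\times |\Tcal|}$ such that
\begin{align*}
\Vcal^{\pi}_h(\tau^a_h) = \phi^{\top}(z_{h-1})\,\mathbb{J}^{\pi}_h\,\mathbf{q}_{\tau^a_h}
= \sum_{t\in\Tcal}\bigl[\phi^{\top}(z_{h-1})\mathbb{J}^{\pi}_h\bigr]_t \, \PP(t\mid \tau^a_h),
\end{align*}
where $\phi(z_{h-1})$ denotes the one-hot encoding over $\Zcal_{h-1}$. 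So, after unfolding the bilinear form, the task reduces to writing each coordinate $\PP(t\mid\tau^a_h)$ of the predictive state as a conditional expectation of an observable statistic under the roll-out $a_{h:h+K-2}\sim \Ucal(\Tcal^{\Acal})$.

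Next I would verify the key unbiasedness identity: for each test $t=(t^{\Ocal},t^{\Acal})\in \Tcal$,
\begin{align*}
\EE\bigl[|\Tcal^{\Acal}|\cdot \mathbf{1}\{(o_{h:h+K-1},a_{h:h+K-2}) = t\} \,\big|\, \tau^a_h;\, a_{h:h+K-2}\sim \Ucal(\Tcal^{\Acal})\bigr]
= \PP(t\mid \tau^a_h),
\end{align*}
which follows because $\Ucal(\Tcal^{\Acal})$ assigns mass $1/|\Tcal^{\Acal}|$ to each action sequence in $\Tcal^{\Acal}\supseteq \{t^{\Acal}\}$, and conditioned on $a_{h:h+K-2}=t^{\Acal}$ the probability of observing $t^{\Ocal}$ is exactly $\PP(t\mid \tau^a_h)$ by the definition of the predictive state.

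Given this, the natural candidate link function is
\begin{align*}
g^{\pi}_h(z_{h-1}, o_{h:h+K-1}, a_{h:h+K-2}) := |\Tcal^{\Acal}|\cdot \phi^{\top}(z_{h-1})\,\mathbb{J}^{\pi}_h\,\bigl[\mathbf{1}\{t = (o_{h:h+K-1},a_{h:h+K-2})\}\bigr]_{t\in\Tcal}.
\end{align*}
Taking expectation under the uniform roll-out, linearity together with the unbiasedness identity yields
\begin{align*}
\EE[g^{\pi}_h(z_{h-1}, o_{h:h+K-1}, a_{h:h+K-2})\mid \tau^a_h; a_{h:h+K-2}\sim \Ucal(\Tcal^{\Acal})]
= \phi^{\top}(z_{h-1})\mathbb{J}^{\pi}_h\mathbf{q}_{\tau^a_h} = \Vcal^{\pi}_h(\tau^a_h),
\end{align*}
which is the defining equation of a value link function.

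There is no real obstacle here beyond bookkeeping: the bilinear representation from Lemma~\ref{lem:value_function_bilinear} already absorbs all PSR-specific structure, and the only subtlety is that the mapping $t\mapsto t^{\Acal}$ need not be injective, but since each indicator pins down the full pair $(t^{\Ocal},t^{\Acal})$ the estimator remains unambiguous. If one wants a uniform boundedness estimate on $\|g^{\pi}_h\|_\infty$ for later use in the sample complexity arguments, it suffices to note that $|g^{\pi}_h|\le |\Tcal^{\Acal}|\cdot \|\mathbb{J}^{\pi}_h\|_{\max}$, which follows directly from the construction.
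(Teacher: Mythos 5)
Your proposal is correct and follows essentially the same route as the paper's proof: both construct the importance-weighted indicator estimator $|\Tcal^{\Acal}|\cdot\one\{(o_{h:h+K-1},a_{h:h+K-2})=t\}$ as an unbiased one-sample estimate of each coordinate of the predictive state $\mathbf{q}_{\tau^a_h}$ under $a_{h:h+K-2}\sim\Ucal(\Tcal^{\Acal})$, and then push it through the bilinear representation $\Vcal^{\pi}_h(\tau^a_h)=\phi^{\top}(z_{h-1})\mathbb{J}^{\pi}_h\mathbf{q}_{\tau^a_h}$ from the preceding lemma. No gaps; the argument matches the paper's.
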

\begin{proof}
We mainly need to design an unbiased estimator of the predictive state ${\bf q}_{\tau^a_h}$. We use importance weighting to do that. Given $a_{h:h+K-2}\sim  \Ucal(\Tcal^{\Acal}) $, and the resulting corresponding random observations $o_{h:h+K-1}$, we define the following estimator $\hat {\bf q}_{\tau^a_h}(o_{h:h+K-1},a_{h:h+K-2}) \in \RR^{ |\Tcal| }$, such that its entry indexed by a test $t\in\Tcal$ is equal to:
\begin{align*}
\hat {\bf q}_{\tau^a_h}(o_{h:h+K-1},a_{h:h+K-2})[t] = \frac{ \one( t^{\Ocal} = o_{h:h+K-1}, t^\Acal = a_{h:h+K-2} )}{ 1/ |\Tcal^\Acal | }.
\end{align*} 
We can verify that 
\begin{align*}
    &\EE[ \hat {\bf q}_{\tau^a_h}(o_{h:h+K-1},a_{h:h+K-2})[t] \mid \tau^a_h; a_{h:h+K-2}\sim  \Ucal(\Tcal^{\Acal}) ]\\
    &= 1/ |\Tcal^\Acal |\EE[\one( t^{\Ocal} = o_{h:h+K-1}, t^\Acal = a_{h:h+K-2} ) \mid \tau^a_h; a_{h:h+K-2}\sim  \Ucal(\Tcal^{\Acal}) ] \\
    &= \EE[\one( t^{\Ocal} = o_{h:h+K-1}, t^\Acal = a_{h:h+K-2} ) \mid \tau^a_h; a_{h:h+K-2}\sim do(t^\Acal) ] = {\bf q}_{\tau^a_h}[t]. 
\end{align*}
Then, 
\begin{align*}
\EE[ \hat {\bf q}_{\tau^a_h}(o_{h:h+K-1},a_{h:h+K-2})\mid \tau^a_h; a_{h:h+K-2}\sim  \Ucal(\Tcal^{\Acal}) ] = {\bf q}_{\tau^a_h}. 
\end{align*}
With this estimator, now we can define the link function using the bilinear form of $\Vcal^\pi_h(\tau)$, i.e., 
\begin{align*}
g^\pi_h(z_{h-1}, o_{h:h+K-1}, a_{h:h+K-2}) = \phi(z_{h-1})^{\top} \mathbb{J}^{\pi}_h {\hat {\bf q}_{\tau^a_h}}(o_{h:h+K-1},a_{h:h+K-2}).
\end{align*} Using the fact that $\hat {\bf q}_{\tau^a_h}(o_{h:h+K-1},a_{h:h+K-2})$ is an unbiased estimate of ${\bf q}_{\tau^a_h}$, we can conclude the proof. 
\end{proof}

Since PSR models capture POMDP models, our above result directly implies the existence of the link functions in observable POMDPs as well by using obtained facts in Example~ \ref{exa:undercomplete_psrs} and \ref{exa:overcomplete_psrs} .

\subsection{PO-Bilinear Rank Decomoposition}

Finally, we calculate the PO-bilinear rank.  Here, $$g_h \in \{\Zcal_{h-1} \times (\Ocal^K\Acal^{K-1}) \ni (z_{h-1}\times t) \mapsto \phi(z_{h-1})^{\top} \mathbb{J}^{\pi}_h  {\hat {\bf q}_{\tau^a_h}}(t) \in \RR: \mathbb{J}^{\pi}_h\in \mathbb{R}^{\Zcal_{h-1}\times  |\Tcal|}\}.$$ 

The Bellman error for $(g,\pi)$ under a roll-in $\pi'$ denoted by $   \mathrm{Br}_h(g,\pi;\pi')$ is defined as 
\begin{align*}
    &-\EE[\EE[g_{h+1}(z_{h},t^{\Acal}_{h+1},t^{\Ocal}_{h+1}) \mid \tau^a_{h+1};  t^{\Acal}_{h+1} \sim U(\Tcal^{\Acal}_{h+1})  ] +r_h ; a_{1:h-1}\sim \pi', a_h \sim \pi  ] \\
    &+ \EE[g_h(z_{h-1},t^{\Acal}_h,t^{\Ocal}_h) \mid \tau^a_h   ;  t^{\Acal}_h \sim U(\Tcal^{\Acal}_h)  ];a_{1:h-1}\sim \pi', a_h \sim \pi  ]. 
\end{align*}
In fact, $\mathrm{Br}_h(g,\pi;\pi')=0$ for any general value link functions $g^{\pi}$.  

Our goal is to design a loss function $l_h(\cdot)$ such that we can estimate the above Bellman error $\mathrm{Br}_h(g, \pi;\pi')$ using data from a \emph{single} policy. To do that, we design the following randomized action selection strategy.  

Given a action sequence $t^\Acal$ from a test $t$, let us denote $\bar t^{\Acal}$ as a copy of $t^\Acal$ but starting from the second action of $t^\Acal$, i.e., if $t^\Acal = \{a_1, a_2, a_3\}$, then $\bar t^\Acal = \{a_2, a_3\}$. Denote $\bar \Tcal^\Acal = \{\bar t^\Acal: t \in \Tcal \}$. Our random action selection strategy first selects $a_h \sim U(\Acal)$ uniformly randomly from $\Acal$, and then select a sequence of actions $\bar{ \bf a}$ uniformly randomly from $\Tcal^\Acal \cup \bar \Tcal^\Acal$. Here, we remark the length of outputs is not fixed (i.e., $\bar {\bf a} \in \Tcal^\Acal$ has length larger than the $\bar {\bf a} \in \bar \Tcal^\Acal$). 

As a first step, we define two unbiased estimators for ${\bf q}_{\tau^a_h}$ and ${\bf q}_{\tau^a_{h+1}}$. Conditioning on history $\tau^a_h $, given actions $a_h \sim U(\Acal)$ followed by action sequence $\bar {\bf a}_{h+1} \sim U(\Tcal^\Acal\cup \bar \Tcal^\Acal)$, denote the corresponding observations as $o_h, o_{h+1}, \dots o_{h + |\bar {\bf a}_{h+1}| + 1}$. We construct unbiased estimators for ${\bf q}_{\tau^a_h}$ and ${\bf q}_{\tau^a_{h+1}}$ as follows. As an unbiased estimator of ${\bf q}_{\tau^a_h}$, we define $\hat {\bf q}_{\tau^a_h}$ with the entry indexed by test $t'\in \Tcal$ as follows: 
\begin{align}\label{eq:length1}
\hat {\bf q}_{\tau^a_h}( a_h, \bar {\bf a}_{h+1}, o_{h:h+|\bar {\bf a}_{h+1}| + 1} )[t'] = \frac{ \one(  \bar{\bf a}_{h+1} \in \bar \Tcal^\Acal, (a_h,\bar {\bf a}_{h+1}) = t'^\Acal, o_{h:h+|\bar {\bf a}_{h+1}| + 1} = t'^\Ocal  )   }{ 1/(2 |\Acal| |\Tcal^\Acal | ) }.
\end{align}
Similarly, as an unbiased estimator of ${\bf q}_{\tau^a_{h+1}}$, we define $\hat {\bf q}_{\tau^a_{h+1}}$ with the entry indexed by test $t' \in\Tcal$ as follows:
\begin{align}\label{eq:length2}
\hat {\bf q}_{\tau^a_{h+1}}(a_h, \bar{\bf a}_{h+1}, o_{h+1:h+|\bar {\bf a}_{h+1}| + 1} )[t'] = \frac{\one( \bar {\bf a}_{h+1} \in \Tcal^\Acal, \bar {\bf a}_{h+1} = t'^\Acal, o_{h+1:h+|\bar {\bf a}_{h+1}| + 1} = t'^{\Ocal} )}{ 1/ (2 |\Tcal^\Acal | ) }
\end{align}
We remark the length of $\bar {\bf a}$ in \pref{eq:length1} and the one of \pref{eq:length2} are different.

Then, by using importance sampling, we can verify  %
\begin{align*}
    \EE[  \hat {\bf q}_{\tau^a_h}( a_h, \bar {\bf a}_{h+1}, o_{h:h+|\bar {\bf a}_{h+1}| + 1} ) |\tau^a_h; a_h \sim U(\Acal), \bar {\bf a}_{h+1} \sim U(\Tcal^{\Acal}  \cup \bar\Tcal^\Acal)   ] & = {\bf q}_{\tau^a_h}, \\
   \EE[  \hat {\bf q}_{\tau^a_{h+1}}( \bar {\bf a}_{h+1}, o_{h+1:h+|\bar {\bf a}_{h+1}| + 1} )  | \tau^a_{h+1}; \bar {\bf a}_{h+1} \sim U(\Tcal^{\Acal}  \cup \bar\Tcal^\Acal)] &= {\bf q}_{\tau^a_{h+1}}. 
\end{align*}

 With the above setup, we can construct the loss function $l$ for estimating the Bellman error.  We set the loss as follows:
 \begin{align}\label{eq:loss_psr}
 &l_h( {z}_{h-1}, a_h, r_h, \bar{\bf a}_{h+1}, o_{h:h+|\bar {\bf a}_{h+1} | + 1} ; \pi, g ) \\
 &=  \phi(z_{h-1})^\top \mathbb{J}_h \hat {\bf q}_{\tau^a_h}(a_h, \bar{\bf a}_{h+1}, o_{h:h+|\bar{\bf a}_{h+1}|+1})  \nonumber \\
 &-\frac{ \one\{ a_h = \pi_h( \bar z_h )\}  }{1/|\Acal|} \left( r_h + \phi(z_h)^{\top} \mathbb{J}_{h+1} \hat {\bf q}_{\tau^a_{h+1}}( \bar {\bf a}_{h+1}, o_{h+1:h+|\bar{\bf a}_{h+1}| + 1})\right). \nonumber
 \end{align}
Since we have shown that $\hat {\bf q}_{\tau^a_h}$ and $\hat {\bf q}_{\tau^a_{h+1}}$ are unbiased estimators of ${\bf q}_\tau$ and ${\bf q}_{\tau^a_{h+1}}$, respectively, we can show that for any roll-in policy $\pi'$:
\begin{align*}
&\mathrm{Br}_h(\pi, g; \pi')\\
&= - \EE[\EE[g_{h+1}(z_{h},t^{\Acal}_{h+1},t^{\Ocal}_{h+1}) \mid \tau^a_{h+1};  t^{\Acal}_{h+1} \sim U(\Tcal^{\Acal}_{h+1})  ] + r_h ;a_{1:h-1}\sim \pi', a_h \sim \pi  ] \\
    &+ \EE[g_h(z_{h-1},t^{\Acal}_h,t^{\Ocal}_h) \mid \tau^a_h   ;  t^{\Acal}_h \sim U(\Tcal^{\Acal}_h)  ];a_{1:h-1}\sim \pi', a_h \sim \pi  ]  \\
&= \EE[ -\phi(z_{h})^{\top} \mathbb{J}^{\pi}_{h+1} {\bf q}_{\tau^a_{h+1}} -r_h +\phi(z_{h-1})^{\top} \mathbb{J}^{\pi}_{h} {\bf q}_{\tau^a_h } ;a_{1:h-1}\sim \pi', a_h \sim \pi   ]  \\ 
&=\EE\left[  l_h( z_{h-1}, a_h, r_h, \bar{\bf a}_{h+1}, o_{h:h+|\bar{\bf a}_{h+1}| +1}; \pi, g)     ; a_{1:h-1} \sim \pi',  a_h \sim U(\Acal), \bar{\bf a}_{h+1} \sim U(\Tcal^\Acal \cup \bar \Tcal^\Acal)    \right]. 
\end{align*}
The above shows that we can use $l_h(\cdot)$ as a loss function. 

\paragraph{Summary} We can use the almost similar algorithm as \pref{alg:OACP}. The sole difference is we need to replace $\sigma^t_h(\pi,g)$ with 
\begin{align*} 
    \EE_{\Dcal^t_h}\left[ l_h( z_{h-1}, a_h, r_h, \bar{\bf a}_{h+1}, o_{h:h+|\bar{\bf a}_{h+1}| +1}; \pi, g)     ; a_{1:h-1} \sim \pi',  a_h \sim U(\Acal), \bar{\bf a}_{h+1} \sim U(\Tcal^\Acal \cup \bar \Tcal^\Acal)    \right] 
\end{align*}
where $\Dcal^t_h$ is an empirical approximation when executing $a_{1:h-1} \sim \pi^t,  a_h \sim U(\Acal), \bar{\bf a}_{h+1} \sim U(\Tcal^\Acal \cup \bar \Tcal^\Acal) $. 

\paragraph{Calculation of PO-bilinear rank}

Finally,  we prove a PSR belongs to the PO-bilinear class. 
\begin{lemma}[PO-bilinear decomposition]
Let $\Qcal$ be a minimum core test set contained in $\Tcal$. The PSR model has PO-bilinear rank at most $|\Ocal|^M |\Acal|^M|\Qcal|$, i.e., there exists two $|\Ocal|^M |\Acal|^M |\Qcal|$-dimensional mappings $W_h:\Pi \times \Gcal \to \RR^{{|\Ocal|^M |\Acal|^M|\Qcal|}}$ and $X_h:\Pi \to  \RR^{{|\Ocal|^M |\Acal|^M|\Qcal|}}$ such that for any tripe $(\pi, g; \pi')$, we have:
\begin{align*}
\mathrm{Br}_h(\pi, g; \pi') &=\EE\left[   \phi(z_{h-1})^\top \mathbb{J}_h {\bf q}_{\tau^a_h} - r_h - \phi(z_h)^\top \mathbb{J}_{h+1}  {\bf q}_{\tau^a_{h+1}}         ; a_{1:h-1} \sim \pi' , a_h \sim \pi \right]  \\ 
&= \left\langle  X_h(\pi'), W_h(\pi, g) \right\rangle. 
\end{align*}
\end{lemma}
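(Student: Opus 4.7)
The goal is to exhibit mappings $W_h(\pi,g), X_h(\pi') \in \RR^{|\Ocal|^M|\Acal|^M|\Qcal|}$ whose inner product equals $\mathrm{Br}_h(\pi,g;\pi')$. My strategy has two main steps: first, condition on the prefix $\tau^a_h$ to reduce the three-term Bellman error into a single linear functional of the full-$\Tcal$ predictive state ${\bf q}_{\tau^a_h}$ whose coefficient vector is indexed by $z_{h-1}$; second, use minimality of $\Qcal \subset \Tcal$ to compress this coefficient from $\RR^{|\Tcal|}$ to $\RR^{|\Qcal|}$, at which point the bilinear structure is immediate.

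For the first step, I would apply the forward-dynamics identity $\PP(o_h \mid \tau^a_h)\, {\bf q}_{\tau^a_{h+1}} = M_{o_h,a_h} {\bf q}_{\tau^a_h}$ from \pref{lem:forward_dynamics} together with $\PP(o_h \mid \tau^a_h) = {\bf m}_{o_h}^\top {\bf q}_{\tau^a_h}$ and the fact that the reward is a deterministic function of $(o_h,a_h)$ conditional on $\tau^a_{h+1}$. Taking the inner expectation over $(o_h,a_h) \sim (\OO(\cdot \mid \tau^a_h), \pi_h(\cdot \mid \bar z_h))$ converts each of the three terms inside $\mathrm{Br}_h$ into $C_\bullet(z_{h-1})^\top {\bf q}_{\tau^a_h}$ for some $|\Tcal|$-vector depending only on $z_{h-1}$ and $(\pi,g)$: the $\phi(z_{h-1})^\top \mathbb{J}_h {\bf q}_{\tau^a_h}$ term is already of this form; the reward term becomes $\bigl(\sum_{o,a}\pi_h(a\mid z_{h-1},o)r(o,a){\bf m}_o\bigr)^\top {\bf q}_{\tau^a_h}$; and the next-step term becomes $\bigl(\sum_{o,a}\pi_h(a\mid z_{h-1},o) M_{o,a}^\top \mathbb{J}_{h+1}^\top \phi(z_{h-1}\oplus (o,a))\bigr)^\top {\bf q}_{\tau^a_h}$. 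Summing, I obtain a single $\Delta_{\pi,g}(z_{h-1}) \in \RR^{|\Tcal|}$ with
\begin{equation*}
\mathrm{Br}_h(\pi,g;\pi') = \EE\bigl[\,\Delta_{\pi,g}(z_{h-1})^\top {\bf q}_{\tau^a_h}\,;\, a_{1:h-1}\sim \pi'\bigr].
\end{equation*}

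For the second step, since $\Qcal \subseteq \Tcal$ is itself a core test set (being a minimum core test set), \pref{def:psrs} applied test-by-test to the entries of ${\bf q}_{\tau^a_h}$ yields a fixed $|\Tcal|\times|\Qcal|$ matrix $M_{\Tcal,\Qcal}$, independent of history, with ${\bf q}_{\tau^a_h} = M_{\Tcal,\Qcal}\,{\bf q}^{\Qcal}_{\tau^a_h}$. Setting $\tilde\Delta_{\pi,g}(z) := M_{\Tcal,\Qcal}^\top \Delta_{\pi,g}(z) \in \RR^{|\Qcal|}$ and defining, for $(z,t)\in\Zcal_{h-1}\times \Qcal$,
\begin{equation*}
W_h(\pi,g)_{(z,t)} := \tilde\Delta_{\pi,g}(z)[t], \qquad X_h(\pi')_{(z,t)} := \EE\bigl[\one\{z_{h-1}=z\}\,\PP(t\mid \tau^a_h)\,;\,a_{1:h-1}\sim\pi'\bigr],
\end{equation*}
the identity $\mathrm{Br}_h(\pi,g;\pi') = \langle W_h(\pi,g),X_h(\pi')\rangle$ follows by expanding the outer expectation over $z_{h-1}$. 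Since $|\Zcal_{h-1}|\le(|\Ocal||\Acal|)^M$ and $X_h,W_h\in\RR^{|\Zcal_{h-1}|\cdot|\Qcal|}$, the PO-bilinear rank is at most $|\Ocal|^M|\Acal|^M|\Qcal|$.

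The only mildly subtle part is the bookkeeping in Step 1, where the dependence of $\pi_h(\cdot\mid\bar z_h)$ on $o_h$ and the update $z_h = z_{h-1}\oplus(o_h,a_h)$ must be absorbed \emph{inside} the coefficient vector so that the result depends on $(z_{h-1},{\bf q}_{\tau^a_h})$ only; this is what forces the $|\Zcal_{h-1}|$ factor in the rank and is precisely the reason the bound contains $(|\Ocal||\Acal|)^M$ rather than scaling with $|\Tcal|$. The compression in Step 2 is a purely linear-algebraic consequence of $\Qcal$ being a core test set and does not require any probabilistic argument.
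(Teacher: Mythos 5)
Your proposal is correct and follows essentially the same route as the paper's proof: condition on $\tau^a_h$, use the forward-dynamics identity $\PP(o_h\mid\tau^a_h)\,{\bf q}_{\tau^a_{h+1}} = M_{o_h,a_h}{\bf q}_{\tau^a_h}$ and $\PP(o\mid\tau^a_h)={\bf m}_o^\top{\bf q}_{\tau^a_h}$ to collapse all three terms into a single $z_{h-1}$-indexed linear functional of ${\bf q}_{\tau^a_h}$, then compress through the fixed $|\Tcal|\times|\Qcal|$ matrix guaranteed by $\Qcal$ being a core test set. Your $\tilde\Delta_{\pi,g}$ and the $(z,t)$-indexed $W_h, X_h$ are exactly the paper's $\mathrm{vec}\bigl((\mathbb{J}_h+\mathbb{J}^{\pi}_1+\mathbb{J}^{\pi}_2)K\bigr)$ and $\EE[\one(z_{h-1})\otimes[\PP(t\mid\tau^a_h)]_{t\in\Qcal}]$ in different notation.
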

\begin{proof}
We first take expectation conditional on $\tau^a_h$. Then, we have
\begin{align*}
 & \phi(z_{h-1})^\top \mathbb{J}_h {\bf q}_{\tau^a_h} -  \EE\left[  r_h +  \phi(z_h)^\top \mathbb{J}_{h+1}  {\bf q}_{\tau^a_{h+1}} \mid \tau^a_h; a_h \sim \pi\right] \\
 & =   \phi(z_{h-1})^\top \mathbb{J}_h {\bf q}_{\tau^a_h} + \left( \phi(z_{h-1})^\top \mathbb{J}^{\pi}_1 {\bf q}_{\tau^a_h} + \phi(z_{h-1})^\top  \mathbb{J}^{\pi}_2 {\bf q}_{\tau^a_h}  \right), 
\end{align*}
where $\mathbb{J}^{\pi}_1$ and $\mathbb{J}^{\pi}_2 $ are some two matrices as defined in the proof of \pref{lem:value_function_bilinear} from where we have already known that the $\pi$-induced Bellman backup on a value function which has a bilinear form  gives back a bilinear form value function.  Rearrange terms, we get:
\begin{align*}
 \phi(z_{h-1})^\top \mathbb{J}_h {\bf q}_{\tau^a_h} -  \EE \left[  r_h +  \phi(z_h)^\top \mathbb{J}_{h+1}  {\bf q}_{\tau^a_{h+1}} \mid \tau^a_h ; a_h \sim \pi \right] =\left\langle  \phi(z_{h-1}), (\mathbb{J}_h + \mathbb{J}^{\pi}_1+ \mathbb{J}^{\pi}_2)  \bf{q}_{\tau^a_h}  \right\rangle.
\end{align*}

Now recall that the minimum core test set is $\Qcal\subset \Tcal$. The final step is to argue that ${\bf q}_{\tau}$ lives in a subspace whose dimension is $|\Qcal|$. Since $\Qcal$ is a core test set, by definition, we can express ${\bf q}_{\tau^a_h}$ using $[ \PP( t | \tau^a_h)]_{t\in \Qcal}$, i.e.,  
\begin{align*}
\exists K \in \RR^{ |\Tcal| \times |\Qcal | }, \quad  {\bf q}_{\tau^a_h} = K [ \PP( t | \tau^a_h)]_{t\in \Qcal},
\end{align*} where the row of $K$ indexed by $t\in \Tcal$ is equal to ${\bf k}_{t}$, where ${\bf k}_{t}$ is the vector that is used to predict $\PP(t | \tau^a_h) = {\bf k}_t^\top [ \PP( t | \tau^a_h)]_{t\in \Qcal}$ whose existences is ensured by the definition of PSRs.   
This implies that 
\begin{align*}
\left\langle  \phi(z_{h-1}), (\mathbb{J}_h + \mathbb{J}^{\pi}_1+ \mathbb{J}^{\pi}_2)  \bf{q}_{\tau^a_h}  \right\rangle & =  \left\langle  \phi(z_{h-1}), (\mathbb{J}_h + \mathbb{J}^{\pi}_1+ \mathbb{J}^{\pi}_2)   K  [ \PP( t | \tau^a_h)]_{t\in \Qcal}   \right\rangle\\
&= (\phi(z_{h-1})\otimes  [ \PP( t | \tau^a_h)]_{t\in \Qcal}), \text{vec}( (\mathbb{J}_h+\mathbb{J}^{\pi}_1+ \mathbb{J}^{\pi}_2) K) \rangle.
\end{align*}

Finally, we take expectation with respect to $\tau^a_h$ then we get $\mathrm{Br}_h(\pi, g; \pi') = \langle X_h(\pi'),  W_h(\pi, g)\rangle $ such that 
\begin{align*}
X_h(\pi') = \phi(z_{h-1}) \otimes  \EE[  [ \PP( t | \tau^a_h)]_{t\in \Qcal} ; a_{1:h-1} \sim \pi' ], \quad W_h(\pi, g) =  \text{vec}( (\mathbb{J}_h +\mathbb{J}^{\pi}_1+ \mathbb{J}^{\pi}_2) K). 
\end{align*}
\end{proof}

The key observation here is that the bilinear rank scales with $|\Qcal|$ but not $|\Tcal|$. This is good news since we often cannot identify exact minimal core test sets; however, it is easy to find core tests including minimal core tests. Thus, even if we do not know the linear dimension of a dynamical system a priori, the resulting bilinear rank is the linear dimension of dynamical systems as long as core sets are large enough so that they include minimal core tests. This will result in the benefit of sample complexity as we will see \pref{sec:sample_complexity_psrs}.

\section{Proof of \pref{thm:online}}

We fix the parameters as in \pref{thm:online}. Let 
\begin{align*}
    l_h(\bar z_h,a_h,r_h,o_{h+1}) = |\Acal| \pi_h(a_h \mid \bar z_h)\{r_h + g_{h+1}(\bar z_{h+1}) \} - g_h(\bar z_h). 
\end{align*}

We define
\begin{align*}
    \epsilon_{gen} &=\max_h \epsilon_{gen,h}(m,\Pi, \Gcal,\delta/(TH+1)), \quad \epsilon_{ini} =\epsilon_{ini}(\Gcal,\delta/(TH+1)) , \\
    \tilde \epsilon_{gen} &=\max_h \epsilon_{gen}(m,\Pi,\Gcal,\delta/H). 
\end{align*}

Then, by our assumption~\ref{assum:uniform}  with probability $1-\delta$, we
$\forall t \in [T], \forall h \in [H]$
\begin{align}\label{eq:conditional}
 & \sup_{\pi \in \Pi, g  \in \Gcal}|\EE_{\Dcal^t_h}[l_h(\bar z_h,a_h,r_h,o_{h+1};\pi,g) ] - \EE[\EE_{\Dcal^t_h}[l_h(\bar z_h,a_h,r_h,o_{h+1};\pi,g) ] ]|\leq  \epsilon_{gen}, \\
& \sup_{g_1 \in \Gcal_1}|\EE_{\Dcal^0}[g_1(o_1)]-\EE[\EE_{\Dcal^0}[g_1(o_1)] ]  |\leq \epsilon_{ini}. 
\end{align}
Hereafter, we condition on the above events. 

We first show the following lemma. Recall
\begin{align*}
    \pi^{\star} = \argmax_{\pi \in \Pi}J(\pi). 
\end{align*}

\begin{lemma}[Optimism] 
Set $R:=\epsilon^2_{gen}$. For all $t \in [T]$,  $(\pi^{\star},g^{\pi^{\star}})$ is a feasible solution of the constrained program. Furthermore, we have $ J(\pi^{\star}) \leq \EE[g^t_1(o_1)]+ 2\epsilon_{ini}$ for any $t \in [T]$, where $g^t$ is the value link function selected by the algorithm in iteration $t$.
\end{lemma}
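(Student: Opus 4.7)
The plan splits cleanly into two parts, matching the two assertions of the lemma.

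\textbf{Part 1 (Feasibility of $(\pi^\star, g^{\pi^\star})$).} I would first rewrite the empirical loss $\sigma^i_h(\pi^\star, g^{\pi^\star})$ in terms of the population Bellman error. Since the data $\Dcal^i_h$ is generated by rolling in with $\pi^i$ up to step $h-1$ and then taking $a_h \sim \Ucal(\Acal)$, and since $g_h(\bar z_h)$ does not depend on $a_h$ while the importance weight $|\Acal|\pi^\star_h(a_h \mid \bar z_h)$ converts the uniform action to an expectation under $\pi^\star$, the population counterpart satisfies
\begin{align*}
\EE[\sigma^i_h(\pi^\star, g^{\pi^\star})] = -\,\EE[\,g^{\pi^\star}_h(\bar z_h) - r_h - g^{\pi^\star}_{h+1}(\bar z_{h+1});\, a_{1:h-1}\sim \pi^i,\, a_h\sim \pi^\star\,].
\end{align*}
By the PO-bilinear decomposition (\pref{def:simple_bilinear}), the right-hand side equals $-\langle W_h(\pi^\star, g^{\pi^\star}), X_h(\pi^i)\rangle$, and the second condition of the definition gives $W_h(\pi^\star, g^{\pi^\star}) = 0$. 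Thus $\EE[\sigma^i_h(\pi^\star, g^{\pi^\star})] = 0$. Combining this with the conditioned uniform-convergence event \pref{eq:conditional}, we get $|\sigma^i_h(\pi^\star, g^{\pi^\star})| \leq \epsilon_{gen}$, hence $(\sigma^i_h(\pi^\star, g^{\pi^\star}))^2 \leq \epsilon_{gen}^2 = R$ for every $i \leq t$ and every $h \in [H]$, which is exactly the feasibility constraint of the program.

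\textbf{Part 2 (Upper bound on $J(\pi^\star)$).} Here I would first translate the initial-state expectation of the link function into a value. Since $g^{\pi^\star}_1$ is a one-step value link function at $h=1$ (where $\Zcal_0$ is empty), the defining identity gives $\EE_{o_1 \sim \OO(\cdot\mid s_1)}[g^{\pi^\star}_1(o_1) \mid s_1] = V^{\pi^\star}_1(s_1)$, and taking expectation over the initial state distribution yields $\EE[g^{\pi^\star}_1(o_1)] = J(\pi^\star)$. Now a chain of three inequalities finishes the job: (i) by the uniform convergence bound on $\Gcal_1$ on the conditioned event, $J(\pi^\star) = \EE[g^{\pi^\star}_1(o_1)] \leq \EE_{\Dcal^0}[g^{\pi^\star}_1(o_1)] + \epsilon_{ini}$; (ii) since $(\pi^\star, g^{\pi^\star})$ is feasible by Part 1, and $(\pi^t, g^t)$ maximizes $\EE_{\Dcal^0}[g_1(o)]$ over feasible pairs, we get $\EE_{\Dcal^0}[g^{\pi^\star}_1(o_1)] \leq \EE_{\Dcal^0}[g^t_1(o_1)]$; (iii) applying uniform convergence again in the opposite direction, $\EE_{\Dcal^0}[g^t_1(o_1)] \leq \EE[g^t_1(o_1)] + \epsilon_{ini}$. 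Chaining the three yields $J(\pi^\star) \leq \EE[g^t_1(o_1)] + 2\epsilon_{ini}$.

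\textbf{Main delicate point.} There is no deep obstacle; the crux is bookkeeping the correct importance-sampling identity so that the empirical loss $\sigma^i_h$ really is an unbiased estimate of the bilinear form $-\langle W_h(\pi, g), X_h(\pi^i)\rangle$ rather than some other quantity, and ensuring the union bound implicit in the conditioned event covers all $T H$ constraints plus the initial-step class $\Gcal_1$ (which motivates the choice $\delta/(TH+1)$ in the definitions of $\epsilon_{gen}$ and $\epsilon_{ini}$). Once that is in place, both parts are short and direct consequences of the realizability assumption and the vanishing property $W_h(\pi, g^\pi) = 0$ of the PO-bilinear decomposition.
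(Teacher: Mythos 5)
Your proposal is correct and follows essentially the same route as the paper's proof: feasibility comes from the importance-sampling identity plus the bilinear decomposition with $W_h(\pi^\star,g^{\pi^\star})=0$ and the uniform-convergence event, and the value bound follows from the same three-step chain (concentration of $\EE_{\Dcal^0}[g_1]$, optimality of the selected pair over the feasible set, concentration again). The only cosmetic difference is that you track the sign of the population Bellman error explicitly, which the paper elides since the quantity is zero anyway.
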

\begin{proof}
For any $\pi$, we have 
\begin{align*}
     \EE[\EE_{\Dcal^t_h}[l_h(\bar z_h,a_h,r_h,o_{h+1};\pi,g^{\pi}) ]=0 
\end{align*}
since $g^{\pi} $ is a value link function in $\Gcal$. This is because
\begin{align*}
     &\EE[\EE_{\Dcal^t_h}[l_h(\bar z_h,a_h,r_h,o_{h+1};\pi,g^{\pi}) ]  \\
     &= \EE[g_h(\bar z_h) - r_h -g_{h+1}(\bar z_{h+1})  ;a_{1:h-1} \sim \pi^{t}, a_h \sim \pi ] \tag{IS sampling} \\ 
     &= \langle W_h(\pi, g^{\pi}), X_h(\pi^t)\rangle   \tag{First assumption in \pref{def:simple_bilinear}} \\
     &=0 \tag{Second assumption in \pref{def:simple_bilinear}}. 
\end{align*}

Thus, 
\begin{align*}
    |\EE_{\Dcal^t_h}[l_h(\bar z_h,a_h,r_h,o_{h+1};\pi^{\star},g^{\pi^{\star}}) ]|\leq \epsilon_{gen}. 
\end{align*}
using \eqref{eq:conditional} noting $\pi^{\star} \in \Pi, g^{\pi^{\star}}  \in \Gcal$. This implies $$\forall t \in [T],\forall h \in [H]; ( \EE_{\Dcal^t_h}[l_h(\bar z_h,a_h,r_h,o_{h+1};\pi^{\star},g^{\pi^{\star}}) ])^2 \leq \epsilon^2_{gen}.$$  
Hence, $(\pi^{\star},g^{\pi^{\star}})$ 
is a feasible set for any $t \in [T]$.

Then, we have 
\begin{align*}
    J(\pi^{\star}) &= \EE[g^{\pi^{\star}}_1(o_1) ]    \leq \EE_{\Dcal^0}[ g^{\pi^{\star}}_1(o_1) ]+\epsilon_{ini} \tag{Uniform convergence result} \\ 
      &\leq \EE_{\Dcal^0}[g^t_1(o_1)]+\epsilon_{ini} \tag{Using the construction of algorithm}  \\
      &\leq \EE[g^t_1(o_1)]+2\epsilon_{ini}. \tag{Uniform convergence} 
\end{align*}
\end{proof}

\begin{remark}
Note that 
\begin{align*}
     \EE[\EE_{\Dcal^t_h}[l_h(\bar z_h,a_h,r_h,o_{h+1};\pi,g^{\pi}) ]]=0 
\end{align*}
holds for general link functions $g^{\pi}$ in\pref{def:general_value} . Thus, the statement goes through even if we use \pref{def:general_value}. 

\end{remark}

Next, we prove the following lemma to upper bound the per step regret.  
\begin{lemma} \label{lem:performance_diff}
For any $t \in [T]$, we have 
\begin{align*}
       J(\pi^{\star}) - J(\hat \pi )  \leq \sum_{h=1}^H  |\langle  W_h(\pi^t ,g^t ),X_h(\pi^t )\rangle  |+  2\epsilon_{ini}. 
\end{align*}

\end{lemma}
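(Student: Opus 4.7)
The plan is to combine the optimism guarantee established in the previous lemma with a standard telescoping (performance-difference) identity applied to the chosen pair $(\pi^t,g^t)$, and then invoke condition (1) in the PO-bilinear definition (\pref{def:simple_bilinear}) to rewrite each per-step Bellman residual as an inner product.

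First I would apply the optimism lemma directly to bound $J(\pi^\star)$ by $\EE[g^t_1(o_1)] + 2\epsilon_{ini}$. This takes care of the additive $2\epsilon_{ini}$ term and reduces the problem to bounding $\EE[g^t_1(o_1)] - J(\pi^t)$, where I am reading the lemma statement with $\pi^t$ in place of $\hat\pi$ (the output $\hat\pi$ is a uniformly random element of $\{\pi^1,\dots,\pi^T\}$, and the per-iterate bound is what will eventually be averaged in \pref{thm:online}).

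Second, I would establish the telescoping identity
\begin{align*}
\EE[g^t_1(o_1)] - J(\pi^t) \;=\; \sum_{h=1}^H \EE\bigl[g^t_h(\bar z_h) - r_h - g^t_{h+1}(\bar z_{h+1}) \,;\, a_{1:h}\sim \pi^t\bigr],
\end{align*}
using the convention $g^t_{H+1}\equiv 0$. The derivation just adds and subtracts, for each $h$, the term $\EE[g^t_h(\bar z_h);a_{1:h-1}\sim\pi^t]$; because $g^t_h$ does not depend on $a_h$, the roll-in of $a_h\sim\pi^t$ in the $h$-th Bellman residual is free, and the consecutive terms cancel in a standard telescoping manner, leaving only $\EE[g^t_1(o_1)]$ on one end and $J(\pi^t)=\sum_h\EE[r_h;a_{1:h}\sim\pi^t]$ on the other. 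This step is purely algebraic and does not use that $g^t$ is a link function for $\pi^t$ — which is crucial since $g^t$ is only an optimistic critic, not the true link function.

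Third, I would apply the first condition in \pref{def:simple_bilinear} with the roll-in policy $\pi'=\pi^t$ and the evaluation policy $\pi=\pi^t$ to rewrite each summand as $\langle W_h(\pi^t,g^t), X_h(\pi^t)\rangle$. Combining this with the optimism bound from step one and using $x\leq |x|$ yields
\begin{align*}
J(\pi^\star) - J(\pi^t) \;\leq\; \sum_{h=1}^H |\langle W_h(\pi^t,g^t), X_h(\pi^t)\rangle| + 2\epsilon_{ini},
\end{align*}
which is the desired inequality. I do not anticipate any genuine obstacle here — the only point to be careful about is the bookkeeping of conditioning in the telescoping step (in particular that the $g^t_{h+1}$ term in one summand exactly matches the $g^t_h$ term in the next, once one observes that both are evaluated at $\bar z$ after the same roll-in). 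The heavy lifting (exploiting the low PO-bilinear rank together with the Bellman-error constraints in the algorithm) is deferred to the proof of \pref{thm:online}, where this per-iterate bound will be summed over $t$ and controlled via an elliptic-potential style argument in the $X_h(\pi^t)$'s.
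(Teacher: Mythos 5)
Your proposal is correct and follows essentially the same route as the paper's proof: optimism to replace $J(\pi^\star)$ by $\EE[g^t_1(o_1)]+2\epsilon_{ini}$, the telescoping performance-difference identity for the pair $(\pi^t,g^t)$ (which, as you note, does not require $g^t$ to be a true link function), and then the first condition of \pref{def:simple_bilinear} with $\pi'=\pi=\pi^t$ followed by $x\le|x|$. Your reading of $\hat\pi$ as $\pi^t$ in the per-iterate bound also matches what the paper actually does.
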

\begin{proof}
\begin{align*}
    &J(\pi^{\star}) - J(\hat \pi )  & \\
    &\leq   2\epsilon_{ini} + \EE[g^{t}_1(o_1)]- J(\pi^t ) \tag{From optimism} \\ 
     &=  2\epsilon_{ini} + \sum_{h=1}^H \EE[ g^{t}_h(\bar z_h)-\{ r_h  +g^{t}_{h+1}(\bar z_{h+1}) \} ; a_{1:h} \sim \pi^t] \tag{Performance difference lemma} \\ 
     &\leq  2\epsilon_{ini} + \sum_{h=1}^H 
     |\EE[ g^{t}_h(\bar z_h)-\{ r_h  +g^{t}_{h+1}(\bar z_{h+1}) \} ; a_{1:h} \sim \pi^t ]| \\ 
     &= 2\epsilon_{ini} + \sum_{h=1}^H  |\langle  W_h(\pi^t,g^t),X_h(\pi^t)\rangle  |.  \tag{First assumption in \pref{def:simple_bilinear}}
\end{align*}

\end{proof}

\begin{lemma}\label{lem:squared_potential}
Let $\Sigma_{t,h} = \lambda I + \sum_{\tau=0}^{t-1} X_h(\pi^{\tau})X_h(\pi^{\tau})^{\top}$. We have 
\begin{align*}
     \frac{1}{T} \sum_{t=0}^{T-1}\sum_{h=1}^H  \|X_h(\pi^t)\|_{\Sigma^{-1}_{t,h}} \leq  H\sqrt{\frac{d}{T}\ln \left(1 + \frac{TB^2_{X}}{d\lambda} \right)}. 
\end{align*}
\end{lemma}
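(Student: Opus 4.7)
The plan is to follow the standard elliptical potential argument, decoupling the inequality across $h$ and converting the sum of norms into a sum of squared norms via Cauchy--Schwarz, so that the log-determinant telescoping can be applied.

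First, I would fix an index $h \in [H]$ and apply Cauchy--Schwarz in $t$, obtaining
\begin{align*}
\sum_{t=0}^{T-1} \|X_h(\pi^t)\|_{\Sigma_{t,h}^{-1}} \;\leq\; \sqrt{T}\,\Bigl(\sum_{t=0}^{T-1} \|X_h(\pi^t)\|^2_{\Sigma_{t,h}^{-1}}\Bigr)^{1/2}.
\end{align*}
The benefit is that the squared norm fits directly into the matrix-determinant identity: since $\Sigma_{t+1,h}=\Sigma_{t,h}+X_h(\pi^t)X_h(\pi^t)^{\top}$, we have $\det(\Sigma_{t+1,h}) = \det(\Sigma_{t,h})\bigl(1+\|X_h(\pi^t)\|^2_{\Sigma_{t,h}^{-1}}\bigr)$.

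Next I would take logarithms and telescope, yielding
\begin{align*}
\sum_{t=0}^{T-1}\log\bigl(1+\|X_h(\pi^t)\|^2_{\Sigma_{t,h}^{-1}}\bigr) \;=\; \log\det(\Sigma_{T,h})-\log\det(\lambda I).
\end{align*}
Since $\mathrm{tr}(\Sigma_{T,h})\leq d\lambda + TB_X^2$, an AM--GM bound on the eigenvalues of $\Sigma_{T,h}$ gives $\log\det(\Sigma_{T,h}) - d\log\lambda \leq d\log\bigl(1+TB_X^2/(d\lambda)\bigr)$. Combined with the elementary inequality $x \leq 2\log(1+x)$ for $x\in[0,1]$ (which holds because $\|X_h(\pi^t)\|^2_{\Sigma_{t,h}^{-1}}\leq B_X^2/\lambda$ is at most $1$ under the implicit normalization $\lambda\geq B_X^2$; otherwise one works with $\min(1,\cdot)$), this produces $\sum_{t} \|X_h(\pi^t)\|^2_{\Sigma_{t,h}^{-1}} \leq 2d\log(1+TB_X^2/(d\lambda))$, with the constant $2$ absorbable into the logarithmic factor.

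Finally, I would plug this bound back into the Cauchy--Schwarz estimate, sum over $h \in [H]$, and divide by $T$ to conclude. The only mildly delicate step is handling the normalization that ensures $\|X_h(\pi^t)\|^2_{\Sigma_{t,h}^{-1}}\leq 1$ so that the $x\leq c\log(1+x)$ trick applies cleanly; everything else is entirely mechanical, being the standard elliptical-potential calculation as used in linear-bandit and bilinear-class analyses such as \citep{du2021bilinear}.
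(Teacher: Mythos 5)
Your proposal is correct and follows essentially the same route as the paper: Cauchy--Schwarz in $t$ to pass to squared norms, then the log-determinant telescoping and AM--GM on the trace (the paper simply cites this as the elliptical potential lemma of \citep{agarwal2020pc} rather than re-deriving it). Your extra care about the normalization needed for the $x \leq c\log(1+x)$ step is a point the paper glosses over, but it does not change the substance of the argument.
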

\begin{proof}
We fix $h \in [H]$. Here, we have
$\Sigma_{t,h} = \lambda I + \sum_{\tau=0}^{t-1} X_h(\pi^{\tau})X_h(\pi^{\tau})^{\top} $.
From the elliptical potential lemma in \citep[Lemma G.2]{agarwal2020pc}, we have 
\begin{align*}
   \frac{1}{T} \sum_{t=0}^{T-1} \|X_h(\pi^t)\|_{\Sigma^{-1}_{t,h}} \leq     \sqrt{\frac{1}{T}  \sum_{t=0}^{T-1}\|X_h(\pi^t)\|^2_{\Sigma^{-1}_{t,h}}} \leq \sqrt{\frac{1}{T}\ln \frac{\det (\Sigma_{t,h}) }{\det (\lambda I)}} \leq \sqrt{\frac{d}{T} \ln \left( 1 + \frac{T B^2_X}{d\lambda} \right)}. 
\end{align*}
Then, 
\begin{align*}
   \frac{1}{T} \sum_{t=0}^{T-1}\sum_{h=0}^H  \|X_h(\pi^t)\|^2_{ \Sigma^{-1}_{t,h}} \leq H \sqrt{\frac{d}{T} \ln \left( 1 + \frac{T B^2_X}{d\lambda} \right)}.
\end{align*}
\end{proof}

\begin{lemma}\label{lem:w_bound}
\begin{align*}
    \|W_h(\pi^t,g^t) \|^2_{\Sigma_{t,h}}\leq 2 \lambda B^2_W + 4T\epsilon^2_{gen}. 
\end{align*}
\end{lemma}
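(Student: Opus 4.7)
The plan is to expand the squared norm using the definition of $\Sigma_{t,h}$ and bound each piece separately. Writing
\begin{align*}
\|W_h(\pi^t,g^t)\|^2_{\Sigma_{t,h}} = \lambda \|W_h(\pi^t,g^t)\|^2 + \sum_{\tau=0}^{t-1} \langle W_h(\pi^t,g^t), X_h(\pi^{\tau})\rangle^2,
\end{align*}
the first term is immediately bounded by $\lambda B_W^2$ using the assumption $\sup_{\pi\in\Pi, g\in\Gcal}\|W_h(\pi,g)\|\le B_W$. The work lies in controlling each inner product in the sum, for which I will use the feasibility constraint and uniform convergence.

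The key observation is that for any $\tau \le t-1$, the first bilinear condition of \pref{def:simple_bilinear} together with importance sampling gives
\begin{align*}
\langle W_h(\pi^t, g^t), X_h(\pi^{\tau})\rangle = \EE\bigl[g^t_h(\bar z_h) - r_h - g^t_{h+1}(\bar z_{h+1}); a_{1:h-1}\sim \pi^{\tau}, a_h \sim \pi^t\bigr] = \EE\bigl[\EE_{\Dcal^{\tau}_h}[l_h(\bar z_h,a_h,r_h,o_{h+1};\pi^t,g^t)]\bigr],
\end{align*}
i.e., the population version of the empirical Bellman loss $\sigma^{\tau}_h(\pi^t, g^t)$ used in the constraint. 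Since $(\pi^t, g^t)$ is selected by the optimistic program at round $t$, it satisfies $(\sigma^{\tau}_h(\pi^t, g^t))^2 \le R = \epsilon_{gen}^2$ for every $\tau < t$ and every $h\in[H]$, so $|\sigma^{\tau}_h(\pi^t,g^t)| \le \epsilon_{gen}$. Combining this with the uniform convergence event \eqref{eq:conditional}, which we have conditioned on, yields $|\langle W_h(\pi^t, g^t), X_h(\pi^{\tau})\rangle| \le 2\epsilon_{gen}$, and therefore each squared inner product is at most $4\epsilon_{gen}^2$.

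Summing over $\tau = 0, \ldots, t-1$ (at most $T$ terms) gives the bound $\sum_{\tau=0}^{t-1}\langle W_h(\pi^t,g^t),X_h(\pi^{\tau})\rangle^2 \le 4T\epsilon_{gen}^2$. Combined with $\lambda\|W_h(\pi^t,g^t)\|^2 \le \lambda B_W^2$, this would give $\|W_h(\pi^t,g^t)\|^2_{\Sigma_{t,h}} \le \lambda B_W^2 + 4T\epsilon_{gen}^2$, which is slightly tighter than the stated bound; the factor of $2$ in $2\lambda B_W^2$ only provides slack and may come from a looser application of $(a+b)^2 \le 2a^2 + 2b^2$ elsewhere. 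The only non-routine step is connecting the feasibility constraint to the population bilinear form, and that follows cleanly from the definition of the PO-bilinear class together with the importance-sampling identity that defines $\sigma^{\tau}_h$; I do not anticipate any real obstacle.
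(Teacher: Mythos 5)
Your proposal is correct and follows essentially the same route as the paper: expand $\|W_h(\pi^t,g^t)\|^2_{\Sigma_{t,h}}$ into the $\lambda$-term plus the sum of squared inner products, identify each inner product with the population Bellman loss via the bilinear condition and importance sampling, and then combine the feasibility constraint $(\sigma^{\tau}_h(\pi^t,g^t))^2\le\epsilon_{gen}^2$ with uniform convergence to get $4\epsilon_{gen}^2$ per term. The only cosmetic difference is that you apply the triangle inequality before squaring while the paper uses $(a+b)^2\le 2a^2+2b^2$ — numerically identical — and your observation that the proof actually yields $\lambda B_W^2$ rather than $2\lambda B_W^2$ matches what the paper's own argument establishes.
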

\begin{proof}
We have 
\begin{align*}
     \|W_h(\pi^t,g^t) \|^2_{\Sigma_{t,h}}  = \lambda \|W_h(\pi^t,g^t) \|^2_2 + \sum_{\tau =0}^{t-1} \langle W_h(\pi^{t},g^{t}), X_h(\pi^{\tau}) \rangle^2. 
 \end{align*}  
 The first term is upper-bounded by $\lambda B^2_W$. The second term is upper-bounded by 
\begin{align*}
&  \sum_{\tau =0}^{t-1} \langle W_h(\pi^{t},g^{t}), X_h(\pi^{\tau}) \rangle^2 \\ 
& = \sum_{\tau =0}^{t-1} \prns{ \EE[l_h( \bar z_h,a_h,r_h,o_{h+1} ; \pi^t ,g^t) ; a_{1:h-1} \sim \pi^{\tau}, a_h \sim U(\Acal)]}^2 \tag{First assumption in \pref{def:simple_bilinear}}\\
&\leq 2\sum_{\tau =0}^{t-1} \EE_{\Dcal^{\tau}_h }[l_h( \bar z_h,a_h,r_h,o_{h+1} ; \pi^t ,g^t) ]^2 + 2t\epsilon^2_{gen} \leq  4T\epsilon^2_{gen}. 
\end{align*}
From the first line to the second line, we use the definition of bilinear rank models. From the second line to the third line, we use $(a+b)^2 \leq 2a^2 + 2b^2$. In the last line, we use the constraint on $(\pi^t,g^t)$. 

\end{proof}

Combining lemmas so far, we have 
\begin{align*}
       J(\pi^{\star}) - J(\hat \pi )  & \leq \frac{1}{T}\sum_{t=0}^{T-1}\sum_{h=1}^H  |\langle  W_h(\pi^t ,g^t ),X_h(\pi^t )\rangle  |+  2\epsilon_{ini}  \tag{Use \pref{lem:performance_diff}}\\ 
       & \leq \frac{1}{T}\sum_{t=0}^{T-1}\sum_{h=1}^H  \|  W_h(\pi^t ,g^t )\|_{\Sigma_{t,h}} \|X_h(\pi^t )\|_{\Sigma^{-1}_{t,h}}+  2\epsilon_{ini} \tag{CS inequality} \\
       &\leq  H^{1/2} \bracks{2 \lambda B^2_W + 4T\epsilon^2_{gen}}^{1/2} \prns{ \frac{dH}{T} \ln\left(1 + \frac{T B^2_X}{d\lambda} \right)}^{1/2}        +2\epsilon_{ini}. \tag{Use \pref{lem:squared_potential} and \pref{lem:w_bound} }
\end{align*}
We set $\lambda$ such that  $B^2_X/\lambda = B^2_W B^2_X / \epsilon^2_{gen} +1 $ and $T= \ceil*{ 2Hd \ln(4Hd (B^2_XB^2_W/\tilde \epsilon_{gen} +1))} $. Then, 
\begin{align*}
    \frac{Hd}{T}\ln \left(1 + \frac{TB^2_{X}}{d\lambda} \right) &\leq  \frac{Hd}{T}\ln \left(1 + \frac{T}{d}\prns{ \frac{B^2_W B^2_X }{ \epsilon^2_{gen}} +1 }\right)
    \\   
    &\leq \frac{Hd}{T}\ln \left(1 + \frac{T}{d}\prns{ \frac{B^2_W B^2_X }{\tilde \epsilon^2_{gen}} +1 }\right)\\
    &\leq \frac{Hd}{T}\ln \left(\frac{2T}{d}\prns{\frac{B^2_W B^2_X}{\tilde \epsilon^2_{gen}} +1 }\right)\leq 1
\end{align*}
since $a \ln (bT)/T \leq 1$ when $T = 2a \ln(2ab)$. 

Finally, the following holds
\begin{align*}
    J(\pi^{\star})  - J(\pi^T ) &\leq  H^{1/2} \bracks{4 \lambda B^2_W + 8T\epsilon^2_{gen}}^{1/2} +2\epsilon_{ini} \\ 
       & \leq H^{1/2} \bracks{4 \lambda B^2_W + 16\epsilon^2_{gen}Hd \ln(4Hd (B^2_XB^2_W/\tilde \epsilon_{gen} +1))  }^{1/2} +2\epsilon_{ini}  \tag{Plug in $T$ }\\
        & \leq H^{1/2} \bracks{8 \epsilon^2_{gen} + 16\epsilon^2_{gen}Hd \ln(4Hd (B^2_XB^2_W/\tilde \epsilon_{gen} +1))  }^{1/2} +2\epsilon_{ini} \tag{Plug in $\epsilon_{gen}$ }\\
    & \leq 5\epsilon_{gen} \bracks{H^2 d \ln(4Hd (B^2_XB^2_W/\tilde \epsilon_{gen} +1))  }^{1/2} +2\epsilon_{ini}. 
\end{align*}

\section{Sample Complexity for Finite Function Classes}\label{sec:sample_complexity_finite}

Consider cases where $\Pi$ and $\Gcal$ are finite and the PO-bilinear rank assumption is satisfied. When $\Pi$ and $\Gcal$ are infinite hypothesis classes, $|\Fcal|$ and $|\Gcal|$ are replaced with their $L^{\infty}$-covering numbers, respectively. 

\begin{theorem}[Sample complexity for discrete $\Pi$ and $\Gcal$]\label{thm:finite_sample}
Let $\|\Gcal_h\|_{\infty} \leq C_{\Gcal}, r_h \in [0,1]$ for any $h \in [H]$ and the PO-bilinear rank assumption holds with PO-bilinear rank $d$. By letting $|\Pi_{\max}|=\max_h |\Pi_h|, |\Gcal_{\max}| = \max_h |\Gcal_h|$, with probability $1-\delta$, we can achieve $J(\pi^{\star})-J(\hat \pi)\leq \epsilon$ when we use samples at most 
\begin{align*}
    \tilde O\prns{ d_b H^4 \max(C_{\Gcal},1)^2 |\Acal|^2 \ln(|\Gcal_{\max}| |\Pi_{\max}|  /\delta)\ln^2(B_XB_W)(1/\epsilon)^2  }. 
\end{align*}
Here, $\mathrm{polylog}(d,H,|\Acal|,\ln(|\Gcal_{\max}|),\ln(|\Pi_{\max}|),\ln(1/\delta),\ln(B_X),\ln(B_W),\ln(1/\delta),(1/\epsilon))$ are omitted. 
\end{theorem}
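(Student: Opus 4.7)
The plan is to apply Theorem~\ref{thm:online} with appropriate instantiations of the uniform-convergence quantities $\epsilon_{gen,h}$ and $\epsilon_{ini,1}$ appearing in Assumption~\ref{assum:uniform}. Since Theorem~\ref{thm:online} already reduces the sample-complexity question to a concentration statement about the Bellman-error estimator $\sigma_h^t$ and the first-step value estimator, the heart of the argument is a routine Hoeffding-plus-union-bound calculation for finite function classes.

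First I would bound the range of the integrand
$\ell_h(\bar z_h, a_h, r_h, \bar z_{h+1}; \pi, g) := \pi_h(a_h\mid\bar z_h)|\Acal|\bigl(g_{h+1}(\bar z_{h+1})+r_h\bigr) - g_h(\bar z_h)$ that defines $\sigma_h^t$. Since $\pi_h(a_h\mid\bar z_h)|\Acal|\le|\Acal|$, $|g_h|\le C_\Gcal$ and $r_h\in[0,1]$, the integrand lies in an interval of width $O(|\Acal|\max(C_\Gcal,1))$. Applying Hoeffding for a fixed pair $(\pi,g)\in\Pi\times\Gcal$ and then a union bound over $\Pi_h\times\Gcal_h\times\Gcal_{h+1}$ yields, with probability $1-\delta$,
\[
\epsilon_{gen,h}(m,\Pi,\Gcal,\delta) \;=\; O\!\left(|\Acal|\max(C_\Gcal,1)\sqrt{\tfrac{\ln(|\Pi_{\max}||\Gcal_{\max}|/\delta)}{m}}\right),
\]
and analogously $\epsilon_{ini,1}(m,\Gcal,\delta)=O\!\left(C_\Gcal\sqrt{\ln(|\Gcal_{\max}|/\delta)/m}\right)$, which is Remark~\ref{re:uniform_converge} with the $|\Acal|$ factor made explicit.

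Next I would plug these into the PAC bound of Theorem~\ref{thm:online}, namely $J(\pi^\star)-J(\hat\pi)\le 5\epsilon_{gen}\sqrt{dH^2\ln(4Hd(B_X^2B_W^2/\tilde\epsilon_{gen}^2+1))}+2\epsilon_{ini}$, using the prescribed $T=2Hd\ln(4Hd(B_X^2B_W^2/\tilde\epsilon_{gen}^2+1))=\tilde O(dH\ln(B_XB_W))$ and $R=\epsilon_{gen}^2$. Requiring the right-hand side to be at most $\epsilon$ and solving for $m$ gives
\[
m \;=\; \tilde O\!\left(\frac{|\Acal|^2 \max(C_\Gcal,1)^2\, dH^2\,\ln(|\Pi_{\max}||\Gcal_{\max}|/\delta)\,\ln^2(B_XB_W)}{\epsilon^2}\right),
\]
and multiplying by the per-episode cost $TH=\tilde O(dH^2)$ produces the claimed $d^2H^4|\Acal|^2\max(C_\Gcal,1)^2\ln(|\Pi_{\max}||\Gcal_{\max}|/\delta)\ln^2(B_XB_W)/\epsilon^2$ total-sample bound.

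There is essentially no conceptual obstacle; the only care needed is bookkeeping. In particular, the uniform-convergence bound must be invoked with confidence parameter $\delta/(TH+1)$ so that it holds simultaneously across all $T\cdot H$ per-step empirical averages used to form the constraints in \ouralg, which contributes only an additive $\ln(TH)$ inside the log and is absorbed into $\tilde O$. The two places where care is required are (i) tracking the $|\Acal|$ factor from the importance-weighted loss (which is what produces the $|\Acal|^2$ dependence rather than $|\Acal|$), and (ii) the compounding of $\log$ terms, because both $m$ and $T$ depend logarithmically on $B_X,B_W$, yielding the $\ln^2(B_XB_W)$ factor in the final statement.
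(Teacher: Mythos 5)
Your proposal is correct and follows essentially the same route as the paper's own proof: instantiate the uniform-convergence terms via Hoeffding plus a union bound over the finite classes (with confidence split $\delta/(TH+1)$), plug into the PAC bound of \pref{thm:online}, solve the resulting self-referential inequality for $m$ (the paper does this via \pref{lem:sample_auxiliary}), and multiply by $TH=\tilde O(dH^2)$ to obtain the $d^2H^4$ total. The bookkeeping points you flag — the $|\Acal|^2$ from importance weighting and the $\ln^2(B_XB_W)$ from the interlocking dependence of $m$ and $T$ — are exactly the ones the paper's calculation handles.
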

\paragraph{Proof.}
We derive the above result.  First, we check the uniform convergence result. Then, 
\begin{align*}
    \epsilon_{gen}= c\max(C_{\Gcal},1)|\Acal| \sqrt{\ln(|\Gcal_{\max}| |\Pi_{\max}| TH /\delta)/m}. 
\end{align*}
Thus, we need to set $m$ such that
\begin{align*}
   J(\pi^{\star})-J(\hat \pi)\leq  c\max(C_{\Gcal},1)|\Acal| \sqrt{\ln(|\Gcal_{\max}| |\Pi_{\max}| T H  /\delta)/m}\sqrt{d H^2\ln (H^3 d B^2_X B^2_W m+1 ) }\leq \epsilon
\end{align*}
where $c$ is some constant and 
\begin{align*}
    T = c H d \ln(HdB^2_XB^2_W m +1). 
\end{align*}
By organizing the term, the following $m$ is sufficient 
\begin{align*}
     c\sqrt{\frac{d H^2 \max(C_{\Gcal},1)^2 |\Acal|^2\ln(|\Gcal_{\max}| |\Pi_{\max}| H^2d  /\delta) \ln (H^3 d B^2_X B^2_W m) }{m}}\leq \epsilon
\end{align*}
Using \pref{lem:sample_auxiliary}, the following $m$ satisfies the condition: 
\begin{align*}
   m= c \frac{B_1(\ln B_1B_2)^2}{\epsilon^2}, B_1= d H^2 \max(C_{\Gcal},1)^2 |\Acal|^2\ln(|\Gcal_{\max}| |\Pi_{\max}| H^2d  /\delta),B_2 = H^3 d B^2_X B^2_W.
\end{align*}
Combining all together, the sample complexity is $mTH$, i.e., 
\begin{align*}
    \tilde O\prns{ \frac{d^2 H^4 \max(C_{\Gcal},1)^2 |\Acal|^2 \ln(|\Gcal_{\max}| |\Pi_{\max}|  /\delta)\ln^2(B_XB_W) }{\epsilon^2 }   }. \quad  \blacksquare 
\end{align*}

\section{Sample Complexity in Observable HSE POMDPs}\label{sec:sample_hse_pomdps}
 
We revisit the existence of link functions by taking the norm constraint into account. Then, we consider the PO-bilinear decomposition with certain $B_X \in \RR$ and $B_W \in \RR$. Next, we calculate the uniform convergence result. Finally, we show the sample complexity result. 

We use the following assumptions 
\begin{assum}
For any $h \in [H]$, the following holds: 
\begin{enumerate}
    \item $V^{\pi}_h(z_{h-1},s) = \langle \theta^{\pi}_h, \phi_h(z_{h-1},s) \rangle.$
    \item There exists a matrix $K_h$ such that $\EE_{o \sim \OO(s)}[\psi_h(z_{h-1},o)] = K_h \phi_h(z_{h-1},s)$ (i.e., conditional embedding of the omission distribution),
    \item $\|\phi_h(\cdot) \| \leq 1, \|\psi_h(\cdot) \| \leq 1, \|\theta^{\pi}_h\|\leq \Theta_V, 0\leq r_h\leq 1,$ 
    \item There exists a matrix $T_{\pi;h}$ such that $\EE[\phi_h(z_{h},s_{h+1})\mid z_{h-1},s_h;a_h \sim \pi] = T_{\pi;h} \phi_h(z_{h-1},s_h)$ (i.e., conditional embedding of the transition)
    \item $\Pi$ is finite. 
\end{enumerate}
\end{assum}

We define 
\begin{align*}
    &\sigma_{\min}(K) = \min_{h\in [H]} 1/\|K_h ^{\dagger}\|,\  \sigma_{\max}(K) = \max_{h\in [H]} \|K_h\|, \  \sigma_{\max}(T) = \max_{h\in [H]} \|T_{\pi:h}\|, \\
    & d_{\phi} = \max_{h\in [H]} d_{\phi_h},\quad d_{\psi} = \max_{h\in [H]} d_{\psi_h}. 
\end{align*}

\paragraph{Existence of link functions.}

We show value link functions exist. This is proved by noting 
\begin{align*}
    \EE_{o \sim \OO(s)}[\langle (K^{\dagger}_h)^{\top}\theta^{\pi}_h, \psi_h(\bar z_h) \rangle ] = \langle (K^{\dagger}_h)^{\top} \theta^{\pi}_h, K_h  \phi_h(z_{h-1},s_h) \rangle = \langle \theta^{\pi}_h, \phi_h(z_{h-1},s_h) \rangle = V^{\pi}_h(z_{h-1},s_h). 
\end{align*}
Thus, $\langle (K^{\dagger}_h)^{\top}\theta^{\pi}_h, \psi_h(\bar z_h) \rangle $ is a value link function. The radius of the parameter space is upper-bounded by $\Theta_V/\sigma_{\min}(K)$. Hence, we set $$\Gcal_h = \{ \langle \theta, \psi_h(\cdot) \rangle: \|\theta\|\leq \Theta_V/\sigma_{\min}(K)  \}.$$
Then, the realizability holds. 

\paragraph{PO-bilinear decomposition.}

Recall we derive the PO-bilinear decomposition in Section \ref{subsec:hse_pomdps_ape}. Consider a triple $(\pi', \pi, g)$ with $g_h(\cdot) = \theta_h^\top \psi_h(\cdot)$ and $g^{\pi}_h = \langle \theta^{\star}_h,\psi_h(\cdot) \rangle$, we have:
\begin{align*}
& \EE \left[ \theta_h^{\top} \psi_{h+1}(\bar z_h) - r_h - \theta_{h+1}^\top \psi(\bar z_{h+1});a_{1:h-1}\sim \pi',a_h\sim \pi   \right] \\
& = \left\langle   \EE[\phi_h(z_{h-1}, s_h);a_{1:h-1}\sim \pi'], \quad  K_h^{\top} (\theta_h - \theta^{\star}_h) - T_{\pi;h}^\top K_{h+1}^\top (\theta_{h+1}-\theta^{\star}_{h+1})  \right\rangle,
\end{align*} which verifies the PO-bilinear structure, i.e., $$X_h(\pi') = \EE[\phi_h(z_{h-1}, s_h) ;a_{1:h-1}\sim \pi'],\quad W_h(\pi, g) =K_h^{\top} (\theta_h - \theta^{\star}_h) - T_{\pi;h}^\top K_{h+1}^\top (\theta_{h+1}-\theta^{\star}_{h+1}),$$ and shows that the PO-bilinear rank is at most $d_{\phi} = \max_{h} d_{\phi_h}$.
Thus, based on the above PO-bilinear decomposition, we set $\|B_X\| = 1, \|B_W\| = 2(1+ \sigma_{\max}(T))\sigma_{\max}(K)\Theta_V/\sigma_{\min}(K)$.  
This is because
\begin{align*}
   & \|  K_h^{\top} (\theta_h - \theta^{\star}_h) - T_{\pi;h}^\top K_{h+1}^\top (\theta_{h+1}-\theta^{\star}_{h+1})\| \\
   & \leq \|  K_h^{\top}\| (\|\theta_h\|  + \|\theta^{\star}_h\|) +  \|T_{\pi;h}^\top\|  \|K_{h+1}^\top\| (\|\theta_{h+1}\| + \|\theta^{\star}_{h+1})\|)\\
   &\leq  2(1+ \sigma_{\max}(T))\sigma_{\max}(K)\Theta_V/\sigma_{\min}(K). 
\end{align*}
and 
\begin{align*}
    \| \EE[\phi_h(z_{h-1}, s_h) ;a_{1:h-1}\sim \pi']\|\leq  \EE[\| \phi_h(z_{h-1}, s_h) \|;a_{1:h-1}\sim \pi']\leq 1. 
\end{align*}
In the above, we use Jensen's inequality. 

\paragraph{Uniform convergence.}

To invoke \pref{thm:online}, we show the uniform convergence result. 

\begin{lemma}[Uniform convergence of loss functions]\label{lem:uni_hse_pomdp}
Let  $C = \Theta_V/(\sigma_{\min}(K))$. Then, with probability $1-\delta$, 
{ 
\begin{align*}
   & \sup_{\pi \in \Pi, g \in \Gcal} \left| \{\EE_{\Dcal}-\EE\}\bracks{|\Acal| \pi_h(a_h \mid \bar z_h) \braces{ g_h(\bar z_h) - r_h - g_{h+1}(\bar z_{h+1})} } \right|  \\ 
   &\leq  5 |\Acal|\{1+2C\}\sqrt{ \frac{\{2d_{\psi}\ln(1+Cm )+\ln( |\Pi_{\max}| /\delta)\}}{m}} 
\end{align*}
}
and 
\begin{align*}
   \sup_{g_1 \in \Gcal_1}|\{\EE_{\Dcal}-\EE\}[g_1(\bar z_1)| \leq  5 C\sqrt{ \frac{\{ d_{\psi} \ln(1+Cm )+\ln( |\Pi_{\max}| /\delta)\}}{m}}. 
\end{align*}

\end{lemma}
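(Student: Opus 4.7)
The plan is to establish this uniform convergence result via a standard covering argument combined with Hoeffding's inequality, exploiting the fact that $\Gcal_h$ is a bounded-norm linear class in $d_\psi$ dimensions while $\Pi$ is already finite. First I would bound the loss in sup-norm: since $\|\psi_h\|\le 1$ and $\|\theta\|\le C = \Theta_V/\sigma_{\min}(K)$, any $g\in\Gcal_h$ satisfies $\|g\|_\infty\le C$; combined with $r_h\in[0,1]$, $\pi_h(a_h\mid\bar z_h)\in[0,1]$, and the importance weight $|\Acal|$, the loss function $l_h(\bar z_h, a_h, r_h, o_{h+1};\pi,g) := |\Acal|\pi_h(a_h\mid\bar z_h)\{g_h(\bar z_h)-r_h-g_{h+1}(\bar z_{h+1})\}$ is bounded by $|\Acal|(2C+1)$ in absolute value.

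Next I would construct an $\epsilon_0$-net $\mathcal{N}_{\epsilon_0}$ for the parameter ball $\{\theta\in\mathbb{R}^{d_\psi}:\|\theta\|\le C\}$ using the standard volumetric bound $|\mathcal{N}_{\epsilon_0}|\le (1+2C/\epsilon_0)^{d_\psi}$. Because $\|\psi_h\|\le 1$, this induces an $\epsilon_0$-cover of $\Gcal_h$ in sup-norm, and hence an $\epsilon_0$-cover of each $l_h(\cdot;\pi, g)$ uniformly in $\pi$ at scale $2|\Acal|\epsilon_0$ (two factors, one for $g_h$ and one for $g_{h+1}$). For a fixed $\pi\in\Pi$ and a fixed pair $(g_h,g_{h+1})$ from the net, Hoeffding's inequality applied to $m$ i.i.d.\ bounded summands yields deviation $|\Acal|(2C+1)\sqrt{2\ln(2/\delta')/m}$ with probability $1-\delta'$. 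Taking a union bound over $\pi\in\Pi$ and over the $|\mathcal{N}_{\epsilon_0}|^2$ pairs in the covering, then setting $\delta' = \delta/(|\Pi_{\max}||\mathcal{N}_{\epsilon_0}|^2)$, produces a uniform deviation bound that scales with $\sqrt{(\ln|\Pi_{\max}| + 2 d_\psi \ln(1+2C/\epsilon_0) + \ln(1/\delta))/m}$, plus an approximation error of $2|\Acal|\epsilon_0$. Choosing $\epsilon_0 = 1/m$ gives a logarithmic term of the form $2d_\psi \ln(1+2Cm)$ while keeping the discretization error $O(|\Acal|/m)$, which is dominated by the stochastic term. Collecting constants and slightly inflating the numerical prefactor to absorb lower-order terms yields the stated bound with prefactor $5|\Acal|(1+2C)$.

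The second inequality, for $g_1\in\Gcal_1$ evaluated on a single observation, is a strict simplification: there is no importance weight, no policy class to union-bound over, and only one function $g_1$ (not a pair) to cover, so the same discretization at scale $1/m$ plus Hoeffding (with the bound $\|g_1\|_\infty\le C$) immediately gives the factor $5C\sqrt{\{d_\psi\ln(1+Cm)+\ln(|\Pi_{\max}|/\delta)\}/m}$. (The $\ln|\Pi_{\max}|$ term here is slack but convenient for writing a single deviation event.)

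The main obstacle, and the only nontrivial bookkeeping, is the covering argument for the \emph{product} appearing in the importance-weighted loss: the term $\pi_h(a_h\mid\bar z_h)$ is already handled by the finite union bound over $\Pi$ (so no covering of policies is needed), but the loss depends on \emph{two} link functions $g_h$ and $g_{h+1}$, so one must cover $\Gcal_h\times \Gcal_{h+1}$, which doubles the $d_\psi$ in the logarithm and is exactly why the stated bound carries $2d_\psi\ln(1+Cm)$ rather than $d_\psi\ln(1+Cm)$. Beyond this, the argument is routine and the proof is a few lines of union-bound arithmetic plus the choice $\epsilon_0=1/m$.
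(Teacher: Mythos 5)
Your proposal is correct and follows essentially the same route as the paper's proof: a sup-norm bound of $|\Acal|(1+2C)$ on the loss, an $\epsilon$-net over the parameter ball inducing a cover of $\Gcal_h\times\Gcal_{h+1}$ (hence the $2d_\psi$ factor), Hoeffding plus a union bound over $\Pi$ and the net, and the choice $\epsilon=1/m$ to make the discretization error negligible. The only cosmetic differences are the constant in the volumetric covering bound ($1+2C/\epsilon$ versus the paper's $1+C/\epsilon$), which does not affect the stated conclusion.
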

\begin{proof}
Let $C = \Theta_V/\sigma_{\min}(K)$. Define $\Ncal_{\epsilon,h}$ as an $\epsilon$-net for $\Gcal_h$. Then, $|\Ncal_{\epsilon,h}| \leq (1 + C/\epsilon)^d $. Then, 
\begin{align*}
    |l_h(\cdot;\pi,g)-l_h(\cdot;\pi^{\diamond},g^{\diamond})|&\leq |\Acal|  \{\|g_h-g^{\diamond}_{h}\|_{\infty} + \|g_{h+1}-g^{\diamond}_{h+1}\|_{\infty} \}  \\ 
    &\leq |\Acal| \{ \|\theta_h - \theta^{\diamond}_h\|_2 + \|\theta_{h+1} - \theta^{\diamond}_{h+1}\|_2\} \leq 2|\Acal| \epsilon. 
\end{align*}
Besides, for fixed $\pi \in \Pi,\theta_h \in \Ncal_{\epsilon,h}$, $\theta_{h+1} \in \Ncal_{\epsilon,{h+1}}$, we have
{ \small 
\begin{align*}
      \left \lvert \{\EE_{\Dcal}-\EE\}\bracks{|\Acal| \pi_h(a_h \mid \bar z_h) \braces{ g_h(\bar z_h;\theta_h) - r_h - g_{h+1}(\bar z_{h+1};\theta_{h+1}) }}\right \rvert \leq  |\Acal|\prns{1 + 2C} \sqrt{ \frac{\ln(|\Pi_h|/\delta)}{m}}. 
\end{align*}
}
Then, for $\forall \pi \in \Pi, \forall \theta_h \in \Ncal_{\epsilon,h}$, $\forall \theta_{h+1} \in \Ncal_{\epsilon,{h+1}}$, we have
{ \small 
\begin{align*}
    \left \lvert \{\EE_{\Dcal}-\EE\}\bracks{|\Acal| \pi_h(a_h \mid \bar z_h) \braces{ g_h(\bar z_h;\theta_h) - r_h - g_{h+1}(\bar z_{h+1};\theta_{h+1}) }}\right \rvert  \leq  |\Acal|\prns{1 + 2C} \sqrt{ \frac{\ln(|\Pi_h||\Ncal_{\epsilon,h}||\Ncal_{\epsilon,h+1}| /\delta)}{m}}.
\end{align*}
}
Hence, for any $g_h =\langle \theta_h , \psi_h \rangle \in \Gcal_h, g_{h+1} = \langle \theta_{h+1}, \psi_{h+1} \rangle \in \Gcal_{h+1} $, 
{ 
\begin{align*}
    \left \lvert \{\EE_{\Dcal}-\EE\}\bracks{  |\Acal| \pi_h(a_h \mid \bar z_h) \braces{ g_h(\bar z_h;\theta_h) - r_h - g_{h+1}(\bar z_{h+1};\theta_{h+1}) }}\right \rvert\\  \leq  |\Acal|\prns{1 + 2C} \sqrt{ \frac{\ln(|\Pi_h||\Ncal_{\epsilon,h}||\Ncal_{\epsilon,h+1}| /\delta)}{m}} + 4 |\Acal| \epsilon. 
\end{align*}
}
By taking $\epsilon = 1/m$, we have  $\forall \pi \in \Pi,  \forall g_h \in \Gcal_h,\forall g_{h+1}\in \Gcal_{h+1}$:
\begin{align*}
   & |\{\EE_{\Dcal}-\EE\}[|\Acal| \pi_h(a_h \mid \bar z_h) \{g_h(\bar z_h) - r_h - g_{h+1}(\bar z_{h+1})\}]| \\
      &\leq  |\Acal|\{1+2C\}\sqrt{ \frac{\{ 2d\ln(1+Cm )+\ln( |\Pi_h| /\delta)\}}{m}} +  \frac{4|\Acal|}{m} \\ 
           &\leq  5 |\Acal|\{1+2C\}\sqrt{ \frac{\{ 2d\ln(1+Cm )+\ln( |\Pi_h| /\delta)\}}{m}}. 
\end{align*}
Similarly, 
\begin{align*}
   \forall g_1 \in \Gcal_1;  |\{\EE_{\Dcal}-\EE\}[g_1(\bar z_1)]| & \leq   C \sqrt{ \frac{\{ d \ln(1+Cm )+\ln(|\Pi_h| /\delta)\}}{m}} + \frac{4 }{m}\\
         &\leq  5 C\sqrt{ \frac{\{ d \ln(1+Cm )+\ln( |\Pi_h| /\delta)\}}{m}}. 
\end{align*}

\end{proof}

Finally, we obtain the PAC bound, we need to find $m$ such that 
{ 
\begin{align*}
c|\Acal|\max(C,1)\sqrt{\frac{d_{\psi} \ln(\max(C,1)m)+\ln(|\Pi_{\max}|TH/\delta)}{m}}\sqrt{d_{\phi}H^2 \ln \prns{Hd_{\phi} B^2_XB^2_W m +1  } }\leq \epsilon. 
\end{align*}
}
where $c$ is some constant and 
\begin{align*}
    T =c H d_{\phi} \ln(HdB^2_X B^2_Wm+1). 
\end{align*}
By organizing the term, the following $m$ is sufficient: 
{ 
\begin{align*}
c\sqrt{\frac{\{d_{\psi}+\ln(d_{\phi}|\Pi_{\max}|H^2/\delta) \}d_{\phi} H^2|\Acal|^2\max(C,1)^2 \ln(\{C+ Hd_{\phi} B^2_XB^2_W+1)\}m)^2}{m}}\leq \epsilon. 
\end{align*}
}

By using \pref{lem:sample_auxiliary}, we can set 
\begin{align*}
  &m =  \frac{B_1}{\epsilon^2}\ln(mB_1B_2)^2,\\
  &B_1 =\{ d_{\psi}  +\ln(d_{\phi}|\Pi_{\max}|H^2/\delta) \} d_{\phi} H^2|\Acal|^2\max(C,1)^2, B_2 = C+ Hd_{\phi} B^2_XB^2_W+1. 
\end{align*}
Thus, the final sample complexity is 
\begin{align*}
   \tilde O \prns{  \frac{ d^2_{\phi}\{d_{\psi}+\ln(|\Pi_{\max}|/\delta)\} H^4|\Acal|^2\max(C,1) ^2}{\epsilon^2 }  } 
\end{align*}
where  $C = \Theta_V /\sigma_{\min}(K)$. 

\section{Sample Complexity in Observable Undercomplete Tabular POMDPs}\label{sec:sample_complexity_tabular}

We revisit the existence of value link functions. Then, we show the PO-bilinear rank decomposition. After showing the uniform convergence lemma, we calculate the sample complexity. 

\paragraph{Existence of value link functions.}

In the tabular case, by setting
\begin{align*}
    \psi_h(z,o) = \One(z) \otimes \One(o), \phi_h(z,s) = \One(z) \otimes \One(s), K_h = \II_{|\Zcal_{h-1}|}\otimes \OO. 
\end{align*}
where $\One(z)$ is a one-hot encoding vector over $\Zcal_{h-1}$, we can regard the tabular model as an HSE-POMDP. Here is our assumption. 

\begin{assum}
(a) $0 \leq r_h \leq 1$, (b) $\OO$ is full-column rank and $\|\OO^{\dagger} \|_1 \leq 1/\sigma_1 $ for any $h\in [H]$.  
\end{assum}

Note we use the $1$-norm since this choice is more amenable in the tabular setting. However, even if the norm bound is given in terms of $2$-norm, we can still ensure the PAC guarantee  { (this is because $  \|\OO^{\dagger} \|_1 /  \sqrt{|\Scal|}  \leq \| \OO^{\dagger} \|_2 \leq \| \OO^{\dagger} \|_1  \sqrt{|\Ocal|} $). }

{ Here, since we assume the reward lies in $[0,1]$, value functions on the latent state belong to $ \{\langle \theta, \phi_h(\cdot) \rangle : \|\theta\|_{\infty}\leq H \} $. Here, letting $V^{\pi}_h = \langle \theta^{\pi}_h, \phi_h \rangle$,  value link functions exist by taking $\langle \theta^{\pi}_h, \One(z)\times \OO^{\dagger} \One(o) \rangle$. Hence, we take
\begin{align*}
    \Gcal_h = \left\{(z,o) \mapsto \langle \theta, \One(z) \otimes \OO^{\dagger} \One(o) \rangle; \|\theta\|_{\infty}\leq H  \right \}
\end{align*}
so that the realizability holds. Importantly, we can ensure $\|\Gcal_h\|_{\infty}\leq H/\sigma_1$ since 
\begin{align*}
    |\langle \theta, \One(z)\otimes \OO^{\dagger} \One(o) \rangle | \leq \|\theta\|_{\infty} \|\One(z)\otimes \OO^{\dagger} \One(o)\|_1\leq 
    \|\theta\|_{\infty} \|\OO^{\dagger} \One(o)\|_1
    \leq H/\sigma_1 
\end{align*}
for any $(z,o) \in \Zcal_{h-1}\times \Ocal$. Note $\Gcal_h$ is contained in  \begin{align}\label{eq:contain}
     \braces{\langle \theta, \One(z)\otimes \One(o)\rangle  ; \|\theta\|_2 \leq H |\Ocal|^{M+1} |\Acal|^M  / \sigma_1  } 
\end{align}
This is because each $\langle \theta, \One(z)\otimes \OO^{\dagger} \One(o) \rangle $ is equal to $ \langle \theta', \One(z)\otimes \One(o)\rangle $ for some vector $\theta' \in \mathbb{R}^{|\Zcal_{h-1}| \times |\Ocal|}$. Here, denoting the component of $\theta$ corresponding to $z \in \Zcal_{h-1}$ by $\theta_z \in \mathbb{R}^{|\Ocal|}$, $\theta'$ is a vector stacking $\OO^{\dagger}\theta_z$ for each $z \in \Zcal_{h-1}$. Then, we have
\begin{align*}
    \|\OO^{\dagger}\theta_z\|_2 \leq     \|\OO^{\dagger}\|_2 \|\theta_z\|_2 \leq  \|\OO^{\dagger}\|_1\sqrt{|\Ocal|}H\sqrt{|\Ocal|}\leq  
    H|\Ocal|/\sigma_1. 
\end{align*}
Hence,  $\|\theta'\|_2 \leq  |\Ocal|^{M} |\Acal|^M \times H|\Ocal|/\sigma_1 $. 
}

\paragraph{PO-Bilinear decomposition.}

Next, recall we derive the PO-bilinear decomposition:
\begin{align*}
    & \EE[\theta^{\top}_h\phi_h(\bar z_h) - r_h - \theta^{\top}_{h+1}\phi_{h+1}(\bar z_{h+1}); a_{1:h-1}\sim \pi',a_h\sim \pi ]   \\
    &=  \langle K^{\top}_h\{\theta_h- \theta^{\pi}_h \}- \{T_{\pi:h}\}^{\top} K^{\top}_{h+1}\{ \theta_{h+1} - \theta^{\pi}_{h+1}\} , \EE[\phi_{h}(z_{h-1}, s_{h}); a_{1:h-1}\sim \pi'] \rangle.  
\end{align*}
Then, $B_X = 1$ and $B_W =  4 H |\Ocal|^{M+1} |\Acal|^M/\sigma_1  $. We use $\|K^{\top}_h\|_2 = \|\OO_h\|_2 \leq 1,\|T^{\top}_{\pi:h}\|_2\leq 1 $. This is because
\begin{align*}
    &\|K^{\top}_h\{\theta_h- \theta^{\pi}_h \}- \{T_{\pi:h}\}^{\top} K^{\top}_{h+1}\{ \theta_{h+1} - \theta^{\pi}_{h+1}\}\|_2 \\
    &\leq \|\theta_h\|_2+ \|\theta^{\pi}_h\|_2 + \|\theta_{h+1}\|_2 + \|\theta^{\pi}_{h+1}\|_2 \leq 4H  |\Ocal|^{M+1} |\Acal|^M  / \sigma_1. 
\end{align*}
In the last line, we use \pref{eq:contain}. 

\paragraph{Uniform convergence.}

Then, we can obtain the following uniform convergence lemma. 
\begin{lemma}
Let  $C = H /\sigma_1$ and $d_{\psi} =|\Ocal|^{M+1} |\Acal|^M $. Then, with probability $1-\delta$, 
{ 
\begin{align*}
   & \sup_{\pi \in \Pi, g \in \Gcal} \left| \{\EE_{\Dcal}-\EE\}\bracks{|\Acal| \pi_h(a_h \mid \bar z_h) \braces{ g_h(\bar z_h) - r_h - g_{h+1}(\bar z_{h+1})} } \right|  \\ 
   &\leq  5 |\Acal|\{1+2C\}\sqrt{ \frac{\{ d^2_{\psi}\ln(1+Cm )+\ln( |\Pi_h| /\delta)\}}{m}} 
\end{align*}
}
and 
\begin{align*}
   \sup_{g_1 \in \Gcal_1}|\{\EE_{\Dcal}-\EE\}[g_1(\bar z_1)] | \leq  5 C\sqrt{ \frac{\{ d_{\psi} \ln(1+Cm )+\ln( |\Pi_h| /\delta)\}}{m}}. 
\end{align*}

\end{lemma}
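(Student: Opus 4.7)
The proof is a direct specialization of the HSE-POMDP uniform convergence argument in \pref{lem:uni_hse_pomdp}, using the tabular features $\phi_h(z,s)=\One(z)\otimes\One(s)$, $\psi_h(z,o)=\One(z)\otimes\One(o)$, and emission embedding $K_h=I\otimes\OO$ identified in the preceding PO-bilinear decomposition. I would proceed in three steps.

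First, I would write each $g_h\in\Gcal_h$ in the linear form $\langle\theta,\One(z)\otimes\OO^\dagger\One(o)\rangle$ with $\theta\in\RR^{d_\psi}$, $d_\psi=|\Ocal|^{M+1}|\Acal|^M$, and $\|\theta\|_\infty\le H$. This immediately yields the two inequalities that drive everything else: the pointwise bound $\|g_h\|_\infty\le H\|\OO^\dagger\One(o)\|_1\le H/\sigma_1=C$ by H\"older, and the Lipschitz bound $\|g_h-g_h'\|_\infty\le\|\theta-\theta'\|_\infty/\sigma_1$.

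Second, I would construct an $\ell_\infty$ $\epsilon$-net $\Ncal_{\epsilon,h}$ of $\{\theta:\|\theta\|_\infty\le H\}$ of cardinality at most $(1+H/\epsilon)^{d_\psi}$. For each fixed $(\pi,\theta_h,\theta_{h+1})\in\Pi_h\times\Ncal_{\epsilon,h}\times\Ncal_{\epsilon,h+1}$, the centered integrand $|\Acal|\pi_h(a_h\mid\bar z_h)\{g_h(\bar z_h)-r_h-g_{h+1}(\bar z_{h+1})\}$ is bounded pointwise by $|\Acal|(1+2C)$, so Hoeffding's inequality yields a deviation of order $|\Acal|(1+2C)\sqrt{\log(1/\delta)/m}$. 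Taking a union bound over $\Pi_h\times\Ncal_{\epsilon,h}\times\Ncal_{\epsilon,h+1}$, converting the $\ell_\infty$-net resolution $\epsilon$ into a pointwise loss approximation of order $\epsilon/\sigma_1$ (hence an additive slack $\le 4|\Acal|\epsilon/\sigma_1$), and choosing $\epsilon=\sigma_1/m$, delivers the first claim with $d_\psi\log(1+Cm)$ inside the square root. The stated $d_\psi^2\log(1+Cm)$ form is a further loose upper bound obtained if one instead uses the $\ell_2$-ball enclosure $\|\theta\|_2\le Cd_\psi$ from \pref{eq:contain} together with the estimate $\log(Cd_\psi m)\le d_\psi\log(Cm)$; it is harmless downstream.

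Third, the $g_1$-only bound is identical but simpler: only one covering level, no importance weight, and integrand bounded by $C$, yielding the claimed prefactor $C$ in place of $|\Acal|(1+2C)$ and the log-cardinality $d_\psi\log(1+Cm)+\log(1/\delta)$. The only delicate point in the entire argument is to keep the Hoeffding range governed by the tight pointwise bound $C=H/\sigma_1$ rather than by the $\ell_2$-radius $Cd_\psi$ of the enclosing ball; this is why the $\ell_\infty$ parametrization of $\theta$ (inherited from the original realizable class) is preferred for both the range and the cover. Everything else reduces mechanically to the HSE-POMDP template.
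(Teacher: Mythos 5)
Your proof is correct and follows essentially the same route as the paper's: a linear parametrization of $\Gcal_h$, a pointwise bound $\|g_h\|_\infty \le C$ via H\"older with $\|\OO^\dagger\|_1\le 1/\sigma_1$, an $\epsilon$-net over the parameter ball, Hoeffding plus a union bound, and resolution $\epsilon \propto 1/m$ to absorb the discretization error. The only (harmless) deviations are cosmetic: you use an $\ell_\infty$-net over $\{\|\theta\|_\infty\le H\}$ and a union bound over a finite $\Pi_h$ (consistent with the $\ln(|\Pi_h|/\delta)$ in the statement), whereas the paper nets the $\ell_2$-ball and additionally covers $\Pi_h$ as an infinite class; your observation that the stated $d_\psi^2$ factor is a loose enclosure-induced overcount is also accurate.
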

\begin{proof}
Let  $d_{\phi} =|\Scal| |\Ocal|^M |\Acal|^M,d_{\psi} =|\Ocal|^{M+1} |\Acal|^M$.

Define $\Ncal_{\epsilon,h}$ as an $\epsilon$-net for $\{\theta:\|\theta\|_2 \leq C \}$ with respect to $L^{2}$-norm. Define $\Ncal'_{\epsilon,h}$ as an $\epsilon$-net for $\Pi_h:\bar \Zcal_h \to \Delta(\Acal)$ with respect to the following norm:
\begin{align*}
    d(\pi,\pi') =\max_{\bar z_{h-1} \in \bar Z_{h-1}}  \|\pi(\cdot \mid \bar z_{h-1})- \pi'(\cdot \mid \bar z_{h-1})\|_1. 
\end{align*}
Then, $|\Ncal_{\epsilon,h}| \leq (1 + C/\epsilon)^d , |\Ncal'_{\epsilon,h}| \leq (1 + 1/\epsilon)^{ d_{\psi} |\Acal| } $.

Let $ g_h = \langle \theta_h, \psi_h \rangle,  g^{\diamond}_h = \langle \theta^{\diamond}_h, \psi_h \rangle$ where $\psi_h$ is a one-hot encoding vector over $\bar \Zcal_h$. Then, when $\|\theta_h - \theta^{\diamond}_h\|_2\leq \epsilon, \|\theta_{h+1} - \theta^{\diamond}_{h+1}\|_2\leq \epsilon, \|\pi_h -\pi^{\diamond}_{h} \|_1 \leq \epsilon$, we have 
\begin{align*}
    |l_h(\cdot;\pi,g)-l_h(\cdot;\pi^{\diamond},g^{\diamond})|&\leq |\Acal|  \{ \|\pi_h- \pi^{\diamond}_h\|_{\infty}C+ \|g_h-g^{\diamond}_{h}\|_{\infty} + \|g_{h+1}-g^{\diamond}_{h+1}\|_{\infty} \}  \\ 
    &\leq |\Acal| \{ \epsilon C+ \|\theta_h - \theta^{\diamond}_h\|_2 + \|\theta_{h+1} - \theta^{\diamond}_{h+1}\|_2\}  \\
    &\leq 3|\Acal| C\epsilon. 
\end{align*}
Besides, for fixed $\pi \in \Ncal'_{\epsilon,h},\theta_h \in \Ncal_{\epsilon,h}$, $\theta_{h+1} \in \Ncal_{\epsilon,{h+1}}$, we have
\begin{align*}
      \left \lvert \{\EE_{\Dcal}-\EE\}\bracks{|\Acal| \pi_h(a_h \mid \bar z_h) \braces{ g_h(\bar z_h;\theta_h) - r_h - g_{h+1}(\bar z_{h+1};\theta_{h+1}) }}\right \rvert  \leq  |\Acal|\prns{1 + 2C} \sqrt{ \frac{\ln(1/\delta)}{m}}. 
\end{align*}
Then, for $\forall \pi \in \Ncal'_{\epsilon,h}, \forall \theta_h \in \Ncal_{\epsilon,h}$, $\forall \theta_{h+1} \in \Ncal_{\epsilon,{h+1}}$, we have
{ \small
\begin{align*}
    \left \lvert \{\EE_{\Dcal}-\EE\}\bracks{|\Acal| \pi_h(a_h \mid \bar z_h) \braces{ g_h(\bar z_h;\theta_h) - r_h - g_{h+1}(\bar z_{h+1};\theta_{h+1}) }}\right \rvert  \leq  |\Acal|\prns{1 + 2C} \sqrt{ \frac{\ln(|\Ncal'_{\epsilon,h}||\Ncal_{\epsilon,h}||\Ncal_{\epsilon,h+1}| /\delta)}{m}}.
\end{align*}
}
Hence, for any $\pi_h \in \Pi_h, g_h =\langle \theta_h , \psi_h \rangle \in \Gcal_h, g_{h+1} = \langle \theta_{h+1}, \psi_{h+1} \rangle \in \Gcal_{h+1} $, 
{ 
\begin{align*}
    &\left \lvert \{\EE_{\Dcal}-\EE\}\bracks{  |\Acal| \pi_h(a_h \mid \bar z_h) \braces{ g_h(\bar z_h;\theta_h) - r_h - g_{h+1}(\bar z_{h+1};\theta_{h+1}) }}\right \rvert  \\
    &\leq  |\Acal|\prns{1 + 2C} \sqrt{ \frac{\ln(|\Ncal'_{\epsilon,h}||\Ncal_{\epsilon,h}||\Ncal_{\epsilon,h+1}| /\delta)}{m}} + 3|\Acal|C\epsilon. 
\end{align*}
}
By taking $\epsilon = 1/m$, we have $ \forall \pi \in \Pi,  \forall g_h \in \Gcal_h,\forall g_{h+1}\in \Gcal_{h+1};$
\begin{align*}
 & |\{\EE_{\Dcal}-\EE\}[|\Acal| \pi_h(a_h \mid \bar z_h) \{g_h(\bar z_h) - r_h - g_{h+1}(\bar z_{h+1})\}]| \\
      &\leq  |\Acal|\{1+2C\}\sqrt{ \frac{\{ 2d_{\psi}\ln(1+Cm )+d_{\psi}|\Acal| \ln(1+m  )+\ln(1/\delta)\}}{m}} +  \frac{3|\Acal|C}{m} \\ 
      &\leq  10 |\Acal|C \sqrt{ \frac{\{ d_{\psi}|\Acal| \ln(1+ C m ) +\ln(1/\delta)\}}{m}}.  
\end{align*}
\end{proof}

\paragraph{Sample Complexity.}

Finally, we obtain the PAC bound. We need to find $m$ such that
{ 
\begin{align*}
    c |\Acal|C\sqrt{ \frac{\{ d_{\psi}|\Acal| \ln(1+C m ) +\ln(TH/\delta)\}}{m}} \sqrt{d_{\phi}H^2 \ln(Hd_{\phi}B^2_XB^2_W m+1 ) }\leq \epsilon. 
\end{align*}
}
where $c$ is some constant and 
\begin{align*}
    T = c H d_{\phi}\ln(Hd B^2_XB^2_W m +1). 
\end{align*}
By organizing terms, we get 
{ 
\begin{align*}
    \sqrt{ \frac{ |\Acal|^3C^2d_{\phi}d_{\psi}H^2 \ln(H^2d_{\phi}/\delta) \ln(\{C+d_{\psi}+Hd_{\phi}B^2_XB^2_W\} m)^2  }{m } } \leq \epsilon. 
\end{align*}
}
Thus, we need to set 
\begin{align*}
    m = \tilde O\prns{\frac{|\Acal|^3 C^2 d_{\phi} d_{\psi} H^2 \ln(1/\delta)}{\epsilon^2}  }
\end{align*}
Hence, the sample complexity is 
\begin{align*}
      \tilde O\prns{\frac{|\Acal|^3 C^2 d^2_{\phi} d_{\psi} H^4 \ln(1/\delta)}{\epsilon^2}  }. 
\end{align*}
By some algebra, it is 
\begin{align*}
       \tilde O\prns{\frac{|\Acal|^{3M+3} |\Ocal|^{3M+1} |\Scal|^2 H^6 \ln(1/\delta)}{\epsilon^2 \sigma^2_1}  }. 
\end{align*}
Later, we prove we can remove $|\Ocal|^M$ using the more refined analysis in \pref{sec:low_rank_sample}.

\paragraph{Global optimality.}

We use a result in the proof of \citep[Theorem 1.2]{golowich2022planning}. We just set $M= C(1/\sigma_1)^4 \ln(SH/\epsilon)$. Note their assumption 1 is satisfied when $\|\OO^{\dagger}\|_1\leq (1/\sigma_1) $. More specifically,  assumption 1 in \citep{golowich2022planning} requires for any $b$ and $b'$, we have 
\begin{align*}
    \|\OO b-  \OO b'\|_1\geq 1/\sigma_1 \|b-b'\|_1. 
\end{align*}
This is proved as follows. Note for any $e,e'$, 
\begin{align*}
    \| \OO^{\dagger} e- \OO^{\dagger} e'\|_1 \leq  \| \OO^{\dagger} \|_1\|e-e'\|_1. 
\end{align*}
Then, by setting $e=\OO b$ and $e'=\OO b'$, the assumption 1 is ensured. 
Here, we use $\OO^{\dagger}\OO = I$.

\section{Sample Complexity in Observable Overcomplete Tabular POMDPs}

To simplify the presentation, we focus on the case when $\pi^{out}= U(\Acal)$. 

\paragraph{Existence of value link functions.}

In the tabular case, by setting
\begin{align*}
    \psi_h(z,t^K) = \One(z) \otimes \One(t^K), \phi_h(z,s) = \One(z) \otimes \One(s), K_h = \II_{|\Zcal_{h-1}|}\otimes \OO^K. 
\end{align*}
where $\One(z)$ is a one-hot encoding vector over $\Zcal_{h-1}$ and $\One(t^K)$ is a one-hot encoding vector over $\Zcal^{K} = \Ocal^{K}\times \Acal^{K-1}$, we can regard the tabular model as an HSE-POMDP. Here is our assumption. 

\begin{assum}
(a) $0\leq r_h \leq 1$, (b) $ \OO^K$ is full-column rank and $\|\{ \OO^K\}^{\dagger}\|_1 \leq 1/\sigma_1 $. 
\end{assum}

Recall we define $ \OO^K$ in \pref{lem:over_complete_bridge}. 
Since we assume the reward lies in $[0,1]$, value functions on the latent state belong to $ \{\langle \theta, \phi_h(\cdot) \rangle : \|\theta\|_{\infty} \leq H \} $. Here, letting $V^{\pi}_h(\cdot) = \langle \theta^{\pi}_h, \phi_h(\cdot) \rangle$,  value link functions exist by taking $\langle \theta^{\pi}_h, \One(z) \otimes \{\OO^K\}^{\dagger} \One(t^K) \rangle$. Hence, we take
\begin{align*}
    \Gcal_h = \{(z,t^K) \mapsto \langle \theta^{\pi}_h, \One(z) \otimes \{\OO^K\}^{\dagger} \One(t^K) \rangle; \| \theta^{\pi}_h\|_{\infty} \leq H \} 
\end{align*}
so that the realizability holds. Importantly, we can ensure $\|\Gcal_h\|_{\infty}\leq H/\sigma_1$ as in \pref{sec:sample_complexity_tabular}. Then, as in \pref{sec:sample_complexity_tabular}, $\Gcal_h$ is contained in 
\begin{align*}
     \braces{\langle \theta, \One(z)\otimes \One(o)\rangle  ; \|\theta\|_2 \leq H |\Ocal|^{M+1} |\Acal|^M  / \sigma_1  }. 
\end{align*}

\paragraph{PO-bilinear decomposition.}
Next, we derive the PO-bilinear decomposition:
\begin{align*}
    & \EE[\theta^{\top}_h\phi(z_{h-1},t^K_h) - r_h - \theta^{\top}_{h+1}\phi(z_{h-1},t^K_h) ;a_{1:h-1}\sim \pi', a_h \sim \pi, a_{h+1:h+K-1} \sim U(\Acal) ]   \\
    &=  \langle \{K_h\}^{\top} \{\theta_h- \theta^{\pi}_h \}- \{T_{\pi:h}\}^{\top} \{K_{h+1}\}^{\top}\{ \theta_{h+1} - \theta^{\pi}_{h+1}\} , \EE[\phi_{h}(z_{h-1}, s_{h});a_{1:h-1}\sim \pi'] \rangle.  
\end{align*}
Then, $B_X = 1$ and $B_W =  4 H |\Ocal|^{M+1} |\Acal|^M/\sigma_1  $. We use $\|K_h\|_2=\|\OO^K \|_2 \leq 1,\|T^{\top}_{\pi:h}\|_2 \leq 1 $. 

\paragraph{Sample Complexity.}

We can follow the same procedure in the proof of \pref{sec:sample_complexity_tabular}. Let $d_{\phi} = |\Scal| |\Ocal|^M |\Acal|^M , d_{\psi} = |\Ocal|^{M+K} |\Acal|^{M+K-1} $.  Hence, the sample complexity is 
\begin{align*}
      \tilde O\prns{\frac{|\Acal|^3 C^2 d^2_{\phi} d_{\psi} H^4 \ln(1/\delta)}{\epsilon^2 \sigma^2_1}  }. 
\end{align*}
By some algebra, the above is 
\begin{align*}
       \tilde O\prns{\frac{|\Acal|^{3M+K+2} |\Ocal|^{3M+K} |\Scal|^2 H^6 \ln(1/\delta)}{\epsilon^2 \sigma^2_1}  }. 
\end{align*}
Using the more refined analysis later, we show we can remove $|\Ocal|^{3M+K}$ in  \pref{sec:low_rank_sample}.

\section{Sample Complexity in LQG} \label{sec:sample_complexity_lqg}
In this section, we derive the sample complexity in LQG. We first explain the setting. Then, we prove the existence of link functions. \pref{lem:lqg_value} is proved there. Furtheremore, we show the PO-bilinear rank decomposition in LQG. We prove \pref{lem:lqg_value} there. Next, we show the uniform convergence result in LQG. 
Finally, by invoking \pref{thm:online}, we calculate the sample complexity. 

We study a finite-horizon discrete time LQG governed by the following equation: 
\begin{align*}
    s_1 = \epsilon_1, s_{h+1} = A s_h + B a_h + \epsilon_h,  r_h = s^{\top}_h Q s_h + a^{\top}_h R a_h, o_h = O s_h + \tau_h. 
\end{align*}
where $\epsilon_h$ is a Gaussian noise with mean $0$ and noise $\Sigma_{\epsilon}$ and $\tau_h$ is a Gaussian noise with mean $0$ and $\Sigma_{\tau}$. We use a matrix $O$ instead of $C$ to avoid the notational confusion later. 
With a linear policy $\pi_h(a_h \mid o_h, z_{h-1})=\delta(a_h = \Ub_{1h}o_h+ \Ub_{2h} z_{h-1})$, this induces the following system: 
\begin{align*}
   \begin{bmatrix} 
     z'_{h}  \\ o_h \\ a_h \\ s_{h+1}
   \end{bmatrix}= \Xi_{1h}(\pi)   \begin{bmatrix} 
    z_{h-1} \\   s_h
   \end{bmatrix}  + \Xi_{2h}(\pi),   \Xi_{2h}(\pi) = \begin{bmatrix}  0 \\ \tau \\ \Ub_{1h} \tau\\ B\Ub_{1h}\tau + \epsilon \end{bmatrix}, \, \Xi_{1h}(\pi)= \begin{bmatrix} I' & 0  \\ 0  & O \\ \Ub_{2h} & \Ub_{1h}O  \\  B\Ub_{2h} & A + B\Ub_{1h} O  \end{bmatrix}
\end{align*}
where $z'_h$ is the vector removing $(o_h,a_h)$ from $z_h$ and $I'$ is a matrix mapping $z_h$ to $z'_h$. This is derived by 
\begin{align*}
    s_{h+1} &= A s_h + Ba_h + \epsilon = A s_h + B\{ \Ub_{1h}o_h + \Ub_{2h} z_{h-1} \} + \epsilon  \\
     &= (A+B\Ub_{1h}O)s_h  + B\Ub_{2h}z_{h-1}   + \epsilon+B\Ub_{1h}\tau,  \\ 
    a_h  &= \Ub_{1h}o_h  + \Ub_{2h} z_{h-1}   =  \Ub_{1h}Os_h  + \Ub_{2h} z_{h-1}  + \Ub_{1h}\tau, \\
    o_h  &=  Os_h +\tau. 
\end{align*}
We suppose the system is always stable in the sense that the operator norm of $\Xi_{1h}(\pi)$ is upper-bounded by $1$. Here is the assumption we introduce throughout this section. 
\begin{assum}
Suppose $\max(\|A\|,\|B\|,\|O\|,\|Q\|, \|R\|) \leq \mathbb{C} $. Suppose $\| \Xi_{1h}(\pi)\|\leq 1$ for any $\pi$. $O$ is full-column rank. 
\end{assum}

We present the form of linear mean embedding operators in LQGs. 

\begin{lemma}[Linear mean embedding operator]
Let $z \in \Zcal_{h-1},o \in \Ocal,s \in \Scal$. We have 
{ 
\begin{align*}
\EE_{o \sim \OO(s)} \bracks { \begin{bmatrix}  1 \\  \begin{bmatrix}   z \\ o \end{bmatrix} \otimes \begin{bmatrix}  z \\ o \end{bmatrix}  \end{bmatrix}} = K_h \begin{bmatrix} 1  \\ \begin{bmatrix}  z \\ s \end{bmatrix} \otimes \begin{bmatrix} z \\ s \end{bmatrix} \end{bmatrix}, K_h = \begin{bmatrix} 1  &  \mathbf{0}  \\  \mathrm{vec}\prns{ \begin{bmatrix} 0 & 0\\ 0 & \Sigma_{\tau} \end{bmatrix}} & \begin{bmatrix}   \II & 0 \\  0 & O \end{bmatrix}  \otimes  \begin{bmatrix}   \II & 0 \\  0 & O \end{bmatrix}.  \end{bmatrix}  
\end{align*}
}
\end{lemma}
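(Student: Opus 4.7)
The proof plan is to compute the conditional expectation componentwise by block and then recognize the result as the claimed matrix-vector product. The leading coordinate is trivial: $\mathbb{E}[1 \mid s] = 1$, matching the $(1,1)$-entry of $K_h$ and the zero top-right block.

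For the quadratic block, I will substitute the observation equation $o = Os + \tau$ and expand the outer product. Using the mixed-product property of the Kronecker product, I would write
\begin{align*}
\begin{bmatrix} z \\ o \end{bmatrix}\otimes \begin{bmatrix} z \\ o \end{bmatrix}
= \begin{bmatrix} z \\ Os + \tau \end{bmatrix}\otimes \begin{bmatrix} z \\ Os + \tau \end{bmatrix}
= \begin{bmatrix} z \\ Os \end{bmatrix}\otimes \begin{bmatrix} z \\ Os \end{bmatrix} + \text{cross terms} + \begin{bmatrix} 0 \\ \tau \end{bmatrix}\otimes \begin{bmatrix} 0 \\ \tau \end{bmatrix}.
\end{align*}
Taking conditional expectation over $\tau \sim \mathcal{N}(0,\Sigma_\tau)$ kills the cross terms since $\mathbb{E}[\tau]=0$. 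For the deterministic part, I would use the identity $(Mx)\otimes(My) = (M\otimes M)(x\otimes y)$ with $M = \begin{bmatrix} I & 0 \\ 0 & O \end{bmatrix}$ applied to $\begin{bmatrix} z \\ s \end{bmatrix}$, which gives exactly the bottom-right block of $K_h$ acting on $\begin{bmatrix} z \\ s \end{bmatrix}\otimes \begin{bmatrix} z \\ s \end{bmatrix}$.

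For the noise term, I would compute $\mathbb{E}\bigl[\begin{bmatrix} 0 \\ \tau \end{bmatrix}\begin{bmatrix} 0 & \tau^\top \end{bmatrix}\bigr] = \begin{bmatrix} 0 & 0 \\ 0 & \Sigma_\tau \end{bmatrix}$, and then vectorize via the standard identification between $x \otimes x$ and $\mathrm{vec}(xx^\top)$ to obtain the column $\mathrm{vec}\begin{bmatrix} 0 & 0 \\ 0 & \Sigma_\tau \end{bmatrix}$ in the bottom-left block of $K_h$, which is multiplied by the leading coordinate $1$ of $\phi_h(z,s)$. Assembling these three pieces reproduces the claimed block form of $K_h$.

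The only subtlety is a bookkeeping one: being consistent about whether $x \otimes x$ is read as a column vector indexed so that it equals $\mathrm{vec}(xx^\top)$, so that the covariance term appears as $\mathrm{vec}(\Sigma_\tau)$ in the correct block position. Once that convention is fixed (matching the definition of $\psi_h,\phi_h$ in Section \ref{sec:lqg}), the computation is purely mechanical and involves no inequalities, so there is no real analytic obstacle—just careful index alignment across the block decomposition.
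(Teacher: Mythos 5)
Your proposal is correct and follows essentially the same route as the paper: substitute $o = Os + \tau$, take the conditional expectation so the cross terms vanish, and split the result into the noise block $\mathrm{vec}\bigl(\begin{smallmatrix} 0 & 0 \\ 0 & \Sigma_\tau\end{smallmatrix}\bigr)$ plus the deterministic part handled via the Kronecker identity (the paper phrases it as $\mathrm{vec}(A_1 A_2 A_3) = (A_3^\top \otimes A_1)\mathrm{vec}(A_2)$ applied to $\mathrm{vec}(M\,xx^\top M^\top)$, which is equivalent to your $(Mx)\otimes(My) = (M\otimes M)(x\otimes y)$). The bookkeeping convention you flag—reading $x\otimes x$ as $\mathrm{vec}(xx^\top)$—is exactly the identification the paper uses.
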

\begin{proof}

Here, we have 
\begin{align*}
    & \EE_{o \sim \OO(s)} \bracks {   \begin{bmatrix}   z \\ o \end{bmatrix} \otimes \begin{bmatrix}  z \\ o \end{bmatrix} } = \mathrm{Vec}\bracks{  \begin{bmatrix} z z^{\top} & z o^{\top} \\ oz^{\top} & oo^{\top}
       \end{bmatrix} }=\mathrm{Vec}\bracks{  \begin{bmatrix} z z^{\top} & z s^{\top}O^{\top} \\ Osz^{\top} & Oss^{\top}O^{\top} + \Sigma_r
       \end{bmatrix} } \\
    & = \mathrm{Vec}\bracks{  \begin{bmatrix} 0  & 0  \\ 0 & \Sigma_r
       \end{bmatrix} } +  \begin{bmatrix}   \II & 0 \\  0 & O \end{bmatrix}  \otimes  \begin{bmatrix}   \II & O \\  0 & O \end{bmatrix} \times \mathrm{Vec}\bracks{  \begin{bmatrix} z z^{\top} & z s^{\top} \\ sz^{\top} & ss^{\top}
       \end{bmatrix} }. 
\end{align*}
From the second line to the third line, we use formula $\mathrm{vec}[A_1 A_2 A_3] = (A^{\top}_3 \otimes A_1) \mathrm{vec}(A_2)$. 
This immediately concludes the result. 

\end{proof}
Thus, the matrix $K_h$ has the left inverse when $O$ is full-column rank as follows: 
\begin{align*}
    K^{\dagger}_h = \begin{bmatrix} 1  & \mathbf{0}  \\ -  \begin{bmatrix}   \II & 0 \\  0 & O^{\dagger} \end{bmatrix}  \otimes  \begin{bmatrix}   \II & 0 \\  0 & O^{\dagger} \end{bmatrix} \mathrm{vec}\prns{ \begin{bmatrix} 0 & 0\\ 0 & \Sigma_{\tau} \end{bmatrix}}  & \begin{bmatrix}   \II & 0 \\  0 & O^{\dagger} \end{bmatrix}  \otimes  \begin{bmatrix}   \II & 0 \\  0 & O^{\dagger} \end{bmatrix}  \end{bmatrix} . 
\end{align*}
We use a block matrix inversion formula:
\begin{align*}
  \begin{bmatrix}
      A^{-1}_1 & 0\\
     -A^{\dagger}_3A_2A^{-1}_1   & A^{\dagger}_3 
    \end{bmatrix}
    \begin{bmatrix}
      A_1 &  0 \\
      A_2 & A_3 
    \end{bmatrix}= I. 
\end{align*}

\subsection{Existence of Link Functions}

\begin{lemma}[Value functions in LQGs]
Let $\pi_h(a \mid o,z) =\delta(a= \Ub_{1h}o + \Ub_{2h}z )$ for $z \in \Zcal_{h-1}, o \in \Ocal$. Then, a value function has a bilinear form: 
\begin{align*}
 V^{\pi}_h(z, s) = [z^{\top}, s^{\top}]   \Lambda_{h}  [z^{\top}, s^{\top}] ^{\top} +  \Gamma_{h}. 
\end{align*}
For any $h\in [H]$, these parameters $\Lambda_h,\Gamma_h$ are recursively defined inductively by   
{ 
\begin{align*}
    &\Lambda _{ H} = \begin{bmatrix} 
    \Ub^{\top}_{2h} R   \Ub_{2h} & \Ub^{\top}_{2h} R \Ub_{1h}O \\
     \{ \Ub^{\top}_{2h} R \Ub_{1h}O \}^{\top}    &    Q + O^{\top} \Ub^{\top}_{1h} R \Ub_{1h} O 
    \end{bmatrix} ,  O_{H} =  \tr(\Ub^{\top}_{1h} R \Ub_{1h} \Sigma_{\tau}),   \\ 
    & \Lambda_h  = \Xi_{1h}(\pi)  \Lambda_{h+1} \Xi^{\top}_{1h}(\pi) + \Sigma_{\Lambda_h}  , 
     \Sigma_{\Lambda_{h1}}= \begin{bmatrix} 
    \Ub^{\top}_{2h} R   \Ub_{2h} & \Ub^{\top}_{2h} R \Ub_{1h}O \\
     \{ \Ub^{\top}_{2h} R \Ub_{1h}O \}^{\top}    &    Q + O^{\top} \Ub^{\top}_{1h} R \Ub_{1h} O 
    \end{bmatrix},   \\
    &\Gamma_h = \tr\left( \Lambda_{h+1} \Sigma_{\Lambda_{h2}}(\pi) \right)  + \Gamma_{h+1}, \quad  \Sigma_{\Lambda_{h2}}(\pi) =\begin{bmatrix} 0 & 0 &  0 & 0  \\ 0 & \Sigma_{\tau} & \Sigma_{\tau} \Ub^{\top}_{1h}   & \Sigma_{\tau} \Ub^{\top}_{1h} B^{\top}  \\ 
      0 & \Ub_{1h} \Sigma_{\tau}   &  \Ub_{1h} \Sigma_{\tau} \Ub^{\top}_{1h}  &  \Ub_{1h} \Sigma_{\tau} \Ub^{\top}_{1h} B^{\top} 
     \\ 
    0 & B \Ub_{1h} \Sigma_{\tau}   &  B \Ub_{1h} \Sigma_{\tau} \Ub^{\top}_{1h}  & B \Ub_{1h} \Sigma_{\tau} \Ub^{\top}_{1h} B^{\top} + \Sigma_{\epsilon} &  \end{bmatrix}. 
\end{align*}
}

\end{lemma}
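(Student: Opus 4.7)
The plan is to prove this by backward induction on $h$, starting from the terminal step $h = H$ and propagating the quadratic form backward via the Bellman equation. The key structural fact we exploit is that the joint dynamics of $(z'_h, o_h, a_h, s_{h+1})$ are affine in $(z_{h-1}, s_h)$ with Gaussian noise, as captured by the display $\begin{bmatrix} z'_h \\ o_h \\ a_h \\ s_{h+1}\end{bmatrix} = \Xi_{1h}(\pi) \begin{bmatrix} z_{h-1} \\ s_h \end{bmatrix} + \Xi_{2h}(\pi)$ given earlier in the excerpt.

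For the base case $h = H$, I would compute $V^\pi_H(z,s) = \EE[s^\top Q s + a^\top R a \mid z_{H-1}=z, s_H = s; a_H \sim \pi_H]$. Substituting $a_H = \Ub_{1H}(Os + \tau) + \Ub_{2H} z$ and taking expectation over $\tau \sim \Ncal(0, \Sigma_\tau)$, the cross terms linear in $\tau$ vanish and I obtain the quadratic form in $(z,s)$ with matrix $\Sigma_{\Lambda_{H1}}$ plus the constant $\tr(\Ub_{1H}^\top R \Ub_{1H} \Sigma_\tau)$. This matches the stated formula for $\Lambda_H$ and $\Gamma_H$ (after absorbing the reward-only terms into the ``$\Sigma_{\Lambda_{h1}}$'' piece, noting there is no successor contribution at $H$).

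For the inductive step, assume $V^\pi_{h+1}(z_h, s_{h+1}) = [z_h^\top, s_{h+1}^\top]\Lambda_{h+1}[z_h^\top, s_{h+1}^\top]^\top + \Gamma_{h+1}$. Using the Bellman equation $V^\pi_h(z,s) = \EE[r_h + V^\pi_{h+1}(z_h, s_{h+1}) \mid z_{h-1}=z, s_h=s; a_h\sim \pi_h]$, I would first handle the instantaneous reward exactly as in the base case, contributing $\Sigma_{\Lambda_{h1}}$ to the quadratic form. For the continuation term, I would use the identity that for $y = \Xi_{1h}(\pi)x + \xi$ with $\xi := \Xi_{2h}(\pi)$ zero mean (after centering $\tau$ and $\epsilon$), the selector projection onto $(z_h, s_{h+1})$ combined with the quadratic form $\Lambda_{h+1}$ gives $x^\top \Xi_{1h}(\pi)^\top \Lambda_{h+1}\Xi_{1h}(\pi)x + \tr(\Lambda_{h+1}\Cov(\xi))$. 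The matrix $\Sigma_{\Lambda_{h2}}(\pi)$ is exactly $\EE[\Xi_{2h}(\pi)\Xi_{2h}(\pi)^\top]$ restricted to the $(z_h, s_{h+1})$ coordinates, which one verifies by direct block computation from $\Xi_{2h}(\pi) = (0, \tau, \Ub_{1h}\tau, B\Ub_{1h}\tau + \epsilon)^\top$ together with independence of $\tau$ and $\epsilon$. Summing the reward piece and continuation piece yields the recursion for $\Lambda_h$ and $\Gamma_h$ as stated.

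The main obstacle is purely bookkeeping: one must carefully verify that the block matrix $\Sigma_{\Lambda_{h2}}(\pi)$ in the statement corresponds to the correct covariance of $\Xi_{2h}(\pi)$ when paired with $\Lambda_{h+1}$ under the trace, and that the projection from the full vector $(z'_h, o_h, a_h, s_{h+1})$ down to the relevant $(z_h, s_{h+1})$ coordinates is consistent with how $\Xi_{1h}(\pi)$ is defined. Once the block structure is set up correctly, each step is a routine Gaussian quadratic expectation computation; there is no conceptual subtlety beyond the standard LQG argument extended to include the memory coordinate $z_{h-1}$.
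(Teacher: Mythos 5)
Your proposal is correct and follows essentially the same route as the paper: backward induction from $h=H$, with the base case handled by expanding $a_H=\Ub_{1H}(Os+\tau)+\Ub_{2H}z$ and the inductive step handled by pushing the quadratic form $\Lambda_{h+1}$ through the affine Gaussian dynamics $y=\Xi_{1h}(\pi)x+\Xi_{2h}(\pi)$, identifying $\Sigma_{\Lambda_{h2}}(\pi)=\EE[\Xi_{2h}(\pi)\Xi_{2h}(\pi)^{\top}]$. In fact your write-up makes explicit the key identity $\EE[y^{\top}\Lambda_{h+1}y]=x^{\top}\Xi_{1h}^{\top}(\pi)\Lambda_{h+1}\Xi_{1h}(\pi)x+\tr(\Lambda_{h+1}\EE[\Xi_{2h}(\pi)\Xi_{2h}(\pi)^{\top}])$ that the paper leaves as ``some algebra,'' and it places the transposes on $\Xi_{1h}(\pi)$ in the dimensionally consistent order.
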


\begin{proof}  
The proof is completed by backward induction regarding $h$, starting from level $H$. First, we have
\begin{align*}&
    V^{\pi}_H(z,s) =  s^{\top} Q s +\EE_{o \sim O(s) }[\{\Ub_{1h}o + \Ub_{2h}z\}^{\top}  R \{\Ub_{1h}o + K_{2}z\}] \\
    & = s^{\top} Q s +\EE_{o \sim O(s) }[\{\Ub_{1h}O s + \Ub_{1h}\tau + \Ub_{2h}z\}^{\top}  R \{  \Ub_{1h}O s + \Ub_{1h}\tau + \Ub_{2h}z \}] \\ 
    & =  s^{\top}\{ Q + O^{\top} \Ub^{\top}_{1h} R \Ub_{1h} O \}s +  z \Ub^{\top}_{2h} R   \Ub_{2h}z + 2z^{\top}\Ub^{\top}_{2h} R \Ub_{1h}Os + \tr(\Ub^{\top}_{1h} R \Ub_{1h} \Sigma_{\tau}) \\
    &=   [z^{\top}, s^{\top}]  \begin{bmatrix} 
    \Ub^{\top}_{2h} R   \Ub_{2h} & \Ub^{\top}_{2h} R \Ub_{1h}O \\
     \{ \Ub^{\top}_{2h} R \Ub_{1h}O \}^{\top}    &    Q + O^{\top} \Ub^{\top}_{1h} R \Ub_{1h} O 
    \end{bmatrix}  [z^{\top}, s^{\top}] ^{\top} + \tr(\Ub^{\top}_{1h} R \Ub_{1h} \Sigma_{\tau}). 
\end{align*}
Here, we use induction. Thus, supposing the statement is true at horizon $h+1$, we have 
\begin{align*}
     V^{\pi}_h(z,s)& =\Gamma_{h+1} +  s^{\top} Q s + \EE_{o \sim O(s) }[\{\Ub_{1h}o + \Ub_{2h}z\}^{\top}  R \{\Ub_{1h}o + K_{2}z\}] \\ 
       &+\EE_{o \sim O(s), a \sim \pi(o,z), s' \sim \TT(s,a) }[  [z^{\top}_{-1},o^{\top}, a^{\top},s'^{\top}] \Lambda_{h+1}  [z^{\top}_{-1},o^{\top}, a^{\top},s'^{\top}] ^{\top} ] 
\end{align*}
where $z'$ is a vector that removes the last component $(o,a)$ from $z$ and $s'$ is a state at $h+1$. Here, recall we have 
\begin{align*}
  [(z')^{\top},o^{\top}, a^{\top},s'^{\top}]^{\top}= \Xi_{1h}(\pi)  [z^{\top},s^{\top}]^{\top}  + \Xi_{2h}(\pi). 
\end{align*}
Then, the statement is concluded some algebra.

\end{proof}

\begin{lemma}[Norm constraints on value functions]
We can set  $\| \Lambda_h \|\leq  \CC_{\Lambda,h},\| \Gamma_h \|\leq  \CC_{\Gamma,h} $ such that 
\begin{align*}
    \CC_{\Lambda,h} =\mathrm{poly}(\mathbb{C}, H) ,  \CC_{\Gamma,h}= \mathrm{poly}(d_o,d_s,d_a,\mathbb{C},H).  
\end{align*}
\end{lemma}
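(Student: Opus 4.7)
The plan is a straightforward two-step backward induction on $h$, first controlling the matrix $\Lambda_h$ (whose norm does not blow up thanks to the stability assumption $\|\Xi_{1h}(\pi)\|\leq 1$), and then controlling the scalar $\Gamma_h$ (which picks up dimension factors because it comes from a trace). Throughout, I will absorb the policy parameter norm $\Theta$ (from the class $\|\bar K_h\|\leq \Theta$) and the covariance norms $\|\Sigma_\tau\|,\|\Sigma_\epsilon\|$ into the constant $\mathbb{C}$, which is consistent with the way the lemma is stated.

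First I would bound $\|\Lambda_h\|$. At the base case $\Lambda_H=\Sigma_{\Lambda_{H1}}$ is a block matrix whose blocks are polynomials of degree at most two in $U_{1H},U_{2H},O,Q,R$; by submultiplicativity, $\|\Sigma_{\Lambda_{H1}}\|\leq c_0(\mathbb{C}+\mathbb{C}^2\Theta^2+\mathbb{C}\Theta^2)$ for an absolute constant $c_0$. An identical computation gives a uniform bound $\|\Sigma_{\Lambda_{h1}}\|\leq \alpha$ with $\alpha=\mathrm{poly}(\mathbb{C})$. The recursion $\Lambda_h=\Xi_{1h}(\pi)\,\Lambda_{h+1}\,\Xi_{1h}(\pi)^\top+\Sigma_{\Lambda_{h1}}$ combined with $\|\Xi_{1h}(\pi)\|\leq 1$ yields
\[
\|\Lambda_h\|\leq \|\Xi_{1h}(\pi)\|^2\|\Lambda_{h+1}\|+\|\Sigma_{\Lambda_{h1}}\|\leq \|\Lambda_{h+1}\|+\alpha,
\]
so unrolling gives $\|\Lambda_h\|\leq (H-h+1)\alpha=\mathrm{poly}(\mathbb{C},H)$, which is the promised $\CC_{\Lambda,h}$.

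Second, I would bound $|\Gamma_h|$. Using $|\tr(AB)|\leq d\cdot \|A\|\|B\|$ where $d$ is the ambient dimension of $\Sigma_{\Lambda_{h2}}(\pi)$, and observing from the displayed block form that $\Sigma_{\Lambda_{h2}}(\pi)$ lives in dimension $d_{\bar z_h}+d_o+d_a+d_s$, the block entries are bounded by $\mathrm{poly}(\mathbb{C})$ using $\|B\|,\|U_{1h}\|,\|\Sigma_\tau\|,\|\Sigma_\epsilon\|\leq \mathbb{C}$. Therefore $|\tr(\Lambda_{h+1}\Sigma_{\Lambda_{h2}}(\pi))|\leq (d_{\bar z_h}+d_o+d_a+d_s)\cdot \CC_{\Lambda,h+1}\cdot \mathrm{poly}(\mathbb{C})$. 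Since $d_{\bar z_h}\leq H(d_o+d_a)$ (recall $M=H-1$), the per-step increment is $\mathrm{poly}(d_o,d_s,d_a,\mathbb{C},H)$; unrolling the recursion $\Gamma_h=\tr(\Lambda_{h+1}\Sigma_{\Lambda_{h2}}(\pi))+\Gamma_{h+1}$ from $h$ up to the base case $\Gamma_H=\tr(U_{1H}^\top R U_{1H}\Sigma_\tau)$, and picking up at most $H$ such increments, delivers $|\Gamma_h|\leq \mathrm{poly}(d_o,d_s,d_a,\mathbb{C},H)$, as required.

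The only slightly delicate point, and what I would flag as the main obstacle, is the bookkeeping that shows $\Theta$ and the noise covariances can legitimately be absorbed into $\mathbb{C}$ so that the final bound on $\CC_{\Lambda,h}$ reads as $\mathrm{poly}(\mathbb{C},H)$ (rather than $\mathrm{poly}(\mathbb{C},\Theta,\|\Sigma_\tau\|,H)$). Everything else is a routine two-level induction whose convergence is guaranteed by stability ($\|\Xi_{1h}(\pi)\|\leq 1$) turning what could be a geometric blowup into a purely additive accumulation over $H$ steps.
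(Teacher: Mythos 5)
Your proof is correct and follows essentially the same route as the paper's: a backward induction in which the stability assumption $\|\Xi_{1h}(\pi)\|\leq 1$ turns the $\Lambda_h$ recursion into an additive accumulation giving $\|\Lambda_h\|\leq \mathrm{poly}(\mathbb{C},H)$, and the $\Gamma_h$ recursion is then bounded by $H$ trace increments each of size $\mathrm{poly}(d_o,d_s,d_a,\mathbb{C},H)$. The point you flag about absorbing the policy norm into $\mathbb{C}$ is already handled by the paper's setup, which restricts to policies with $\|\Ub_{1h}\|\leq\mathbb{C},\|\Ub_{2h}\|\leq\mathbb{C}$.
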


\begin{proof}
We have
\begin{align*}
    \| \Lambda_H \| \leq  \mathrm{poly}(\mathbb{C}, H) ,\quad \|\Gamma_H\| \leq \mathrm{poly}(\mathbb{C}, H). 
\end{align*}
Then, 
\begin{align*}
    \|\Lambda_h\| \leq \|\Xi_{1h}(\pi)\|  \|\Lambda_{h+1}\| \|\Xi_{1h}(\pi)\|  +  \mathrm{poly}(\mathbb{C}, H). 
\end{align*}
Since we assume $ \|\Xi_{1h}(\pi)\|\leq 1$, this immediately leads to 
\begin{align*}
     \|\Lambda_h\| \leq \mathrm{poly}(\mathbb{C}, H).
\end{align*}
Besides,
\begin{align*}
    \|\Gamma_h \| \leq  \mathrm{poly}(H,d_o,d_s,d_a,\mathbb{C})\|\Lambda_{h+1}\|    + \|\Gamma_{h+1}\|. 
\end{align*}
Thus, 
\begin{align*}
    \|\Gamma_h\| \leq   \mathrm{poly}(H,d_o,d_s,d_a,\mathbb{C}). 
\end{align*}
\end{proof}

Next, we set the norm on the function class $\Gcal_h$.

\begin{lemma}[Realizability on LQGs]
We set 
\begin{align*}
    &\Gcal_h = \braces{ \bar \Gamma_h + (z^{\top}, o^{\top}) \bar \Lambda_h (z^{\top}, o^{\top})^{\top} \mid \|\bar \Lambda_h\|\leq C_{\bar \Lambda,h}, |\bar \Gamma_h| \leq C_{\bar \Gamma,h},z \in Z_{h-1}, o \in \Ocal },  \\
    &C_{\bar \Lambda,h} =  \mathrm{poly}(H,d_o,d_s,d_a,\mathbb{C},\|O^{\dagger}\|) , C_{\bar \Gamma,h} =\mathrm{poly}(H,d_o,d_s,d_a,\mathbb{C},\|O^{\dagger}\|). 
\end{align*}
A function class $\Gcal_h$ includes at least one value link function for any linear policy $\pi=\delta(a=\Ub_{1h}o + \Ub_{2h}z)$ for $\|\Ub_{1h}\|\leq \mathbb{C},\|\Ub_{2h}\|\leq \mathbb{C}$. 
\end{lemma}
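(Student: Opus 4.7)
The plan is to reduce the claim to the already-established HSE--POMDP construction of link functions and then track norms. First I would recast the value function lemma: writing $\phi_h(z,s) = (1,\,[z^\top,s^\top]\otimes[z^\top,s^\top])^\top$, the identity $V^\pi_h(z,s) = [z^\top,s^\top]\Lambda_h[z^\top,s^\top]^\top + \Gamma_h$ becomes $V^\pi_h(z,s) = \langle \theta^\pi_h, \phi_h(z,s)\rangle$ with $\theta^\pi_h = (\Gamma_h, \mathrm{vec}(\Lambda_h))^\top$. Likewise set $\psi_h(z,o) = (1,\,[z^\top,o^\top]\otimes[z^\top,o^\top])^\top$. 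The explicit formula for $K_h$ derived just before the lemma gives the conditional mean embedding $\EE_{o\sim \OO(s)}[\psi_h(z,o)] = K_h \phi_h(z,s)$, and because $O$ is full column rank, the explicit block form of $K_h^\dagger$ already written out satisfies $K_h^\dagger K_h = I$.

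Given this, I would simply take the candidate link function to be $g^\pi_h(z,o) := \langle (K_h^\dagger)^\top \theta^\pi_h,\,\psi_h(z,o)\rangle$. Applying the identity $\EE_{o\sim\OO(s)}[g^\pi_h(z,o)] = \langle (K_h^\dagger)^\top \theta^\pi_h, K_h\phi_h(z,s)\rangle = \langle \theta^\pi_h,\phi_h(z,s)\rangle = V^\pi_h(z,s)$ verifies the link equation, exactly as in the proof of \pref{lem:hse_existence}. Reading off the coefficients, $g^\pi_h(z,o) = \bar \Gamma_h + [z^\top,o^\top]\bar\Lambda_h[z^\top,o^\top]^\top$ where $\bar\Lambda_h$ and $\bar\Gamma_h$ are obtained from $\Lambda_h, \Gamma_h$ by multiplying by the relevant blocks of $(K_h^\dagger)^\top$; explicitly the quadratic block uses $\big(\mathrm{diag}(I,O^\dagger)\otimes\mathrm{diag}(I,O^\dagger)\big)^\top$ acting on $\Lambda_h$, and the constant shift absorbs the $\mathrm{vec}\big(\mathrm{diag}(0,\Sigma_\tau)\big)$ correction.

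The remaining work is the norm bookkeeping. From the previous lemma, $\|\Lambda_h\|\leq \CC_{\Lambda,h} = \mathrm{poly}(\CC,H)$ and $|\Gamma_h|\leq \CC_{\Gamma,h} = \mathrm{poly}(d_o,d_s,d_a,\CC,H)$. The quadratic-block operator norm of $(K_h^\dagger)^\top$ is at most $(1+\|O^\dagger\|)^2$, so $\|\bar\Lambda_h\| \leq (1+\|O^\dagger\|)^2\,\CC_{\Lambda,h}$, which is polynomial in $(H,d_o,d_s,d_a,\CC,\|O^\dagger\|)$. For the constant piece, $|\bar\Gamma_h|\leq |\Gamma_h| + \|\Lambda_h\|\cdot \|\mathrm{diag}(0,\Sigma_\tau)\|\cdot(1+\|O^\dagger\|)^2$, which is again polynomial in the same quantities (absorbing $\|\Sigma_\tau\|$ into $\CC$ if not already). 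Choosing $C_{\bar\Lambda,h}$ and $C_{\bar\Gamma,h}$ equal to these polynomial upper bounds delivers the statement: the resulting $\Gcal_h$ contains $g^\pi_h$ for every linear policy with $\max(\|\Ub_{1h}\|,\|\Ub_{2h}\|)\leq \CC$.

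The only real obstacle is the norm tracking: one must be careful that the Kronecker structure of $K_h^\dagger$ genuinely gives an $O(\|O^\dagger\|^2)$ bound on the quadratic block and does not blow up with dimension, and that the off-diagonal term correcting for $\Sigma_\tau$ only shifts the constant part (so the quadratic norm bound is unaffected). Everything else is a direct specialization of the HSE--POMDP construction already carried out in \pref{lem:hse_existence}.
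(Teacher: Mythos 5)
Your proposal is correct and is essentially the paper's own argument: the paper proves this lemma by directly substituting $ss^{\top} = O^{\dagger}\EE_{o\sim \OO(s)}[oo^{\top}-\Sigma_{\tau}]\{O^{\dagger}\}^{\top}$ and $zs^{\top}=\EE[zo^{\top}]\{O^{\dagger}\}^{\top}$ into the quadratic form for $V^{\pi}_h$, which yields exactly the $\bar\Lambda_h = \mathrm{diag}(I,(O^{\dagger})^{\top})\,\Lambda_h\,\mathrm{diag}(I,O^{\dagger})$ and the $-\tr(\Lambda_h\,\mathrm{diag}(0,O^{\dagger}\Sigma_{\tau}(O^{\dagger})^{\top}))$ constant shift that you obtain by applying $(K_h^{\dagger})^{\top}$ to $\theta^{\pi}_h$. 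The only cosmetic difference is that the paper bounds the trace term by $\|\Lambda_h\|\,\|O^{\dagger}\|^2\,\tr(\Sigma_{\tau})\leq \|\Lambda_h\|\,\|O^{\dagger}\|^2\,\mathbb{C} d_o$ (nuclear rather than operator norm of $\Sigma_{\tau}$), a factor of $d_o$ that your polynomial bound absorbs anyway.
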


\begin{proof}

Here, we have 
\begin{align*}
    V^{\pi}_h(\cdot)  &=  \Gamma_h  + \tr \braces{ \Lambda_h \begin{bmatrix}   z z^{\top}  & zs^{\top} \\ s z^{\top}& s s^{\top} \end{bmatrix}}\\
    &=  \Gamma_h  + \EE_{o \sim \OO(s)} \bracks{ \tr \braces{ \Lambda_h \begin{bmatrix} z z^{\top}  & zo^{\top}\{O^{\dagger}\}^{\top} \\  O^{\dagger}o z^{\top}& O^{\dagger}\braces{o o^{\top}- \Sigma_{\tau}}\{O^{\dagger}\}^{\top} \end{bmatrix}}} \\ 
    &=   \Gamma_h  -\tr \braces{ \Lambda_h \begin{bmatrix} 0  & 0 \\   0 & O^{\dagger}\Sigma_{\tau} \{O^{\dagger}\}^{\top} \end{bmatrix}}  \\
    &+  \EE_{o \sim \OO(s)} \bracks{ [z^{\top}, o^{\top}]  \begin{bmatrix} I & 0 \\ 0 &  \{O^{\dagger}\} ^{\top} \end{bmatrix}  \Lambda_h \begin{bmatrix} I & 0 \\ 0 &  O^{\dagger} \end{bmatrix}  \begin{bmatrix}
     z \\ o 
    \end{bmatrix}  } . 
\end{align*}
The norm constraint on $\bar \Lambda_h$ is decided by the following calculation: 
\begin{align*}
    \left \|\begin{bmatrix} I & 0 \\ 0 &  \{O^{\dagger}\} ^{\top} \end{bmatrix}  \Lambda_h \begin{bmatrix} I & 0 \\ 0 &  O^{\dagger} \end{bmatrix}  \right \|\leq \|O^{\dagger}\|^2_2 \| \Lambda_h\|=\mathrm{poly}(H,d_o,d_s,d_a,\mathbb{C},\|O^{\dagger}\|). 
\end{align*}
Then, the norm on $\bar \Gamma_h$ is decided by the following calculation: 
\begin{align*}
   \left \lvert   \Gamma_h  -   \tr \braces{ \Lambda_h \begin{bmatrix} 0  & 0 \\   0 & O^{\dagger}\Sigma_{\tau} \{O^{\dagger}\}^{\top} \end{bmatrix}} \right \rvert  &\leq   | \Gamma_h |  +  \left \lvert \tr \braces{ \Lambda_h \begin{bmatrix} 0  & 0 \\   0 & O^{\dagger}\Sigma_{\tau} \{O^{\dagger}\}^{\top} \end{bmatrix}}  \right \rvert \\
   &\leq |\Gamma_h |  + \|\Sigma_h \|_2 \|O^{\dagger}\|^2_2 \mathrm{Tr}(\Sigma_{\tau})\\
   &\leq |\Gamma_h |  + \|\Sigma_h \|_2 \|O^{\dagger}\|^2_2 \mathbb{C} d_o \\
   &= \mathrm{poly}(H,d_o,d_s,d_a,\mathbb{C},\|O^{\dagger}\|). 
\end{align*}
From the first line to the second line, we use \pref{lem:useful}. 

\end{proof}

\subsection{PO-bilinear Rank Decomposition}

\begin{lemma}[Bilinear rank decomposition for LQG]\label{lem:decomposition}
For any $g_{h+1} \in \Gcal_{h+1}, g_h \in \Gcal_h$, we have the following bilinear rank decomposition: 
\begin{align*}
    \EE[ g_{h+1}(z_{h},o_{h+1}) + r_h -g_h(z_{h-1},o_h)  ; a_{1:h-1}\sim \pi',a_h \sim \pi  ] =  \langle X(\pi') , W(\pi) \rangle
\end{align*}
where
\begin{align*}
    &X_h(\pi') = (1, \EE[  [z^{\top}_{h-1},s^{\top}_h ] \otimes  [z^{\top}_{h-1},s^{\top}_h ] ; a_{1:h-1}\sim \pi'] )^{\top},  \\
    & W_h(\pi) =  \begin{bmatrix}
\tr\prns{ \{\bar \Lambda_h - \bar \Lambda^{\star}_h\} \begin{bmatrix}  0 & 0 \\ 0 & \Sigma_{\tau} \end{bmatrix} +  \begin{bmatrix} I & 0 \\ 0 & O^{\top} \end{bmatrix} \{\bar \Lambda_{h+1}-\bar \Lambda^{\star}_{h+1}\} \begin{bmatrix} I & 0 \\ 0 & O \end{bmatrix}\Sigma_{\Lambda_{h2}}(\pi) } 
    \\
    \mathrm{vec}\bracks{  \begin{bmatrix} I & 0 \\ 0 & O^{\top} \end{bmatrix}  \{\bar \Lambda_{h}-\bar \Lambda^{\star}_h \} \begin{bmatrix} I & 0 \\ 0 & O \end{bmatrix}   +   \Xi^{\top}_{1h}(\pi) \begin{bmatrix} I & 0 \\ 0 & O^{\top} \end{bmatrix}   \{\bar \Lambda^{\star}_{h+1} - \bar \Lambda_{h+1}\} \begin{bmatrix} I & 0 \\ 0 & O \end{bmatrix}  \Xi_{1h}(\pi)     }
        \end{bmatrix}.
\end{align*}
Here, $\Xi_{1h}(\pi)$ and $\Sigma_{\Lambda_{h2}}(\pi)$ depend on a policy $\pi$. The following norm constraints hold: 
\begin{align*}
     \|X_h(\pi')\|_2 \leq  \mathrm{poly}(H,d_o,d_s,d_a,\mathbb{C}, \|O^{\dagger}\|), \|W_h(\pi)\|_2 \leq \mathrm{poly}(H,d_o,d_s,d_a,\mathbb{C}, \|O^{\dagger}\|). 
\end{align*}
\end{lemma}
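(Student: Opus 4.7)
The plan is to evaluate $\mathrm{Br}_h(\pi,g;\pi')$ by conditioning on the latent pair $(z_{h-1},s_h)$, exploiting the quadratic form of every $g_h,g_{h+1}\in\Gcal$ together with the linear conditional mean embedding $K_h$ for the observation and the linear dynamics $[z_h^\top,o_{h+1}^\top]^\top = \Xi_{1h}(\pi)[z_{h-1}^\top,s_h^\top]^\top + \Xi_{2h}(\pi)$ for the one-step roll-out under $\pi$. Writing $g_h = \bar\Gamma_h + (z^\top,o^\top)\bar\Lambda_h(z^\top,o^\top)^\top$ and using the identity $\EE_{o\sim\OO(s)}[(z,o)^\top\otimes(z,o)^\top] = \mathrm{diag}(I,O)\bigl([z^\top,s^\top]^\top\otimes[z^\top,s^\top]^\top\bigr)\mathrm{diag}(I,O)^\top + \mathrm{diag}(0,\Sigma_\tau)$ from the previous lemma, $\EE[g_h(z_{h-1},o_h)\mid z_{h-1},s_h]$ becomes an affine-quadratic form in $[z_{h-1}^\top,s_h^\top]$ with quadratic coefficient $\mathrm{diag}(I,O)^\top\bar\Lambda_h\mathrm{diag}(I,O)$ and constant $\bar\Gamma_h + \tr(\bar\Lambda_h\,\mathrm{diag}(0,\Sigma_\tau))$. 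An analogous computation for $g_{h+1}$ under $a_h\sim\pi$ yields a quadratic with coefficient $\Xi_{1h}(\pi)^\top\mathrm{diag}(I,O)^\top\bar\Lambda_{h+1}\mathrm{diag}(I,O)\Xi_{1h}(\pi)$ plus a constant $\bar\Gamma_{h+1} + \tr(\bar\Lambda_{h+1}\,\mathrm{diag}(I,O)^\top\mathrm{diag}(I,O)\,\Sigma_{\Lambda_h 2}(\pi))$.

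Next, I invoke that the associated link functions $g^\pi_h,g^\pi_{h+1}\in\Gcal$ (with parameters $\bar\Lambda^\star,\bar\Gamma^\star$) satisfy the link-function identity $\EE[g^\pi_h(z_{h-1},o_h)\mid z_{h-1},s_h] = V^\pi_h(z_{h-1},s_h) = \EE[r_h + g^\pi_{h+1}(z_h,o_{h+1})\mid z_{h-1},s_h;a_h\sim\pi]$, which is precisely the Bellman consistency at the level of link functions. Subtracting, the reward term $r_h$ and the value-function part $V^\pi_h$ drop out, leaving
\begin{align*}
\EE[g_h - r_h - g_{h+1}\mid z_{h-1},s_h;a_h\sim\pi]
= \EE[(g_h-g^\pi_h)\mid z_{h-1},s_h] - \EE[(g_{h+1}-g^\pi_{h+1})\mid z_{h-1},s_h;a_h\sim\pi].
\end{align*}
By the computation above, this difference is exactly a quadratic form in $[z_{h-1}^\top,s_h^\top]$ whose coefficient matrix is $\mathrm{diag}(I,O)^\top(\bar\Lambda_h-\bar\Lambda^\star_h)\mathrm{diag}(I,O) - \Xi_{1h}(\pi)^\top\mathrm{diag}(I,O)^\top(\bar\Lambda_{h+1}-\bar\Lambda^\star_{h+1})\mathrm{diag}(I,O)\Xi_{1h}(\pi)$, plus a constant equal to $\tr((\bar\Lambda_h-\bar\Lambda^\star_h)\mathrm{diag}(0,\Sigma_\tau)) + \tr(\mathrm{diag}(I,O)^\top(\bar\Lambda_{h+1}-\bar\Lambda^\star_{h+1})\mathrm{diag}(I,O)\,\Sigma_{\Lambda_h 2}(\pi))$, while the $\bar\Gamma$ and $\bar\Gamma^\star$ scalars cancel because $V^\pi_h = V^\pi_h$ on both sides of the subtraction.

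Finally, taking the outer expectation $\EE[\,\cdot\,;a_{1:h-1}\sim\pi']$, the constant becomes the first coordinate of $W_h(\pi)$ paired with $1$ in $X_h(\pi')$, and the quadratic form becomes the inner product of the vectorized coefficient (the remaining block of $W_h(\pi)$) with $\EE[[z_{h-1}^\top,s_h^\top]^\top\otimes[z_{h-1}^\top,s_h^\top]^\top;a_{1:h-1}\sim\pi']$, which is exactly the remaining block of $X_h(\pi')$. This yields the stated bilinear representation. The norm bounds then follow by: (i) the stability assumption $\|\Xi_{1h}(\pi')\|\leq 1$ and Gaussian moments of $\Xi_{2h}(\pi')$, which propagated inductively give $\EE[\|[z,s]\|_2^2;\pi']\leq\mathrm{poly}(H,d_o,d_s,d_a,\mathbb{C})$ and hence the bound on $\|X_h(\pi')\|_2$; and (ii) the bounds $\|\bar\Lambda\|,\|\bar\Lambda^\star\|\leq C_{\bar\Lambda,h}$, $\|O\|\leq\mathbb{C}$, $\|\Xi_{1h}(\pi)\|\leq 1$, and $\|\Sigma_{\Lambda_h 2}(\pi)\|\leq\mathrm{poly}(\mathbb{C})$ for the bound on $\|W_h(\pi)\|_2$.

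The main technical obstacle I anticipate is the bookkeeping needed to verify that the $\bar\Gamma$-constants indeed cancel and that the remaining trace constants coincide exactly with the first entry of $W_h(\pi)$ displayed in the lemma; this requires tracking the precise block structure of $\mathrm{diag}(I,O)$, $\Xi_{1h}(\pi)$, and $\Sigma_{\Lambda_h 2}(\pi)$ through the conditional computation. Everything else is a routine quadratic expansion plus an application of Jensen/trace inequalities for the norm bounds.
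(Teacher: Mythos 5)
Your proof follows essentially the same route as the paper's: replace the reward by the Bellman telescoping of the true link function with parameters $(\bar \Lambda^{\star}, \bar \Gamma^{\star})$, reduce the Bellman error to a difference of quadratic forms, push the conditional mean embedding of $\OO$ and the closed-loop matrix $\Xi_{1h}(\pi)$ through the conditional expectation given $(z_{h-1}, s_h)$, vectorize, and then bound norms via $\|\Xi_{1h}(\pi)\| \leq 1$ and the Gaussian moment bounds. One small correction: the constants $(\bar \Gamma_h - \bar \Gamma^{\star}_h) - (\bar \Gamma_{h+1} - \bar \Gamma^{\star}_{h+1})$ do not cancel for an arbitrary $g \in \Gcal$ — they simply join the trace terms in the scalar coordinate of $W_h(\pi)$ paired with the $1$ in $X_h(\pi')$ — but this affects neither the bilinear structure nor the norm bounds.
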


\begin{proof}
We have
\begin{align}
    &\EE[g_h(z_{h-1},o_h) - r_h(z_{h-1},s_h)- g_{h+1}(z_{h},o_{h+1}) ; a_{1:h-1} \sim \pi', a_h \sim \pi] \nonumber \\
    &=- \EE[r_h(z_{h-1},s_h); a_{1:h-1} \sim \pi', a_h \sim \pi ]+   \nonumber \\
    &+ \EE \bracks{ \bar \Gamma_h + (z_{h-1}^{\top}, o^{\top}_h) \bar \Lambda_h (z_{h-1}^{\top}, o^{\top}_{h})^{\top} -\bar \Gamma_{h+1} - (z_{h}^{\top}, o^{\top}_{h+1}) \bar \Lambda_{h+1} (z_{h}^{\top}, o^{\top}_{h+1})^{\top} ; a_{1:h-1} \sim \pi', a_h \sim \pi  }.  \label{eq:LQG_focus}
\end{align}
Since we have 
\begin{align*}
    &\EE[r_h(z_{h-1},s_h); a_{1:h-1} \sim \pi', a_h \sim \pi ]\\
    &=- \EE \bracks{ \bar \Gamma^{\star}_h  + (z_{h-1}^{\top}, o^{\top}_h) \bar \Lambda^{\star}_h (z_{h-1}^{\top}, o^{\top}_{h})^{\top} -\bar \Gamma^{\star}_{h+1} - (z_{h}^{\top}, o^{\top}_{h+1}) \bar \Lambda^{\star}_{h+1} (z_{h}^{\top}, o^{\top}_{h+1})^{\top} ; a_{1:h-1} \sim \pi', a_h \sim \pi  }. 
\end{align*}
we focus on the term \pref{eq:LQG_focus}. 

Hereafter, we suppose the expectation is always taken under $a_{1:h-1}\sim \pi',a\sim \pi$. We also denote $z=z_{h-1},o_h=o,o_{h+1}=o',s_h=s,s_{h+1}=s'$ to simplify the presentation. Using this simplified notation, we get 
\begin{align*}
    & \EE \bracks{  (z ^{\top}, o^{\top}) \bar \Lambda_h (z^{\top}, o^{\top})^{\top}} = \EE\bracks{   (z ^{\top}, (Os+\tau)^{\top}) \bar \Lambda_h (z^{\top}, (Os+\tau)^{\top})^{\top}   } \\
    &=\EE\bracks{ [z^{\top},s^{\top}]  \begin{bmatrix} I & 0 \\ 0 & O^{\top} \end{bmatrix}  \bar \Lambda_h  \begin{bmatrix} I & 0 \\ 0 & O \end{bmatrix} \begin{bmatrix} z \\ s \end{bmatrix}  } + \tr\prns{ \bar \Lambda_h \begin{bmatrix}  0 & 0 \\ 0 & \Sigma_{\tau}
    \end{bmatrix} }. 
\end{align*}
Besides, 
\begin{align*}
    & \EE \bracks{ (z'^{\top}, o'^{\top}) \bar \Lambda_{h+1} (z'^{\top}, o'^{\top})^{\top}   }  \\ 
    & =\EE\bracks{ [z'^{\top},s'^{\top}]  \begin{bmatrix} I & 0 \\ 0 & O^{\top} \end{bmatrix}  \bar \Lambda_{h+1}  \begin{bmatrix} I & 0 \\ 0 & O \end{bmatrix} \begin{bmatrix} z' \\ s' \end{bmatrix}  } + \tr\prns{ \bar \Lambda_h \begin{bmatrix}  0 & 0 \\ 0 & \Sigma_{\tau}
    \end{bmatrix} }\\ 
    &= \EE \bracks{  [z^{\top},s^{\top}]  \Xi^{\top}_{1h}(\pi) \begin{bmatrix} I & 0 \\ 0 & O^{\top} \end{bmatrix}  \bar \Lambda_{h+1} \begin{bmatrix} I & 0 \\ 0 & O \end{bmatrix}  \Xi_{1h}(\pi)  \begin{bmatrix} z \\ s \end{bmatrix}   } +   \tr\prns{ \bar \Lambda_h \begin{bmatrix}  0 & 0 \\ 0 & \Sigma_{\tau}
    \end{bmatrix} } + \\
    &+ \tr \prns{ \begin{bmatrix} I & 0 \\ 0 & O^{\top} \end{bmatrix} \bar \Lambda_{h+1} \begin{bmatrix} I & 0 \\ 0 & O \end{bmatrix}\Sigma_{\Lambda_{h2}}(\pi)}. 
\end{align*}

Then, the bilinear decomposition is clear by using 
\begin{align*}
   A^{\top}_2 A_1 A^{}_2 =\mathrm{tr}(A_1 A^{}_2 A^{\top}_2)=\mathrm{vec}(A^{\top}_1)^{\top}\mathrm{vec}( A^{}_2 A^{\top}_2) = \langle \mathrm{vec}(A^{\top}_1), A_2 \otimes A_2 \rangle. 
\end{align*}
where $A_2$ is any vector and $A_1$ is any matrix. 

First, we calculate the upper bounds of the norms. 
\begin{align*}
   \|X_h(\pi') \|^2_2 &=  1+ \left \| \EE_{(z,s) \sim d^{\pi'}_h } \bracks{ \begin{bmatrix}
     z z^{\top} & zs^{\top} \\
     s z^{\top}  & ss^{\top}
    \end{bmatrix}  } \right\|^2_{F}   \\
    &= 1+ \left \| \EE_{(z,s) \sim d^{\pi'}_{h-1} } \bracks{ 
    \Xi_{1h}(\pi)
     \begin{bmatrix}
     z z^{\top} & zs^{\top} \\
     s z^{\top}  & ss^{\top}
    \end{bmatrix} \Xi^{\top}_{1h}(\pi)
} + \Sigma_{\Lambda_{h2}}(\pi)\right\|^2_{F} \\
    & \leq 1 + \|\Xi_{1h}(\pi)\|^4_2 \left \| \EE_{(z,s) \sim d^{\pi'}_{h-1} } \bracks{ \begin{bmatrix}
     z z^{\top} & zs^{\top} \\
     s z^{\top}  & ss^{\top}
    \end{bmatrix}  } \right\|^2_{F}  + \|\Sigma_{\Lambda_{h2}}(\pi)\|^2_F  \\ 
    &\leq   1 + \|X_{h-1}(\pi') \|^2_2 + \|\Sigma_{\Lambda_{h2}}(\pi)\|^2_F. 
\end{align*}
From the third line to the fourth line, we use $\|\Xi_{1h}(\pi)\|_2\leq 1$. Thus,  $\|X_h(\pi')\|_2 \leq  \mathrm{poly}(H,d_o,d_s,d_a,\|O^{\dagger}\|,\mathbb{C})$. 

Next, we consider $W(\pi)$. By some algebra, we can see 
\begin{align*}
     \|W(\pi)\|_2 &\leq   \mathrm{poly}(\|\bar \Lambda_h\|,\|\bar \Lambda_{h+1}\|,\mathbb{C},d_o,d_s,d_a,\|\Xi_{1h}(\pi)\|) ) \\
     &\leq \mathrm{poly}(H,d_o,d_s,d_a,\|O^{\dagger}\|,\mathbb{C})
\end{align*}
\end{proof}

\begin{lemma}[Variance of marginal distribution] \label{lem:variance}
Recall $d^{\pi}_h(z_{h-1},s_h)$ is a marginal distribution over $\Zcal_{h-1}\times \Scal$ at $h$ when we execute $a_{1:h-1}\sim \pi$. 
The distribution $d^{\pi}_h(z_{h-1},s_h)$ is a Gaussian distribution with mean $0$. The operator norm on the variance of $d^{\pi}_h(z_{h-1},s_h)$ is upper-bounded by  $\mathrm{poly}(H,d_o,d_a,d_s,\mathbb{C}) $. 
\end{lemma}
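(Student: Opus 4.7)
\textbf{Proof proposal for \pref{lem:variance}.} The plan is to exploit the fact that LQG under a linear policy is a linear Gaussian system, and then unroll the covariance recursion induced by the joint update rule already derived in the paper. First, I would prove by induction on $h$ that $(z_{h-1}, s_h)$ is jointly Gaussian with mean zero. The base case $h=1$ is immediate since $z_0$ is empty (or of fixed dimension with zero components) and $s_1 = \epsilon_1 \sim \Ncal(0, \Sigma_\epsilon)$. For the inductive step, recall the joint dynamics derived earlier,
\begin{equation*}
\begin{bmatrix} z'_h \\ o_h \\ a_h \\ s_{h+1} \end{bmatrix} \;=\; \Xi_{1h}(\pi)\begin{bmatrix} z_{h-1} \\ s_h \end{bmatrix} + \Xi_{2h}(\pi),
\end{equation*}
where $\Xi_{2h}(\pi)$ is a Gaussian vector with mean zero and covariance $\Sigma_{\Lambda_{h2}}(\pi)$ (built from $\Sigma_\epsilon, \Sigma_\tau$, $B$, and the policy gain $\Ub_{1h}$). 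Since $z_h$ can be read off from $(z'_h, o_h, a_h)$ by a deterministic linear projection, $(z_h, s_{h+1})$ is an affine linear map of $(z_{h-1}, s_h)$ plus an independent zero-mean Gaussian noise. This preserves both Gaussianity and zero mean, completing the induction.

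Next I would set up a recursion on the operator norm of the covariance. Let $\Sigma_h := \mathrm{Cov}_{(z_{h-1}, s_h) \sim d^\pi_h}\!\bigl[(z_{h-1}^\top, s_h^\top)^\top\bigr]$. Applying the update above, and letting $P$ denote the deterministic linear projection recovering $(z_h, s_{h+1})$ from $(z'_h, o_h, a_h, s_{h+1})$, we obtain
\begin{equation*}
\Sigma_{h+1} \;=\; P\bigl(\Xi_{1h}(\pi)\,\Sigma_h\,\Xi_{1h}(\pi)^\top + \mathrm{Cov}(\Xi_{2h}(\pi))\bigr)P^\top.
\end{equation*}
Using $\|P\|\leq 1$, the stability assumption $\|\Xi_{1h}(\pi)\|\leq 1$, and the fact that $\mathrm{Cov}(\Xi_{2h}(\pi))$ is formed from $\Sigma_\epsilon, \Sigma_\tau, B, \Ub_{1h}$, whose norms are all bounded by $\mathrm{poly}(\mathbb{C})$ (using $\|\Ub_{1h}\|\leq \mathbb{C}$ from the policy class constraint), we get
\begin{equation*}
\|\Sigma_{h+1}\| \;\leq\; \|\Xi_{1h}(\pi)\|^2\,\|\Sigma_h\| + \|\mathrm{Cov}(\Xi_{2h}(\pi))\| \;\leq\; \|\Sigma_h\| + C_0,
\end{equation*}
for some $C_0 = \mathrm{poly}(\mathbb{C}, d_o, d_a, d_s)$.

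Unrolling this telescoping inequality from $h=1$ to $H$ and using $\|\Sigma_1\| = \|\mathrm{Cov}(\epsilon_1)\|\leq \mathbb{C}$, we obtain $\|\Sigma_h\|\leq \mathbb{C} + (h-1)C_0 \leq \mathrm{poly}(H, d_o, d_a, d_s, \mathbb{C})$, which gives the claim. The main conceptual step, and essentially the only place where the stability assumption is critical, is obtaining the contraction-or-bounded-growth bound $\|\Xi_{1h}(\pi)\|^2 \|\Sigma_h\| \leq \|\Sigma_h\|$; without this assumption the covariance could grow exponentially in $H$. The only mildly technical piece is bookkeeping around the projection $P$ (which truncates the oldest observation-action pair from the memory): since this is a block selection matrix with unit operator norm, it contributes no blowup, but one should verify this carefully before invoking submultiplicativity.
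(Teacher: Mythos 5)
Your proof is correct and follows essentially the same route as the paper: the paper simply writes the unrolled closed form $\sum_{i=1}^h \bigl(\prod_{t=i+1}^h \Xi_{1t}(\pi)\bigr)\Sigma_{\Lambda_{i2}}(\pi)\bigl(\prod_{t=i+1}^h \Xi_{1t}^{\top}(\pi)\bigr)$ for the covariance and bounds it using $\|\Xi_{1t}(\pi)\|\leq 1$, which is exactly your telescoped recursion $\|\Sigma_{h+1}\|\leq\|\Sigma_h\|+C_0$. Your additional bookkeeping (the zero-mean Gaussianity induction and the unit-norm projection $P$) is care the paper leaves implicit, not a different argument.
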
 

\begin{proof}
We first calculate the operator norm of the variance of $d^{\pi}_h(z_{h-1},o_h)$.  
The variance is 
\begin{align*}
    \sum_{i=1}^h 
    \prns{\prod_{t=i+1}^h \Xi_{1t}(\pi) }\Sigma_{\Lambda_{i2} }(\pi)  \prns{\prod_{t=i+1}^h \Xi^{\top}_{1t}(\pi) }. 
\end{align*}
The statement is immediately concluded.
\end{proof}

Let $u_h(\bar z_h,r_h,a_h,o_{h+1};\theta) = \theta^{\top}_h\psi_h(\bar z_h)-r_h-\theta^{\top}_{h+1} \psi_{h+1}(\bar z_{h+1}) $. Recall $\psi_h(\bar z_h) = [1,\bar z^{\top}_h \otimes \bar z^{\top}_h]^{\top}$. We define 
\begin{align*}
 \hat y_h(a^{[i]}) &=  \EE_{\Dcal}\{\alpha_i(\pi(\bar z_h )) \II(\|\bar z_h\| \leq Z_1) \II(\|r_h\|\leq Z_2)  \II(\|o_{h+1}\|\leq Z_3) \II(a_h=a^{[i]})(1+d^{\diamond}) \\
  &u_h(\bar z_h,r_h,a_h,o_{h+1}; \theta)  ; a_{1:h-1}\sim \pi',  a_h \sim U(1+d^{\diamond})  \}  \\
  \hat y_h(a^{[0]}) &=  \EE_{\Dcal}\{\{1- \sum_i \alpha_i(\pi(\bar z_h ))\}  \II(a=0) \II(\|\bar z_h\|\leq Z_1) \II(\|r_h\|\leq Z_2)  \II(\|o_{h+1}\|\leq Z_3)  \\ 
  &(1+d^{\diamond})u_h(\bar z_h,r_h,a_h,o_{h+1}; \theta) ; a_{1:h-1}\sim \pi',  a_h \sim U(1+d^{\diamond})  \} . 
\end{align*}
Then, the final estimator is constructed by 
\begin{align*}
      \hat y_h(a^{[0]})+ \sum_{i=1}^{d^{\diamond}} \hat y_h(a^{[i]}). 
\end{align*}
This is equal to 
\begin{align*}
    \EE_{\Dcal}[l_h(\bar z_h,a_h,r_h,o_{h+1};\theta,\pi) ]
\end{align*}
where
\begin{align*}
    &l_h(\bar z_h,a_h,r_h,o_{h+1};\theta,\pi)=\bracks{ \sum_i \alpha_i(\pi(\bar z_h )) \II(a_h = a^{[i]})  +  \{1- \sum_i \alpha_i(\pi(\bar z_h ))\}  \II(a_h = 0)}  \times  \\
    &\II(\|\bar z_h\|\leq Z_1 ) \II(\|r_h\|\leq Z_2)  \II(\|o_{h+1}\|\leq Z_3) (1+d^{\diamond}) u_h(\bar z_h,r_h,a_h,o_{h+1}; \theta). 
\end{align*}
We set 
\begin{align*}
    Z_i  = \mathrm{poly}(\ln(m), d_s,d_o,d_a, \mathbb{C}, H,\|O^{\dagger}\|). 
\end{align*}
for any $i \in [3]$. 

\subsection{Uniform Convergence}
Recall that 
\begin{align*}
    \Pi = \{\delta(a=\Ub_{1h}z + \Ub_{2h}o) \mid \|\Ub_{1h}\|\leq \CC,\|\Ub_{2h}\|\leq \CC\}. 
\end{align*}
Besides, $\Gcal_h$ is included in 
\begin{align*}
    \{\langle \theta, \psi_h(\cdot) \rangle \mid \|\theta\|\leq \mathrm{poly}(H,d_o,d_s,d_a,\CC,\|O^{\dagger}\|) \}. 
\end{align*}
\begin{lemma}[Concentration of loss functions]
With probability $1-\delta$, 
\begin{align*}
    \sup_{\pi \in \Pi, \theta \in \Theta   }| (\EE_{\Dcal}-\EE)\{ l_h(\bar z_h,a_h,r_h,o_{h+1};\theta,\pi)   \}  |
\end{align*}
is upper-bounded by 
\begin{align*}
     \mathrm{poly}(\ln(m), d_s,d_o,d_a, \mathbb{C}, H,\|O^{\dagger}\|)\times \sqrt{\ln(1/\delta)/m}. 
\end{align*}
\end{lemma}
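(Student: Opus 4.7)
The plan is to follow a standard truncation plus covering plus Hoeffding argument. First I would observe that, thanks to the three indicator functions $\II(\|\bar z_h\|\leq Z_1)$, $\II(\|r_h\|\leq Z_2)$, $\II(\|o_{h+1}\|\leq Z_3)$, the loss $l_h(\cdot;\pi,\theta)$ is almost surely bounded. Specifically, on the event where the indicators are nonzero, the quadratic features satisfy $\|\psi_h(\bar z_h)\|\leq 1+Z_1^2$ and $\|\psi_{h+1}(\bar z_{h+1})\|\leq 1+(Z_1+Z_3)^2$, so $|u_h(\cdot;\theta)|\leq \mathrm{poly}(Z_1,Z_2,Z_3)\,\|\theta\|$. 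Combined with $\|\alpha(\pi(\bar z_h))\|\leq (d_a+d_a^2)^{1/2}$ from \pref{lem:norm_alpha} and the factor $(1+d^{\diamond})\leq \mathrm{poly}(d_a)$, we get $\|l_h\|_\infty\leq B$ with $B=\mathrm{poly}(Z_1,Z_2,Z_3,d_a,\|\theta\|_{\max})=\mathrm{poly}(\ln m, d_s, d_o, d_a, \mathbb{C},H,\|O^{\dagger}\|)$ uniformly over $\pi\in\Pi,\theta\in\Theta$.

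Second, I would construct $L^\infty$ $\epsilon$-nets $\Ncal^\Pi_\epsilon$ and $\Ncal^\Theta_\epsilon$ for $\Pi$ and $\Theta$. Since $\Pi$ is parameterized by linear gains $(\Ub_{1h},\Ub_{2h})$ in a Euclidean ball of radius $\mathbb{C}$ and dimension $d_\pi:=d_a(d_o+d_{\bar z_h})$, and $\Theta$ is a Euclidean ball of radius $C_{\bar\Lambda,h}+C_{\bar\Gamma,h}=\mathrm{poly}(H,d_o,d_s,d_a,\mathbb{C},\|O^{\dagger}\|)$ in dimension $d_\theta:=1+d_{\bar z_h}^2$, standard volume arguments give $\log|\Ncal^\Pi_\epsilon|\leq d_\pi\log(1+\mathbb{C}/\epsilon)$ and $\log|\Ncal^\Theta_\epsilon|\leq d_\theta\log(1+C_{\bar\Lambda,h}/\epsilon)$. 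For any fixed $(\pi,\theta)$ in the product net, Hoeffding's inequality yields $|(\EE_{\Dcal}-\EE)[l_h(\cdot;\pi,\theta)]|\leq B\sqrt{2\log(2/\delta)/m}$ with probability $1-\delta$, and a union bound over the net inflates the log term by $\log(|\Ncal^\Pi_\epsilon||\Ncal^\Theta_\epsilon|)$.

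Third, I would prove a Lipschitz property of $l_h$ in the parameters $(\pi,\theta)$ under the truncation event so that the discretization error is controlled. For $\theta$, $u_h$ is affine in $\theta$, so $|u_h(\cdot;\theta)-u_h(\cdot;\theta')|\leq \|\theta-\theta'\|\,\mathrm{poly}(Z_1,Z_3)$. For the policy, the dependence enters through $\alpha(\pi_h(\bar z_h))$; since $\alpha(a)=K^\top(KK^\top)^{-1}\kappa(a)$ is a fixed polynomial map of $a$ and $\pi_h(\bar z_h)=\Ub_{1h}o+\Ub_{2h}z$ is linear in the gains with $\|\pi_h(\bar z_h)\|\leq 2\mathbb{C}Z_1$ on the truncated event, composition gives $\|\alpha(\pi_h(\bar z_h))-\alpha(\pi'_h(\bar z_h))\|\leq \mathrm{poly}(Z_1,\mathbb{C},d_a)\cdot\|(\Ub_{1h},\Ub_{2h})-(\Ub'_{1h},\Ub'_{2h})\|$. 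Putting these together, $l_h$ is $L$-Lipschitz in $(\pi,\theta)$ with $L=\mathrm{poly}(Z_1,Z_2,Z_3,d_a,\mathbb{C},\|\theta\|_{\max})$. Choosing $\epsilon=1/m$ makes the discretization error $L\epsilon=O(\mathrm{poly}(\ldots)/m)$, which is dominated by the $1/\sqrt{m}$ term. Combining all the pieces and absorbing the $\log(1/\epsilon)=\log m$ factors into the $\mathrm{poly}(\ln m,\ldots)$ prefactor yields the claimed bound.

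The main technical nuisance, though not conceptually hard, is the Lipschitz calculation for $\alpha(\pi_h(\bar z_h))$ because it requires both (i) controlling $(KK^\top)^{-1}$ via the G-optimal design condition (which is fixed but needs to be tracked) and (ii) using the truncation radius $Z_1$ to bound the Jacobian of the quadratic feature $\kappa(\cdot)$; bookkeeping the dependence on $\|O^\dagger\|$, $H$, and the dimensions so that the final prefactor remains polynomial is the only place one has to be careful. Everything else is bounded difference plus Hoeffding plus a covering number.
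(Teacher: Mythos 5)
Your proposal is correct and follows essentially the same route as the paper's proof: boundedness of $l_h$ via the truncation indicators, Hoeffding for a fixed pair, an $L^\infty$ covering of the (bounded) policy-gain and $\theta$ spaces with a union bound, and a discretization-bias term controlled by Lipschitz continuity with $\epsilon = 1/m$. If anything, you spell out the Lipschitz bookkeeping for $\alpha(\pi_h(\bar z_h))$ more explicitly than the paper, which simply defers that step to the analogous HSE-POMDP covering argument.
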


\begin{proof}
Due to indicator functions, $l_h(\bar z_h,a_h,o_h,o_{h+1};\theta,\pi)$ is bounded for any $\pi,\theta$ by 
\begin{align*}
    \mathrm{poly}(\ln(m), d_s,d_a,d_o,\mathbb{C}, H, \|O^{\dagger}\| ). 
\end{align*}
Thus, for fixed $\pi$ and $\theta$, we can say that with high probability $1-\delta$
\begin{align*}
      \mathrm{poly}(\ln(m),  d_s,d_o,d_a, \mathbb{C}, H, \|O^{\dagger}\|,\ln(1/\delta))\times \sqrt{1/m}. 
\end{align*}
Besides, we can consider a covering number with respect to $l^{\infty}$-norm for the space of $K$ and $\theta$ since  both are bounded. The radius of each space is upper-bounded by 
\begin{align*}
    \mathrm{poly}(\ln(m), d_s,d_o,d_a, \mathbb{C}, H, \|O^{\dagger}\|). 
\end{align*}
Thus, by taking uniform bound and considering the bias term due to the discretization as in the proof of \pref{lem:uni_hse_pomdp}, the statement is concluded. 
\end{proof}

\begin{lemma}[Bias terms 1]\label{lem:bias_term1}
Expectation of $\hat y_h(a^{[i]}) $  and $\hat y_h(a^{[0]}) $ are equal to 
\begin{align*}
    y_h(a^{[i]}) + \mathrm{Error}_1,\quad  y_h(a^{[0]}) + \mathrm{Error}_2. 
\end{align*}
where 
\begin{align*}
   y_h(a^{[i]}) &= \EE\bracks{\alpha_i(\pi(\bar z_h ))\II(\|\bar z_h\|\leq Z_1 )  u_h(\bar z_h,a_h,r_h,o_{h+1};\theta)   ; a_{1:h-1}\sim \pi',  a_h \sim do(a^{[i]}) },\\
 y_h(a^{[0]}) &= \EE\bracks{ \{1- \sum_i \alpha_i(\pi(\bar z_h ))\} \II(\|\bar z_h\|\leq Z_1 )  u_h(\bar z_h,a_h,r_h,o_{h+1};\theta) ; a_{1:h-1}\sim \pi',  a_h \sim do(0) } ,\\
 \mathrm{Error}_1 &= m^{-1}    \mathrm{poly}(\ln(m), d_s,d_o,d_a, \mathbb{C}, H, \|O^{\dagger}\|),\quad \mathrm{Error}_2 = m^{-1}    \mathrm{poly}(\ln(m), d_s,d_o,d_a, \mathbb{C}, H, \|O^{\dagger}\|). 
\end{align*}
\end{lemma}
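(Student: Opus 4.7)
The plan is to prove this in two short moves: first use importance sampling to turn the $U(1+d^\diamond)$ sampling into a $do(a^{[i]})$ (resp.\ $do(0)$) expectation while retaining all three truncation indicators, and then show that removing the two extra indicators $\II(\|r_h\|\leq Z_2)\II(\|o_{h+1}\|\leq Z_3)$ only perturbs the target by a $1/m\cdot\text{poly}(\cdot)$ term thanks to Gaussian tail bounds. Since $\Dcal$ is i.i.d., $\EE[\hat y_h(a^{[i]})]$ equals the population version of the summand; the factor $(1+d^\diamond)$ together with the event $\{a_h=a^{[i]}\}$ under $a_h\sim U(1+d^\diamond)$ realizes exactly the importance weight that converts the outer distribution to $do(a^{[i]})$, and an identical calculation handles $\hat y_h(a^{[0]})$ with the $\{a_h=0\}$ event and the weight $1-\sum_i\alpha_i(\pi(\bar z_h))$.

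After this conversion, the remaining discrepancy between $\EE[\hat y_h(a^{[i]})]$ and $y_h(a^{[i]})$ is
\[
\EE\bigl[\alpha_i(\pi(\bar z_h))\,\II(\|\bar z_h\|\le Z_1)\bigl(1-\II(\|r_h\|\le Z_2)\II(\|o_{h+1}\|\le Z_3)\bigr)\,u_h\bigr],
\]
taken under $a_{1:h-1}\sim\pi',\,a_h\sim do(a^{[i]})$. I would bound this by Cauchy--Schwarz: $|\alpha_i(\pi(\bar z_h))|\le(d_a+d_a^2)$ by \pref{lem:norm_alpha}, and $\EE[u_h^4]^{1/4}$ is $\mathrm{poly}(H,d_s,d_o,d_a,\mathbb{C},\|O^\dagger\|)$ because $u_h$ is a polynomial of degree at most $4$ in the Gaussian coordinates of $(\bar z_h,r_h,\bar z_{h+1})$ whose covariance operator norm is $\mathrm{poly}(\cdot)$ by \pref{lem:variance} (together with the stability assumption $\|\Xi_{1h}(\pi)\|\le 1$). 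Then the probability of the tail event $\{\|r_h\|>Z_2\}\cup\{\|o_{h+1}\|>Z_3\}$ is at most the sum of two Gaussian tail probabilities.

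For the tail bounds, since $r_h,\,o_{h+1}$ are quadratic/linear functionals of Gaussians with covariance of polynomial operator norm, standard Hanson--Wright / sub-Gaussian tail inequalities give $\PP(\|r_h\|>t)+\PP(\|o_{h+1}\|>t)\le \exp\bigl(-t/\mathrm{poly}(H,d_s,d_o,d_a,\mathbb{C})\bigr)$ for $t$ large. Choosing $Z_2$ and $Z_3$ as in the statement, namely $\mathrm{poly}(\ln m,d_s,d_o,d_a,\mathbb{C},H,\|O^\dagger\|)$ with a sufficiently large exponent on $\ln m$, forces this probability to be $O(1/m^2)$, and then $\sqrt{\PP(\text{tail})}\cdot\EE[u_h^2]^{1/2}\cdot|\alpha_i|\le m^{-1}\,\mathrm{poly}(\ln m,d_s,d_o,d_a,\mathbb{C},H,\|O^\dagger\|)$ delivers the claimed $\mathrm{Error}_1$. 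The argument for $\mathrm{Error}_2$ is identical, replacing $\alpha_i(\pi(\bar z_h))$ with $1-\sum_i\alpha_i(\pi(\bar z_h))$, which is still polynomially bounded by \pref{lem:norm_alpha}.

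The main obstacle is bookkeeping rather than any deep step: I need to verify that the Gaussian tail controls survive after (i) running a roll-in $\pi'$ of arbitrary (but stable) linear memory-based policies for $h-1$ steps, and (ii) taking one additional $do$-action, so that $r_h$ and $o_{h+1}$ still have sub-Gaussian/sub-exponential marginals with $\mathrm{poly}$-bounded parameters. This reduces to propagating the operator-norm bound of \pref{lem:variance} through one more deterministic/Gaussian step, which follows from $\|\Xi_{1h}(\pi)\|\le 1$ and $\max(\|A\|,\|B\|,\|O\|)\le\mathbb{C}$. Once that is in place, the choice of $Z_2,Z_3$ logarithmic in $m$ makes the bias $1/m$ up to polynomial factors, as stated.
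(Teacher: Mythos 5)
Your proposal is correct and follows essentially the same route as the paper: both isolate the bias as the expectation of $u_h$ times the complement of the truncation indicators under the $do(\cdot)$ roll-out, split it by Cauchy--Schwarz into a Gaussian tail probability (controlled by choosing $Z_2,Z_3$ polylogarithmic in $m$) times a polynomially bounded moment of $\alpha_i u_h$, using \pref{lem:norm_alpha} and \pref{lem:variance} exactly as you do. The only cosmetic differences are that you keep $|\alpha_i|$ outside the second moment and aim for a $1/m^2$ tail probability before taking the square root, whereas the paper folds $\alpha_i^2$ into the moment term and targets $1/m$ directly; both yield the stated $m^{-1}\mathrm{poly}(\cdot)$ error.
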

\begin{proof}
 We want to upper bound the difference of 
 \begin{align*}
      \EE\bracks{\alpha_i(\pi(\bar z_h ))\II(\|\bar z_h\|\leq Z_1 )  u_h(\bar z_h,a_h,r_h,o_{h+1};\theta)   ; a_{1:h-1}\sim \pi',  a_h \sim do(a^{[i]}) }
 \end{align*}
 and 
 \begin{align*}
      \EE\bracks{\alpha_i(\pi(\bar z_h ))\II(\|\bar z_h\|\leq Z_1 )\II(\|r_h\|\leq Z_2)\II(\|o_{h+1}\|\leq Z_3)  u_h(\bar z_h,a_h,r_h,o_{h+1};\theta)   ; a_{1:h-1}\sim \pi',  a_h \sim do(a^{[i]}) }. 
 \end{align*}
By CS inequality, we have 
 {\small 
 \begin{align*}
     & | \EE\bracks{\alpha_i(\pi(\bar z_h ))\II(\|\bar z_h\|\leq Z_1 )\{\II(\|r_h\|\leq Z_2)\II(\|o_{h+1}\|\leq Z_3)-1\}  u_h(\bar z_h,a_h,r_h,o_{h+1};\theta)   ; a_{1:h-1}\sim \pi',  a_h \sim do(a^{[i]}) } |  \\
      &\leq \underbrace{\left \lvert \EE\bracks{\{\II(\|r_h\|\leq Z_2)\II(\|o_{h+1}\|\leq Z_3)-1\}^2   ; a_{1:h-1}\sim \pi',  a_h \sim do(a^{[i]}) } \right \rvert}_{(a)} \\ 
      & \times \underbrace{\left \lvert  \EE\bracks{\alpha^2_i(\pi(\bar z_h ))  u^2_h(\bar z_h,a_h,r_h,o_{h+1})   ; a_{1:h-1}\sim \pi',  a_h \sim do(a^{[i]}) } \right \rvert^{1/2}}_{(b)}. 
 \end{align*}
}
We analyze the term (a) and the term (b). Before starting analysis, note $(\bar z^{\top}_h,a^{\top}_h,r^{\top}_h,o^{\top}_{h+1})$ follows Gaussian distribution with mean $0$ and variance upper-bounded by
\begin{align*}
    \mathrm{poly}(\mathbb{C},d_s,d_o,d_a,H) 
\end{align*}
using \pref{lem:variance}. Besides,  $\alpha^2_i(\pi_h(\bar z_h)) \leq  \mathrm{poly}(d_s,d_o,d_a,H) $ from \pref{lem:norm_alpha}. Note we can use a G-optimal design since we have a norm constraint on $\bar z_1$. 

Regarding the term (a), by setting $Z_2=\mathrm{poly}(\mathbb{C},d_s,d_o,d_a,\ln(m),H,\|O^{\dagger}\|) $ and $Z_3=\mathrm{poly}(\mathbb{C},d_s,d_o,d_a,\ln(m),H ,\|O^{\dagger}\|) $ properly, we can ensure it is upper-bounded by 
\begin{align*}
    \frac{\mathrm{poly}(\mathbb{C},d_s,d_o,d_a,H,\|O^{\dagger}\|,\ln(m))}{m}. 
\end{align*}
Regarding the term (b), noting high order moments of Gaussian distributions can be always upper-bounded, the term (b) is upper-bounded by $ \mathrm{poly}(\mathbb{C},d_s,d_o,d_a,H,\|O^{\dagger}\|,\ln(m))$. This concludes the statement. 
\end{proof}

\begin{lemma}[Bias terms 2]
Recall we define $y_h(a^{[i]})$ and $y_h(a^{[0]})$ in \pref{lem:bias_term1}. Then, we have 
\begin{align*}
    \EE[\II(\|\bar z_h\|\leq Z_1 ) u_h(\bar z_h,a_h,r_h,o_{h+1};\theta) ; a_{1:h-1}\sim \pi',  a_h \sim \pi(\bar z_j)  ] =   y_h(a^{[0]})+ \sum_i y_h(a^{[i]}). 
\end{align*}
Thus, 
\begin{align*}
& \EE[ l_h(\bar z_h,a_h,r_h,o_{h+1};\theta,\pi)  a_{1:h-1}\sim \pi', a_h \sim U(1+d^{\diamond})] \\
    &=\EE[u_h(\bar z_h,a_h,r_h,o_{h+1};\theta) ; a_{1:h-1}\sim \pi',  a_h \sim \pi(\bar z_j)  ]+   \frac{\mathrm{poly}(\mathbb{C},d_s,d_a,d_o,H,\|O^{\dagger}\| ) }{m}. 
\end{align*}
\end{lemma}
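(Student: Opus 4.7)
The plan is to establish the two displayed identities in sequence. The first is an exact algebraic identity that replaces evaluation under the (possibly continuous) policy $\pi$ by a finite mixture of interventions $do(a^{[i]})$ supported on the G-optimal design; the second follows by combining this identity with \pref{lem:bias_term1} and one additional tail estimate for the indicator $\II(\|\bar z_h\|\leq Z_1)$.

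For the first identity, I would condition on $(\bar z_h, s_h, a_h)$ inside the expectation. Because $g_h, g_{h+1}$ and $r_h$ are quadratic in $(\bar z_h, s_h, a_h, o_{h+1})$ and $o_{h+1}$ has mean-zero Gaussian dependence on $s_{h+1}=As_h+Ba_h+\epsilon$, the inner conditional expectation $\EE[u_h(\bar z_h, a_h, r_h, o_{h+1};\theta) \mid \bar z_h, s_h, a_h]$ reduces to a polynomial of degree at most two in $a_h$, namely $c_0(\bar z_h, s_h;\theta)+c_1(\bar z_h, s_h;\theta)^\top \kappa(a_h)$, exactly as derived in the ``first step'' of \pref{sec:ape_algo}. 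The factor $\II(\|\bar z_h\|\leq Z_1)$ depends only on $\bar z_h$ and therefore pulls through; \pref{lem:norm_alpha} then expands $\kappa(\pi(\bar z_h)) = \sum_{i=1}^{d^\diamond}\alpha_i(\pi(\bar z_h))\kappa(a^{[i]})$, yielding
\begin{align*}
&\EE[\II(\|\bar z_h\|\leq Z_1)\,u_h; a_{1:h-1}\sim\pi', a_h\sim\pi(\bar z_h)] \\
&\quad = \EE\Big[\II(\|\bar z_h\|\leq Z_1)\Big(c_0 + c_1^\top \textstyle\sum_i \alpha_i(\pi(\bar z_h))\kappa(a^{[i]})\Big); a_{1:h-1}\sim\pi'\Big].
\end{align*}
Using $\kappa(0)=0$, the bare $c_0$ term regroups with weight $1-\sum_i\alpha_i(\pi(\bar z_h))$ into the $do(0)$ contribution, while each $c_1^\top\kappa(a^{[i]})$ piece is identified with the conditional expectation of $u_h$ under $do(a^{[i]})$. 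Reassembling these pieces produces exactly $y_h(a^{[0]})+\sum_i y_h(a^{[i]})$.

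For the second identity, the definition of $l_h$ shows that $\EE[l_h; a_h\sim U(1+d^\diamond)]$ is the expectation of the single-sample version of $\hat y_h(a^{[0]})+\sum_i \hat y_h(a^{[i]})$. Summing the bounds in \pref{lem:bias_term1} over the $d^\diamond+1\leq \mathrm{poly}(d_a)$ terms controls the gap from $y_h(a^{[0]})+\sum_i y_h(a^{[i]})$ by $\mathrm{poly}(\cdots)/m$, and the first identity then rewrites this sum as $\EE[\II(\|\bar z_h\|\leq Z_1)\,u_h; a_h\sim\pi]$. The last step is to drop the indicator $\II(\|\bar z_h\|\leq Z_1)$: by Cauchy--Schwarz the incurred error is at most $\EE[(1-\II(\|\bar z_h\|\leq Z_1))^2]^{1/2}\cdot \EE[u_h^2]^{1/2}$, which is $\mathrm{poly}(\cdots)/m$ by a Gaussian tail bound on $\bar z_h$ whose covariance is controlled by \pref{lem:variance}, together with a uniform polynomial bound on $\EE[u_h^2]$ coming from the stated norm constraints on $\theta$ and on $\Xi_{1h}(\pi')$. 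The main obstacle will be bookkeeping of the three truncation thresholds $Z_1,Z_2,Z_3$: each must be chosen as $\Theta(\mathrm{polylog}(m)\cdot \mathrm{poly}(\ldots))$ so that all three tail contributions (from $\bar z_h$, $r_h$, and $o_{h+1}$) are simultaneously $O(1/m)$, while the induced deterministic bounds on $l_h$ remain polynomial in $(H,d_s,d_o,d_a,\mathbb{C},\|O^\dagger\|)$ so that the uniform-convergence step earlier in the section stays quantitatively useful.
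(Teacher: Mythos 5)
Your proposal is correct and follows essentially the same route as the paper: conditioning on $(\bar z_h,s_h,a_h)$ to reduce the inner expectation to $c_0+c_1^{\top}\kappa(a_h)$, expanding $\kappa(\pi(\bar z_h))$ via the G-optimal design weights with $\kappa(0)=0$ to regroup into the $do(0)$ and $do(a^{[i]})$ contributions, then combining with the bias bound of \pref{lem:bias_term1} and the same Cauchy--Schwarz/Gaussian-tail argument to remove the indicator $\II(\|\bar z_h\|\leq Z_1)$. The only cosmetic imprecision is that it is $c_0+c_1^{\top}\kappa(a^{[i]})$ (not $c_1^{\top}\kappa(a^{[i]})$ alone) that equals the conditional expectation under $do(a^{[i]})$, but your regrouping of the weights is exactly the paper's.
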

\begin{proof}
~
\paragraph{First Statement}
We have 
\begin{align*}
     & \EE[\II(\|\bar z_h\|\leq Z_1 ) u_h(\bar z_h,a_h,r_h,o_{h+1};\theta) ; a_{1:h-1}\sim \pi',  a_h \sim \pi(\bar z_j)  ] \\
     &= \EE[ \II(\|\bar z_h\|\leq Z_1 ) \EE[ u_h(\bar z_h,a_h,r_h,o_{h+1};\theta)\mid  \bar z_h,s_h,a_h ] ; a_{1:h-1}\sim \pi',  a_h \sim \pi(\bar z_j)  ]\\
     &= \EE[ \II(\|\bar z_h\|\leq Z_1 ) \EE[ u_h(\bar z_h,\pi_h(\bar z_h),r_h,o_{h+1};\theta)\mid  \bar z_h,s_h,a_h = \pi_h(\bar z_h) ] ; a_{1:h-1}\sim \pi' ]. 
\end{align*}
Here, by some algebra, there exists a vector $c_2$
\begin{align*}
    \EE[ u_h(\bar z_h,a_h,r_h,o_{h+1};\theta)\mid  \bar z_h,s_h,a_h ] = \langle c_2, [1,[\bar z^{\top}_h,s^{\top}_h,a^{\top}_h]\otimes [\bar z^{\top}_h,s^{\top}_h,a^{\top}_h]  ]^{\top} \rangle . 
\end{align*}
Thus,  there exists $c_0$ and a vector $c_1$ such that 
\begin{align*}
  \EE[ u_h(\bar z_h,a_h,r_h,o_{h+1};\theta)\mid  \bar z_h,s_h,a_h ]=  c_0(\bar z_h,s_h) + c^{\top}_1(\bar z_h,s_h) \kappa(a_h) 
\end{align*}
Recall we can write 
\begin{align*}
 \kappa(\pi_h(\bar z_h)) =  \sum_{i=1}^{d^{\diamond}}  \alpha_i(\pi_h(\bar z_h))\kappa(a^{[i]})
\end{align*}
Using the above, 
{\small 
\begin{align*}
    & \EE[\II(\|\bar z_h\|\leq Z_1)u_h(\bar z_h,a_h,r_h,o_{h+1};\theta) ; a_{1:h-1}\sim \pi',  a_h \sim \pi(\bar z_j)  ] \\
    &=\EE[\II(\|\bar z_h\|\leq Z_1) \{c_0(\bar z_h,s_h) + c^{\top}_1(\bar z_h,s_h)\} \kappa(\pi_h(\bar z_h)) ; a_{1:h-1}\sim \pi' ]\\
    &=\EE\II(\|\bar z_h\|\leq Z_1) \{c_0(\bar z_h,s_h) + \sum_i c^{\top}_1(\bar z_h,s_h) \alpha_i(\pi_h(\bar z_h)) \kappa(a^{[i]})\} ; a_{1:h-1}\sim \pi' ]\\
    &=\EE[\II(\|\bar z_h\|\leq Z_1) [c_0(\bar z_h,s_h) + \\
    &+\sum_i \alpha_i(\pi_h(\bar z_h))\{ \EE[ \kappa(\bar z_h,\pi_h(\bar z_h),r_h,o_{h+1})\mid  \bar z_h,s_h, a_h=a^{[i]}] -c_0(\bar z_h,s_h)\}] ; a_{1:h-1}\sim \pi' ] \\ 
   &= \EE\bracks{\II(\|\bar z_h\|\leq Z_1)\bracks{c_0(\bar z_h,s_h)-\sum_i \alpha_i(\pi_h(\bar z_h)) c_0(\bar z_h,s_h)} } + \sum_i y_h(a^{[i]}).
\end{align*}
}
Besides, 
\begin{align*}
    c_0(\bar z_h,s_h) = \EE[u_h(\bar z_h,a_h,r_h,o_{h+1};\theta) \mid \bar z_h,s_h,a_h=do(0)]. 
\end{align*}
Thus, 
\begin{align*}
   & \EE\bracks{\II(\|\bar z_h\|\leq Z_1) \bracks{c_0(\bar z_h,s_h)-\sum_i \alpha_i(\pi_h(\bar z_h)) c_0(\bar z_h,s_h)} } \\ 
     &= \EE[\II(\|\bar z_h\|\leq Z_1)\{1 - \sum_i \alpha_i(\pi_h(\bar z_h)) \} \EE[u_h(\bar z_h,a_h,r_h,o_{h+1};\theta) \mid \bar z_h,s_h,a_h=0]; a_{1:h-1}\sim \pi' ] \\
     &= \EE[ \II(\|\bar z_h\|\leq Z_1)\{1 - \sum_i \alpha_i(\pi_h(\bar z_h)) \}u_h(\bar z_h,a_h,r_h,o_{h+1};\theta) ;  a_{1:h-1}\sim \pi', a_h = do(0)  ]\\
     &= y_h(a^{[0]}). 
\end{align*}
In conclusion, 
\begin{align*}
    \EE[\II(\|\bar z_h\|\leq Z_1)u_h(\bar z_h,a_h,r_h,o_{h+1};\theta) ; a_{1:h-1}\sim \pi',  a_h \sim \pi(\bar z_j)  ] =  y_h(a^{[0]})+ \sum_i y_h(a^{[i]}). 
\end{align*}

\paragraph{Second Statement} 

As we see in the proof of \pref{lem:bias_term1}, the following term 
\begin{align*}
    \EE[\{\II(\|\bar z_h\|\leq Z_1) - 1  \}u_h(\bar z_h,a_h,r_h,o_{h+1};\theta) ; a_{1:h-1}\sim \pi',  a_h \sim \pi(\bar z_j)  ] 
\end{align*}
is upper-bounded by  $    \mathrm{poly}(\mathbb{C},d_s,d_o,d_a,H,\ln(m) )/m. $
Hence, 
\begin{align*}
    & \EE[l_h(\bar z_h,a_h,r_h,o_{h+1};\theta,\pi) ]\\
    &=\EE[\hat y_h(a^{[0]})] + \sum_{i=1}^{d^{\diamond}}\EE[\hat y_h(a^{[i]})]  \tag{Definition} \\
    &= y_h(a^{[0]}) + \sum_{i=1}^{d^{\diamond}} y_h(a^{[i]})  +  \mathrm{poly}(\mathbb{C},d_s,d_o,d_a,H,\ln(m) )/m \tag{Statement of  \pref{lem:bias_term1}} \\ 
    &= \EE[\II(\|\bar z_h\|\leq Z_1)  u_h(\bar z_h,a_h,r_h,o_{h+1};\theta); a_{1:h-1}\sim \pi', a_h \sim \pi_h(\bar z_h) ]  +  \mathrm{poly}(\mathbb{C},d_s,d_o,d_a,H ,\ln(m))/m \tag{First statement}\\
    &= \EE[u_h(\bar z_h,a_h,r_h,o_{h+1};\theta); a_{1:h-1}\sim \pi', a_h \sim \pi_h(\bar z_h) ]+\mathrm{poly}(\mathbb{C},d_s,d_o,d_a,H,\ln(m) )/m \\
    &=  \mathrm{Br}_h(\pi,\theta;\pi')+\mathrm{poly}(\mathbb{C},d_s,d_o,d_a,H,\ln(m) )/m.
\end{align*}

\end{proof}

\subsection{Sample Complexity}

Summarizing results so far, we have 
\begin{align*}
    &\sup_{\pi \in \Pi, \theta \in \Theta   }| \EE_{\Dcal}[l_h(\bar z_h,a_h,r_h,o_{h+1};\theta,\pi)   \} ] - \mathrm{Br}_h(\pi,\theta;\pi') | \\
    &\leq
     \mathrm{poly}(\ln(m), d_s,d_o,d_a, \mathbb{C}, H,\|O^{\dagger}\|)\times \sqrt{\ln(1/\delta)/m}. 
\end{align*}
This is enough to invoke \pref{thm:online}. Here, recall we have 
\begin{align*}
    \|X_h(\pi)\|\leq \mathrm{poly}(H,d_o,d_a,d_s,\CC,\Theta,\|O^{\dagger}\|), \quad \|W_h(\pi)\|\leq \mathrm{poly}(H,d_o,d_a,d_s,\CC,\Theta,\|O^{\dagger}\|). 
\end{align*}
for any $\pi \in \Pi$ using \pref{lem:decomposition}. In addition, we showed the PO-bilinear rank is 
\begin{align*}
    \mathrm{poly}(H,d_o,d_a,d_s).
\end{align*}
Then, using \pref{thm:online}, the sample complexity is 
\begin{align*}
    \tilde O\prns{ \mathrm{poly}(\ln(m),d_s,d_o,d_a,\CC,\Theta,H,\|O^{\dagger}\|,\ln(1/\delta) ) \times \frac{1}{\epsilon^2} }.  
\end{align*}

\section{Sample Complexity in PSRs}\label{sec:sample_complexity_psrs}

To focus on the main point, we just use a one-step future. We first show the form of link functions to set a proper class for $\Gcal_h$. Next, we show the PO-bilinear decomposition.  

We assume the following assumptions. 

\begin{assum}
(a) $\Tcal \subset \Ocal$ is a core test and $\Qcal$ is a minimum core rest, (b) $\| \mathrm{vec}(\mathbb{J}^{\pi}_h)\|\leq \Theta$ for any $\pi \in \Pi$ where $\mathbb{J}^{\pi}_h$ is in $\Vcal^\pi_h(\tau_h) = \one(z_{h-1})^\top \mathbb{J}_h^\pi {\bf q}_{\tau_h}$. 
\end{assum}

\subsection{Existence of Link Functions}

Recall $V^\pi_h(\tau_h) = \one(z_{h-1})^\top \mathbb{J}_h^\pi {\bf q}_{\tau_h}$, where we use $\one(z) \in \RR^{| \Ocal|^M | \Acal |^M}$ to denote the one-hot encoding vector over $\Zcal_{h-1}$, and $\mathbb{J}_h^\pi$ is a matrix in $\RR^{| \Ocal|^M | \Acal |^M \times |\Tcal|}$. 

Then, $g^{\pi}_h(z_{h-1}, o):= \one( z_{h-1} )^\top \mathbb{J}^{\pi}_h  [ \one(t = o) ]_{t\in\Tcal} $ is a value link function. This is because
\begin{align*}
   \EE[ g_h(z_{h-1}, o) \mid \tau_h ] &= \EE[ \one( z_{h-1} )^\top \mathbb{J}^{\pi}_h  [ \one(t = o) ]_{t\in\Tcal}  \mid \tau_h ]  \\ 
    & = \one(z_{h-1})^\top \mathbb{J}_h^\pi {\bf q}_{\tau_h}. 
\end{align*}

Hence, we set $\Gcal_h$ to be 
\begin{align*}
     \{ (z_{h-1},o)\mapsto \one( z_{h-1} )^\top \mathbb{J}  [ \one(t = o) ]_{t\in\Tcal}: \|\mathrm{vec}( \mathbb{J})\| \leq \Theta \} 
\end{align*}
so that the realizability holds.

\subsection{PO-bilinear Rank Decomposition}

We show that PSR admits PO-bilinear rank decomposition (\pref{def:general_value}). Here is the Bellman loss: 
\begin{align*}
    \EE[ \{g_{h+1}(z_{h},o_{h+1} )  + r_h \} - g_h(z_{h-1},o_h )  ; a_{1:h-1} \sim \pi', a_h \sim \pi ]. 
\end{align*}
To analyze the above, we decompose the above into three terms: 
{\small 
\begin{align*}
    \underbrace{\EE[g_{h+1}(z_{h},o_{h+1} ) ; a_{1:h-1} \sim \pi', a_h \sim \pi  ]  }_{(a)} + \underbrace{\EE[r_h; a_{1:h-1} \sim \pi', a_h \sim \pi  ]  }_{(b)}  +\underbrace{\EE[- g_h(z_{h-1},o_h ) ; a_{1:h-1} \sim \pi', a_h \sim \pi  ]}_{(c)}. 
\end{align*}
}
Let $\Qcal$ be a minimum core test. Here, for any future $t$, there exists $\tilde m_t$ such that $\PP(t\mid \tau_h) = \langle \tilde m_t, \tilde {\bf q}_{\tau_h} \rangle$ where $[\PP(\cdot \mid \tau_h)]_{|\Qcal|}$ is a $|\Qcal|$-dimensional predictive state $\tilde {\bf q}_{\tau_h}$. This satisfies 
\begin{align}\label{eq:future}
    \PP(o_h\mid \tau_h; a_h) \tilde {\bf q}_{\tau_{h},a_h,o_h} = \tilde M_{o_h,a_h} \tilde {\bf q}_{\tau_{h}}. 
\end{align}
where $\tilde M_{o_h,a_h}$ is a matrix whose $i$-th row is $\tilde m^{\top}_{o_h,a_h}$ as we see in \pref{sec:psrs}. 

\paragraph{Term (c).}
We have 
\begin{align*}
    \EE[ g_h(z_{h-1},o_h )\mid \tau_h  ]& = \one(z_{h-1})^\top \mathbb{J}  \EE[ [ \one(t = o_h) ]_{t\in\Tcal}\mid \tau_h ] \\
    & = \one(z_{h-1})^\top \mathbb{J} \mathbb{J}_1\tilde {\bf q}_{\tau_h}
\end{align*}
where $\mathbb{J}_1 \in \RR^{|\Tcal|\times |\Qcal|}$ is a matrix whose $i$-th row is $\tilde m^{\top}_{t}$. The existence of $\mathbb{J}_1$ is ensured since $\Qcal$ is a core test. 

\paragraph{Term (b).}
We have 
\begin{align*}
    \EE[r_h \mid \tau_h; a_h \sim \pi] &= \sum_{o_h,a_h} \pi(a_h \mid o_h,z_{h-1})r_h(a_h,o_h)\PP(o_h \mid \tau_h; a_h )  \\ 
     &= \sum_{o_h,a_h} \pi(a_h \mid o_h,z_{h-1})r_h(a_h,o_h)\langle \tilde m_{o_h,a_h} ,\tilde {\bf q}_{\tau_h} \rangle  \\ 
    &=  \one(z_{h-1})^\top \mathbb{J}^{\pi}_2 \tilde {\bf q}_{\tau_h} 
\end{align*}
for some matrix $\mathbb{J}^{\pi}_2$. In the first inequality, we use the reward is a function of $o_h,a_h$ conditioning on the whole history. 
From the first line to the second line, we use a property of core tests. 

\paragraph{Term (a).}

We have
\begin{align*}
 &  \EE[g_{h+1}(z_h,o_{h+1}) \mid \tau_h ; a_h \sim \pi] 
= \EE[ \one(z_h)^{\top} \mathbb{J} [\one(t=o_{h+1})]_{t\in \Tcal}\mid \tau_h ; a_h \sim \pi] \\
&=\EE[ \one(z_h)^{\top} \mathbb{J} \mathbb{J}_3 \tilde q_{\tau_{h},a_h,o_h} \mid \tau_h ; a_h \sim \pi] 
\end{align*}
for some matrix $\mathbb{J}_3$. Then, the above is further equal to 
\begin{align*}
   & \sum_{a_h, o_h}\one(z_h)^{\top} \mathbb{J} \mathbb{J}_3  \pi(a_h\mid z_{h-1},o_h) \PP(o_h\mid \tau_h; a_h) \tilde {\bf q}_{\tau_{h},a_h,o_h} \\
     &= \sum_{a_h, o_h}\one(z_h)^{\top} \mathbb{J} \mathbb{J}_3  \pi(a_h\mid z_{h-1},o_h) \tilde M_{o_h,a_h}\tilde {\bf q}_{\tau_{h}} \\
     &= \one(z_{h-1})^\top \mathbb{J}^{\pi}_4 \tilde {\bf q}_{\tau_h} 
\end{align*}
for some matrix $\mathbb{J}^{\pi}_4$. From the first line to the second line, we use $\PP(o_h\mid \tau_h; a_h) \tilde {\bf q}_{\tau_{h},a_h,o_h}=\tilde M_{o_h,a_h}\tilde {\bf q}_{\tau_{h}} $ in \pref{eq:future}. 

\paragraph{Summary.}

Combining all terms, there exists a matrix $\mathbb{J}^{\pi}_5$ such that 
\begin{align*}
   & \EE[ \{g_{h+1}(z_{h},o_{h+1} )  + r_h \} - g_h(z_{h-1},o_h )  ; a_{1:h-1} \sim \pi', a_h \sim \pi ] \\ 
    &= \one(z_{h-1})^\top \mathbb{J}^{\pi}_5 \EE[\tilde {\bf q}_{\tau_h} ; a_{1:h-1}\sim \pi' ] \\ 
    &= \langle \mathrm{Vec}(\mathbb{J}^{\pi}_5),  \one(z_{h-1})\otimes \EE[\tilde {\bf q}_{\tau_h} ; a_{1:h-1}\sim \pi' ]\rangle
\end{align*}
Here, we suppose $\|\mathrm{Vec}(\mathbb{J}^{\pi}_5)\|\leq \Theta_W$ for any $\pi$. Besides, 
\begin{align*}
  & \| \one(z_{h-1})\otimes \EE[\tilde {\bf q}_{\tau_h} ; a_{1:h-1}\sim \pi' ]\|_2 \leq \|\EE[\tilde {\bf q}_{\tau_h} ; a_{1:h-1}\sim \pi' ]\|_2\leq \EE[\|\tilde {\bf q}_{\tau_h}\|_2; a_{1:h-1}\sim \pi' ] \\
  &\leq \EE[\|\tilde {\bf q}_{\tau_h}\|_1; a_{1:h-1}\sim \pi' ]=1. 
\end{align*}
Thus, we can set $B_X=1$. 

\subsection{Sample Complexity}

Suppose $\Pi, \Gcal$ are finite and rewards at $h$ lie in $[0,1]$. Assume the realizability holds. Then, 
\begin{align*}
    \epsilon_{gen} = c \max(\Theta,1) |\Acal| \sqrt{\ln (|\Gcal_{\max} | |\Pi_{\max}| TH/\delta)/m }. 
\end{align*}
Following the calculation in \pref{sec:sample_complexity_finite}, the sample complexity is 
\begin{align*}
    \tilde O \prns{ \frac{|\Ocal|^{2(M-1)} |\Acal|^{2(M-1)}|\Qcal|^2 \max(\Theta,1) H^4 |\Acal|^2 \ln(|\Gcal_{\max} | |\Pi_{\max} | /\delta) \ln(\Theta_W)^2 }{\epsilon^2} }. 
\end{align*}
Here, there is no explicit dependence on $|\Tcal|$. Note the worst-case sample complexity of $\ln |\Gcal_{\max} |$ is  $O( |\Zcal_{h-1}||\Tcal| )$ and the worse-case sample complexity of $\ln |\Pi_{\max}|$ is $O( |\Zcal_{h-1}||\Ocal||\Acal| )$.

\subsection{Most General Case}

We consider the general case in  \pref{sec:psrs}. 
Let $\Gcal_h$ be a function class consisting of $\One(z_{h-1})^{\top}\mathbb{J}_h\One(t) $ where $\mathbb{J}_h$ satisfies $\mathbb{J}_h \in \RR^{\Zcal_{h-1} \times |\Tcal|}$ and  $\|\mathrm{vec}(\mathbb{J}_h)\| \leq \Theta$. When the realizability holds, we would get 
\begin{align*}
    \tilde O \prns{ \frac{|\Ocal|^{2(M-1)} |\Acal|^{2(M-1)} |\Qcal|^2  |\Tcal^{\Acal}|^2\max(\Theta,1) H^4 |\Acal|^2 \ln(|\Gcal_{\max} | |\Pi_{\max} | /\delta) \ln(B_X B_W)^2 }{\epsilon^2} }. 
\end{align*}
Here, there is no explicit sample complexity of $|\Tcal^{\Ocal}|$. Note the worse-case sample complexity of $\ln |\Gcal_{\max}|$ is $O(|\Zcal_{h-1}| |\Tcal| )$ and the worst-case sample complexity of $\ln |\Pi_{\max}|$ is $O(|\Zcal_{h-1}| |\Ocal| |\Acal| )$.

\section{Proof of \pref{thm:online_general}} 

We fix the parameters as in \pref{thm:online_general}. Let 
\begin{align*}
    l_h(\tau_h,a_h,r_h,o_{h+1};f ,\pi, g) = |\Acal| \pi_h(a_h \mid \bar z_h)\{r_h + g_{h+1}(\bar z_{h+1})-g_h(\bar z_h) \}f(\tau_h) - 0.5 f(\tau_h)^2. 
\end{align*}
From the assumption, 
Then,  with probability $1-\delta$, we have 
$\forall t \in [T], \forall h \in [H]$
\begin{align} 
 & \sup_{\pi \in \Pi, g  \in \Gcal, f \in \Fcal}|\EE_{\Dcal^t_h}[l_h(\tau_h,a_h,r_h,o_{h+1};f,\pi,g) ] - \EE[\EE_{\Dcal^t_h}[l_h(\tau_h,a_h,r_h,o_{h+1};f,\pi,g) ] ]|\leq  \epsilon_{gen}, \label{eq:uniform_general} \\
& \sup_{g_1 \in \Gcal_1}|\EE_{\Dcal^0}[g_1(o_1)]-\EE[\EE_{\Dcal^0}[g_1(o_1)] ]  |\leq \epsilon_{ini}.\label{eq:uniform_general2} 
\end{align}
We first show the following lemma. Recall $    \pi^{\star} = \argmax_{\pi \in \Pi}J(\pi). $

\begin{lemma}[Optimism] 
Set $R:=\epsilon_{gen}$. For all $t \in [T]$,  $(\pi^{\star},g^{\pi^{\star}})$ is a feasible solution of the constrained program. Furthermore, we have $ J(\pi^{\star}) \leq \EE[g^t_1(o_1)]+ 2\epsilon_{ini}$ for any $t \in [T]$, where $g^t$ is the value link function selected by the algorithm in iteration $t$.
\end{lemma}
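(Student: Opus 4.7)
The plan is to mirror the structure of the optimism lemma already proved for \pref{thm:online}, but adapted to the discriminator-based constraint in \oursecondalg. The proof will have two parts: establishing feasibility of $(\pi^\star, g^{\pi^\star})$, and then leveraging feasibility together with the optimistic selection rule to sandwich $J(\pi^\star)$.

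First, for feasibility, I would fix an arbitrary $t \in [T]$ and $h \in [H]$ and bound $\sigma^t_h(\pi^\star, g^{\pi^\star})$. The roll-in used in $\Dcal^t_h$ is $a_{1:M(h)-1}\sim \pi^t$ followed by $a_{M(h):h}\sim \pi^e(\pi^t)$, which is exactly the distribution appearing in condition~(c) of \pref{def:bilinear_minimax} with $\pi' = \pi^t$. That condition gives the \emph{population} identity
\begin{align*}
\max_{f\in \Fcal_h}\left| \EE\bigl[\, l_h(\tau_h,a_h,r_h,o_{h+1};f,\pi^\star,g^{\pi^\star})\,; a_{1:M(h)-1}\sim \pi^t, a_{M(h):h}\sim \pi^{e}(\pi^t)\bigr]\right| = 0 .
\end{align*}
Combining this with a $\sup_f$-version of the triangle inequality, $\max_f |\EE_{\Dcal^t_h}[l_h]| \leq \max_f|\EE[l_h]| + \sup_f |(\EE_{\Dcal^t_h}-\EE)[l_h]|$, and invoking the uniform convergence bound \pref{eq:uniform_general} (which applies since $\pi^\star\in \Pi$, $g^{\pi^\star}\in \Gcal$, and the supremum is taken over $\Fcal$), I get $\sigma^t_h(\pi^\star, g^{\pi^\star}) \leq \epsilon_{gen} = R$. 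Taking a union bound across $t\in [T]$ and $h\in[H]$ (already absorbed in the choice of $\delta/(TH+1)$ in the definition of $\epsilon_{gen}$) shows that $(\pi^\star, g^{\pi^\star})$ lies in the constraint set at every iteration.

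Second, for the value inequality, I would use the algorithm's selection rule. Since $(\pi^\star, g^{\pi^\star})$ is feasible and $(\pi^{t+1}, g^{t+1})$ maximizes $\EE_{\Dcal^0}[g_1(o)]$ over the feasible set, we have $\EE_{\Dcal^0}[g^{t}_1(o_1)] \geq \EE_{\Dcal^0}[g^{\pi^\star}_1(o_1)]$ (with a shift in index by one that is harmless). Applying \pref{eq:uniform_general2} on both sides converts empirical expectations to population ones at cost $\epsilon_{ini}$ each, yielding
\begin{align*}
\EE[g^t_1(o_1)] + \epsilon_{ini} \;\geq\; \EE_{\Dcal^0}[g^t_1(o_1)] \;\geq\; \EE_{\Dcal^0}[g^{\pi^\star}_1(o_1)] \;\geq\; \EE[g^{\pi^\star}_1(o_1)] - \epsilon_{ini}.
\end{align*}
Finally, the defining property of a one-step value link function at $h=1$, $\EE_{o_1\sim \OO(s_1)}[g^{\pi^\star}_1(o_1)\mid s_1] = V^{\pi^\star}_1(s_1)$, combined with taking expectation over the initial state distribution, gives $\EE[g^{\pi^\star}_1(o_1)] = J(\pi^\star)$. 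Rearranging yields $J(\pi^\star) \leq \EE[g^t_1(o_1)] + 2\epsilon_{ini}$.

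The main subtlety, and the only place where the proof differs nontrivially from the one in \pref{thm:online}, is the interchange of the supremum over discriminators with the absolute value and with the empirical/population switch. I would handle this up front by the inequality $\bigl|\max_f |\EE_{\Dcal}[l_h(\cdot;f,\pi,g)]| - \max_f|\EE[l_h(\cdot;f,\pi,g)]|\bigr| \leq \sup_{f} |(\EE_{\Dcal}-\EE)[l_h(\cdot;f,\pi,g)]|$, which follows from $|\max_f |A_f| - \max_f|B_f|| \leq \max_f |A_f - B_f|$ and is then controlled by \pref{assum:uniform_dis_general}. Nothing else in the argument requires changes relative to the $K=1$, no-discriminator case, so the remainder of the theorem's proof (the potential-function/elliptical-potential argument on $X_h(\pi^t)$) can proceed verbatim.
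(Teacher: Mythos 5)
Your proposal is correct and follows essentially the same route as the paper: condition (c) of the generalized PO-bilinear definition gives the population-level zero for $(\pi^{\star}, g^{\pi^{\star}})$ under the roll-in $a_{1:M(h)-1}\sim \pi^t$, $a_{M(h):h}\sim \pi^e(\pi^t)$, uniform convergence then bounds the empirical constraint by $\epsilon_{gen}=R$, and the optimistic selection rule plus the initial-step concentration yields the $2\epsilon_{ini}$ bound. The only difference is that you spell out the interchange of the maximum over discriminators with the empirical/population switch, a detail the paper's proof leaves implicit.
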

\begin{proof}
For any $\pi$, we have 
\begin{align*}
     \max_{f \in \Fcal_h}|\EE[\EE_{\Dcal^t_h}[l_h(\tau_h,a_h,r_h,o_{h+1};f,\pi,g^{\pi}) ]]|=0 
\end{align*}
since $g^{\pi} $ is a value link function in $\Gcal$ noting the condition (c) in \pref{def:bilinear_minimax}. Thus, 
\begin{align*}
 \max_{f \in \Fcal_h} |\EE_{\Dcal^t_h}[l_h(\tau_h,a_h,r_h,o_{h+1};f,\pi^{\star},g^{\pi^{\star}}) ]|\leq \epsilon_{gen} 
\end{align*}
using \eqref{eq:uniform_general} noting $\pi^{\star} \in \Pi, g^{\pi^{\star}}  \in \Gcal$. 
Hence, $(\pi^{\star},g^{\pi^{\star}})$ 
is a feasible set for any $t \in [T]$ and any $h \in [H]$ .

Then, we have 
\begin{align*}
    J(\pi^{\star}) &= \EE[g^{\pi^{\star}}_1(o_1) ]    \leq \EE_{\Dcal^0}[ g^{\pi^{\star}}_1(o_1) ]+ \epsilon_{ini} \tag{Uniform convergence result} \\ 
      &\leq \EE_{\Dcal^0}[g^t_1(o_1)]+ \epsilon_{ini} \tag{Using the construction of algorithm}  \\
      &\leq \EE[g^t_1(o_1)]+2 \epsilon_{ini}. \tag{Uniform convergence} 
\end{align*}
\end{proof}

Next, we prove the following lemma to upper bound the per step regret.  
\begin{lemma} \label{lem:performance_diff_general}
For any $t \in [T]$, we have 
\begin{align*}
       J(\pi^{\star}) - J(\hat \pi )  \leq \sum_{h=1}^H  \prns{ |\langle  W_h(\pi^t ,g^t ),X_h(\pi^t )\rangle  |}+  2\epsilon_{ini}. 
\end{align*}

\end{lemma}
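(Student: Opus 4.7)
The plan is to adapt the proof of the earlier Lemma~\ref{lem:performance_diff} almost verbatim; the only substantive change is that we now invoke clause (a) of the generalized PO-bilinear definition (Definition~\ref{def:bilinear_minimax}), which furnishes an \emph{inequality} bound on the average Bellman error at the on-policy roll-in, rather than the exact bilinear identity available in Definition~\ref{def:simple_bilinear}. Fortunately, since we are only upper-bounding regret, this inequality is used in the favorable direction.

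First, I would invoke the optimism lemma proved just above: for every $t\in[T]$, the pair $(\pi^\star, g^{\pi^\star})$ is feasible for the constraint at iteration $t$ (this uses condition (c) of Definition~\ref{def:bilinear_minimax} to guarantee that the population loss vanishes at $g^{\pi^\star}$, then the uniform convergence bound~\eqref{eq:uniform_general} to transfer to the empirical loss), hence $J(\pi^\star)\le \EE_{\Dcal^0}[g^t_1(o_1)]+\epsilon_{ini}$, and one more application of~\eqref{eq:uniform_general2} yields $J(\pi^\star)\le \EE[g^t_1(o_1)]+2\epsilon_{ini}$.

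Next, I would apply the standard performance difference identity obtained by telescoping the function $g^t$ along trajectories generated by $\pi^t$:
\begin{align*}
\EE[g^t_1(o_1)] - J(\pi^t) = \sum_{h=1}^H \EE\bigl[g^t_h(\bar z_h) - r_h - g^t_{h+1}(\bar z_{h+1});\, a_{1:h}\sim \pi^t\bigr],
\end{align*}
with the convention $g^t_{H+1}\equiv 0$. Passing to absolute values term-by-term and then invoking condition (a) of Definition~\ref{def:bilinear_minimax} with $\pi=\pi^t$ and $g=g^t$ gives
\begin{align*}
\bigl| \EE[g^t_h(\bar z_h) - r_h - g^t_{h+1}(\bar z_{h+1});\, a_{1:h}\sim \pi^t] \bigr| \le |\langle W_h(\pi^t, g^t), X_h(\pi^t)\rangle|.
\end{align*}
Combining these with the optimism bound yields the claim.

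The main obstacle is essentially conceptual rather than technical: one must be careful to use condition (a) (which is new to the general definition and involves only the on-policy roll-in $\pi'=\pi$) rather than condition (b) (which bounds the bilinear form at off-policy roll-ins through the surrogate loss and was used in the optimism argument). Apart from this, the derivation is routine telescoping and the triangle inequality.
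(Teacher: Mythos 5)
Your proof is correct and follows essentially the same route as the paper: optimism to replace $J(\pi^{\star})$ by $\EE[g^t_1(o_1)]+2\epsilon_{ini}$, the telescoping performance-difference identity along trajectories from $\pi^t$, then the triangle inequality term-by-term and condition (a) of Definition~\ref{def:bilinear_minimax} with $\pi'=\pi=\pi^t$. Your observation that condition (a) now supplies only an inequality on the on-policy Bellman error (used in the favorable direction) is precisely the one point where this argument departs from Lemma~\ref{lem:performance_diff}.
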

\begin{proof}
\begin{align*}
    &J(\pi^{\star}) - J(\hat \pi )  & \\
    &\leq   2\epsilon_{ini} + \EE[g^{t}_1(o_1)]- J(\pi^t ) \tag{From optimism} \\ 
     &=  2\epsilon_{ini} + \sum_{h=1}^H \EE[ g^{t}_h(\bar z_h)-\{ r_h  +g^{t}_{h+1}(\bar z_{h+1}) \} ; a_{1:h} \sim \pi^t] \tag{Performance difference lemma} \\ 
     &\leq  2\epsilon_{ini} + \sum_{h=1}^H 
     |\EE[ g^{t}_h(\bar z_h)-\{ r_h  +g^{t}_{h+1}(\bar z_{h+1}) \} ; a_{1:h} \sim \pi^t ]| \\ 
     &\leq 2\epsilon_{ini} + \sum_{h=1}^H  |\langle  W_h(\pi^t,g^t),X_h(\pi^t)\rangle  |.  \tag{From (a) in \pref{def:simple_bilinear}}
\end{align*}

\end{proof}

From \pref{lem:squared_potential}, we have
\begin{align*}
     \frac{1}{T} \sum_{t=0}^{T-1}\sum_{h=1}^H  \|X_h(\pi^t)\|_{\Sigma^{-1}_{t,h}} \leq  H\sqrt{\frac{d}{T}\ln \left(1 + \frac{TB^2_{X}}{d\lambda} \right)}. 
\end{align*}

\begin{lemma}
\begin{align*}
    \|W_h(\pi^t,g^t) \|^2_{\Sigma_{t,h}}\leq 2 \lambda B^2_W + T\zeta(2\epsilon_{gen}). 
\end{align*}
\end{lemma}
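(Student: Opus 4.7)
The plan is to mimic the proof of \pref{lem:w_bound} but replace the absolute-value bilinear decomposition by the more general $\zeta$-transformed version from condition (b) of \pref{def:bilinear_minimax}. First, I would expand the weighted norm via
\[
\|W_h(\pi^t,g^t)\|^2_{\Sigma_{t,h}} \;=\; \lambda\, \|W_h(\pi^t,g^t)\|^2 \;+\; \sum_{\tau=0}^{t-1}\bigl\langle W_h(\pi^t,g^t),\, X_h(\pi^\tau)\bigr\rangle^2,
\]
and bound the first term directly by $\lambda B_W^2$ using the assumption $\|W_h(\pi,g)\|\le B_W$.

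The substantive step is to control each summand in the second term. By condition (b) of \pref{def:bilinear_minimax} applied with evaluation pair $(\pi^t,g^t)$ and roll-in $\pi^\tau$,
\[
|\langle W_h(\pi^t,g^t),\, X_h(\pi^\tau)\rangle| \;\le\; \zeta\!\Bigl(\max_{f\in\Fcal_h}\bigl|\EE[l_h(\tau_h,a_h,r_h,o_{h+1};f,\pi^t,g^t);\,a_{1:M(h)-1}\sim \pi^\tau,\, a_{M(h):h}\sim \pi^e(\pi^\tau)]\bigr|\Bigr).
\]
Because the data $\Dcal^\tau_h$ in the algorithm is generated precisely from the roll-in $\pi^\tau$ followed by $\pi^e(\pi^\tau)$, Assumption~\ref{assum:uniform_dis_general} gives that the population loss in the argument of $\zeta$ is within $\epsilon_{gen}$ of $\sigma^\tau_h(\pi^t,g^t)$. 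Since $(\pi^t,g^t)$ is a feasible solution at iteration $t-1$ and satisfies the constraint $\sigma^\tau_h(\pi^t,g^t) \le R = \epsilon_{gen}$ for every $\tau<t$, we conclude that the argument of $\zeta$ is at most $2\epsilon_{gen}$. Using that $\zeta$ is non-decreasing on $\RR^+$ gives $|\langle W_h(\pi^t,g^t), X_h(\pi^\tau)\rangle| \le \zeta(2\epsilon_{gen})$ for each $\tau<t$.

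Squaring and summing over $\tau=0,\ldots,t-1$ yields $\sum_{\tau=0}^{t-1}\langle W_h(\pi^t,g^t),X_h(\pi^\tau)\rangle^2 \le T\,\zeta(2\epsilon_{gen})^2$, and combining with the first term gives $\|W_h(\pi^t,g^t)\|^2_{\Sigma_{t,h}} \le \lambda B_W^2 + T\,\zeta(2\epsilon_{gen})^2$, which is the claim (up to the harmless factor of $2$ in front of $\lambda B_W^2$ inherited from the analogue \pref{lem:w_bound}, and a missing square on $\zeta(2\epsilon_{gen})$ in the displayed statement).

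The only conceptual subtlety—and thus the main thing to be careful about—is the interplay between condition (b) and the optimism constraint: one needs the roll-in distribution in condition (b) to exactly match the data collection distribution used to define $\sigma^\tau_h$, which is why the general framework is set up with the pair $(\pi',\pi^e(\pi'))$ rather than just $\pi'$. The rest of the argument is just ``replace $|\cdot|$ by $\zeta(\cdot)$'' relative to the proof of \pref{lem:w_bound}, and avoids the $(a+b)^2\le 2a^2+2b^2$ doubling because we can directly bound the population loss (rather than an empirical loss) via the triangle inequality under uniform convergence.
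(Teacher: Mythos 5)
Your proposal is correct and follows essentially the same route as the paper's proof: expand $\|W_h(\pi^t,g^t)\|^2_{\Sigma_{t,h}}$ into the $\lambda$-regularization term plus the sum of squared inner products, bound each inner product via condition (b) of the general PO-bilinear definition, then pass from the population loss to the empirical loss $\sigma^\tau_h(\pi^t,g^t)\le R=\epsilon_{gen}$ using uniform convergence and the monotonicity of $\zeta$, arriving at $t\,\zeta(2\epsilon_{gen})^2 \le T\,\zeta(2\epsilon_{gen})^2$. You also correctly flag the typo in the displayed statement (the missing square on $\zeta(2\epsilon_{gen})$), which is consistent with how the bound is actually used downstream in the proof of the theorem.
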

\begin{proof}
We have 
\begin{align*}
     \|W_h(\pi^t,g^t) \|^2_{\Sigma_{t,h}}  = \lambda \|W_h(\pi^t,g^t) \|^2_2 + \sum_{\tau =0}^{t-1} \langle W_h(\pi^{t},g^{t}), X_h(\pi^{\tau}) \rangle^2. 
 \end{align*}  
 The first term is upper-bounded by $\lambda B^2_W$. The second term is upper-bounded by 
\begin{align*}
&  \sum_{\tau =0}^{t-1} \langle W_h(\pi^{t},g^{t}), X_h(\pi^{\tau}) \rangle^2 \\ 
& \leq  \sum_{k =0}^{t-1} \zeta\prns{ \max_{f \in \Fcal_h} \left|  \EE[l_h( \tau_h,a_h,r_h,o_{h+1} ;f,\pi^t ,g^t) ; a_{1:M(h)-1} \sim \pi^{k}, a_{M(h):h} \sim \pi^e(\pi) ] \right| }^2 \\
&\leq   \sum_{k =0}^{t-1}  \zeta \prns{\max_{f \in \Fcal_h}\left | \EE_{\Dcal^{k}_h }[l_h( \bar z_h,a_h,r_h,o_{h+1} ;f, \pi^t ,g^t) ]\right| + \epsilon_{gen} }^2 \\
&\leq  t\zeta(2\epsilon_{gen})^2.  
\end{align*}
From the first line to the second line, we use (b) in \pref{def:bilinear_minimax}. From the second line to the third line, we use $\xi$ is a non-decreasing function. In the last line, we use the constraint on $(\pi^t,g^t)$. 

\end{proof}

Combining lemmas so far, we have 
\begin{align*}
       J(\pi^{\star}) - J(\hat \pi )  & \leq \frac{1}{T}\sum_{t=0}^{T-1}\sum_{h=1}^H  |\langle  W_h(\pi^t ,g^t ),X_h(\pi^t )\rangle  |+  2\epsilon_{ini} \\ 
       & \leq \frac{1}{T}\sum_{t=0}^{T-1}\sum_{h=1}^H  \|  W_h(\pi^t ,g^t )\|_{\Sigma_{t,h}} \|X_h(\pi^t )\|_{\Sigma^{-1}_{t,h}}+  2\epsilon_{ini} \tag{CS inequality} \\
       &\leq  H^{1/2} \bracks{2 \lambda B^2_W +  T\zeta^2(2\epsilon_{gen} ) }^{1/2} \prns{ \frac{dH}{T} \ln\left(1 + \frac{T B^2_X}{d\lambda} \right)}^{1/2}        +2\epsilon_{ini}.
\end{align*} 
We set $\lambda$ such that  $B^2_X/\lambda = B^2_W B^2_X / \zeta^2(\epsilon_{gen}) +1 $ and $T= \ceil*{ 2Hd \ln(4Hd (B^2_XB^2_W/\zeta^2(\tilde \epsilon_{gen}) +1))} $. Then, 
\begin{align*}
    \frac{Hd}{T}\ln \left(1 + \frac{TB^2_{X}}{d\lambda} \right) &\leq  \frac{Hd}{T}\ln \left(1 + \frac{T}{d}\prns{ \frac{B^2_W B^2_X }{ \zeta^2(\epsilon_{gen})} +1 }\right)
    \\   
    &\leq \frac{Hd}{T}\ln \left(1 + \frac{T}{d}\prns{ \frac{B^2_W B^2_X }{\zeta^2(\tilde \epsilon_{gen})} +1 }\right)\\
    &\leq \frac{Hd}{T}\ln \left(\frac{2T}{d}\prns{\frac{B^2_W B^2_X}{\zeta^2(\tilde \epsilon_{gen}) } +1 }\right)\leq 1
\end{align*}
since $a \ln (bT)/T \leq 1$ when $T = 2a \ln(2ab)$. 

Finally, the following holds
\begin{align*}
       &J(\pi^{\star})  - J(\pi^T ) \\
       &\leq  H^{1/2} \bracks{2 \lambda B^2_W + T \zeta^2(2\epsilon_{gen}) }^{1/2} +2\epsilon_{ini} \\ 
       & \leq H^{1/2} \bracks{2 \lambda B^2_W + 2 \zeta^2(2\epsilon_{gen})Hd \ln(4Hd (B^2_XB^2_W/\zeta^2(\tilde \epsilon_{gen}) +1))  }^{1/2} +2\epsilon_{ini}  \tag{Plug in $T$ }\\
        & \leq H^{1/2} \bracks{4 \zeta^2(\epsilon_{gen}) +  2 \zeta^2(2\epsilon_{gen})Hd \ln(4Hd (B^2_XB^2_W/\zeta^2(\tilde \epsilon_{gen}) +1))  }^{1/2} +2\epsilon_{ini} \tag{Plug in $\lambda$ }.
\end{align*}

\section{Sample Complexity in $M$-step Decodable POMDPs} \label{sec:m_step_decodable_sample}

We first give a summary of our results. Then, we show that an $M$-step decodable POMDP is a PO-bilinear rank model. After showing the uniform convergence of the loss function with fast rates, we calculate the sample complexity. Since we use squared loss functions, we need to modify the proof of \pref{thm:online}.

\subsection{PO-bilinear Rank Decomposition (Proof of  \pref{lem:decodable}) }\label{subsec:proof_po_bilinear}

In this section, we derive the PO-bilinear decomposition of $M$-step decodable POMDPs (\pref{lem:decodable} ). 

First, we define moment matching policies following \citep{efroni2022provable}. We denote $M(h)=h-M$. 

\begin{definition}[Moment Matching Policies]
For $h' \in [M(h),h]$, we define
\begin{align*}
    x_{h'} = (s_{M(h):{h'}}, o_{M(h):{h'}},a_{M(h):h'-1}) \in \Xcal_l
\end{align*}
where $\Xcal_l = S^{l} \times \Ocal^{l} \times \Acal^{l-1}$ and $l = h'-M(h)+1$. For an $M$-step policy $\pi$ and $h \in [H]$, we define the moment matching policy $\mu^{\pi,h}=\{\mu^{\pi,h}_{h'}:\Xcal_{h'-M(h)+1} \to \Delta(\Acal) \}^{h-1}_{h'=M(h)}$:
\begin{align*}
    \mu^{\pi,h}_{h'}(a_{h'} \mid x_{h'}) :=\EE[\pi_{h'}(a_{h'} \mid \bar z_{h'})  \mid x_{h'} ; \pi ]. 
\end{align*}
Note the expectation in the right hand side is taken under a policy $\pi$. 
\end{definition}

Using \citep[Lemma B.2]{efroni2022provable}, we have
\begin{align*}
 \mathrm{Br}(\pi,g; \pi') & = \EE[\{g_h(\bar z_h) - r_h-g_{h+1}(\bar z_{h+1})\} ;   a_{1:M(h)-1} \sim  \pi',a_{M(h):h}\sim \pi  ]  \\
&= \EE[\{g_h(\bar z_h) - r_h-g_{h+1}(\bar z_{h+1})\} ; a_{1:M(h)-1} \sim  \pi',a_{M(h):h-1}\sim \mu^{\pi,h},a_{h} \sim \pi  ]  \\
&= \EE[ \EE[\{g_h(\bar z_h) - r_h-g_{h+1}(\bar z_{h+1})\} \mid s_{M(h)}  ; a_{M(h):h-1}\sim \mu^{\pi,h},a_{h} \sim \pi  ] ; a_{1:M(h)-1} \sim  \pi' ]  \\
&= \left \langle  X_h(\pi'), W_h(\pi,g)  \right\rangle . 
\end{align*}
where
\begin{align*}
   W_h(\pi,g) &=  \int \EE[\{g_h(\bar z_h) - r_h-g_{h+1}(\bar z_{h+1})\} \mid s_{M(h)}  ; a_{M(h):h-1}\sim \mu^{\pi,h},a_{h} \sim \pi  ]\mu(s_{M(h)})\mathrm{d}(s_{M(h)}), \\
   X_h(\pi') &= \EE[\phi(s_{M(h)-1},a_{M(h)-1} )  ; a_{1:M(h)-1} \sim  \pi' ]. 
\end{align*}

Thus, the first condition in \pref{def:bilinear_minimax} (\pref{eq:first_condition}) is satisfied 

Next, we show the second condition in \pref{def:bilinear_minimax} (\pref{eq:second_condition}). This is proved as follows

{\small 
\begin{align}
     & \frac{0.5}{|\Acal|^{M}}  \left \langle  X_h(\pi'), W_h(\pi,g)  \right\rangle^2 \label{eq:decodable_bilinear_transform}\\
    & = \frac{0.5}{|\Acal|^{M}}\left( \EE\left[\left( g_h(\bar z_h) -   (\Bcal^{\pi}_h g_{h+1})( \bar z_h )  \right) ; a_{1:M(h)-1} \sim  \pi',a_{M(h):h-1}\sim \mu^{\pi,h} \right]\right)^2\nonumber \\ 
    & \leq \frac{0.5}{|\Acal|^{M}} \EE\left[\left( g_h(\bar z_h) -   (\Bcal^{\pi}_h g_{h+1})( \bar z_h )  \right)^2 ; a_{1:M(h)-1} \sim  \pi',a_{M(h):h-1}\sim \mu^{\pi,h} \right] \nonumber \tag{Jensen's inequality} \\ 
    & \leq \frac{1}{|\Acal|^{M}} \max_{f \in \Fcal_h} \EE\left[\left( g_h(\bar z_h) -   (\Bcal^{\pi}_h g_{h+1})( \bar z_h )  \right) f(\bar z_h) - 0.5 f(\bar z_h)^2 ; a_{1:M(h)-1} \sim  \pi',a_{M(h):h-1}\sim \mu^{\pi,h} \right] \nonumber\\
    & = \frac{1}{|\Acal|^{M}} \max_{f \in \Fcal_h} \EE\left[ |\Acal| \pi_h (a_h | \bar z_h) \left( g_h(\bar z_h) -   r_h - g_{h+1}(\bar z_{h+1}) \right) f(\bar z_h) - 0.5 f(\bar z_h)^2 ; a_{1:M(h)-1} \sim  \pi',a_{M(h):h-1}\sim \mu^{\pi,h}, a_h\sim \Ucal(\Acal) \right] \nonumber\\
    &\leq \max_{f \in \Fcal_h} \EE\left[ |\Acal| \pi_h (a_h | \bar z_h) \left( g_h(\bar z_h) -   r_h - g_{h+1}(\bar z_{h+1}) \right) f(\bar z_h) - 0.5 f(\bar z_h)^2 ; a_{1:M(h)-1} \sim  \pi',a_{M(h):h}\sim  \Ucal(\Acal) \right] \nonumber \\
    & = \max_{f \in \Fcal_h}  \EE\left[ l_h( \bar z_h, a_h, r_h o_{h+1}; f, \pi, g ) ; a_{1:M(h)-1} \sim  \pi',a_{M(h):h}\sim  \Ucal(\Acal) \right]. 
\end{align}
}
From the first line to the second line, we use \citep[Lemma B.2]{efroni2022provable}. From the third to the fourth line, we use the Bellman completeness assumption: $-(\Bcal^{\pi}_h\Gcal) + \Gcal_h\subset \Fcal_h$. From the fourth line to the fifth line, we use importance sampling.

Finally, we show the third condition in \pref{def:bilinear_minimax} \pref{eq:third_condition}:
\begin{align}
    \left\lvert  \max_{f \in \Fcal_h}\EE[l_h( \tau_h, a_h, r_h, o_{h+1}; f, \pi, g^{\pi});a_{1:M(h)-1}\sim \pi', a_{M(h):h} \sim \Ucal(\Acal) ] \right\rvert = 0.
\end{align}
This follows since 
\begin{align*}
    & \EE[l_h( \tau_h, a_h, r_h, o_{h+1}; f, \pi, g^{\pi});a_{1:M(h)-1}\sim \pi', a_{M(h):h} \sim \Ucal(\Acal) ] \\
    &= \EE\left[ |\Acal| \pi_h (a_h | \bar z_h) \left( g^{\pi}_h(\bar z_h) -   r_h - g^{\pi}_{h+1}(\bar z_{h+1}) \right) f(\bar z_h) - 0.5 f(\bar z_h)^2 ; a_{1:M(h)-1} \sim  \pi',a_{M(h):h}\sim  \Ucal(\Acal) \right]\\
&= \EE\left[ \EE[|\Acal| \pi_h (a_h | \bar z_h) \left( g^{\pi}_h(\bar z_h) -   r_h - g^{\pi}_{h+1}(\bar z_{h+1}) \right)\mid \bar z_h ]f(\bar z_h) - 0.5 f(\bar z_h)^2 ; a_{1:M(h)-1} \sim  \pi',a_{M(h):h}\sim  \Ucal(\Acal) \right]\\
&= \EE \left[  - 0.5 f(\bar z_h)^2 ; a_{1:M(h)-1} \sim  \pi',a_{M(h):h}\sim  \Ucal(\Acal) \right].
\end{align*}

\subsection{Uniform Convergence}

We define the operator
\begin{align*}
      (\Bcal^{\pi}_h g)(\bar z_h):=\EE[ r_h + g_{h+1}(\bar z_{h+1}) \mid \bar z_h; a_h \sim \pi  ].
\end{align*}
and 
\begin{align*}
      (\bar \Bcal^{\pi}_h g)(\bar z_h):= - (\Bcal^{\pi}_{h} g)(\bar z_h) + g_h. 
\end{align*}

\begin{lemma}[Uniform Convergence] \label{lem:decodable_key_lemma}
Let $|\Dcal| = m$. Suppose $\|\Fcal_h \|_{\infty} \leq 3H$ for $h\in [H]$. Fix $\pi' \in \Pi$. 
\begin{enumerate}
    \item Take a true link function $g^{\pi}\in \Gcal$. Then, it satisfies 
    { 
\begin{align*}
   & \max_{f_h \in \Fcal_h}|\EE_{\Dcal}[|\Acal|\pi_h(a_h \mid \bar z_h)\{g^{\pi}_h(\bar z_h) - r_h-g^{\pi}_{h+1}(\bar z_{h+1})\}f_h(\bar z_h) - 0.5 f_h(\bar z_h)^2 ;   a_{1:M(h)-1} \sim  \pi',a_{M(h):h}\sim U(\Acal) ] \\
    &\leq   c_1\frac{(H |\Acal|)^2\ln(|\Pi_{\max}|  |\Fcal_{\max}| |\Gcal_{\max}|  /\delta)}{m} . 
\end{align*}
}
    \item Suppose $g(\pi)$ satisfies 
{\small
\begin{align*}
    &\max_{f_h \in \Fcal_h}|\EE_{\Dcal}[|\Acal|\pi_h(a_h \mid \bar z_h)\{g_{h}(\pi)(\bar z_h) - r_h-g_{h+1}(\pi)(\bar z_{h+1})\}f_h(\bar z_h) - 0.5 f_h(\bar z_h)^2 ;   a_{1:M(h)-1} \sim  \pi',a_{M(h):h}\sim U(\Acal) ]| \\
    &\leq \Lambda, 
\end{align*}
}
and the Bellman completeness $\bar \Bcal^{\pi}_h  \Gcal \subset \Fcal_h (\forall \pi \in \Pi)$ holds. Then, with probability $1-\delta$, we have
\begin{align*}
  & \EE[   (\bar \Bcal^{\pi}_hg(\pi))^2(\bar z_h);   a_{1:M(h)-1} \sim  \pi',a_{M(h):h-1}\sim U(\Acal) ] \\
  &\leq \Lambda +  c_2\frac{(H |\Acal|)^2\ln(|\Pi_{\max}|  |\Fcal_{\max}| |\Gcal_{\max}|  /\delta)}{m} . 
\end{align*}
\end{enumerate}
\end{lemma}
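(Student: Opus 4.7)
The statement is a two-part uniform-convergence bound for the minimax loss $l_h$ defined in \eqref{eq:loss_m_step}. The overall strategy is to exploit the strong concavity of the loss in $f$ (the quadratic penalty $-0.5 f(\bar z_h)^2$) so as to get the fast $O(1/m)$ rate advertised in the bound, following the pattern used for minimax estimation with squared-loss regularization as in \citep{uehara2021finite,DikkalaNishanth2020MEoC,chen2019information}. The key population identity I will use is
\begin{align*}
\EE\bracks{|\Acal|\pi_h(a_h\mid \bar z_h)\{g_h(\bar z_h)-r_h-g_{h+1}(\bar z_{h+1})\}\mid \bar z_h ;\,a_h\sim \Ucal(\Acal)} = (\bar\Bcal^\pi_h g)(\bar z_h),
\end{align*}
so that, for fixed $(\pi,g)$, the population value of the loss at discriminator $f$ equals $\EE[(\bar\Bcal^\pi_h g)(\bar z_h)\,f(\bar z_h) - 0.5 f(\bar z_h)^2]$, whose unconstrained maximizer is $f = \bar\Bcal^\pi_h g$ with optimal value $0.5\,\EE[(\bar\Bcal^\pi_h g)^2]$.

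For part 1, I would take $g = g^\pi$, so $\bar\Bcal^\pi_h g^\pi \equiv 0$ and the population loss at \emph{every} $f$ equals $-0.5\,\EE[f(\bar z_h)^2]\le 0$. Hence I only need to control the one-sided empirical deviation $\EE_{\Dcal}[l_h(\cdot;f,\pi,g^\pi)] - \EE[l_h(\cdot;f,\pi,g^\pi)]$ uniformly in $f\in \Fcal_h$, for each fixed $(\pi,g^\pi)$, and then union-bound over $\Pi\times\Gcal\times\Fcal$. Since $\|l_h\|_\infty = O(|\Acal|H^2)$ and the variance of the per-sample loss is bounded by $O((|\Acal|H)^2)\cdot\EE[f^2]$ (the $f^2$-term is deterministic given $\bar z_h$ and the linear-in-$f$ term has conditional variance driven by $\EE[f^2]$), a Bernstein inequality gives, for each $f$,
\begin{align*}
(\EE_{\Dcal}-\EE)[l_h(\cdot;f,\pi,g^\pi)] \lesssim \sqrt{\frac{(|\Acal|H)^2\,\EE[f^2]\,\iota}{m}} + \frac{|\Acal|H^2\iota}{m},\qquad \iota=\ln(|\Pi_{\max}||\Fcal_{\max}||\Gcal_{\max}|/\delta).
\end{align*}
Adding this to the population value $-0.5\EE[f^2]$ and applying AM-GM ($\sqrt{ab}\le \tfrac14 a + b$) absorbs $\EE[f^2]$ into the quadratic penalty, leaving a pure $O((|\Acal|H)^2\iota/m)$ remainder, which is precisely the desired $c_1 (|\Acal|H)^2\iota/m$ bound.

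For part 2, I would use the Bellman completeness assumption $\bar\Bcal^\pi_h\Gcal\subset \Fcal_h$, which guarantees that the population maximizer $f^\star := \bar\Bcal^\pi_h g(\pi)$ is a legitimate element of $\Fcal_h$. Instantiating the empirical loss at $f^\star$, the assumption of the lemma gives $\EE_{\Dcal}[l_h(\cdot;f^\star,\pi,g(\pi))]\le \Lambda$, while the population value at $f^\star$ equals $0.5\,\EE[(\bar\Bcal^\pi_h g(\pi))^2]$. Transferring from empirical to population then requires exactly the same Bernstein-type deviation bound as in part 1 (applied now to the fixed triple $(\pi,g(\pi),f^\star)$, with the union bound still absorbing the choice of $(\pi,g)$ over $\Pi\times\Gcal$, and with the deviation controlled by $\EE[(f^\star)^2]=\EE[(\bar\Bcal^\pi_h g(\pi))^2]$). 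Another application of AM-GM converts $\sqrt{(|\Acal|H)^2\,\EE[(f^\star)^2]\iota/m}$ into $\tfrac14 \EE[(f^\star)^2] + O((|\Acal|H)^2\iota/m)$, which can be absorbed into the left-hand side, yielding the advertised $\Lambda + c_2(|\Acal|H)^2\iota/m$ bound.

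The main technical obstacle I anticipate is the variance/localization bookkeeping needed to obtain the fast rate rather than the slow $1/\sqrt{m}$ rate: I must be careful to ensure the Bernstein variance proxy for $l_h(\cdot;f,\pi,g)$ scales like $\EE[f^2]$ (and, in part 2, like $\EE[(\bar\Bcal^\pi_h g)^2]$) up to constants depending only on $|\Acal|$ and $H$, and that the union bound over $\Fcal_h$ is compatible with this $f$-dependent variance. This is typically handled by either applying Bernstein to each $f$ in a finite class and paying $\ln|\Fcal_{\max}|$, or by a peeling/localization argument — I will use the former since $\Fcal$ is assumed discrete in the statement — and then closing the loop via AM-GM as outlined.
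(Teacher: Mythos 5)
Your proposal is correct and follows essentially the same route as the paper's proof: both exploit the strong concavity of the loss in $f$ (equivalently, the self-bounding variance of the square-loss excess risk around the conditional mean $\bar \Bcal^{\pi}_h g$) together with Bernstein's inequality and a union bound over $\Pi\times\Gcal\times\Fcal$ to obtain the fast $1/m$ rate, using $\bar \Bcal^{\pi}_h g^{\pi}\equiv 0$ for part 1 and Bellman completeness to instantiate $f=\bar \Bcal^{\pi}_h g(\pi)$ for part 2. The only cosmetic difference is that the paper phrases the argument through the least-squares excess risk $0.5(\alpha-f)^2-0.5(\alpha-\bar \Bcal^{\pi}_h g)^2$ of the empirical maximizer, whereas you apply Bernstein directly to the loss and absorb the deviation into the $-0.5f^2$ penalty by AM--GM; your direct substitution of $f^{\star}=\bar \Bcal^{\pi}_h g(\pi)$ into the empirical constraint in part 2 is, if anything, slightly cleaner than the paper's detour through the empirical maximizer $\hat f$.
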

\begin{proof}
To simplify the notation, we define
$$\alpha_h(\bar z_h,a_h,r_h,o_{h+1};g) =\pi_h(a_h \mid \bar z_h)|\Acal| \{g_h(\bar z_h) - r_h-g_{h+1}(\bar z_{h+1})\}.$$ 
Given $g \in \Gcal$, we define $\hat f_h(\cdot ; g)$ as the maximizer: 
\begin{align*}
    \argmax_{f_h \in \Fcal_h}|\EE_{\Dcal}[|\Acal|\pi_h(a_h \mid \bar z_h)\{g_h(\bar z_h) - r_h-g_{h+1}(\bar z_{h+1})\}f_h(\bar z_h) - 0.5 f_h(\bar z_h)^2 ;   a_{1:M(h)-1} \sim  \pi',a_{M(h):h}\sim U(\Acal) ]|. 
\end{align*}
In this proof, the expectation is always taken for the data generating process $\Dcal$. 
We first observe
\begin{align*}
    &\EE_{\Dcal}[\alpha_h(\bar z_h,a_h,r_h,o_{h+1};g)  f_h(\bar z_h) - 0.5 f_h(\bar z_h)^2  ]\\
    &=0.5\EE_{\Dcal}[ \alpha_h(\bar z_h,a_h,r_h,o_{h+1};g)^2- \{\alpha_h(\bar z_h,a_h,r_h,o_{h+1};g) -  f_h(\bar z_h)\} ^2 ]. 
\end{align*}
Then, we define 
\begin{align*}
    \mathrm{Er}_h(f,g):= 0.5\{ \alpha_h(\bar z_h,a_h,r_h,o_{h+1};g) -  f_h(\bar z_h)\}^2-0.5 \{ \alpha_h(\bar z_h,a_h,r_h,o_{h+1};g) -   (\bar \Bcal^{\pi}_h g)(\bar z_h)  \}^2 ]. 
\end{align*}

As the first step, we prove with probability $1-\delta$
\begin{align}\label{eq:first_step}
  \forall g;   |\EE_{\Dcal}[ \mathrm{Er}_h( \hat f_h(\cdot;g),g )]| \leq \frac{12H|\Acal| \ln(2|\Fcal_h|||\Gcal_h||\Gcal_{h+1}| /\delta)}{m}.
\end{align}

We first fix $g$. Then, from the definition of $\hat f_h(\cdot ; g)$ and the Bellman completeness $\bar \Bcal^{\pi}_h\Gcal \subset \Fcal_h$, we have 
\begin{align}\label{eq:help1}
    \EE_{\Dcal}[ \mathrm{Er}_h(\hat f_h(\cdot; g) , g)] \leq 0. 
\end{align}
Here, we invoke Bernstein's inequality: 
\begin{align}\label{eq:help2}
    \forall f\in \Fcal_h; |(\EE-\EE_{\Dcal})\mathrm{Er}_h(f , g))| \leq \sqrt{ \EE[ \mathrm{Er}_h(f , g)] \frac{\ln(2|\Fcal_h|/\delta)}{m} }+\frac{ (6H|\Acal|)^2 \ln(2|\Fcal_h| /\delta)}{m}.
\end{align}
Hereafter, we condition on the above event. Then, combining \eqref{eq:help1} and  \eqref{eq:help2}, we have  
\begin{align*}
   \EE[ \mathrm{Er}_h(\hat f_h(\cdot;g) , g )] &\leq    \EE_{\Dcal}[ \mathrm{Er}_h(\hat f_h(\cdot; g), g )]  + |(\EE-\EE_{\Dcal})\mathrm{Er}_h(\hat f_h(\cdot;g),g)|   \\
   &\leq \sqrt{ \frac{\EE[ \mathrm{Er}^2_h(\hat f_h(\cdot ; g), g)] \ln(2|\Fcal_h|/\delta) (6H |\Acal|)^2}{m}}+\frac{(6H|\Acal|)^2 \ln(2|\Fcal_h| /\delta)}{m}.
\end{align*}
Here, we use 
\begin{align*}
      \EE[ \mathrm{Er}_h(\hat f_h(\cdot;g) , g )] &= 0.5 \EE[ \{f_h(\bar z_h) - (\bar \Bcal^{\pi}_h g)(\bar z_h)\}^2 ],  \\ 
      \EE[ \mathrm{Er}_h(\hat f_h(\cdot ; g), g )^2] &\leq  \EE[ \{f_h(\bar z_h) - (\bar \Bcal^{\pi}_h g)(\bar z_h)\}^2 ] (6H |\Acal|)^2= \EE[ \mathrm{Er}_h(\hat f_h(\cdot;g))] (6H |\Acal|)^2. 
\end{align*}
Thus, by some algebra, 
\begin{align*}
    \EE[ \mathrm{Er}_h(\hat f_h(\cdot;g), g)]\leq \frac{(12H|\Acal|)^2 \ln(2|\Fcal_h| /\delta)}{m}.
\end{align*}
Besides, 
\begin{align*}
    &|\EE_{\Dcal}[ \mathrm{Er}_h( \hat f_h(\cdot;g) , g )]| \\
    &\leq  \EE[ \mathrm{Er}_h( \hat f_h(\cdot;g))]+ |(\EE-\EE_{\Dcal})[\mathrm{Er}_h ( \hat f_h(\cdot;g), g )]|  \\ 
        &\leq  \EE[ \mathrm{Er}_h( \hat f_h(\cdot;g), g )]+ \sqrt{ \frac{\EE[ \mathrm{Er}_h(\hat f_h(\cdot ; g), g )] (6H|\Acal|)^2\ln(2|\Fcal_h|/\delta)}{m}}+\frac{(6H|\Acal|)^2  \ln(2|\Fcal_h| /\delta)}{m}  \\ 
    &\leq \frac{3(12H|\Acal|)^2 \ln(2|\Fcal_h| /\delta)}{m}+ \frac{27H|\Acal| \ln(2|\Fcal_h| /\delta)}{m}.
\end{align*}
Lastly, by union bounds over $\Gcal_h,\Gcal_{h+1}$, the statement \pref{eq:first_step} is proved. Note $\bar \Bcal^{\pi}_h g^{\pi}=0$. 

\paragraph{First Statement.}

\begin{align*}
    &|\EE_{\Dcal}[\alpha_h(\bar z_h,a_h,r_h,o_{h+1};g^{\pi} )\hat f_h(\bar z_h;g^{\pi})-0.5 \hat f_h(\bar z_h;g^{\pi})^2 ]| \\
    &=|0.5\EE_{\Dcal}[\alpha_h(\bar z_h,a_h,r_h,o_{h+1};g^{\pi} )]-0.5\EE_{\Dcal}[ \{\alpha_h(\bar z_h,a_h,r_h,o_{h+1};g^{\pi} )-f_h(\bar z_h) \}^2] |\\
    &\leq c \frac{H |\Acal| \ln(|\Fcal_h | \Gcal_h | |\Gcal_{h+1}/\delta)}{m}. 
\end{align*}
From the second line to the third line, we use \pref{eq:first_step}.

\paragraph{Second Statement.}

Now, we use the assumption on $g(\pi)$: 
\begin{align*}
    \EE_{\Dcal}[\alpha_h(\bar z_h,a_h,r_h,o_{h+1};g(\pi))\hat f_h(\bar z_h;g(\pi))-0.5 \hat f_h(\bar z_h;g(\pi))^2 ] \leq \Lambda. 
\end{align*}
From what we showed in \pref{eq:first_step}, this implies
\begin{align*}
    \EE_{\Dcal}[\alpha_h(\bar z_h,a_h,r_h,o_{h+1};g(\pi)) (\bar \Bcal^{\pi}_h g(\pi))(\bar z_h)-0.5 (\bar \Bcal^{\pi}_h g(\pi))^2(\bar z_h) ] \leq \Lambda + \frac{ 3(12H|\Acal|)^2\ln(|\Fcal_h||\Gcal_h||\Gcal_{h+1}| /\delta)}{m}. 
\end{align*}
Recall  we want to upper-bound the error for $\EE[0.5 (\bar \Bcal^{\pi}_h g(\pi))^2(\bar z_h)]. $ Here, we use the following observation later: 
\begin{align*}
 \EE[\alpha_h(\bar z_h,a_h,r_h,o_{h+1};g(\pi)) (\bar \Bcal^{\pi}_h g(\pi))(\bar z_h)-0.5 (\bar \Bcal^{\pi}_h g(\pi))^2(\bar z_h)]= \EE[0.5 (\bar \Bcal^{\pi}_h g(\pi))^2(\bar z_h)]. 
\end{align*}
We use Bernstein's inequality: with probability $1-\delta$, for any $g \in \Gcal$, 
\begin{align*}
   & |(\EE-\EE_{\Dcal})[\alpha_h(\bar z_h,a_h,r_h,o_{h+1};g) (\Bcal^{\pi}_h g)(\bar z_h)-0.5 (\Bcal^{\pi}_h g)^2(\bar z_h) ]| \\
    &\leq \sqrt{ \frac{\EE[(3|\Acal| H)^2(\Bcal^{\pi}_h g)^2(\bar z_h) ]\ln(2|\Gcal_h| |\Gcal_{h+1}|/\delta)  }{m} } + \frac{  (3|\Acal| H)^2 \ln(|\Gcal_h||\Gcal_{h+1}|/\delta) }{m}. 
\end{align*}
Here, we use 
\begin{align*}
    &\EE[\{\alpha_h(\bar z_h,a_h,r_h,o_{h+1};g) (\Bcal^{\pi}_h g)(\bar z_h)-0.5 (\Bcal^{\pi}_h g)^2(\bar z_h)\}^2 ] \\
    &\leq \EE[\{\alpha_h(\bar z_h,a_h,r_h,o_{h+1};g) (\Bcal^{\pi}_h g)(\bar z_h)-0.5 (\Bcal^{\pi}_h g)^2(\bar z_h)\} ](6|\Acal|H)^2. 
\end{align*}
Hereafter, we condition on the above event.

Finally, we have 
{ 
\begin{align*}
    &\EE[0.5 (\bar \Bcal^{\pi}_h g(\pi))^2(\bar z_h) ]\\
     &\leq  \EE_{\Dcal}[\alpha_h(\bar z_h,a_h,r_h,o_{h+1};g(\pi)) (\bar \Bcal^{\pi}_h g(\pi))(\bar z_h)-0.5 (\bar \Bcal^{\pi}_h g(\pi))^2(\bar z_h) ]+ \\ &+|(\EE-\EE_{\Dcal})[\alpha_h(\bar z_h,a_h,r_h,o_{h+1};g(\pi)) (\bar \Bcal^{\pi}_h g(\pi))(\bar z_h)-0.5 (\bar \Bcal^{\pi}_h g(\pi))^2(\bar z_h) ]|\\
    &\leq \Lambda + \frac{3(12 H |\Acal|)^2 \ln(4|\Fcal_h| |\Gcal_h| |\Gcal_{h+1}| /\delta)}{m}  \\
    &+ |(\EE-\EE_{\Dcal})[\alpha_h(\bar z_h,a_h,r_h,o_{h+1};g(\pi)) (\Bcal^{\pi}_h g(\pi))(\bar z_h)-0.5 (\Bcal^{\pi}_h g(\pi))^2(\bar z_h) ]| \\ 
       &\leq \Lambda + \frac{3(12 H |\Acal|)^2\ln(4|\Fcal_h| |\Gcal_h| |\Gcal_{h+1}| /\delta)}{m}  + \sqrt{\frac{\EE[0.5 (\Bcal^{\pi}_h g(\pi))^2(\bar z_h)  ]  \ln(4|\Gcal_h| |\Gcal_{h+1}|/\delta) }{m}} +\frac{  \ln(|\Gcal_h||\Gcal_{h+1}| /\delta) }{m}. 
\end{align*}
}
Hence, 
\begin{align*}
   \forall \pi \in \Pi, \forall g(\pi); \EE[0.5 (\Bcal^{\pi}_h g(\pi))^2(\bar z_h) ]\leq  \Lambda + c\frac{(H |\Acal|)^2\ln(|\Fcal_h| |\Gcal_h| |\Gcal_{h+1}| /\delta)}{m} . 
\end{align*}

\end{proof}

\subsection{Proof of Main Statement}

We define 
\begin{align*}
    |\Fcal_{\max}|= \max_{h \in [H]}|\Fcal_h|,\,    |\Pi_{\max}|= \max_{h \in [H]}|\Pi_h|,   |\Gcal_{\max}|= \max_{h \in [H]}|\Gcal_h|. 
\end{align*}

Let 
\begin{align*}
    \epsilon^2_{gen} &=  c_1  \frac{(H |\Acal|)^2\ln(|\Pi_{\max}|  |\Fcal_{\max}| |\Gcal_{\max}|  T(H+1) /\delta)}{m},\\
     \tilde \epsilon^2_{gen} &=  c_1 \frac{(H |\Acal|)^2\ln(|\Pi_{\max}|  |\Fcal_{\max}| |\Gcal_{\max}|  (H+1)/\delta)}{m},\\
    \epsilon_{ini} &=  c_3 \sqrt{\frac{(H |\Acal|)^2\ln( |\Gcal_1|  T(H+1) /\delta)}{m}}, \\
    T &=2Hd \ln\left(4Hd \left (\frac{B^2_XB^2_W}{\tilde \epsilon_{gen} } +1 \right) \right),\quad R= \epsilon^2_{gen}. 
\end{align*}
Then, from  the first statement in \pref{lem:decodable_key_lemma}, with probability $1-\delta$, $\forall t\in [T], \forall h \in [H],\forall \pi \in \Pi$
  { \small 
\begin{align}\label{eq:conditional_decodable}
   & \max_{f_h \in \Fcal_h} |\EE_{\Dcal^t_h}[|\Acal|\pi_h(a_h \mid \bar z_h)\{g^{\pi}_h(\bar z_h) - r_h-g^{\pi}_{h+1}(\bar z_{h+1})\}f_h(\bar z_h) - 0.5 f_h(\bar z_h)^2 ;   a_{1:M(h)-1} \sim  \pi^t,a_{M(h):h}\sim U(\Acal) ] | \\
    &\leq   c_1\frac{(H |\Acal|)^2\ln(|\Pi_{\max}|  |\Fcal_{\max}| |\Gcal_{\max}|  T(H+1)/\delta)}{m} .  \nonumber 
\end{align}
}
Besides, from the second statement in \pref{lem:decodable_key_lemma}, for $\pi \in \Pi , \forall t \in [T], \forall h \in [H]$, when $g(\pi)$ satisfies 
{ \small
\begin{align*}
    &\max_{f_h \in \Fcal_h} |\EE_{\Dcal^t_h}[|\Acal|\pi_h(a_h \mid \bar z_h)\{g_{h}(\pi)(\bar z_h) - r_h-g_{h+1}(\pi)(\bar z_{h+1})\}f_h(\bar z_h) - 0.5 f_h(\bar z_h)^2 ;   a_{1:M(h)-1} \sim  \pi^t,a_{M(h):h}\sim U(\Acal) ]| \\ 
    & \leq c_1\frac{(H |\Acal|)^2\ln(|\Pi_{\max}|  |\Fcal_{\max}| |\Gcal_{\max}|  T(H+1)/\delta)}{m}, 
\end{align*}
}
we have  
 { \small
\begin{align}\label{eq:conditional_decodable2}
 \EE[   (\bar \Bcal^{\pi}_h g(\pi))^2(\bar z_h);   a_{1:M(h)-1} \sim  \pi^t,a_{M(h):h-1}\sim U(\Acal) ]\leq (c_1 +  c_2)\frac{(H |\Acal|)^2\ln(|\Pi_{\max}|  |\Fcal_{\max}| |\Gcal_{\max}| TH /\delta)}{m} . 
\end{align}
}

We first show the optimism. Recall $    \pi^{\star} = \argmax_{\pi \in \Pi}J(\pi). $
\begin{lemma}[Optimism]
Set $R=\epsilon^2_{gen}$. For all $t \in [T]$,  $(\pi^{\star},g^{\pi^{\star}})$ is a feasible solution of the constrained program. Furthermore, we have $ J(\pi^{\star}) \leq \EE[g^t_1(o_1)]+ 2\epsilon_{ini}$ for any $t \in [T]$. 
\end{lemma}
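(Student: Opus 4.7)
The plan is to follow the same optimism template used for \pref{thm:online}, but replace the single-function Bellman-error argument with its discriminator-based analogue. There are two claims to establish: (i) the pair $(\pi^\star, g^{\pi^\star})$ satisfies every Bellman-error constraint $\sigma^i_h(\pi,g) \leq R$ for $i \leq t, h \in [H]$; and (ii) the constrained-optimistic choice of $g^t_1$ in Algorithm \ref{alg:OACP_minimax} then dominates $g^{\pi^\star}_1$ on $\Dcal^0$, which transfers to population expectation up to $2\epsilon_{ini}$ via uniform convergence.

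For (i), I would directly invoke the first statement of \pref{lem:decodable_key_lemma} along the roll-in policies $\pi^0,\ldots,\pi^{t-1}$ actually used by the algorithm. Realizability (Assumption \ref{assum:realizability}) gives $g^{\pi^\star} \in \Gcal$, and Bellman completeness $\bar\Bcal^\pi_h \Gcal \subset \Fcal_h$ is in force, so \pref{eq:conditional_decodable} bounds $\sigma^i_h(\pi^\star, g^{\pi^\star})$ by $c_1 (H|\Acal|)^2 \ln(|\Pi_{\max}||\Fcal_{\max}||\Gcal_{\max}| T(H+1)/\delta)/m$, which equals $\epsilon^2_{gen} = R$ by our parameter choice. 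The union bound over $(i,h)$ is already baked into the failure probability $\delta/(T(H+1))$ inside $\epsilon_{gen}$, so feasibility holds on the good event. Hence $(\pi^\star, g^{\pi^\star})$ is an admissible candidate in every round $t$.

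For (ii), feasibility together with the constrained maximization yields $\EE_{\Dcal^0}[g^t_1(o_1)] \geq \EE_{\Dcal^0}[g^{\pi^\star}_1(o_1)]$. The uniform-convergence bound on the initial-step sample (\pref{assum:uniform_dis_general}, second clause) gives $|\EE_{\Dcal^0}[g_1(o_1)] - \EE[g_1(o_1)]| \leq \epsilon_{ini}$ for all $g_1 \in \Gcal_1$, so
\begin{align*}
\EE[g^t_1(o_1)] \;\geq\; \EE_{\Dcal^0}[g^t_1(o_1)] - \epsilon_{ini} \;\geq\; \EE_{\Dcal^0}[g^{\pi^\star}_1(o_1)] - \epsilon_{ini} \;\geq\; \EE[g^{\pi^\star}_1(o_1)] - 2\epsilon_{ini}.
\end{align*}
Since $z_0$ is empty, the link-function identity at $h=1$ gives $\EE_{o_1 \sim \OO(\cdot|s_1)}[g^{\pi^\star}_1(o_1)] = V^{\pi^\star}_1(s_1)$, and taking expectation over the initial state distribution yields $\EE[g^{\pi^\star}_1(o_1)] = J(\pi^\star)$. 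Rearranging produces $J(\pi^\star) \leq \EE[g^t_1(o_1)] + 2\epsilon_{ini}$, as claimed.

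The only slightly delicate point is matching the constraint appearing in the algorithm (a max over $f \in \Fcal_h$ of an empirical mean) with the fast-rate squared-loss bound in \pref{lem:decodable_key_lemma}, which is already phrased in precisely the right form. No other obstacle arises: the argument is structurally identical to the optimism step of \pref{thm:online}, with the discriminator-enhanced uniform-convergence result doing the work that absolute-value losses did before.
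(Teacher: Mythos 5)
Your proposal is correct and follows essentially the same route as the paper: feasibility of $(\pi^\star, g^{\pi^\star})$ via the first statement of \pref{lem:decodable_key_lemma} (i.e., \pref{eq:conditional_decodable}, with the union bound over $t,h,\pi$ absorbed into $\epsilon_{gen}$), followed by the standard optimism chain $J(\pi^\star)=\EE[g^{\pi^\star}_1(o_1)]\leq \EE_{\Dcal^0}[g^{\pi^\star}_1]+\epsilon_{ini}\leq \EE_{\Dcal^0}[g^t_1]+\epsilon_{ini}\leq \EE[g^t_1(o_1)]+2\epsilon_{ini}$. No gaps.
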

\begin{proof}
For any $\pi \in \Pi$, letting $g^{\pi}\in \Gcal$ be a corresponding value link function, we have 
\begin{align*}
   \max_{f\in \Fcal_h} |\EE_{\Dcal^t_h}[l_h(\bar z_h,a_h,r_h,o_{h+1};f,\pi,g^{\pi}) ]|\leq \epsilon^2_{gen}. 
\end{align*}
using \eqref{eq:conditional_decodable}.  This implies $$\forall t\in[T],\forall h \in [H],  \max_{f\in \Fcal_h} |\EE_{\Dcal^t_h}[l_h(\bar z_h,a_h,r_h,o_{h+1};f,\pi^{\star},g^{\pi^{\star}}) ]|\leq \epsilon^2_{gen}.$$  
Hence, $(\pi^{\star},g^{\pi^{\star}})$ is a feasible set for any $t \in [T]$. Then, we have 
\begin{align*}
    J(\pi^{\star}) &= \EE[g^{\pi^{\star}}_1(o_1) ]    \leq \EE_{\Dcal^t_1}[ g^{\pi^{\star}}_1(o_1) ]+\epsilon_{ini} \\ 
      &\leq \EE_{\Dcal^t_1}[g^t_1(o_1)]+\epsilon_{ini} \leq \EE[g^t_1(o_1)]+2\epsilon_{ini} . 
\end{align*}
\end{proof}

Next, recall the following two statements. The following statements are proved as before in the proof of \pref{thm:online}.  
\begin{itemize}
    \item For any $t\in [T]$, 
    \begin{align*}
       J(\pi^{\star}) - J(\pi^t )  \leq \sum_{h=1}^H  |\langle  W_h(\pi^t ,g^t ),X_h(\pi^t )\rangle  |+  2\epsilon_{ini}. 
       \end{align*}
    \item Let $\Sigma_{t,h} = \lambda I + \sum_{\tau=0}^{t-1} X_h(\pi^{\tau})X_h(\pi^{\tau})^{\top}$. We have 
\begin{align*}
    \frac{1}{T}\sum_{t=0}^{T-1}\sum_{h=1}^H \|X_h(\pi^t)\|^2_{\Sigma^{-1}_{t,h}} \leq H\sqrt{\frac{d}{T}\ln \prns{1 + \frac{T B^2_X}{d\lambda}}}. 
\end{align*}

\end{itemize}

\begin{lemma}
\begin{align*}
    \|W_h(\pi^t,g^t) \|^2_{\Sigma_{t,h}}\leq 2 \lambda B^2_W + T |\Acal|^M \epsilon^2_{gen}.
\end{align*}
\end{lemma}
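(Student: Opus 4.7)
The plan is to mirror the structure of the analogous bound \pref{lem:w_bound} from the proof of \pref{thm:online}, but to replace the crude $(a+b)^2 \leq 2a^2 + 2b^2$ step with the fast-rate route provided by \pref{lem:decodable_key_lemma}. Expanding by definition,
\[
\|W_h(\pi^t,g^t)\|^2_{\Sigma_{t,h}} = \lambda \|W_h(\pi^t,g^t)\|^2 + \sum_{\tau=0}^{t-1} \langle W_h(\pi^t,g^t), X_h(\pi^\tau) \rangle^2,
\]
and the first term is immediately bounded by $\lambda B_W^2 \leq 2\lambda B_W^2$ using the norm assumption.

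The main work is to bound each summand $\langle W_h(\pi^t,g^t), X_h(\pi^\tau)\rangle^2$ by $|\Acal|^M \epsilon_{gen}^2$. To do this, I would go back to the exact identity established in the proof of \pref{lem:decodable} (via the moment-matching policy $\mu^{\pi^t,h}$):
\[
\langle W_h(\pi^t,g^t), X_h(\pi^\tau)\rangle = \EE\bigl[(\bar\Bcal^{\pi^t}_h g^t)(\bar z_h);\ a_{1:M(h)-1}\sim \pi^\tau,\ a_{M(h):h-1}\sim \mu^{\pi^t,h}\bigr].
\]
Squaring, applying Jensen's inequality, and then performing a change of measure from $\mu^{\pi^t,h}$ to $\Ucal(\Acal)$ (whose density ratio is at most $|\Acal|^M$ since uniform has density $|\Acal|^{-M}$ on length-$M$ action sequences), I obtain
\[
\langle W_h(\pi^t,g^t), X_h(\pi^\tau)\rangle^2 \leq |\Acal|^M\, \EE\bigl[(\bar\Bcal^{\pi^t}_h g^t)^2(\bar z_h);\ a_{1:M(h)-1}\sim \pi^\tau,\ a_{M(h):h-1}\sim \Ucal(\Acal)\bigr].
\]

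Now I would invoke the second statement of \pref{lem:decodable_key_lemma}. Since $(\pi^t, g^t)$ is a feasible solution of the constrained optimization in \oursecondalg{} at iteration $t$, it satisfies $\max_{f\in\Fcal_h}|\EE_{\Dcal^\tau_h}[l_h(\cdot; f, \pi^t, g^t)]|\leq R = \epsilon_{gen}^2$ for every $\tau\leq t-1$ and every $h$. Applying \pref{lem:decodable_key_lemma} with $\Lambda = \epsilon_{gen}^2$, and working on the high-probability event already conditioned on via \pref{eq:conditional_decodable2}, gives that the population squared Bellman error on the right-hand side above is bounded by $\epsilon_{gen}^2$ (after absorbing the additive $(c_1+c_2)(H|\Acal|)^2 \ln(\cdots)/m$ term into the definition of $\epsilon_{gen}^2$). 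Summing this bound over $\tau = 0, \ldots, t-1 \leq T-1$ produces $\sum_\tau \langle W_h(\pi^t,g^t), X_h(\pi^\tau)\rangle^2 \leq T |\Acal|^M \epsilon_{gen}^2$, which, combined with the $\lambda B_W^2$ term, yields the claim.

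The only genuinely non-cosmetic step is the change-of-measure factor $|\Acal|^M$; this is what replaces the $|\Acal|$ factor that appeared in the single-step importance sampling of \pref{thm:online} and is the reason the final sample complexity in \pref{thm:sample_m_step} scales as $|\Acal|^{2+M}$. The rest is a direct reuse of the identity in the proof of \pref{lem:decodable} together with the squared-loss uniform convergence in \pref{lem:decodable_key_lemma}; no new probabilistic argument is required beyond the events already established.
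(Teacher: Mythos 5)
Your proof is correct and takes essentially the same route as the paper's: the same expansion of $\|W_h(\pi^t,g^t)\|^2_{\Sigma_{t,h}}$, the same moment-matching identity followed by Jensen's inequality and the $|\Acal|^M$ change of measure from $\mu^{\pi^t,h}$ to $\Ucal(\Acal)$ (this is exactly the paper's \pref{eq:decodable_bilinear_transform}), and the same appeal to the second statement of \pref{lem:decodable_key_lemma} via the feasibility of $(\pi^t,g^t)$ (the event in \pref{eq:conditional_decodable2}). There is no substantive difference from the paper's argument.
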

\begin{proof}
We have 
\begin{align*}
     \|W_h(\pi^t,g^t) \|^2_{\Sigma_{t,h}}  = \lambda \|W_h(\pi^t,g^t) \|^2_2 + \sum_{\tau =0}^{t-1} \langle W_h(\pi^{t},g^{t}), X_h(\pi^{\tau}) \rangle^2. 
 \end{align*}  
 The first term is upper-bounded by $\lambda B^2_W$. The second term is upper-bounded by 
\begin{align*}
&  \sum_{\tau =0}^{t-1} \langle W_h(\pi^{t},g^{t}), X_h(\pi^{\tau}) \rangle^2 \\ 
& \leq |\Acal|^M\sum_{\tau =0}^{t-1} \EE[\EE[ |\Acal|\pi^t_h(a_h\mid \bar z_h) g_h(\bar z_h) - r_h-g_{h+1}(\bar z_{h+1}) \mid \bar z_h; a_h \sim U(\Acal)  ]^2 ;   a_{1:M(h)-1} \sim  \pi^{\tau},a_{M(h):h-1}\sim U(\Acal)  ] \\
&= |\Acal|^M\sum_{\tau =0}^{t-1} \EE[   (\bar \Bcal^{\pi^t}_{h}g(\pi))^2(\bar z_h);   a_{1:M(h)-1} \sim  \pi^{\tau},a_{M(h):h-1}\sim U(\Acal) ]\\
&\leq |\Acal|^MT(c_1 +  c_2)\frac{(H |\Acal|)^2\ln(|\Pi_{\max}|  |\Fcal_{\max}| |\Gcal_{\max}| TH /\delta)}{m} \leq T |\Acal|^M \epsilon^2_{gen}. 
\end{align*}
From the first line to the second line, we use \pref{eq:decodable_bilinear_transform}. 
Here, from the third line to the fourth line, we use \pref{eq:conditional_decodable2}.
\end{proof}

The rest of the argument is the same as the proof in \pref{thm:online}. Finally, the following holds
\begin{align*}
       J(\pi^{\star}) - J(\hat \pi )   \leq 5\epsilon_{gen}|\Acal|^{M/2} \bracks{H^2 d \ln(4Hd (B^2_XB^2_W/\tilde \epsilon_{gen} +1))  }^{1/2} +2\epsilon_{ini}. 
\end{align*}

\paragraph{Sample Complexity Result.}

We want to find $m$ such that 
\begin{align*}
    \sqrt{\frac{H^2|\Acal|^{2+M}\ln(|\Pi_{\max} ||\Fcal_{\max}||\Gcal_{\max}|TH/\delta) }{m}  }[H^2 d \ln(HdB^2_X B^2_W m) ]^{1/2}\leq \epsilon. 
\end{align*}
where 
\begin{align*}
    T =  Hd\ln(Hd B^2_X B^2_W  m). 
\end{align*}
By organizing terms, we have
\begin{align*}
        \sqrt{\frac{H^4 d |\Acal|^{2+M}\ln(|\Pi_{\max}||\Fcal_{\max}||\Gcal_{\max}|Hd/\delta) \ln(HdB^2_X B^2_W m) }{m}  } \leq \epsilon. 
\end{align*}
Thus, setting the following $m$ is enough: 
\begin{align*}
    m =\tilde O\prns{ \frac{H^4d |\Acal|^{2+M}\ln(|\Pi_{\max}||\Fcal_{\max}||\Gcal_{\max}|/\delta)}{\epsilon^2} }. 
\end{align*}
The total sample we use $mTH$ is 
\begin{align*}
\tilde O\prns{ \frac{d^2 H^6 |\Acal|^{2+M}\ln(|\Pi_{\max}||\Fcal_{\max}||\Gcal_{\max}|/\delta)}{\epsilon^2} }.     
\end{align*}

\section{Sample Complexity in Observable POMDPs with Latent Low-rank Transition} \label{sec:low_rank_sample}

This section largely follows the one in \pref{sec:m_step_decodable_sample}. 

\subsection{Existence of Value Link Functions}

Since we consider the discrete setting, we can set the value link function class as \pref{sec:sample_complexity_tabular}. Hence, we set 
\begin{align*}
    \Gcal_h = \left\{ \langle \theta, \One(z)\otimes \OO^{\dagger} \One(o) \rangle; \|\theta\|_{\infty}\leq H  \right \}. 
\end{align*}
Then, we can ensure $\|\Gcal_h\|\leq H/\sigma_1$. Then, from the construction of $\Fcal_h$, we can also ensure $ \|\Fcal_h\|\leq 4H/\sigma_1$.

\subsection{PO-bilinear Rank Decomposition (Proof of \pref{lem:low_rank_po_bilinear})}\label{subsec:proof_po_bilinear}

In this section, we derive the PO-bilinear decomposition of observable POMDPs with the latent low-rank transition. We want to prove \pref{lem:low_rank_po_bilinear}.  Recall $M(h)=\max(h-M,1)$.

Using \citep[Lemma B.2]{efroni2022provable}, we have
\begin{align*}
& \mathrm{Br}(\pi,g; \pi') = \EE[\{g_h(\bar z_h) - r_h-g_{h+1}(\bar z_{h+1})\} ;   a_{1:M(h)-1} \sim  \pi',a_{M(h):h}\sim \pi  ]  \\
&= \EE[\{g_h(\bar z_h) - r_h-g_{h+1}(\bar z_{h+1})\} ; a_{1:M(h)-1} \sim  \pi',a_{M(h):h-1}\sim \mu^{\pi,h},a_{h} \sim \pi  ]  \\
&= \EE[ \EE[\{g_h(\bar z_h) - r_h-g_{h+1}(\bar z_{h+1})\} \mid s_{M(h)}  ; a_{M(h):h-1}\sim \mu^{\pi,h},a_{h} \sim \pi  ] ; a_{1:M(h)-1} \sim  \pi' ]  \\
&= \left \langle  X_h(\pi'), W_h(\pi,g)  \right\rangle  
\end{align*}
where
\begin{align*}
   W_h(\pi,g) &=  \int \EE[\{g_h(\bar z_h) - r_h-g_{h+1}(\bar z_{h+1})\} \mid s_{M(h)}  ; a_{M(h):h-1}\sim \mu^{\pi,h},a_{h} \sim \pi  ]\mu(s_{M(h)})\mathrm{d}(s_{M(h)}), \\
   X_h(\pi') &= \EE[\phi(s_{M(h)-1},a_{M(h)-1} )  ; a_{1:M(h)-1} \sim  \pi' ]. 
\end{align*}
Thus, the first condition in \pref{def:bilinear_minimax} is satisfied 

Next, we show the second condition in \pref{def:bilinear_minimax}. This is proved as follows: 
{\small 
\begin{align}
     & \frac{0.5}{|\Acal|^{M}}  \left \langle  X_h(\pi'), W_h(\pi,g)  \right\rangle^2 \nonumber \\
    & = \frac{0.5}{|\Acal|^{M}} \EE\left[\left( g_h(\bar z_h) -   (\Bcal^{\pi}_h g_{h+1})( \tau_h)  \right) ; a_{1:M(h)-1} \sim  \pi',a_{M(h):h-1}\sim \mu^{\pi,h} \right]^2\nonumber \\ 
    & \leq \frac{0.5}{|\Acal|^{M}} \EE\left[\left( g_h(\bar z_h) -   (\Bcal^{\pi}_h g_{h+1})( \tau_h )  \right)^2 ; a_{1:M(h)-1} \sim  \pi',a_{M(h):h-1}\sim \mu^{\pi,h} \right] \nonumber \\ 
    & \leq \frac{1}{|\Acal|^{M}} \max_{f \in \Fcal_h} \EE\left[\left( g_h(\bar z_h) -   (\Bcal^{\pi}_h g_{h+1})( \tau_h )  \right) f(\tau_h) - 0.5 f(\tau_h)^2 ; a_{1:M(h)-1} \sim  \pi',a_{M(h):h-1}\sim \mu^{\pi,h} \right] \nonumber\\
    & = \frac{1}{|\Acal|^{M}} \max_{f \in \Fcal_h} \EE\left[ |\Acal| \pi_h (a_h | \bar z_h) \left( g_h(\bar z_h) -   r_h - g_{h+1}(\bar z_{h+1}) \right) f(\tau_h) - 0.5 f(\tau_h)^2 ; a_{1:M(h)-1} \sim  \pi',a_{M(h):h-1}\sim \mu^{\pi,h}, a_h\sim \Ucal(\Acal) \right] \nonumber\\
    &\leq \max_{f \in \Fcal_h} \EE\left[ |\Acal| \pi_h (a_h | \bar z_h) \left( g_h(\bar z_h) -   r_h - g_{h+1}(\bar z_{h+1}) \right) f(\tau_h) - 0.5 f(\tau_h)^2 ; a_{1:M(h)-1} \sim  \pi',a_{M(h):h}\sim  \Ucal(\Acal) \right] \nonumber \\
    & = \max_{f \in \Fcal_h}  \EE\left[ l_h( \tau_h, a_h, r_h, o_{h+1}; f, \pi, g ) ; a_{1:M(h)-1} \sim  \pi',a_{M(h):h}\sim  \Ucal(\Acal) \right].  \nonumber 
\end{align}
}
From the first line to the second line, we use \citep[Lemma B.2]{efroni2022provable}. From the third to the fourth line, we use the Bellman completeness assumption: $-(\Bcal^{\pi}_h\Gcal) + \Gcal_h\subset \Fcal_h$. From the fourth line to the fifth line, we use importance sampling. 

The third condition 
\begin{align*}
    \left\lvert  \max_{f \in \Fcal_h}\EE[l_h(\tau_h, a_h, r_h, o_{h+1}; f, \pi, g^{\pi});a_{1:M(h)-1}\sim \pi', a_{M(h):h} \sim U(\Acal) ] \right\rvert = 0. 
\end{align*}
is easily proved. 

Finally, the following norm constraints hold: 
\begin{align*}
    \|W_h(\pi,g)\|\leq 3C_{\Gcal} \sqrt{d},\quad \|X_h(\pi')\|\leq 1.
\end{align*}

\paragraph{Sample Complexity Result.}

Following the same procedure as \pref{sec:m_step_decodable_sample}, here, we want to find $m$ such that 
\begin{align*}
    \sqrt{\frac{C^2_{\Gcal}|\Acal|^{2+M}\ln(|\Pi_{\max} ||\Fcal_{\max}||\Gcal_{\max}|TH/\delta) }{m}  }[H^2 d \ln(HdB^2_X B^2_W m) ]^{1/2}\leq \epsilon. 
\end{align*}
where 
\begin{align*}
    T =  Hd\ln(Hd B^2_X B^2_W  m). 
\end{align*}
By organizing terms, we have
\begin{align*}
        \sqrt{\frac{C^2_{\Gcal} H^2d |\Acal|^{2+M}\ln(|\Pi_{\max}||\Fcal_{\max}||\Gcal_{\max}|Hd/\delta) \ln(HdB^2_X B^2_W m) }{m}  } \leq \epsilon. 
\end{align*}
Thus, setting the following $m$ is enough
\begin{align*}
    m =\tilde O\prns{ \frac{H^4d |\Acal|^{2+M}\ln(|\Pi_{\max}||\Fcal_{\max}||\Gcal_{\max}|/\delta)}{\epsilon^2 \sigma^2_1} }. 
\end{align*}
The total sample we use $mTH$ is 
\begin{align*}
\tilde O\prns{ \frac{d^2 H^6 |\Acal|^{2+M}\ln(|\Pi_{\max}||\Fcal_{\max}||\Gcal_{\max}|/\delta)}{\epsilon^2 \sigma^2_1} }.    
\end{align*}
Finally, we plug-in $\ln(|\Pi_{\max}||\Fcal_{\max}||\Gcal_{\max}|/\delta) = \ln(|\Mcal|)$.

\section{Exponential Stability for POMDPs with Low-rank Transition}\label{sec:exponential_stability}

 In this section, we prove that the short memory policy is a globall near optimla policy in low-rank MDPs. We first introduce several notation. Next, we prove the exponential stability of Bayesian fileters, which immediately leads to the main statement. 

\paragraph{Notation.}

Given a belief $b \in \Delta(\Scal)$, an action and observation pair $(a,o)$,  we define the Bayesian update as follows. We define $B(b, o) \in \Delta(\Scal)$ as the operation that incorporates observation $o$, i.e., $b' = B(b,o)$ with $b'(s) =   O(o | s) b(s) / (\sum_{\bar s} O(o | \bar s) b(\bar s) )$, and $\TT_{a} b$ as the operation that incorporates the transition, i.e., $(\TT_a b)(s') = \sum_{s} b(s) \TT(s' | s,a)$. Finally, we denote $U(b, a, o)$ as the full Bayesian filter, i.e., $$U(b, a, o) = B( \TT_{a} b, o ).$$
Let us denote $b_0 \in \Delta(\Scal)$ as the initial latent state distribution. Given the first observation $o_1 \sim \OO(\cdot | s), s\sim b_0$, we denote $b_1 = B(b_0, o_1)$ as the initial belief of the system conditioned on the first observation $o_1$.  Given two beliefs $b,b'$, we define the distance $D_2(b,b'):= \log \EE_{s \sim b}[b(s)/b(s')]$

Consider a POMDP whose latent transition is low rank, i.e., $\TT(s'|s,a) = \mu(s')^\top \phi(s,a)$. For notation simplicity, we still consider discrete state, action, and observation space to avoid using measure theory languages.

\paragraph{Design of initial distribution.}

We want to design a good distribution for the initial distribution in an artificial Bayesian filter ignoring the history other than the short history.

The following lemma is from \citep[Lemma 4.9]{golowich2022planning} that quantifies the contraction of a Bayesian map. 
\begin{lemma}[Contraction propery of beliefs]\label{lem:contraction}
Suppose $b, b' \in \Delta(\Scal)$ and $\| b / b'\|_{\infty} <\infty$. Then we have:
\begin{align*}
\EE_{ s\sim b, o\sim \OO(s)} \left[ \sqrt{ \exp\left( \frac{ D_2( B(b, o), B(b',o) ) }{4}   - 1 \right) } \right] \leq \left( 1-\sigma^4_1 /2^{40}  \right) \sqrt{ \exp\left( \frac{ D_2( b, b' ) }{4} \right) -1 } 
\end{align*} 
\end{lemma}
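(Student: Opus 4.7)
This lemma is literally the statement of~\cite[Lemma 4.9]{golowich2022planning}, so the plan is not to re-derive it from scratch but to import their proof after checking that the hypotheses match our setting. The crucial structural observation is that the map $B(\cdot,o)$ only incorporates an observation: by its definition $B(b,o)(s) = \OO(o\mid s) b(s)/(\OO b)(o)$ it depends exclusively on the emission matrix $\OO$ and makes no reference whatsoever to the transition kernel $\TT$. Consequently the low-rank assumption on $\TT$ from \pref{ex:low_rank} is irrelevant here, and nothing in the argument of~\cite{golowich2022planning} is specific to a small tabular state space either; their inequalities are pointwise in $\Scal$ and use only $\OO$ and the observability parameter.

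The first step is therefore to verify the observability condition in the form used by~\cite{golowich2022planning}, namely $\|\OO b-\OO b'\|_1 \geq \sigma_1\|b-b'\|_1$ for any two beliefs $b,b'\in\Delta(\Scal)$. This was already established in the ``Global Optimality'' discussion at the end of \pref{subsec:observable_tabular_ab}: using $\OO^\dagger \OO = I$ and $\|\OO^\dagger\|_1 \leq 1/\sigma_1$, we have $\|b-b'\|_1 = \|\OO^\dagger \OO(b-b')\|_1 \leq (1/\sigma_1)\|\OO(b-b')\|_1$. Hence our standing assumption $\|\OO^\dagger\|_1\leq 1/\sigma_1$ implies precisely the $\ell_1$-observability hypothesis of~\cite{golowich2022planning}.

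The second step is the actual contraction argument, which we import. The high-level mechanism is to expand $D_2(B(b,o),B(b',o))$ as a log moment generating functional of the density ratio $b/b'$ reweighted by the posterior; taking expectation over $o\sim\OO b$ and applying Jensen's inequality to the square-root-of-exponential potential $\Phi(b,b'):=\sqrt{\exp(D_2(b,b')/4)-1}$ gives an additive decrement governed by the $\ell_1$ gap $\|\OO b-\OO b'\|_1^2$. The observability bound converts this additive loss into a multiplicative contraction with rate $(1-\sigma_1^4/2^{40})$; the specific constants $4$ and $2^{40}$ come from the balancing of Taylor expansions performed in~\cite{golowich2022planning}, and every step there survives verbatim in our (possibly large but discrete) $\Scal$.

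The main obstacle, should one wish to avoid citing the lemma as a black box, is reproducing the delicate choice of potential $\Phi$: a more natural quantity such as $D_2(b,b')$ itself does not satisfy Jensen's inequality under $o\sim \OO b$ in the right direction, whereas $\sqrt{\exp(D_2/4)-1}$ is concave-in-the-right-way after taking expectation and simultaneously strong enough to bound the subsequent belief distance. All remaining difficulty is absorbed into that reparametrisation, which is already handled in~\cite{golowich2022planning}; no additional work is required on our side.
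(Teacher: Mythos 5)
Your proposal matches the paper exactly: the paper states this lemma as a direct import of Lemma 4.9 from \cite{golowich2022planning} without re-deriving it, and the only verification needed --- that $\|\OO^{\dagger}\|_1 \leq 1/\sigma_1$ together with $\OO^{\dagger}\OO = I$ yields the $\ell_1$-observability hypothesis $\|\OO b - \OO b'\|_1 \geq \sigma_1 \|b - b'\|_1$ --- is carried out in the paper in the same way you do it. Your additional observation that $B(\cdot,o)$ depends only on $\OO$ and not on the (low-rank) transition is correct and consistent with how the paper uses the lemma.
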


Next, we compute the G-optimal design using feature $\phi(s,a):\Scal \times \Acal \to \RR$. Denote the G-optimal design as $\rho \in \Delta(\Scal\times\Acal)$. Here, we use assumption $\|\phi(s,a)\|\leq 1$ for any $(s,a)$ in Assumption \pref{assum:regular}, which ensures that $\phi(s,a)$ lives in a compact space for any $(s,a)$. The property is given as in \pref{thm:g_optimal}. In summary, the support of $\rho$ (denoted by $S_{\rho}$) is at most $d(d+1)/2$ points and for any $\phi(s,a)$, there exists $\alpha(s,a)$ such that 
\begin{align}\label{eq:g_optimal_design}
    \phi(s,a) = \sum_{i=1}^{|S_{\rho}|} \alpha_i(s,a)\phi(s^i,a^i)\rho^{1/2}(s^i,a^i),\quad  \alpha_i(s,a)/\rho^{1/2}(s^i,a^i) \leq d
\end{align} where we denote the points on the support $S_{\rho}$ as $\{s^i,a^i\}_{i=1}^{|S_{\rho}|}$.

We set our ``empty" belief as follows:
\begin{align*}
\tilde{b}_0(\cdot) := \sum_{\tilde s,\tilde a} \rho(\tilde s,\tilde a) \TT(\cdot | \tilde s,\tilde a) = \sum_{i=1}^{|S_{\rho}|} \rho(s^i,a^i) \TT(\cdot | s^i, a^i).
\end{align*} Note that this belief $\tilde b_0$ does not depend on any history. 
We aim to bound $D_2(b, \tilde b_0)$ using the following lemma where $b$ is some belief resulting from applying $\TT_{a}$ for any $a$ to a belief $\tilde b\in \Delta(\Scal)$. This is a newly introduce lemma. 

\begin{lemma}[Distance between the actual belief and the designed initial distribution] \label{lem:ini_distance}
For any distribution $b\in \Delta(\Scal)$ that results from a previous belief $\tilde b$ and a one-step latent transition under action $a$, i.e., $b(s) = \mathbb{T}_{a}\tilde b(\tilde s) $, we have:
\begin{align*}
D_2( b, \tilde b_0 ) \leq  \ln (d^3).
\end{align*}
\end{lemma}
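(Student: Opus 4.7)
The plan is to use the rank-$d$ factorization $\TT(s'|s,a) = \mu(s')^\top \phi(s,a)$ together with the G-optimal design from \pref{thm:g_optimal} and \pref{eq:g_optimal_design} to establish the \emph{pointwise} domination $b(s') \leq d\,\tilde b_0(s')$ for every $s' \in \Scal$, from which the desired bound on $D_2$ follows immediately.

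First I would put both densities into a common form using the low-rank assumption: $b(s') = \sum_s \tilde b(s)\,\mu(s')^\top \phi(s,a) = \mu(s')^\top \bar\phi$ with $\bar\phi := \EE_{s \sim \tilde b}[\phi(s,a)]$, while by definition $\tilde b_0(s') = \mu(s')^\top \sum_i \rho(s^i,a^i) \phi(s^i,a^i)$. The key step is to rewrite $\bar\phi$ as a signed combination of the design atoms $\{\phi(s^i,a^i)\}_i$ with small coefficients. Let $\Sigma := \sum_i \rho(s^i,a^i)\phi(s^i,a^i)\phi(s^i,a^i)^\top$. The G-optimal design property of $\rho$ gives $\phi(s,a)^\top \Sigma^{-1}\phi(s,a) \leq d$ uniformly in $(s,a)$, and the decomposition in \pref{eq:g_optimal_design} can be written explicitly as $\alpha_i(s,a) = \rho^{1/2}(s^i,a^i)\,\phi(s^i,a^i)^\top \Sigma^{-1} \phi(s,a)$; applying Cauchy--Schwarz in the $\Sigma^{-1}$ inner product yields the \emph{two-sided} bound $|\alpha_i(s,a)| \leq d\,\rho^{1/2}(s^i,a^i)$, strengthening the one-sided inequality listed in \pref{eq:g_optimal_design}. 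Averaging over $s \sim \tilde b$ then gives $\bar\phi = \sum_i \bar\gamma_i\,\phi(s^i,a^i)$ with $|\bar\gamma_i| \leq d\,\rho(s^i,a^i)$, and since $\mu(s')^\top \phi(s^i,a^i) = \TT(s'|s^i,a^i) \geq 0$,
\begin{align*}
b(s') = \sum_i \bar\gamma_i\,\TT(s'|s^i,a^i) \leq \sum_i |\bar\gamma_i|\,\TT(s'|s^i,a^i) \leq d\,\tilde b_0(s').
\end{align*}

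Substituting this pointwise bound into the definition of $D_2$ yields
\begin{align*}
D_2(b,\tilde b_0) = \log \sum_{s'} b(s') \cdot \frac{b(s')}{\tilde b_0(s')} \leq \log\!\left( d \sum_{s'} b(s') \right) = \log d \leq \ln d^3.
\end{align*}
The main obstacle I anticipate is producing the two-sided control on the G-optimal coefficients, since \pref{eq:g_optimal_design} as stated is only one-sided and the signedness of $\bar\gamma_i$ genuinely matters here --- the atoms $\phi(s^i,a^i)$ can cancel under inner products with $\mu(s')$ --- which forces the Cauchy--Schwarz refinement sketched above. The remainder of the argument is routine linear algebra plus the nonnegativity of $\TT$.
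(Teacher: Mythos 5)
Your proof is correct, and it in fact establishes the stronger bound $D_2(b,\tilde b_0)\leq \ln d$, which implies the stated $\ln(d^3)$. The route differs from the paper's in a meaningful way. The paper also expands $b(s)=\sum_i\beta_i\,\phi(s^i,a^i)^\top\mu(s)$ over the design atoms, but then bounds the ratio $b(s)/\tilde b_0(s)$ term by term, dropping all but the $i$-th summand from the denominator $\sum_j\rho(s^j,a^j)\phi(s^j,a^j)^\top\mu(s)$ to get $b(s)/\tilde b_0(s)\leq\sum_i\beta_i/\rho(s^i,a^i)$, and then combines the one-sided bound $\alpha_i(\tilde s,a)/\rho^{1/2}(s^i,a^i)\leq d$ with $|S_{\rho}|\leq d(d+1)/2$ to reach $d^3$. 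That denominator-dropping step is only valid when every numerator $\beta_i\,\phi(s^i,a^i)^\top\mu(s)$ is nonnegative, a point the paper does not address (the $\alpha_i$, hence the $\beta_i$, can be negative). Your argument sidesteps this entirely: writing $\alpha_i(s,a)=\rho^{1/2}(s^i,a^i)\phi(s^i,a^i)^\top\Sigma^{-1}\phi(s,a)$ and applying Cauchy--Schwarz in the $\Sigma^{-1}$ inner product --- exactly the computation the paper itself performs in \pref{lem:norm_alpha} for the LQG design --- gives the two-sided bound $|\alpha_i(s,a)|\leq d\,\rho^{1/2}(s^i,a^i)$, and the nonnegativity of $\TT(\cdot\mid s^i,a^i)$ then yields the pointwise domination $b(s')\leq d\,\tilde b_0(s')$, from which the $D_2$ bound is immediate. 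This buys you a cleaner constant ($\ln d$ versus $\ln d^3$) and an argument that is robust to the signs of the design coefficients; the paper's version is shorter on the page but leans on an unstated positivity of the $\beta_i$.
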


\begin{proof}
For any $b \in \Delta(\Scal)$, using its definition, we have:
\begin{align*}
b(s) & = \sum_{\tilde s} \tilde b(\tilde s)  \phi(\tilde s, a)^\top \mu(s) \tag{Definition} \\
 &= \sum_{\tilde s} \tilde b(\tilde s) \sum_{i=1}^{S_{\rho}} \alpha_i(\tilde s,a) \rho^{1/2}(s^i,a^i) \phi(s^i,a^i)^\top \mu(s) \tag{Property of G-optimal design} \\
& =  \sum_{i=1}^{S_{\rho}} \underbrace{ \left( \sum_{\tilde s} \tilde b(\tilde s)  \alpha_i(\tilde s, a) \rho^{1/2}(s^i,a^i)  \right)}_{ := \beta_i } \phi(s^i, a^i)^\top \mu(s) 
\end{align*} 
Similarly, the construction of $\tilde b_0$ implies that $\tilde b_0(s) =  \sum_{i=1}^{S_{\rho}} \rho(s^i, a^i) \phi(s^i, a^i)^\top \mu(s)$, thus, we have:
\begin{align*}
 b(s) / \tilde b_0 (s)  & =  \sum_{i=1}^{S_{\rho}}  \frac{   \beta_i \phi(s^i,a^i)^\top \mu(s) }{ \sum_{j=1}^{S_{\rho}} \rho(s^j,a^j) \phi(s^j, a^j)^\top \mu(s)   }  \leq \sum_{i=1}^{S_{\rho}} \frac{  \beta_i \phi(s^i, a^i)^\top \mu(s)    }{  \rho(s^i, a^i) \phi(s^i, a^i)^\top \mu(s)     } \\
 & = \sum_{i=1}^{S_{\rho}} \frac{ \beta_i }{ \rho(s^i, a^i) } = \sum_{i=1}^{S_{\rho}} {  \sum_{\tilde s} \tilde b(\tilde s) \frac{\alpha_i(\tilde s, a)}{\rho^{1/2}(s^i, a^i)}  }  = \sum_{\tilde s} \tilde b(\tilde s) \sum_{i=1}^{S_{\rho}} \frac{\alpha_i(\tilde s, a)}{\rho^{1/2}(s^i, a^i)} \\
 &\leq \sum_{\tilde s} \tilde b(\tilde s) d^3 = d^3. \tag{Use propety of G-optimal design\pref{eq:g_optimal_design}}
\end{align*}
Thus, $D_2( b , \tilde b_0) = \ln\left( \EE_{s\sim b} \frac{ b(s) }{\tilde b_0(s)}  \right) \leq \ln d^3$.
\end{proof}

{
Now we prove the exponential stability by leveraging \pref{lem:ini_distance} and \pref{lem:contraction}. 

\begin{theorem}[Exponential stability for POMDPs with Low-rank Latent Transition]\label{thm:exponential_stability}
Consider a $t \geq C \gamma^{-4} \ln( d  / \epsilon)$.  Consider  any policy (full history dependent) $\pi$ and a trajectory $a_{1:h+t-1}, o_{1:h+t} \sim \pi$ for $h \geq 1 $. Denote $b_{h+t}$ as the (true) belief conditioned on $a_{1:h+t-1}, o_{1:h+t}$. 
For approximated belief, first for $h = 1$, we define $\bar b_{h+t}$ as:
\begin{align*}
\bar b_{1} = b_1, \quad \bar b_{1+\tau}(o_{1:1+\tau}, a_{1:1+\tau-1}) = U( \bar b_{n}( o_{1:\tau}, a_{1:\tau-1}), o_{1+\tau}, a_{1+\tau-1} ), 1\leq \tau  \leq t; 
\end{align*}
for $ h \geq 2$, we define $\bar b_{h+t}$ as:
\begin{align*}
\bar b_{h}  = B(\tilde b_0, o_h), \quad \bar b_{h+\tau}(o_{h:h+\tau}, a_{h:h+\tau-1}) = U(\bar b_{h+\tau - 1}(o_{h:h+\tau-1}, a_{h:h+\tau-2}), o_{h+\tau}, a_{h+\tau-1}), 1\leq \tau \leq t; 
\end{align*} 
Then we have:
\begin{align*}
\forall h \geq 1: \quad \EE [\left\| b_{h+t}(o_{1:h+t}, a_{1:h+t-1}) - \bar b_{h+t} (o_{h:h+t}, a_{h:h+t-1})   \right \|_1; a_{1:h+t-1} \sim \pi] \leq \epsilon.
\end{align*}
\end{theorem}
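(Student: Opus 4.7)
The plan is to control the Renyi-$2$ divergence between the true and approximate beliefs via \pref{lem:contraction} and \pref{lem:ini_distance}, and then convert to an $L_1$ bound by standard inequalities. The case $h = 1$ is immediate: by definition $\bar b_1 = b_1$, and both sequences evolve under the identical Bayesian filter $U(\cdot, a, o)$ driven by the same actions and observations, so $\bar b_{1+t} = b_{1+t}$ and the claim holds with zero error. For $h \geq 2$, set $\gamma := 1 - \sigma_1^4 / 2^{40}$ and $\Phi(b, b') := \sqrt{\exp(D_2(b,b')/4) - 1}$, so that \pref{lem:contraction} reads $\EE[\Phi(B(b,o), B(b',o))] \leq \gamma\, \Phi(b, b')$ when $o \sim \OO(\cdot|s), s\sim b$.

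Let $b_h^- := \TT_{a_{h-1}} b_{h-1}$ denote the true one-step-ahead prior at time $h$. Since $b_h^-$ arises as a one-step latent transition of $b_{h-1}$, \pref{lem:ini_distance} gives $D_2(b_h^-, \tilde b_0) \leq \ln d^3$, hence $\Phi(b_h^-, \tilde b_0) \leq \sqrt{d^{3/4} - 1}$. Because $b_h = B(b_h^-, o_h)$, $\bar b_h = B(\tilde b_0, o_h)$, and the observation $o_h$ is drawn (conditional on the preceding history) from the law induced by $b_h^-$, \pref{lem:contraction} yields
\begin{align*}
\EE\bigl[\Phi(b_h, \bar b_h) \bigm| o_{1:h-1}, a_{1:h-1}\bigr] \leq \gamma\, \sqrt{d^{3/4}-1}.
\end{align*}
For each subsequent step $\tau = 1, \ldots, t$, decompose the Bayesian filter as $U(\cdot, a_{h+\tau-1}, o_{h+\tau}) = B(\TT_{a_{h+\tau-1}}\cdot, o_{h+\tau})$. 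The transition $\TT_{a_{h+\tau-1}}$ is a Markov kernel on $\Scal$, so the data processing inequality for Renyi-$2$ gives $\Phi(\TT_{a_{h+\tau-1}} b_{h+\tau-1}, \TT_{a_{h+\tau-1}} \bar b_{h+\tau-1}) \leq \Phi(b_{h+\tau-1}, \bar b_{h+\tau-1})$, and \pref{lem:contraction} applied conditionally on the realized history contributes an additional factor $\gamma$. Iterating via the tower property yields $\EE[\Phi(b_{h+t}, \bar b_{h+t})] \leq \gamma^{t+1}\sqrt{d^{3/4}-1}$. To pass from $\Phi$ to $L_1$, combine Pinsker's inequality with $\mathrm{KL} \leq D_2$ (monotonicity of Renyi divergences) and $e^{x/4}-1 \geq x/4$ to obtain $\|b - b'\|_1 \leq \sqrt{2 D_2(b,b')} \leq 2\sqrt{2}\, \Phi(b,b')$, whence
\begin{align*}
\EE\bigl[\|b_{h+t} - \bar b_{h+t}\|_1\bigr] \leq 2\sqrt{2}\, \gamma^{t+1} \sqrt{d^{3/4}-1}.
\end{align*}
Bounding $\gamma^{t+1} \leq \exp(-(t+1)\sigma_1^4/2^{40})$ and choosing $t \geq C\sigma_1^{-4}\ln(d/\epsilon)$ for a sufficiently large absolute constant $C$ makes the right-hand side at most $\epsilon$.

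The main obstacle is the bookkeeping of nested conditional expectations when iterating \pref{lem:contraction}: each application must be carried out conditional on the realized history up to the preceding step, so that $o_{h+\tau}$ is drawn from the true predictive law — which is exactly the distribution the contraction lemma requires — and the outer expectation is recovered only after telescoping via the tower property. A secondary point to verify is the boundedness hypothesis $\|b/b'\|_\infty < \infty$ needed to invoke \pref{lem:contraction}: at the first step this follows from the estimate $b_h^-(s)/\tilde b_0(s) \leq d^3$ obtained inside the proof of \pref{lem:ini_distance}, and it is preserved along subsequent updates because Bayesian filtering preserves absolute continuity of the true belief with respect to the approximate one.
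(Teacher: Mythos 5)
Your proof is correct and follows essentially the same route as the paper: the potential $\Phi(b,b')=\sqrt{\exp(D_2(b,b')/4)-1}$, the base case via \pref{lem:ini_distance} applied to the one-step-ahead prior against $\tilde b_0$, and the induction combining the data-processing inequality for $\TT_a$ with the contraction of \pref{lem:contraction} under the tower property. The only difference is that you make explicit the final conversion from $\Phi$ to the $\ell_1$ norm (via Pinsker and monotonicity of R\'enyi divergences), a step the paper leaves implicit by reference to \cite{golowich2022planning}; this is a correct and welcome addition.
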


\begin{proof}

We define $$Y_{h+n}(o_{1:h+n}, a_{1:h+n-1}) = \sqrt{ \exp( D_2( b_{h+n}(o_{1:h+n}, a_{1:h+n-1}), \bar b_{h+n}(o_{h:h+n}, a_{h:h+n-1}) ) / 4 ) -1 }.$$ 
Hereafter, we omit $(o_{1:h+n}, a_{1:h+n-1})$ to simplify the notation.

We start from the base case $Y_h$ (i.e., $n = 0$). 

First case, consider $h > 1$, $b_{h} = U(b_{h-1}, o_h, a_{h-1})$. Denote $b'_h = \TT_{a_{h-1}} b_{h-1}$. From \pref{lem:ini_distance}, we know that:
\begin{align*}
\EE[D_2( b'_h, \tilde b_0  ) \mid o_{1:h-1}, a_{h-1}; a_{1:h-1}\sim \pi ]\leq \ln(d^3).
\end{align*}  
Thus, noting $b_h = B(b'_h,o_h)$ and $\bar b_h=B(\tilde b_h,o_h)$, we have: %
\begin{align*}
&\EE_{o_h \sim \OO b'_h}\bracks{ \sqrt{ \exp( D_2(  b_h,   B( \tilde b_0, o_h ) ) / 4  ) - 1  } \mid o_{1:h-1}, a_{1:h-1}; a_{1:h-1}\sim \pi  }  \\
&\leq \EE_{o_h \sim \OO b'_h}\bracks{ \sqrt{ \exp( D_2(  b'_h ,    \tilde b_0 ) / 4  ) - 1  } \mid o_{1:h-1}, a_{1:h-1}; a_{1:h-1}\sim \pi  } \tag{From \pref{lem:contraction}} \\
&\leq (1-\sigma^4_1 / 2^{40})  d^{3/2}
\end{align*} which implies the base case:
\begin{align*}
\EE [ Y_h  \mid o_{1:h-1}, a_{h-1}; a_{1:h-1}\sim \pi   ]  \leq (1-\sigma^4_1 / 2^{40})   d^{3/2}.
\end{align*}

Now for any $n \geq 1$, we have:
\begin{align*}
&\EE[ Y_{h+n }\mid o_{1:h-1}, a_{1:h-1}; a_{1:h+n-1}\sim \pi ] \\
& = \EE\left[ \sqrt { \exp\left( D_2( b_{h+n}, \bar b_{h+n} ) / 4\right)  -1 } \mid o_{1:h-1}, a_{1:h-1}; a_{1:h+n-1}\sim \pi \right]  \\
& \leq (1-\sigma^4_1 / 2^{40}) \EE \left[ \sqrt{  \exp\left( D_2\left(  ( \TT_{a_{h+n-1}} b_{h+n-1})    ,   (\TT_{a_{h+n-1}} \bar b_{h+n-1}  ) \right) / 4\right)  -1       } \mid o_{1:h-1}, a_{1:h-1}; a_{1:h+n-1}\sim \pi \right] \\
& \leq   (1-\sigma^4_1 / 2^{40}) \EE \left[ \sqrt{  \exp\left( D_2\left(   b_{h+n-1}  ,   \bar b_{h+n-1} ) \right) / 4\right)  -1       } \mid o_{1:h-1}, a_{1:h-1}; a_{1:h+n-1}\sim \pi \right] \tag{Data processing inequality from \citep[Lemma 2.7]{golowich2022planning}} \\
& = (1-\sigma^4_1 / 2^{40}) \EE[ Y_{h+n-1} \mid o_{1:h-1}, a_{1:h-1}; a_{1:h+n-1}\sim \pi ]. 
\end{align*}This completes the induction step. Adding expectation with respect to the history $a_{1:h-1}, o_{1:h-1}$ back, we conclude the proof.

When $h = 1$, we simply start with the original belief $b_1$. For any $0\leq n \leq t$, we simply set $\bar b_{1+n} = b_{1+n}$, thus the conclusion still holds.

\end{proof}

The above \pref{thm:exponential_stability} indicates that in order to approximate the ground truth belief $b_{h+t}$ that is conditioned on the entire history, we only need to apply the Bayesian filter on the M memory $\bar z_{h+t}$ starting from a fixed distribution $\tilde b_0$. The existence of such $\tilde b_0$ is proven by construction where we rely on the low-rankness of the latent transition and a D-optimal design over $\Scal\times\Acal$ using the feature $\phi$.

The above \pref{thm:exponential_stability}  together with the proof of Theorem 1.2 in \cite{golowich2022planning} immediately implies for $M  = \Theta( C (\sigma_1)^{-4} \ln(d H / \epsilon))$ (with $C$ being some absolute constant), there must exists an M-memory policy $\pi^\star$, such that $   J(\pi^\star_{gl})-J(\pi^{\star})  \leq \epsilon$ -- thus a globally optimal policy can be approximated by a policy that only relies on short memories. 
}

\section{Auxiliary Lemmas}

We use the following in \pref{sec:lqg}. 
\begin{lemma}[Useful inequalities]\label{lem:useful}
~
\begin{itemize}
    \item \begin{align*}
    & \|AB\|\leq \|A\| \|B \|, \|AB\|_{F} \leq \|A\| \|B\|_{F} \\
    & \mathrm{vec}(aa^{\top}) = a \otimes a, \|\mathrm{vec}(A)\|_2 =\|A\|_F, \mathrm{Tr}(AB) = \mathrm{vec}(A^{\top})^{\top} \mathrm{vec}(B). 
\end{align*}
    \item When $A$ and $B$ are semi positive definite matrices, we have 
\begin{align*}
\mathrm{Tr}(AB) \leq \|A\|\mathrm{Tr}(B).   
\end{align*}
\end{itemize}

\end{lemma}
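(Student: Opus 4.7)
The plan is to verify each item by a direct, standard linear-algebra argument; these are all elementary facts, so the goal is to organize the reductions cleanly rather than to overcome any real obstacle.

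For the operator-norm submultiplicativity $\|AB\|\leq\|A\|\|B\|$, I will just unfold the definition: for every vector $x$ we have $\|ABx\|\leq\|A\|\|Bx\|\leq\|A\|\|B\|\|x\|$, and taking the supremum over unit $x$ gives the claim. For the Frobenius version $\|AB\|_F\leq\|A\|\|B\|_F$, I will write the Frobenius norm as a sum over columns, $\|AB\|_F^2=\sum_j\|ABe_j\|^2$, and then pass the operator bound inside the sum to obtain $\|A\|^2\sum_j\|Be_j\|^2=\|A\|^2\|B\|_F^2$, where $\{e_j\}$ denotes the standard basis.

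The three vectorization identities will each be verified entry by entry. The identity $\mathrm{vec}(aa^{\top})=a\otimes a$ is immediate from the definitions of $\mathrm{vec}$ and $\otimes$, since both sides have $(i,j)$-block equal to $a_j a$. The identity $\|\mathrm{vec}(A)\|_2=\|A\|_F$ is literally the same double sum $\sqrt{\sum_{i,j}A_{ij}^2}$ on both sides. For $\mathrm{Tr}(AB)=\mathrm{vec}(A^{\top})^{\top}\mathrm{vec}(B)$, I will expand $\mathrm{Tr}(AB)=\sum_{i,j}A_{ij}B_{ji}$, note that the $(i,j)$-entry of $A^{\top}$ is $A_{ji}$, and match this sum against the inner product of the two vectorized matrices after reindexing.

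For the PSD trace inequality, the plan is to combine cyclicity of trace with PSD-ness of $B$. Using $B=B^{1/2}B^{1/2}$, write $\mathrm{Tr}(AB)=\mathrm{Tr}(B^{1/2}AB^{1/2})$. From $A\preceq\|A\|I$ (the operator-norm bound on PSD $A$) I conjugate by $B^{1/2}$ on both sides to obtain $B^{1/2}AB^{1/2}\preceq\|A\|B$, and taking traces of both sides yields $\mathrm{Tr}(AB)\leq\|A\|\mathrm{Tr}(B)$. No step here is expected to present any difficulty; the whole lemma is a bookkeeping device used in the LQG computations of \pref{sec:sample_complexity_lqg}.
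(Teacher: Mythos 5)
Your proof is correct in every item: the column-wise reduction for the Frobenius bound, the entrywise verification of the three vectorization identities, and the conjugation argument $B^{1/2}AB^{1/2}\preceq\|A\|B$ for the PSD trace inequality are all standard and sound. The paper states this lemma without proof, treating these as known linear-algebra facts, so there is no argument of the authors to compare against; your write-up simply supplies the routine verification they omitted.
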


The following lemma is useful when we calculate the sample complexity.

\begin{lemma}\label{lem:sample_auxiliary}
The following is satisfied 
\begin{align*}
    \sqrt{\frac{B_1}{m}\ln^2(B_2m+B_3)}\leq c\epsilon 
\end{align*}
when \begin{align*}
    m = c\frac{B_1}{\epsilon^2}\{\ln(m(B_2+B_3+1))\}^2. 
\end{align*}
for some constant $c$.  
\end{lemma}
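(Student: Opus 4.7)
The plan is to prove this by direct substitution, exploiting that the defining equation for $m$ has the same quantity $\ln^2(\cdots)$ on both sides up to an innocuous difference in the argument of the logarithm. Specifically, I would substitute the prescribed $m = cB_1\ln^2(m(B_2+B_3+1))/\epsilon^2$ into the left-hand side of the target inequality to get
\begin{align*}
\sqrt{\frac{B_1}{m}\ln^2(B_2 m + B_3)} \;=\; \frac{\epsilon}{\sqrt{c}}\cdot\frac{\ln(B_2 m + B_3)}{\ln(m(B_2+B_3+1))}.
\end{align*}
So everything reduces to bounding the ratio of logarithms by an absolute constant, after which the result follows by adjusting $c$.

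For that bound, the key observation is that whenever $m\geq 1$ we have $B_2 m + B_3 \leq (B_2+B_3)m \leq (B_2+B_3+1)m$, which gives $\ln(B_2 m + B_3) \leq \ln(m(B_2+B_3+1))$, so the ratio is at most $1$. Combined with the previous display, this yields $\sqrt{B_1/m}\,\ln(B_2 m + B_3) \leq \epsilon/\sqrt{c}$, and a final redefinition of the constant $c$ gives the stated $\leq c\epsilon$ bound.

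The only step I would worry about is the implicit nature of the equation defining $m$: the right-hand side itself depends on $m$ through a logarithm, so one should verify that a valid $m$ exists. This is standard because the RHS grows only polylogarithmically in $m$ while the LHS grows linearly, so by monotonicity a finite fixed point exists; in practice I would state the lemma as applying to any $m$ at least as large as $c B_1\ln^2((B_1(B_2+B_3+1))/\epsilon^2)/\epsilon^2$, obtained by crudely upper-bounding $m$ once inside the logarithm. Modulo this observation, the argument is pure algebra and is the shortest proof in the paper; I do not expect any serious obstacle.
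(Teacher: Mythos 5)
Your proof is correct; the paper in fact states this lemma without any proof, and your direct-substitution argument (cancel the $\ln^2$ factors, bound the ratio of logarithms by $1$ via $B_2m+B_3\leq m(B_2+B_3+1)$ for $m\geq 1$, and note that a finite fixed point $m$ exists since the right-hand side grows only polylogarithmically) is exactly the standard argument the authors are implicitly relying on. The only cosmetic point is that the lemma reuses the same symbol $c$ in the definition of $m$ and in the bound $c\epsilon$, but since your substitution yields $\epsilon/\sqrt{c}\leq c\epsilon$ for any $c\geq 1$, this causes no difficulty.
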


\end{document}